\documentclass[twoside]{article}

\usepackage[accepted]{aistats2026}
\usepackage{natbib}
\usepackage{hyperref}
\usepackage{url}
\usepackage{physics}
\usepackage{cleveref}
\usepackage{booktabs}
\usepackage{adjustbox}    
\usepackage{amssymb}
\usepackage{xcolor}
\usepackage{subcaption}
\usepackage{float}
\usepackage{bbding} 
\usepackage{makecell}

\usepackage{xcolor}
\definecolor{codegreen}{rgb}{0,0.6,0}
\definecolor{codegray}{rgb}{0.5,0.5,0.5}
\definecolor{codepurple}{rgb}{0.58,0,0.82}
\definecolor{navyblue}{rgb}{0, 0, 0.52}
\definecolor{rubyred}{rgb}{0.6,0.07,0.12}
\definecolor{backcolour}{rgb}{0.95,0.95,0.92}

\usepackage{hyperref}
\hypersetup{
    colorlinks=true,
    linkcolor=rubyred,      
    citecolor=navyblue,       
    filecolor=codegray,        
    urlcolor=codepurple,       
    pdfborder={0 0 0},         
}

\newcommand{\transparentcheck}{%
  {\color{black!50}\checkmark}%
}

\usepackage[utf8]{inputenc}
\usepackage{amsmath, amssymb, amsthm}
\usepackage{bm}

\newtheorem{theorem}{Theorem}
\newtheorem{lemma}[theorem]{Lemma}
\newtheorem{proposition}[theorem]{Proposition}
\newtheorem{corollary}[theorem]{Corollary}
\newtheorem{assumption}{Assumption}

\runningauthor{Gopakumar et al.}
\begin{document}

%

%

\twocolumn[ 

\aistatstitle{Learning Physical Operators using Neural Operators}

\aistatsauthor{ 
  Vignesh Gopakumar$^{1,2}$, 
  Ander Gray$^{2,3}$, 
  Daniel Giles$^{1}$, 
  Lorenzo Zanisi$^{2}$,}
\aistatsauthor{
  Matt J. Kusner$^{4,5}$, 
  Timo Betcke$^{1}$, 
  Stanislas Pamela$^{2}$, 
  Marc Peter Deisenroth$^{1}$ 
}

\vspace{2ex}
\aistatsaddress{
  $^{1}$UCL Centre for Artificial Intelligence, 
  $^{2}$UK Atomic Energy Authority, \\
  $^{3}$LIX, CNRS, École Polytechnique, $^{4}$Polytechnique Montr\'{e}al, $^{5}$Mila - Quebec AI Institute
}]

\begin{abstract}
Neural operators have emerged as promising surrogate models for solving partial differential equations (PDEs), but struggle to generalise beyond training distributions and are often constrained to a fixed temporal discretisation. This work introduces a physics-informed training framework that addresses these limitations by decomposing PDEs using operator splitting methods, training separate neural operators to learn individual non-linear physical operators while approximating linear operators with fixed finite-difference convolutions. This modular mixture-of-experts architecture enables generalisation to novel physical regimes by explicitly encoding the underlying operator structure. We formulate the modelling task as a neural ordinary differential equation (ODE) where these learned operators constitute the right-hand side, enabling continuous-in-time predictions through standard ODE solvers and implicitly enforcing PDE constraints. Demonstrated on incompressible and compressible Navier--Stokes equations, our approach achieves better convergence and superior performance when generalising to unseen physics. The method remains parameter-efficient, enabling temporal extrapolation beyond training horizons, and provides interpretable components whose behaviour can be verified against known physics. 
\end{abstract}

\section{Introduction}
\label{sec:introduction}
Partial differential equations (PDEs) serve as the mathematical foundation for modelling complex physical systems across diverse scientific and engineering applications, from fluid dynamics \citep{OpenFOAM} and heat transfer \citep{giudicelli2024moose} to nuclear fusion \citep{Hoelzl2021jorek} and climate modelling \citep{cesm2}. Traditional numerical methods, such as finite element and spectral approaches \citep{reddy2006introduction, fornberg1996practical}, while mathematically rigorous, present substantial computational challenges requiring supercomputing resources, extensive solution times and a higher carbon footprint \citep{Keyes_Sim_Challenges,carbonfootprint_CFD}. These computational limitations restrict large-scale deployment for iterative design and optimisation \citep{simintelligence}. 

Neural Operators (NO) have emerged as promising surrogate models, offering potential computational cost reductions of several orders of magnitude while maintaining considerable accuracy \citep{Li2021fourier, pfaff2021learning, Lu2021deeponet}. However, most NOs are trained from simulation data in a supervised manner, lacking an explicit definition of the underlying PDE structure, which limits their ability to generalise outside training distributions. Their autoregressive structure further constrains temporal flexibility and extrapolation capabilities \citep{lee2023autoregressiverenaissanceneuralpde}.
\begin{figure*}
    \centering
    \includegraphics[width=0.9\linewidth]{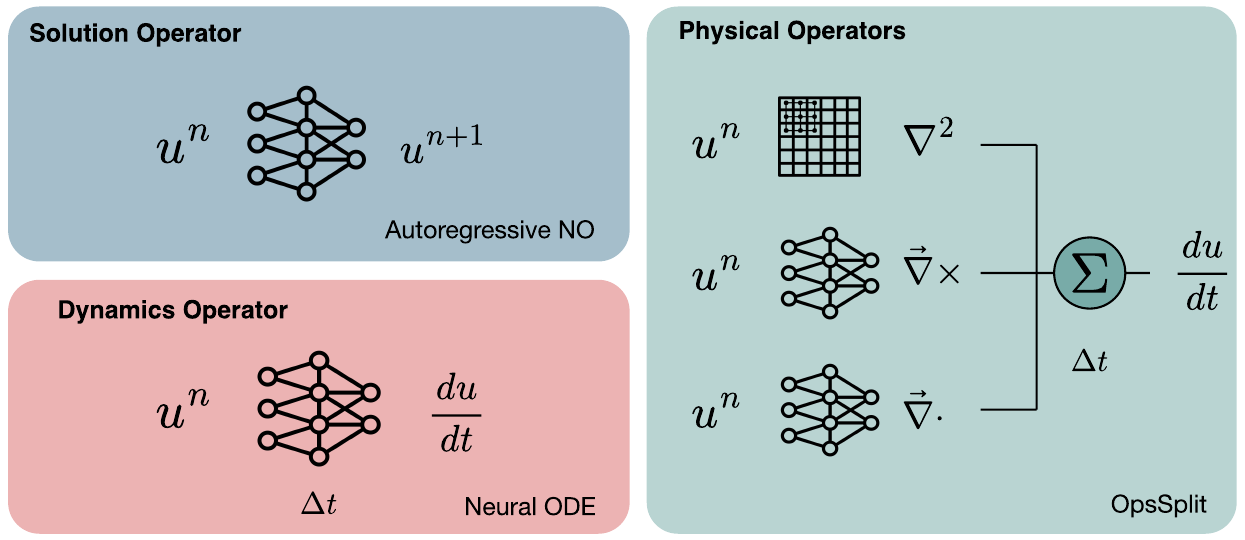}
    \caption{Comparison of operator learning approaches for PDE solving. The figure contrasts three methods: (top left) the traditional autoregressive approach where a single neural operator learns the solution operator mapping directly from state $u^n$ to $u^{n+1}$; (bottom left) the neural ODE approach where a neural operator learns the dynamics operator $\dv{u}{t}$ integrated with an ODE solver; and (right) the proposed OpsSplit method where individual neural operators learn specific physical operators $(\nabla \cross, \nabla \cdot)$ and are combined via operator splitting to compute $\dv{u}{t}$ which is integrated using an ODE solver. The OpsSplit approach decomposes the PDE into its constituent physical operators, with linear operators approximated by convolutions and non-linear operators learned by neural networks, enabling physics-informed and modular PDE solving.}
    \label{fig:framework}
\end{figure*}
This work presents a novel physics-informed training framework that addresses these limitations by decomposing PDEs using operator splitting methods \citep{blanes2024splittingmethodsdifferentialequations}. We adopt a Mixture of Experts (MoE) approach \citep{jacobs1991adaptive} in which individual neural operators learn specific physical operators, as shown in \cref{fig:framework}. The method reformulates the modelling task as a neural ordinary differential equation (ODE) with the right-hand side characterising spatial physical phenomena through a combination of neural operators and linear operators. Following operator splitting principles, our approach models non-linear operators using neural networks while approximating linear operators with fixed finite-difference convolutions. By learning physical operators instead of solution operators, we enforce PDE constraints during prediction and enable the construction of interpretable models whose behaviour can be numerically verified \footnote{\url{https://github.com/gitvicky/NOs_for_POs}}. Being modular, it also allows us to add and remove operators as the physics of the system change.  Our methodology provides three key advantages:
\begin{enumerate}
    \item \textbf{Generalisable Neural Operators:} The Mixture-of-Experts structure enables neural operators to explicitly learn physical operators, creating efficient models capable of generalising beyond training regimes. The modular structure allows for the insertion and removal of operators to adapt to changing physics. By learning physical operators, we obtain better convergence for both pre-training and fine-tuning across PDEs. 
    \item \textbf{Continuous in Time:} A neural ODE formulation provides temporal flexibility, with the right-hand side given by combined neural and linear operators integrated using standard ODE solvers.
    \item \textbf{Physics-Informed Machine Learning:} Physics consistency is inherently encoded through built-in enforcement of PDE constraints via the operator splitting structure of the prediction regime.
\end{enumerate}


\section{Related Work}

Physics-informed machine learning has created a new paradigm in computational physics by merging data with numerical models \citep{PIML,simintelligence}, enabling predictive forward modelling \citep{lippe2023pderefiner}, inverse modelling \citep{Jagtap_2022,Chen2021}, and scientific discovery \citep{Poels_2025,cranmer2023interpretablemachinelearningscience}. Neural PDE solvers have found interdisciplinary applications as surrogate models for efficient spatio-temporal PDE approximation. Physics-Informed Neural Networks (PINNs) optimise neural networks by minimising PDE residuals as loss functions \citep{Raissi2019PINNs}, while Neural Operators (NOs) learn PDE mappings by extracting dominant modes from simulation data for chosen basis functions \citep{Lu2021deeponet}. Physics-informed Neural Operators (PINOs) combine both approaches, training NO architectures in a supervised manner before fine-tuning with PDE residual minimisation \citep{LiPino2024, Rosofsky_2023}. Most neural operators learn solution operators through autoregressive construction, maintaining discrete temporal evolution \citep{Kovachki_NeuralOperators_2023}. Recent works formulate continuous-time approaches using neural ODEs to learn dynamics operators \citep{ChenNeuralODE2018,serrano2023operator,Zhou_2025_change_pde}, with extensions to differential algebraic equations where algebraic components are approximated within neural ODE frameworks \citep{koch2025learningneuraldifferentialalgebraic}. Heavily parameterised foundation models employing transformer-based architectures have enabled learning across multiple physics domains \citep{alkin2024upt, mccabe2023multiple, rahman2024pretraining}. AI/ML frameworks have also led to the development of differentiable physics models, where surrogate models exist within differentiable simulation frameworks, allowing for more hybrid modelling tools \citep{Holl2020Learning,um2021solverinthelooplearningdifferentiablephysics,citrin2024toraxfastdifferentiabletokamak,bhatia2025prdpprogressivelyrefineddifferentiable}.

\section{Background}
\subsection{Partial Differential Equations (PDEs)}
Consider a generic formulation of a PDE modelling the spatio-temporal evolution of $n$ field variables $u\in\mathbb{R}^n$:
\begin{align}
    \pdv{u}{t} = \lambda D_X(u) &, \quad X\in\Omega,\; t\in[0,T], \label{eq:pde} \\
    u(X,t) &= g, \quad X \in \partial \Omega, \label{eq:bc}\\
    u(X,0) &= a, \quad a \in \mathcal{A} \label{eq:ic}.
\end{align}
Here, $X$ defines the spatial domain $\Omega$, $[0,T]$ the temporal domain, and $\pdv{u}{t}$ the temporal gradient. $D_X$ represents the composite spatial derivative operator up to the PDE order. The physics coefficients are expressed via $\lambda$. The PDE has boundary condition $g$ and initial condition $a$ from the function space $\mathcal{A}$. 

\textbf{Our objective} is the forward problem: solving the PDE as an initial value problem under varying coefficients, mathematically expressed as
\begin{align}
\mathcal{G} : \mathcal{P} \rightarrow \mathcal{U}, \label{eq:objective}
\end{align}
where $\mathcal{P}$ represents all characterisations of initial conditions $a$ and PDE parameters $\lambda$, and $\mathcal{U}$ is the space of all PDE solutions over the domain.

\subsection{Autoregressive Neural Operators}
\label{sec: autoregressive_NO}
Neural operators (NOs) learn operator mappings across function spaces, enabling learning in infinite dimensions \citep{Kovachki_NeuralOperators_2023, bartolucci2023representation}. Being discretisation-agnostic, they have found significant applications in mapping PDE initial conditions $a \in \mathcal{A}$ to solutions $u \in \mathcal{U}$ \citep{Li2021fourier}. A neural operator parameterised by $\theta$ learns the solution operator
\begin{align}
    \mathcal{U} = \mathcal{NO_\theta}(\mathcal{A}), \;
    u^t = \mathcal{NO_\theta}(u^{t-dt}, u^{t-2dt}, \ldots, a), \label{eq:NO_ivp}
\end{align}
where $a=u^0$ is the initial condition. NOs couple point-wise estimations $W$ with kernel integration $\kappa$
\begin{equation}
\label{eq:kernel_integration}
    u^{n+1} = \sigma\bigg(Wu^n(x) + \int \kappa_\phi(x, y) \;  u^n(y) dy\bigg),
\end{equation}
combining local linear operator $W$ and non-local integral kernel operator $K$. Discretisation-agnosticism arises from kernel integration operating in continuous function spaces. Parameterising the integral operator via basis functions (Fourier modes, wavelets, Laplace eigenfunctions) yields variants like Fourier NO, Laplace NO, or wavelet NO \citep{Li2021fourier, cao2023lnolaplaceneuraloperator, WaveletNeuralOperatorTripura2023}, each providing complete bases for continuous domains.

Trained autoregressively, NOs rollout field evolution with fixed temporal discretisation $(dt)$, learning the mapping in \cref{eq:NO_ivp} as a discrete-time Markov process \citep{kallenberg1997foundations}. Long-term PDE evolution remains challenging, with significant divergence from ground truth due to cumulative rollout error \citep{Gopakumar_2024, mccabe2023towards, koehler2024apebench, carey2025neuraloperatorsurrogatemodels}.

\subsection{Neural ODEs}
\label{sec: neural_ODE}
Neural ordinary differential equations (Neural ODEs) are neural networks that learn the dynamic evolution of a system's state \citep{ChenNeuralODE2018}. Rather than learning the system state directly, a neural ODE approximates temporal dynamics, the rate of change. Expressed as a neural network $f$ parameterised by $\theta$, it learns the dynamics operator and evaluates the system state by integrating the initial value problem
\begin{align}
    \dv{u}{t} = f_\theta(u,t), \qquad u(T) = u(0) +  \int_{0}^{T} f_\theta(u(t), t)dt. \label{eq:node}
\end{align}

Neural ODEs enable continuous-time dynamical modelling \citep{kidger2022neuraldifferentialequations} and serve as universal differential equation solvers for interdependent ODEs \citep{rackauckas2021universaldifferentialequationsscientific}. Recently, neural operators framed as neural ODEs have achieved improved accuracy and stability for PDEs compared to autoregressive approaches \citep{Zhou_2025_change_pde}. Any differentiable architecture satisfying the universal approximation theorem can be deployed as a neural ODE. Extensions include augmented neural ODEs for better expressivity \citep{dupont2019augmentedneuralodes}, neural SDEs \citep{kidger2021neuralsdesinfinitedimensionalgans}, and continuous graph representations \citep{poli2021graphneuralordinarydifferential}. While enabling continuous function modelling, neural ODEs incur additional computational costs from ODE solving.

\subsection{Operator Splitting}
\label{sec: operator-splitting}
Operator splitting methods decompose complex PDEs into simpler sub-problems solvable sequentially or in parallel \citep{holden2010splitting}. The composite operator $D_X$ in \cref{eq:pde} is split into multiple operators representing different physical processes (e.g., convection, diffusion) that dominate field evolution within a bounded domain \citep{hormander1983analysis}. The decomposition pairs terms with specialised numerical techniques—advection operators with characteristic methods, diffusion operators with implicit schemes—while maintaining computational efficiency and solution accuracy. This requires domain knowledge of the physics and numerical method coupling. There may exist several ways of performing operator-splitting for a PDE requiring extensive experimentation. 

Consider decomposing the spatial operator into $j$ linear and $k$ non-linear components
\begin{align}
    D_X(u) &= D_{l_1}(u) + D_{l_2}(u) + ... + D_{l_j}(u) \nonumber \\
            & +D_{nl_1}(u) + D_{nl_2}(u) + ... + D_{nl_k}(u) , \label{eq:operator_split}
\end{align}
where each $D_i$ represents a distinct component. This rewrites the PDE from \cref{eq:pde} as
\begin{align}
    \pdv{u}{t}  = \left(\sum_{i=1}^{j} \lambda_{l_i} D_{l_i}(u^n) + \sum_{i=1}^{k} \lambda_{nl_i} D_{nl_i}(u^n)\right),
\label{eq:split_pde}
\end{align}
where $\lambda_{l_i}$ and $\lambda_{nl_i}$ are coefficients for linear and non-linear components, respectively. Each sub-problem is solved independently over small time steps $\Delta t$ using different numerical methods optimised for each operator's characteristics. Methods range from first-order (Godunov) to higher-order (Strang) splitting \citep{MacNamara2016}. Decomposing linear and non-linear components enables asynchronous resource allocation, dedicating more resources to complex non-linear operators.

\section{Method: OpsSplit}
\label{sec: method}

Numerical PDE solvers typically follow a discretise-then-optimise approach \citep{ghobadi2009discretize}, approximating spatial operators in discretised domains and solving the resulting ODEs via time integration. Neural network-based solvers like PINNs follow the inverse structure: optimise-then-discretise. Neural Operators arguably follow this paradigm \citep{furuya2024can}, with parameters optimised across fixed or varying discretisations during training \citep{george2024incremental}, enabling discretisation-agnostic predictions. We propose a hybrid approach merging both paradigms.

The generic PDE in \cref{eq:pde} can be decomposed into linear and non-linear physical operators as in \cref{eq:operator_split}, yielding \cref{eq:split_pde}. Rather than traditional finite difference, integral transform, or polynomial approximations, we use neural operators $\mathbb{NO}$ for non-linear operators and convolutions with finite difference stencils $\mathbb{FD}$ for linear operators \citep{Actor2020-ng,CHEN2024116974,chen2024usingailibrariesincompressible,gopakumar2025calibrated}, converting \cref{eq:split_pde} to:
\begin{align}
    \dv{u}{t} &=\lambda_{l_1} \mathbb{FD}_1(u) + \lambda_{l_2} \mathbb{FD}_2(u) + ... + \lambda_{l_j} \mathbb{FD}_j(u) \nonumber\\
    &+ \lambda_{nl_1} \mathbb{NO}_1(u) + \lambda_{nl_2} \mathbb{NO}_2(u) + ... + \lambda_{nl_k} \mathbb{NO}_k(u). \label{eq:no_split_pde}
\end{align}

Unlike autoregressive neural operators (\cref{sec: autoregressive_NO}) that map function spaces without explicit PDE structure, or neural ODEs (\cref{sec: neural_ODE}) that remain spatio-temporally continuous but approximate the entire $D_X$ using a single operator \citep{Zhou_2025_change_pde}, our formulation (\cref{eq:no_split_pde}, \cref{fig:framework}) expresses a neural ODE as a linear combination of neural operators, each learning specific physical operators. This Mixture of Experts (MoE) approach \citep{eigen2014learningfactoredrepresentationsdeep, shazeer2017} dedicates expert neural operators to individual physical phenomena, enabling better efficiency and performance while facilitating disciplined scaling. Traditional neural operators suffer from spectral bias when a single parameterisation must learn dominant frequency modes across competing physical phenomena \citep{spectral_bias_NNs-rahaman19a, qin2024betterunderstandingfourierneural, NO_SpectralBIasKhodakarami2026}. Dedicating separate operators to each phenomenon provides better expressibility and flexibility—operators can be added or removed with changing physics. 

By approximating non-linear spatial operators with neural operators, our method parallels spectral PDE methods \citep{polyanin1998handbook}, producing an ODE in time once spatial gradients are approximated. The linear combination of neural operators approximates the PDE's RHS, enabling time integration via ODE solvers such as explicit Euler or Runge-Kutta methods \citep{Ascher1997ImplicitexplicitRM}. For Euler integration with $j$ linear and $k$ non-linear operators:
\begin{equation}
    u^{n+1} = \left(\sum_{i=1}^j \lambda_i \mathbb{FD}_i(u^n) + \sum_{i=1}^k \lambda_i \mathbb{NO}_i(u^n)\right) \Delta t + u^n, 
    \label{eq:abstract_nox_pde_euler}
\end{equation}
where the prediction formulation implicitly enforces PDE constraints. This constitutes a novel physics-informed approach to operator learning, structuring the differential PDE form into training and inference rather than into loss functions or architectures. As demonstrated in \cref{sec: experiments}, this framework generalises to unseen physics and novel PDE coefficient regimes. Operators are trained via supervised learning using relative LP-Loss \citep{kossaifi2024neural}:
\begin{equation}
    \mathcal{L}_{\text{rel}}(\hat{u}^{n+1}, u^{n+1}) = \frac{\|\hat{u}^{n+1}- u^{n+1}\|_p}{\|u^{n+1}\|_p + \epsilon},
\end{equation}
where $\hat{u}^{n+1}, u^{n+1}$ denote prediction and ground truth at the $(n+1)$-th time instance.

\subsection{Considerations for neural operator splitting}
In neural settings, operator splitting requires balancing physical fidelity with computational cost. Assigning distinct Neural Operators (NOs) to each physical process substantially increases parameterisation and memory overhead. To minimise redundancy, the strategy should target vector operations invariant across the PDE family. We explore these trade-offs further with ablation studies across OpsSplit methods in \cref{appendix:operator_splitting}. Furthermore, leveraging the fine-tuning capabilities demonstrated in \cref{appendix:convergence}, the design should prioritise modular, reusable operators that transfer across related physical systems.

\section{Experiments}
\label{sec: experiments}

\begin{table*}[ht]
\centering
\resizebox{\textwidth}{!}{
\begin{tabular}{cccccccccc}
\toprule
& & & & & \textbf{Train Time} & \multicolumn{4}{c}{\textbf{NRMSE}} \\
\cmidrule(lr){7-10}
\textbf{Operator Learning} & \textbf{Model} & \textbf{Time Stepping} & \textbf{OpsSplit} & \textbf{Parameters} & \textbf{(hrs:mins)} & \textbf{Test} & \textbf{t-extrapolate} & \textbf{OOD} & \textbf{OOD+t-extrapolate} \\
\midrule
Solution & FNO & Autoregressive & False & 13437314 & 1:03 & 0.2770 $\pm$ 0.0491 &  0.6860 ± 0.26476 & 0.4821 ± 0.0639  &   0.8986 ± 0.2727   \\
Dynamics & FNO & Euler & False & 13437314 & 1:04 & 0.0325 $\pm$ 0.0165  & \textbf{0.0536 ± 0.0246} & 0.1276 ± 0.0081   &   0.3038 ± 0.0119  \\
Physical & FNO & Euler & True & 13437314 & 1:06 & \textbf{0.0297 $\pm$ 0.0017}  &  0.0630 ± 0.0111  & \textbf{0.1160 ± 0.0162 } &  \textbf{0.2185 ± 0.0846}  \\    
\midrule 
Solution & U-Net & Autoregressive & False & 31384322 & 1:22 & 0.9508 ± 0.1369  & 1.3183 ± 0.1318 &  0.8192 ± 0.2007  & 1.1872 ± 0.2534    \\
Dynamics & U-Net  & Euler & False & 31384322 & 1:22 & 0.1006 ± 0.0133  & 0.2212 ± 0.0315  & 0.2823 ± 0.0154  &  0.5135 ± 0.0331\\
Physical & U-Net & Euler & True & 31384322 & 1:26 &  \textbf{0.0887 ± 0.0116}   & \textbf{0.1996 ± 0.0190} & \textbf{ 0.2705 ± 0.0060 } & \textbf{ 0.4550 ± 0.0181 } \\
\midrule
Solution & ViT & Autoregressive & False & 11405510 & 3:42 & 0.2335 ± 0.0468 & 0.4234 ± 0.0468 & 0.7505 ± 0.0713 &  0.8996 ± 0.0647 \\
Dynamics & ViT & Euler & False & 11405510 & 3:43 &  0.1034 ± 0.0081 & 0.2179 ± 0.0191 & 0.3747 ± 0.0199  &  0.5984 ± 0.0699 \\
Physical & ViT & Euler & True & 11405510 & 3:45 & \textbf{0.0157 ± 0.0008} &  \textbf{0.0822 ± 0.0016} & \textbf{0.3065 ± 0.0125} &  \textbf{0.4742 ± 0.0195} \\
\midrule
Solution & CNO & Autoregressive & False & 1420458 & 9:43 & 0.5094 $\pm$ 0.0611 &  0.9002 $\pm$ 0.1620 & 0.5831 $\pm$ 0.0875 &  3.1951 $\pm$ 0.3515 \\
Dynamics & CNO & Euler & False & 1420458 & 9:45 &  0.2985 $\pm$ 0.0478 & 0.7122 $\pm$ 0.0997 & 0.4387 $\pm$ 0.0834 & 0.9055 $\pm$ 0.1177 \\
Physical & CNO & Euler & True & 1420458 & 9:52 & \textbf{0.2797 $\pm$ 0.0475} &  \textbf{0.5200 $\pm$ 0.0624} & \textbf{0.3634 $\pm$ 0.0545} & \textbf{0.6917 $\pm$ 0.1245} \\
\midrule
Solution & UNO & Autoregressive & False & 1397442 & 2:23 &  0.0912 $\pm$ 0.0100 &  0.2856 $\pm$ 0.0543 & 0.6190 $\pm$ 0.0867 &  0.8215 $\pm$ 0.1314 \\
Dynamics & UNO & Euler & False & 1397442 & 2:23 &  0.0141 $\pm$ 0.0018 &  \textbf{0.1747 $\pm$ 0.0297} &  \textbf{0.2986 $\pm$ 0.0358} &  \textbf{0.6221 $\pm$ 0.1120} \\
Physical & UNO & Euler & True & 1397442 & 2:24 & \textbf{0.0130 $\pm$ 0.0020} & 0.1857 $\pm$ 0.0204 & 0.3231 $\pm$ 0.0614 &  0.7076 $\pm$ 0.0991 \\
\bottomrule
\end{tabular}
}
\caption{Incompressible Navier--Stokes: Performance comparison across methods and model architectures. The models and methods are set up to have comparable parameter sizes while exploring a certain architecture. Best performance within each test setting is given in bold. Our method of using neural operators to learn physical operators offers the best performance across most architectures. OOD refers to a different parameterisation of the initial condition and viscosity than that used for training (\cref{appendix:incomp_ns}).}
\label{tab:incomp_ns}
\end{table*}

We evaluate OpsSplit (learning physical operators) against Autoregressive (solution operator) and Neural-ODE (dynamics operator) approaches on incompressible and compressible Navier--Stokes equations. Performance is assessed via Normalised Root Mean Square Error (NRMSE) across four scenarios: (a) test data, (b) temporal extrapolation under identical physics, (c) out-of-distribution (OOD) with novel initial conditions and PDE coefficients, and (d) combined temporal extrapolation with OOD. NRMSE is:
\begin{align}
   \text{NRMSE} =  \sqrt{\frac{\frac{1}{n}\sum_{i=1}^{n}(u_i - \hat{u}_i)^2}{\frac{1}{n}\sum_{i=1}^{n} u_i^2 + \epsilon}},
   \label{eq:nrmse}
\end{align}
where $n$ is the number of data points, $\epsilon = 1e-6$ prevents division by zero, and $\hat{u}_i, u_i$ denote predictions and targets.



We test multiple architectures—Fourier Neural Operator (FNO) \citep{Li2021fourier}, U-Net \citep{ronneberger2015unet, gupta2023towards}, Vision Transformer (ViT) \citep{dosovitskiy2021imageworth16x16words, herde2024poseidon}, U-shaped Neural Operator (UNO) \citep{rahman2023unoushapedneuraloperators}, and Convolutional Neural Operator (CNO) \citep{bartolucci2023representation}—within each deployment method. While architecture choice impacts performance, our architecture-agnostic study focuses on deployment strategy comparisons. Performance should be compared across methods (autoregressive, neural-ODE, OpsSplit) rather than architectures. Given cost-accuracy tradeoffs \citep{dehoop2022costaccuracytradeoffoperatorlearning}, ablation studies in the appendices use FNOs exclusively.

\paragraph{Training:} All models train for 250 epochs using Adam optimiser \citep{adam} with initial learning rate 0.001, decaying by half every 50 steps. Models use LP-Loss \citep{Gopakumar_2024} on single Nvidia H100 GPUs with matched parameter counts (within each method) for fair comparison. Each experiment uses 100 PDE simulations with varied initial conditions (data efficiency ablations are demonstrated in \cref{appendix:data_efficiency}). Models predict $u^{t+1}$ from $u^t$ but train by sequentially predicting 5 steps ($u^{t+5}$) before backpropagation. Balancing temporal learning capability against memory/compute costs, we fix the rollout length at 5 \citep{koehler2024apebench}, enabling consistent comparison (see \cref{appendix:rollout_length} for ablations). Linear range normalisation maps field values to [-1, 1]; PDE coefficients in OpsSplit use matching normalisation for physical coherence.

\subsection{Incompressible Navier--Stokes}
\label{sec:exp_incomp}
\begin{figure*}[!ht]
    \centering
    \begin{subfigure}{0.24\textwidth}
        \centering
            \includegraphics[width=\textwidth]{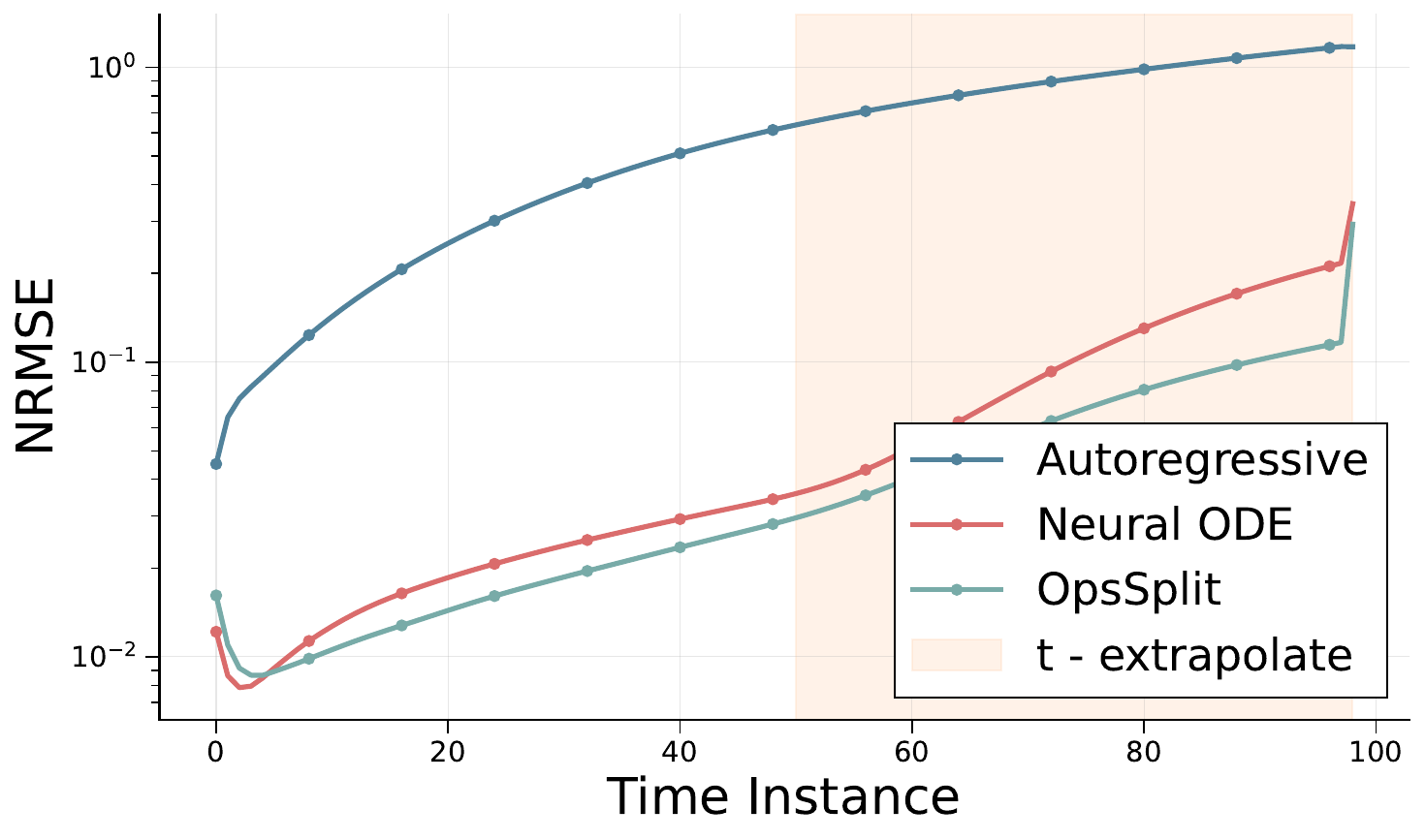}
        \caption{FNO: in-distribution}
        \label{fig:nrmse_incomp_100_iid_fno}
    \end{subfigure}
    \hfill
    \begin{subfigure}{0.24\textwidth}
        \centering
        \includegraphics[width=\textwidth]{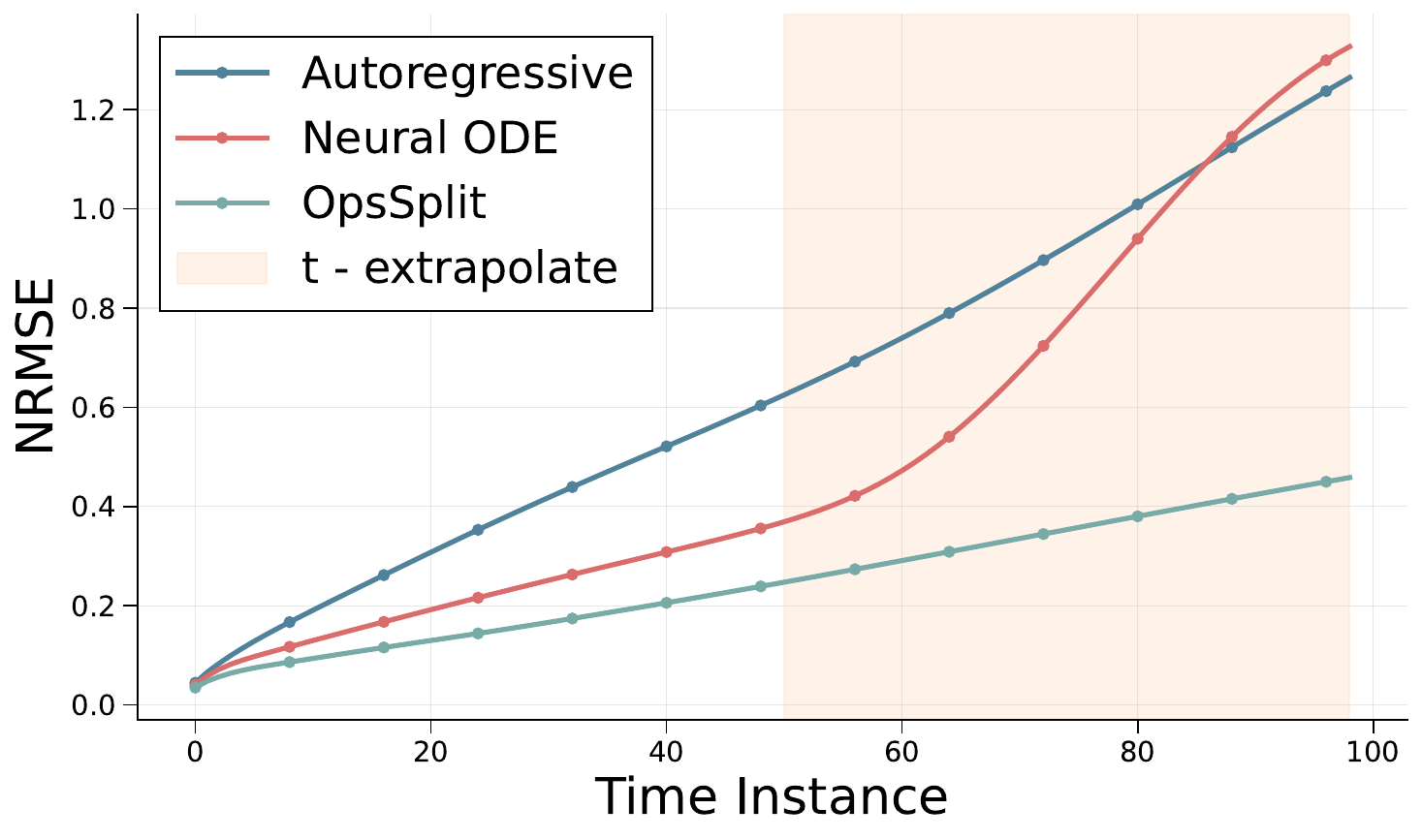}
        \caption{FNO: out-of-distribution}
        \label{fig:nrmse_incomp_100_ood_fno}
    \end{subfigure}
    \hfill    
    \begin{subfigure}{0.24\textwidth}
        \centering
        \includegraphics[width=\textwidth]{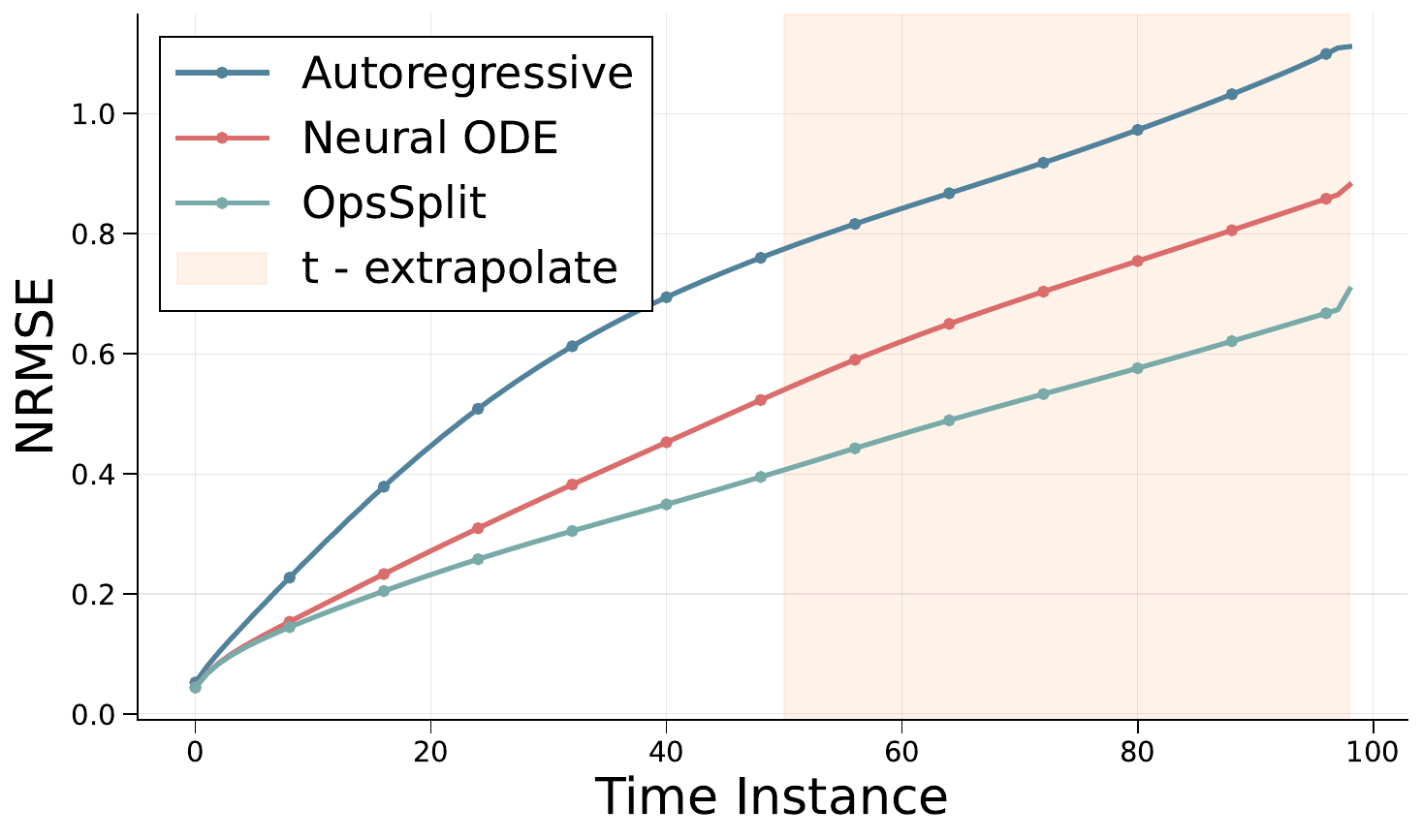}
            \caption{CNO: in-distribution}
        \label{fig:nrmse_incomp_100_iid_cno}
    \end{subfigure}
    \hfill
    \begin{subfigure}{0.24\textwidth}
        \centering
        \includegraphics[width=\textwidth]{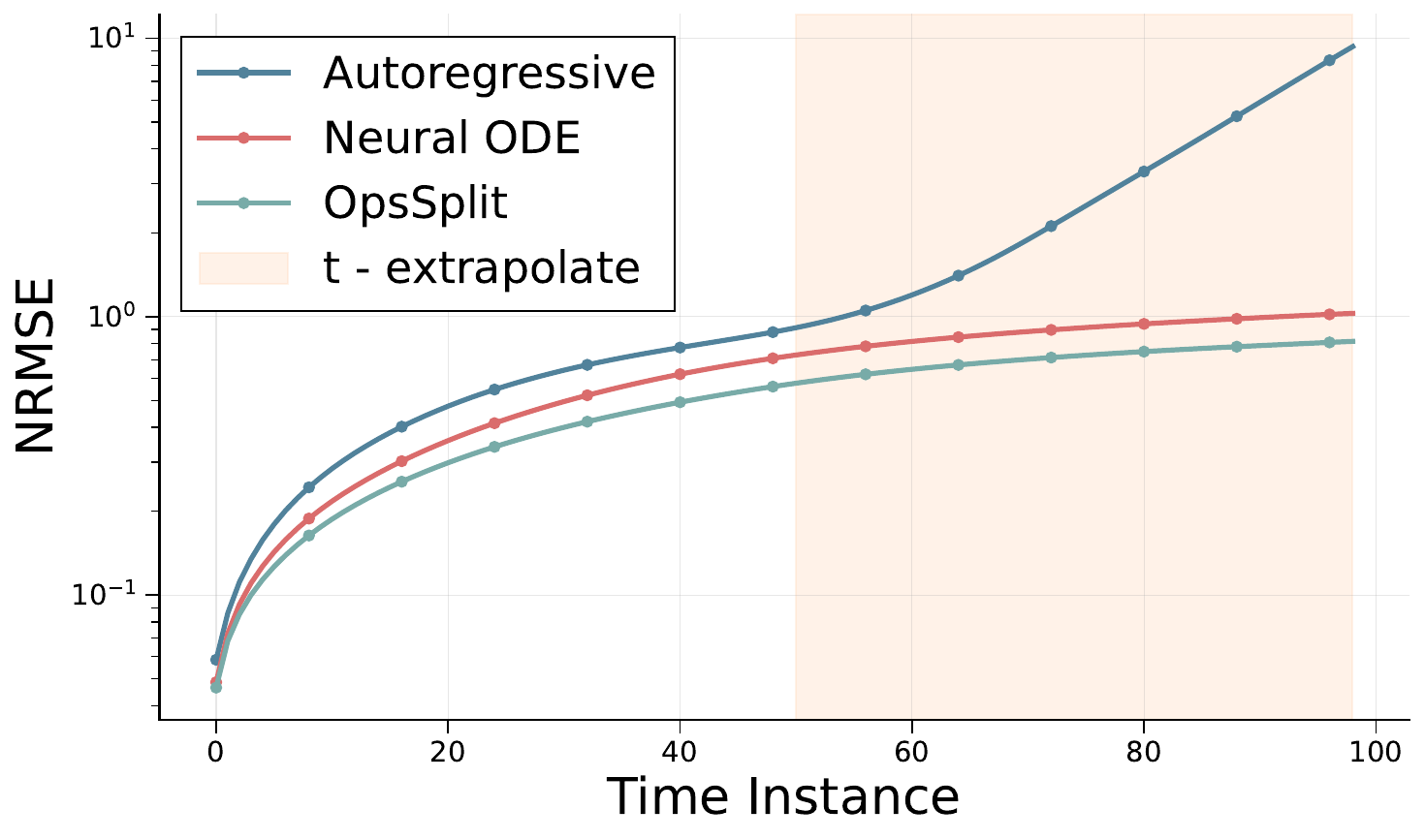}
        \caption{CNO: out-of-distribution}
        \label{fig:nrmse_incomp_100_ood_cno}
    \end{subfigure}
    \caption{Rollout error for incompressible Navier--Stokes equations. FNO (\cref{fig:nrmse_incomp_100_iid_fno,fig:nrmse_incomp_100_ood_fno}) and CNO (\cref{fig:nrmse_incomp_100_iid_cno,fig:nrmse_incomp_100_ood_cno}) predictive error growth for in-distribution and out-of-distribution cases. The temporal extrapolation region is shaded orange. OpsSplit accumulates fewer errors and provides more stable temporal rollout than autoregressive and neural ODE methods across architectures and scenarios.} 
    \label{fig:incomp_rollout_error}
\end{figure*}

The incompressible Navier--Stokes equations are fundamental to fluid dynamics applications from aerodynamics and weather prediction to blood flow and ocean currents \citep{Kwak_CFD_NASA_2009,BauerNWP2015,Zhang_blood_flow_incompNS2025}. These equations pose significant computational challenges due to non-linear convection operators and coupled pressure-velocity relationships through continuity constraints, making them ideal for evaluating neural operators' multi-scale physics capture and long-term stability. Consider the two-dimensional incompressible Navier--Stokes equations:
\begin{align}
    \mathbf{\nabla} \cdot \mathbf{v} &= 0,  \label{eq:ns_cont} \\
    & \hfill \text{(Continuity equation)} \nonumber \\[1ex]
    \pdv{\mathbf{v}}{t} + (\mathbf{v} \cdot \nabla) \mathbf{v}  &= \nu \nabla^2 \mathbf{v} - \nabla P, \label{eq:ns_mom} \\
    & \hfill \text{(Momentum equation)} \nonumber
\end{align}
where $\mathbf{v}=[u,v]$ is the velocity vector of an incompressible fluid with kinematic viscosity $\nu$ under periodic boundary conditions. The system is dominated by the non-linear convection operator $(\mathbf{v} \cdot \mathbf{\nabla})$ accounting for momentum transport, and the linear diffusion operator $\nabla^2$ (Laplacian) characterising viscous momentum diffusion. Pressure derives from velocity via the pressure Poisson formulation (\cref{appendix:incomp_ns}).

OpsSplit decomposes \cref{eq:ns_mom} into neural and linear components: $\mathbb{NO}_{conv}$ characterises combined convection and pressure Poisson effects, while $\mathbb{FD}_{\nabla^2}$ approximates diffusion via linear convolution with finite difference stencil kernels \citep{gopakumar2025calibrated}. Linear operators leverage well-established finite difference approximations as fixed convolutional kernels, exploiting natural correspondences between discrete differential stencils and spatial convolutions \citep{Actor2020-ng,CHEN2024116974,chen2024usingailibrariesincompressible}, reserving neural capacity for non-linear physics while hard-coding known structures parameter-efficiently. An ablation study across various approximation methods for the linear operators are given in \cref{appendix:operator_splitting_guidelines}. 

\begin{align}
    \dv{\mathbf{v}}{t} &= - \mathbb{NO}_{conv} (\mathbf{v})+ \nu \mathbb{FD}_{\nabla^2}(\mathbf{v}) \label{eq:ns_mom_NO} 
\end{align}

\Cref{eq:ns_mom_NO} reformulates momentum as a neural ODE explicitly separating non-linear (convection via $\mathbb{NO}_{conv}$, including pressure Poisson effects from \cref{eq:pressure_poisson}) and linear physics (diffusion via fixed stencil $\mathbb{FD}_{\nabla^2}$). This converts the PDE to ODEs integrable via standard solvers. During training, velocity fields pass through both operators; their weighted combination (scaled by $\nu$) provides $\dv{\mathbf{v}}{t}$, integrated forward for predictions. Training data generated by solving \cref{eq:ns_cont,eq:ns_mom} on $x \in [0,1], y \in [0,1], t \in [0, 0.5]$ via spectral solver \citep{canuto2007spectral}. Temporal extrapolation tests use data evolved to $t=1.0$; OOD tests use different initial condition parameterisations and viscosity conditions. See \cref{appendix:incomp_ns} for physics, solver, parameterisation, and model details.

\begin{figure}[!ht]
    \centering
    \begin{subfigure}{\linewidth}
        \centering
            \includegraphics[width=0.87\textwidth]{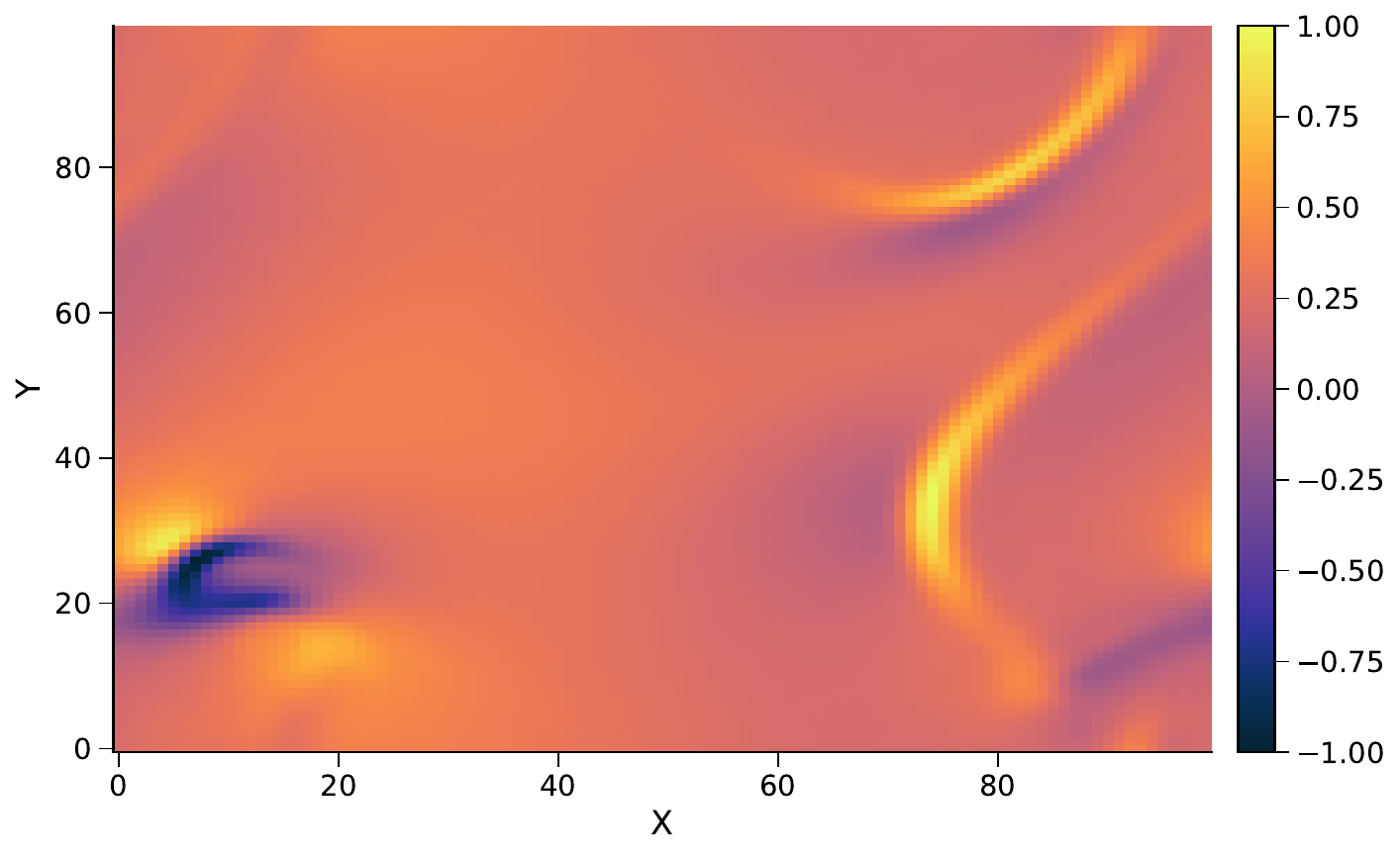}
        \caption{Convection Operator: Neural}
        \label{fig:convops_neural}
    \end{subfigure}
    \hfill
    \begin{subfigure}{\linewidth}
        \centering
        \includegraphics[width=0.87\textwidth]{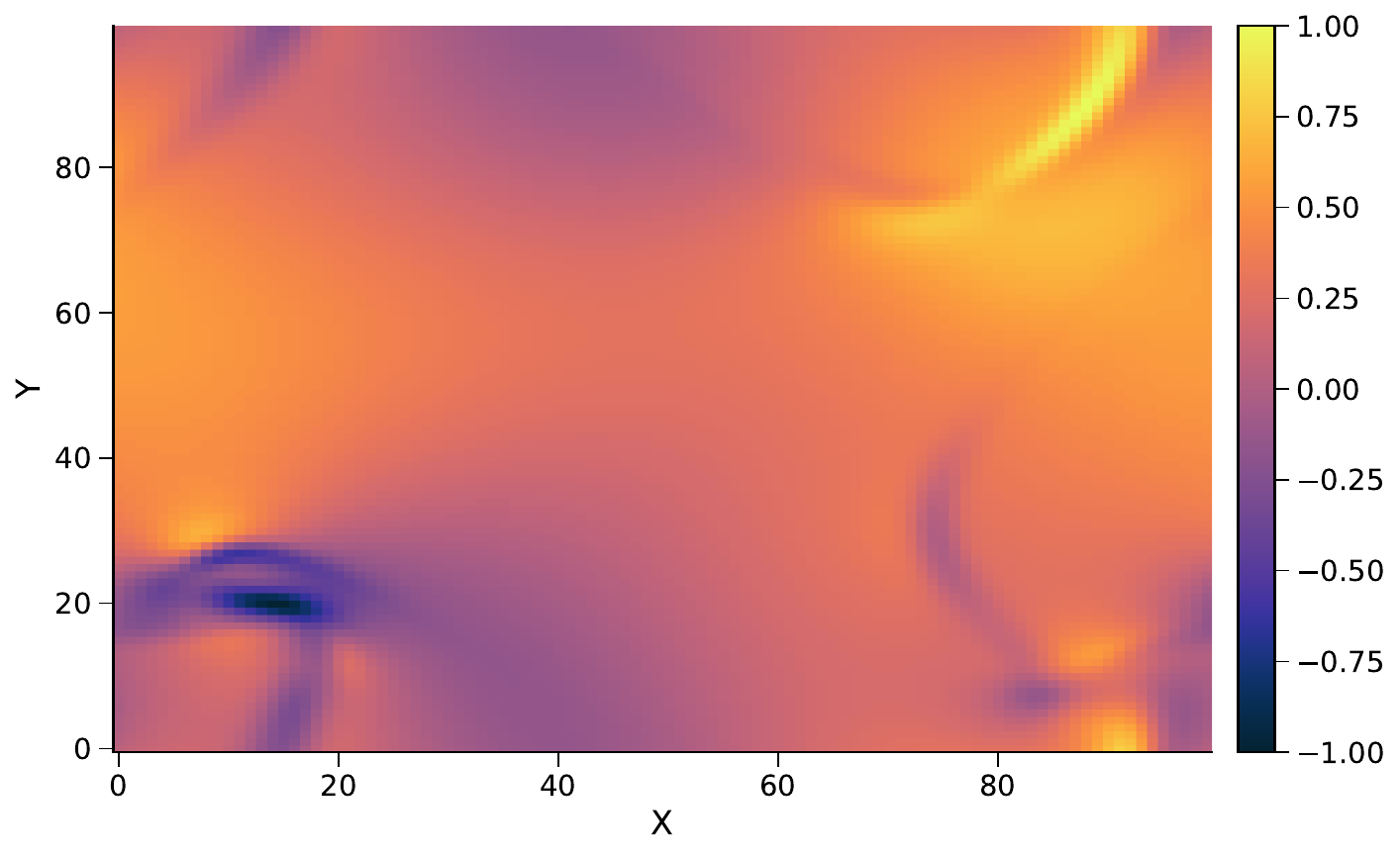}
        \caption{Convection Operator: Numerical}
        \label{fig:convops_numerical}
    \end{subfigure}
    
    \caption{Neural operator learned convection (\cref{fig:convops_neural}) versus numerical method. NO captures convection nuances and advection-driven flow trends. Qualitative interpretability study: learned convection exists in latent space; numerical convection shown in physical space, normalised to [-1, 1].} 
    \label{fig:convops_interpretability}
    \vspace{-7.5em}
\end{figure}

\Cref{tab:incomp_ns} quantitatively compares OpsSplit (physical) against autoregressive (solution) and neural ODE (dynamics) methods. OpsSplit demonstrates superior performance across architectures, particularly in OOD scenarios with novel physics and initial conditions. \Cref{fig:incomp_rollout_error} shows FNO and CNO rollout errors across all deployment methods—OpsSplit exhibits the lowest error growth in both in-distribution and OOD scenarios. \Cref{fig:convops_interpretability} compares the learned neural convection operator (\cref{fig:convops_neural}) with that utilised in the numerical solver (\cref{fig:convops_numerical}). OpsSplit's NO explicitly learns general behaviour of the physical operator, enabling physics-informed, interpretable models (see \cref{appendix:interpretability} for further studies). \Cref{fig:PRE_cont_FNO_OOD} evaluates physical consistency of each method via physics residual error of the continuity equation (\cref{eq:ns_cont}). Surprisingly, neural ODE struggles with the physics as we extrapolate temporally, while autoregressive and OpsSplit methods strive to keep the inconsistencies to a minimum.

\begin{figure}
    \centering
    \includegraphics[width=0.9\linewidth]{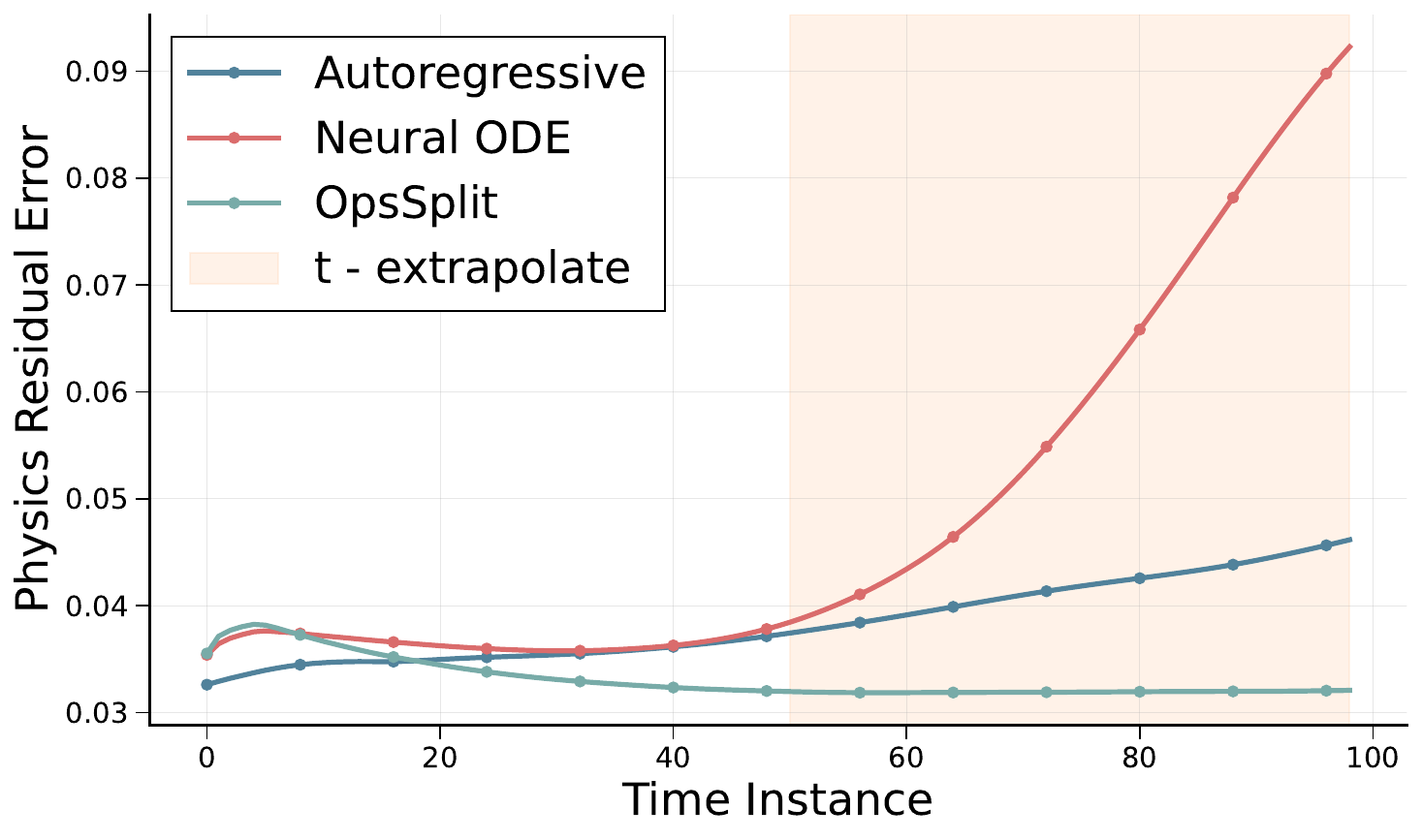}
    \caption{Continuity equation (\cref{eq:ns_cont}) violation across FNO predictions for OOD modeling.}
    \label{fig:PRE_cont_FNO_OOD}
\end{figure}

\subsection{Compressible Navier--Stokes}
\label{sec:exp_comp}

\begin{table*}[ht]
\centering
\resizebox{\textwidth}{!}{
\begin{tabular}{cccccccccc}
\toprule
& & & & & \textbf{Train Time} & \multicolumn{4}{c}{\textbf{NRMSE}} \\
\cmidrule(lr){7-10}
\textbf{Operator Learning} & \textbf{Model} & \textbf{Time Stepping} & \textbf{OpsSplit} & \textbf{Parameters} & \textbf{(hrs:mins)} & \textbf{Test} & \textbf{t-extrapolate} & \textbf{OOD} & \textbf{OOD+t-extrapolate} \\
\midrule

Solution & FNO & Autoregressive & False & 13437828 & 1:53 & 0.0938 $\pm$ 0.0131 & 0.2462 $\pm$ 0.0443 & 0.1209 $\pm$ 0.0133 & 0.2792 $\pm$ 0.0530 \\
Dynamics & FNO & Euler & False & 13437828 & 1:56 & \textbf{0.0245 $\pm$ 0.0039} & \textbf{0.0854 $\pm$ 0.0102} & 0.0763 $\pm$ 0.0130 & 0.1185 $\pm$ 0.0154 \\
Physical & FNO & Euler & True & 13454787 & 3:08 & 0.0602 $\pm$ 0.0090 & 0.0959 $\pm$ 0.0182 & \textbf{0.0573 $\pm$ 0.0063} & \textbf{0.0751 $\pm$ 0.0105} \\    
\midrule 
Solution & U-Net & Autoregressive & False & 31385604 & 1:53 & 0.5963 $\pm$ 0.0716 & 1.1914 $\pm$ 0.2025 & 0.6069 $\pm$ 0.1153 & 1.2384 $\pm$ 0.1610 \\
Dynamics & U-Net & Euler & False & 31385604 & 1:54 & 0.1222 $\pm$ 0.0196 & 0.2912 $\pm$ 0.0320 & 0.1477 $\pm$ 0.0222 & 0.2989 $\pm$ 0.0538 \\
Physical & U-Net & Euler & True & 31384322 & 2:47 & \textbf{0.0781 $\pm$ 0.0109} & \textbf{0.1081 $\pm$ 0.0184} & \textbf{0.0799 $\pm$ 0.0096} & \textbf{0.1149 $\pm$ 0.0218} \\ 
\midrule 
Solution & ViT & Autoregressive & False & 11642836 & 7:54 & 0.2337 $\pm$ 0.0351 & 0.4323 $\pm$ 0.0476 & 0.2852 $\pm$ 0.0513 & 0.5109 $\pm$ 0.0664 \\
Dynamics & ViT & Euler & False & 11642836 & 7:55 & \textbf{0.0557 $\pm$ 0.0089} & 0.2761 $\pm$ 0.0387 & 0.0963 $\pm$ 0.0183 & 0.3044 $\pm$ 0.0365 \\
Physical & ViT & Euler & True & 11609798 & 10:13 & 0.0796 $\pm$ 0.0135 & \textbf{0.1296 $\pm$ 0.0143} & \textbf{0.0818 $\pm$ 0.0123} & \textbf{0.1396 $\pm$ 0.0251} \\ 
\midrule 
Solution & UNO & Autoregressive & False & 1398468 & 3:51 & 0.2689 $\pm$ 0.0323 & 0.7733 $\pm$ 0.1315 & 0.3028 $\pm$ 0.0454 & 0.8366 $\pm$ 0.1590 \\
Dynamics & UNO & Euler & False & 1398468 & 3:51 & \textbf{0.0142 $\pm$ 0.0020} & \textbf{0.0660 $\pm$ 0.0073} & 0.0706 $\pm$ 0.0127 & 0.0831 $\pm$ 0.0108 \\
Physical & UNO & Euler & True & 2400563 & 11:09 & 0.0563 $\pm$ 0.0090 & 0.0861 $\pm$ 0.0164 & \textbf{0.0574 $\pm$ 0.0069} & \textbf{0.0752 $\pm$ 0.0113} \\

\bottomrule
\end{tabular}
}

\caption{Compressible Navier--Stokes: Performance comparison across methods and model architectures. The models and methods are set up to have comparable parameter sizes while exploring a certain architecture. Best performance within each test setting is given in bold. Our method of using neural operators to learn physical operators offers the best performance across most architectures. OOD refers to a different parameterisation of the initial condition and adiabatic index than that used for training (\cref{appendix:comp_ns}). The higher computational times for OpsSplit is observed due to the several forward passes required to estimate the physical operators.}
\label{tab:comp_ns}
\end{table*}

\begin{figure*}[!ht]
    \centering
    \begin{subfigure}{0.24\textwidth}
        \centering
            \includegraphics[width=\textwidth]{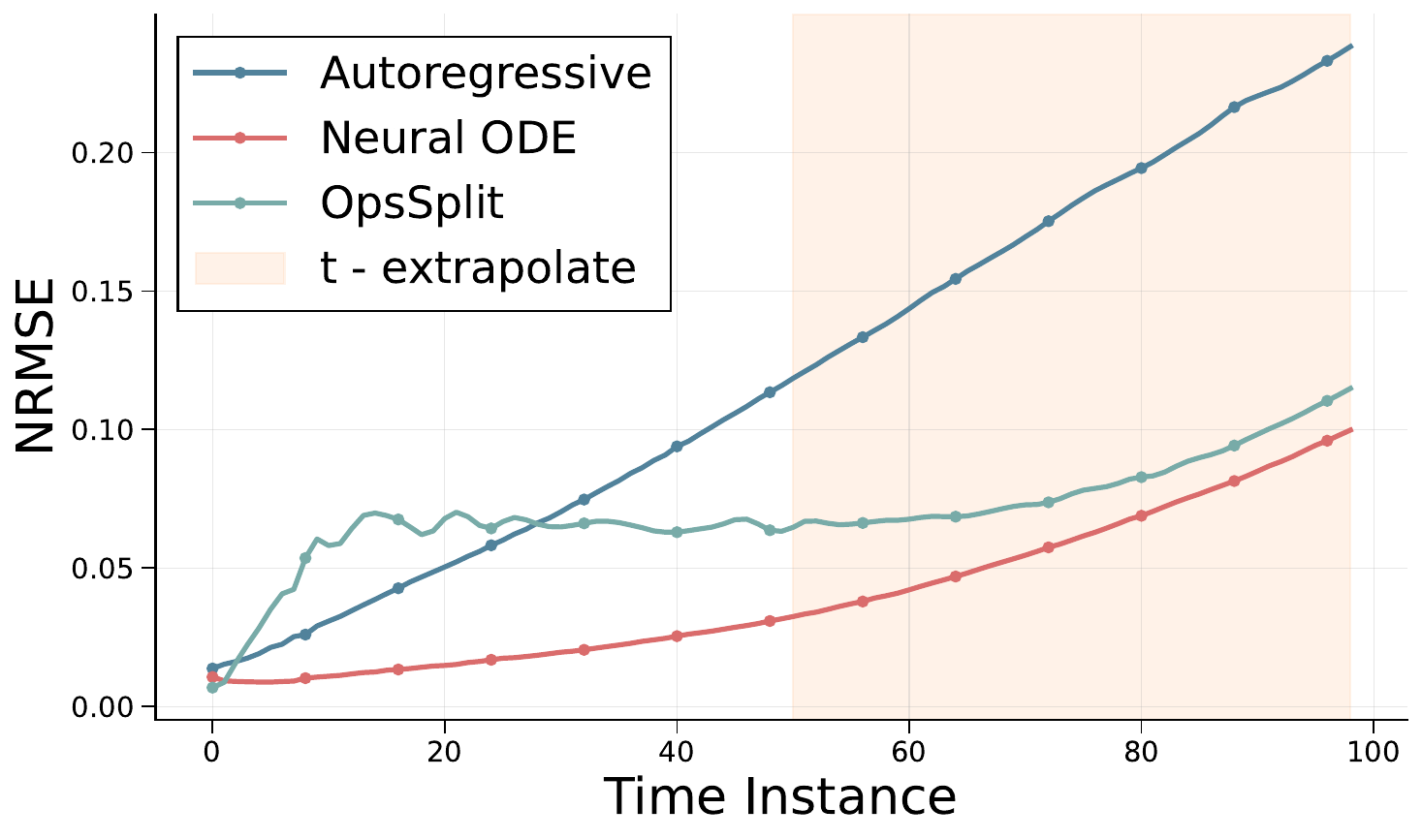}
        \caption{FNO: in-distribution}
        \label{fig:nrmse_comp_100_id_fno}
    \end{subfigure}
    \begin{subfigure}{0.24\textwidth}
        \centering
            \includegraphics[width=\textwidth]{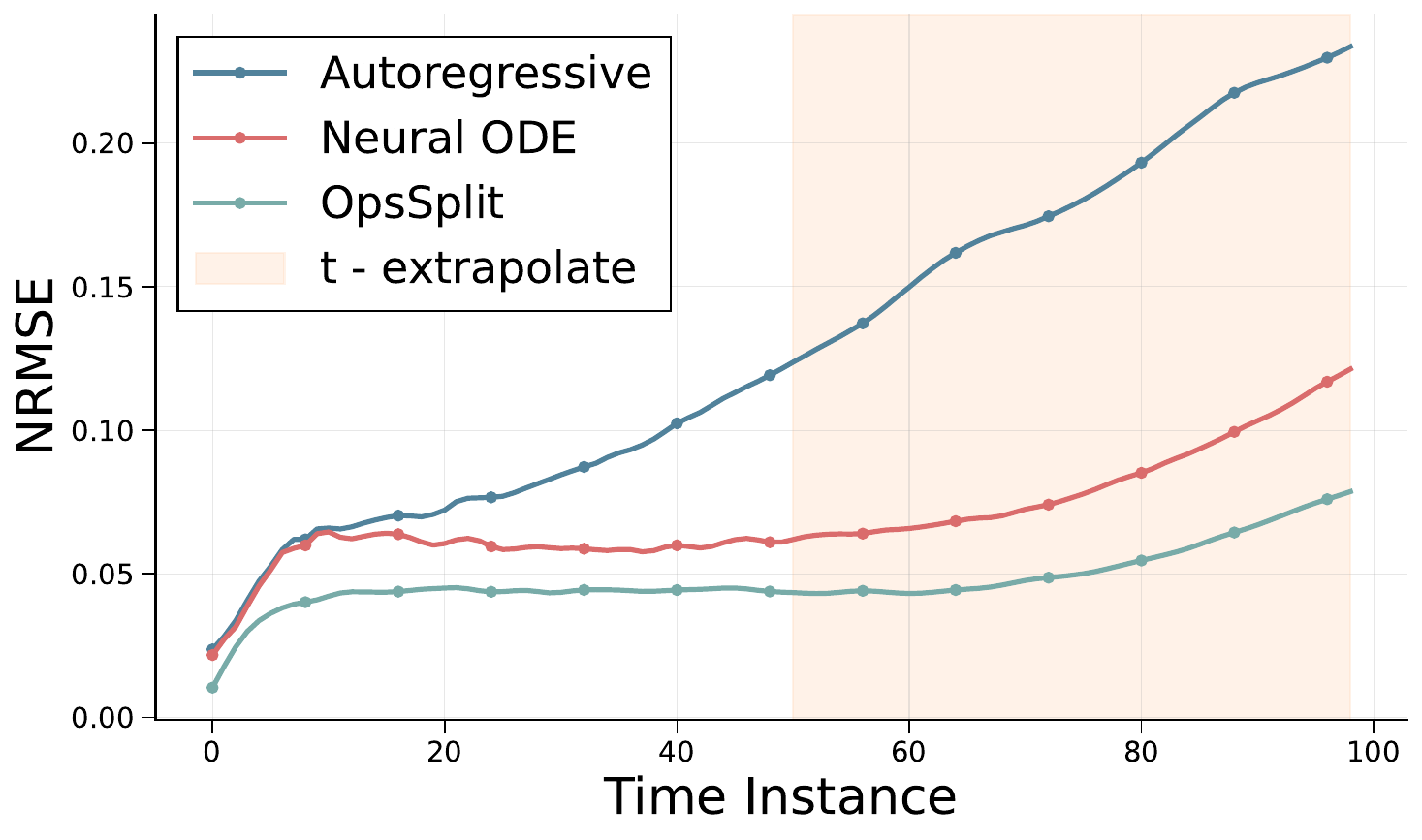}
        \caption{FNO: out-of-distribution}
        \label{fig:nrmse_comp_100_ood_fno}
    \end{subfigure}
    \hfill
    \begin{subfigure}{0.24\textwidth}
        \centering
            \includegraphics[width=\textwidth]{Images/temporal_error_Compressible_Navier-Stokes_fno_MSE_100_OOD.pdf}
        \caption{UNet: in-distribution}
        \label{fig:nrmse_comp_100_id_unet}
    \end{subfigure}
    \hfill
    \begin{subfigure}{0.24\textwidth}
        \centering
        \includegraphics[width=\textwidth]{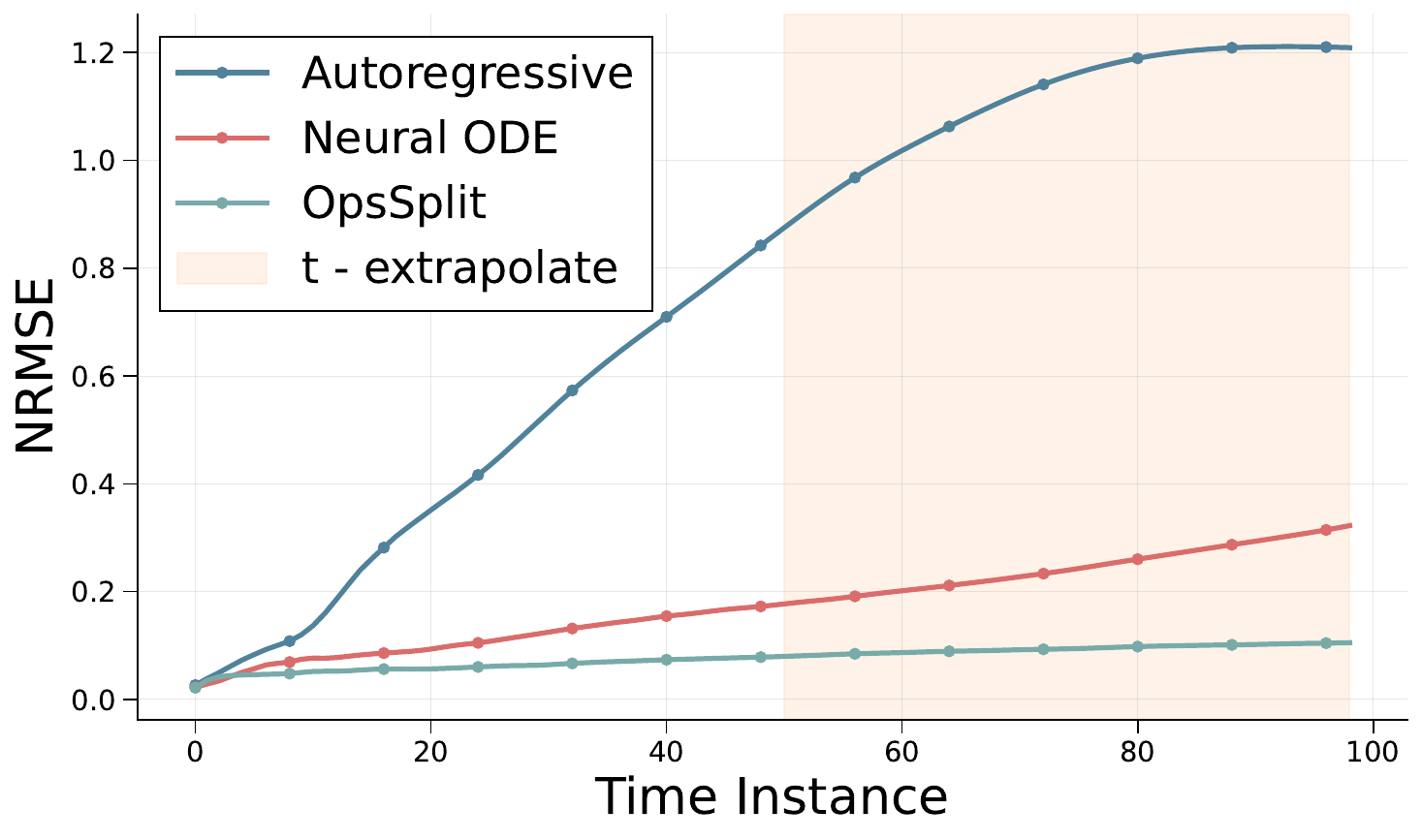}
        \caption{UNet: out-of-distribution}
        \label{fig:nrmse_comp_100_ood_unet}
    \end{subfigure}
    
    \caption{Rollout error for the compressible Navier--Stokes equations. Similar to \cref{fig:incomp_rollout_error}, show the rollout error of various methods for an FNO and U-Net for both in and out-of-distribution scenarios. In most cases, across neural operator architectures, our method of deploying operator splitting to learn the physical operator accumulates less error and provides a more stable temporal rollout than autoregressive and neural ODE-based methods.} 
    \label{fig:comp_rollout_error}
\end{figure*}


The compressible Navier--Stokes equations are essential for modelling high-speed flows in aerospace, shock waves, and astrophysical phenomena where density variations are significant \citep{shockwaves_CNS1986,astroCNS1998}. This system challenges neural PDE solvers through tight density-velocity-pressure coupling. They explore discontinuities and shock formation while preserving conservation laws across multiple interacting physical processes. Consider the Euler formulation of the compressible Navier--Stokes equations, which govern adiabatic, inviscid fluid flow:
\begin{align}
    \frac{\partial \rho}{\partial t} &= -\nabla \cdot (\rho\mathbf{v}), \label{eqn:comp_mass} \\
    & \hfill \text{(Mass conservation / Continuity)} \nonumber \\[1ex]
    \frac{\partial \mathbf{v}}{\partial t} &= -(\mathbf{v} \cdot \nabla)\mathbf{v} - \frac{1}{\rho}\nabla P, \label{eqn:comp_momentum} \\
    & \hfill \text{(Momentum conservation)} \nonumber \\[1ex]
    \frac{\partial P}{\partial t} &= -\mathbf{v} \cdot \nabla P - \gamma P(\nabla \cdot \mathbf{v}). \label{eqn:comp_pressure} \\
    & \hfill \text{(Pressure evolution)} \nonumber
\end{align}
These equations model density $\rho$, velocity $\mathbf{v}=[u,v]$, and pressure $P$ evolution for a gas with adiabatic index (specific heat ratio) $\gamma$ under periodic boundary conditions. Density evolution applies divergence to momentum; convection and pressure gradient terms control momentum. Density-weighted pressure terms ensure greater acceleration in low-density regions. Energy (pressure) depends on pressure advection $(\mathbf{v} \cdot \nabla P)$ and compression/expansion $(\gamma P \nabla \cdot \mathbf{v})$, modulating pressure via velocity divergence. 

OpsSplit uses two neural operators to learn vector divergence and convection operations explicitly. The vector divergence operator $\mathbb{NO}_{\nabla \cdot}$ (\cref{eqn:comp_mass_no,eqn:comp_pressure_no}) learns spatial momentum and energy evolution relative to density and pressure, with adiabatic index influence explicitly applied over the divergence of pressure-velocity as shown in\cref{eqn:comp_pressure_no}. The convection operator (\cref{eqn:comp_momentum_no}) learns fluid advection similarly to \cref{eq:ns_mom_NO}. Pressure gradient terms use linear convolution with higher-order finite difference stencils as a fixed kernel $\mathbb{FD}_{\nabla}$. For temporal integration stability, density weighting in \cref{eqn:comp_momentum} uses logarithms to avoid division by zero \citep{Shebalin1993logcompressiblelogpressure, Erlebacher1990}. See \cref{appendix:comp_ns} for physics, solver, parameterisation, and model details.

\begin{align}
    \dv{\rho}{t} &= -\mathbb{NO}_{\nabla \cdot}(\rho, \mathbf{v}) , \label{eqn:comp_mass_no} \\
    & \hfill  \nonumber \\[1ex]
    \dv{\mathbf{v}}{t} &= -\mathbb{NO}_{conv}(\mathbf{v}) - \ln{(\rho)} \, \mathbb{FD}_\nabla (P), \label{eqn:comp_momentum_no} \\
    & \hfill \nonumber \\[1ex]
    \dv{P}{t} &= -\gamma \, \mathbb{NO}_{\nabla \cdot}(P, \mathbf{v}). \label{eqn:comp_pressure_no} \\
    & \hfill  \nonumber
\end{align}

\Cref{tab:comp_ns} demonstrates OpsSplit's superior performance across most architectures. While all methods achieve comparable in-distribution accuracy, OpsSplit outperforms autoregressive and neural ODE approaches when generalising to unseen parameter regimes and extrapolating in time. \Cref{fig:comp_rollout_error} shows OpsSplit maintains substantially lower and more stable error growth throughout temporal evolution in OOD scenarios, stemming from explicit PDE constraint enforcement via operator splitting. Note that despite comparable parameters, OpsSplit's dual NOs increase training time versus autoregressive and neural ODE methods: $1.5\times$ (FNO) to $3\times$ (UNO).

\subsection{Unstructured Neural Operators}
We extend our OpsSplit framework to irregular geometries by replacing the nonlinear operators with graph-based neural operators. These neural operators rely on message passing across an unstructured mesh to learn the evolution of the spatial dynamics within the PDE \citep{li2020neuraloperatorgraphkernel,li2023geometryinformed, brandstetter2022message}. We follow the \textsc{CylinderFlow} experiment from \citet{pfaff2021learning}, where the authors develop a graph neural network to learn the spatio-temporal evolution of vortex shedding as fluid passes over a cylinder. The experiment is governed by the incompressible Navier--Stokes equations given in \cref{eq:ns_cont,eq:ns_mom}. In implementing OpsSplit, we adopt the same splitting as in \cref{eq:ns_mom_NO}, but employ a neural operator for both convection and diffusion (\cref{eq:ns_mom_GNO}). By modelling the linear operators with a learnable graph-based operator, we bypass the non-trivial challenge of implementing finite differences on unstructured meshes.
\begin{align}
    \dv{\mathbf{v}}{t} &= - \mathbb{NO}_{\text{conv}} (\mathbf{v})+ \nu \mathbb{NO}_{\text{diff}} (\mathbf{v}) \label{eq:ns_mom_GNO} 
\end{align}
The \textsc{CylinderFlow} dataset comprises 1000 simulations of fluid flow, all with identical viscosity coefficients but featuring distinct geometries and mesh structures for each simulation. Using AR, NODE, and OpsSplit, we model the spatio-temporal evolution of velocities, training up to the halfway point of each simulation and then extrapolating further during evaluation. 
\begin{table*}[ht]
\centering
\resizebox{\textwidth}{!}{
\begin{tabular}{cccccccc}
\toprule
& & & & & \textbf{Train Time} & \multicolumn{2}{c}{\textbf{NRMSE}} \\
\cmidrule(lr){6-6} \cmidrule(lr){7-8}
\textbf{Operator Learning} & \textbf{Model} & \textbf{Time Stepping} & \textbf{OpsSplit} & \textbf{Parameters} & \textbf{(hrs:mins)} & \textbf{In-Distribution} & \textbf{Out-of-Distribution}\\
\midrule
Solution & GNO & Autoregressive & False & 330,370 & 6:25 & $0.4131 \pm 0.0475$ & $0.5697 \pm 0.0530$ \\
Dynamics & GNO & Euler & False & 330,370 & 6:26 & $0.3276 \pm 0.0398$ & $0.4510 \pm 0.0571$ \\
Physical & GNO & Euler & True & 660,740 & 9:31 & $\mathbf{0.2963\pm 0.0043}$ & $\mathbf{0.3621\pm0.0510}$ \\
\bottomrule
\end{tabular}
}

\caption{Incompressible Navier--Stokes on irregular geometries: Performance comparison across methods and model architectures. Best performance within each test setting is shown in bold. Our method of using neural operators to learn physical operators achieves the best performance. In-distribution refers to modelling within the training regime, whereas out-of-distribution refers to extrapolation further in time on a different validation dataset. The higher computational time for OpsSplit is due to the multiple models required for each physical operator.}
\label{tab:incomp_ns_irreg}
\end{table*}

The results in \cref{tab:incomp_ns_irreg} demonstrate that OpsSplit achieves superior performance on both in-distribution and out-of-distribution test cases compared to both the neural ODE and the autoregressive baseline. This enhanced performance comes at a moderate computational cost, with training time somewhat higher due to the multiple neural operators required. These results confirm that incorporating physical structure through operator splitting translates effectively to unstructured geometries, offering substantial gains in predictive accuracy for complex fluid flow scenarios.

\section{Discussion}

\begin{figure}[H]
\centering
\includegraphics[width=0.90\linewidth]{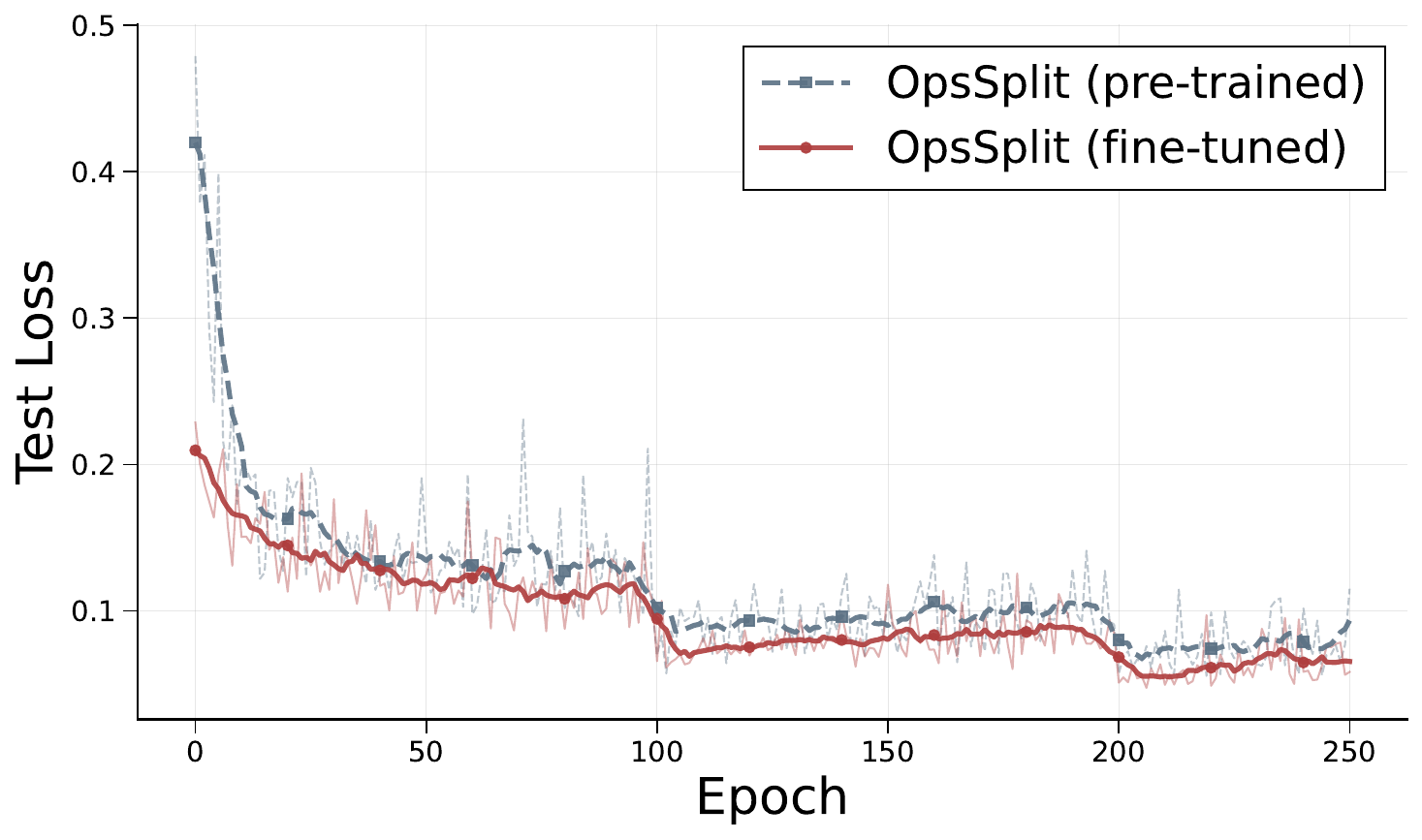}
\caption[Test loss convergence]{Loss convergence for OpsSplit models on incompressible Navier--Stokes. The \textbf{pre-trained} model learns all operators from scratch. The \textbf{fine-tuned} model initialises the convection operator with trained weights from the compressible case, showing quicker convergence. For more details, refer \cref{appendix:convergence}.}
\label{fig:NS_incomp_ft_test_main}
\end{figure}

\paragraph{Strengths} By explicitly learning physical operators, OpsSplit invokes a modular architecture where a specialised neural network approximates each operator. This enables generalisable neural operators reusable across multiple physical scenarios while facilitating efficient transfer learning as shown in \cref{fig:NS_incomp_ft_test_main}. The modular structure supports dynamic insertion and removal of operators as physics changes, and enhances model interpretability by isolating failure regions to specific physical operators. Like neural ODEs, our approach maintains explicit awareness of temporal dynamics, enabling stable long-term rollouts and continuous-in-time predictions that support both interpolation and extrapolation. Our physics-informed decomposition provides a novel constraint enforcement mechanism: since the prediction regime in \cref{eq:abstract_nox_pde_euler} aligns with the PDE definition, it naturally implements soft physical constraints. OpsSplit achieves parameter and computational efficiency by approximating only non-linear operators with neural networks while using linear methods for linear operators, reducing modelling complexity. The decomposition also enables direct parallelisation and efficient training procedures.

\paragraph{Weaknesses} Physics-informed machine learning requires a complete understanding of the underlying equations and domain expertise for operator splitting. Implementation demands explicit model configuration for non-linear operators and convolution setup with finite difference stencils for linear operators, increasing development effort and introducing approximation errors. Critically, no general splitting rule exists, and different splitting of the same problem yield vastly different performances \citep{McLachlanSplitting2002} (refer \cref{appendix:operator_splitting}). When systems possess multiple geometric properties preserved during evolution, different splittings preserve different properties, making it difficult to find one that preserves most. For complex multi-physics settings, structuring individual neural operators for each PDE operator may lead to heavily parameterised models with elevated computational and memory overheads. The ODE formulation incurs higher computational costs than autoregressive methods due to time integration. While we currently explore periodic boundary conditions, the method can be adapted to other boundary conditions by augmented padding schemes \citep{alguacil2021effectsboundaryconditionsfully,mccabe2025walruscrossdomainfoundationmodel}.







\section{Conclusion}
\label{sec:conclusion}
We present a novel method for solving PDEs using neural operators that enforce physical constraints while maintaining continuity in space and time. By learning non-linear physical operators rather than solution operators (autoregressive) or dynamics operators (neural ODE), our approach achieves physics-informed, parameter-efficient modelling with robust out-of-distribution generalisation. The physics-informed, modular design also opens avenues for inverse modelling, system identification, and accelerating the transition from simulation to experimental data, a new paradigm in bridging the \textit{sim2real} gap. 


\clearpage
\bibliography{references}

\clearpage
\section*{Checklist}

\begin{enumerate}

  \item For all models and algorithms presented, check if you include:
  \begin{enumerate}
    \item A clear description of the mathematical setting, assumptions, algorithm, and/or model. [Yes]
    \item An analysis of the properties and complexity (time, space, sample size) of any algorithm. [Yes]
    \item (Optional) Anonymized source code, with specification of all dependencies, including external libraries. [Yes]
  \end{enumerate}

  \item For any theoretical claim, check if you include:
  \begin{enumerate}
    \item Statements of the full set of assumptions of all theoretical results. [Yes]
    \item Complete proofs of all theoretical results. [Yes]
    \item Clear explanations of any assumptions. [Yes]     
  \end{enumerate}

  \item For all figures and tables that present empirical results, check if you include:
  \begin{enumerate}
    \item The code, data, and instructions needed to reproduce the main experimental results (either in the supplemental material or as a URL). [Yes]
    \item All the training details (e.g., data splits, hyperparameters, how they were chosen). [Yes]
    \item A clear definition of the specific measure or statistics and error bars (e.g., with respect to the random seed after running experiments multiple times). [Yes]
    \item A description of the computing infrastructure used. (e.g., type of GPUs, internal cluster, or cloud provider). [Yes]
  \end{enumerate}

  \item If you are using existing assets (e.g., code, data, models) or curating/releasing new assets, check if you include:
  \begin{enumerate}
    \item Citations of the creator If your work uses existing assets. [Yes]
    \item The license information of the assets, if applicable. [Yes]
    \item New assets either in the supplemental material or as a URL, if applicable. [Yes]
    \item Information about consent from data providers/curators.  [Not Applicable]
    \item Discussion of sensible content if applicable, e.g., personally identifiable information or offensive content. [Not Applicable]
  \end{enumerate}

  \item If you used crowdsourcing or conducted research with human subjects, check if you include:
  \begin{enumerate}
    \item The full text of instructions given to participants and screenshots. [Not Applicable]
    \item Descriptions of potential participant risks, with links to Institutional Review Board (IRB) approvals if applicable. [Not Applicable]
    \item The estimated hourly wage paid to participants and the total amount spent on participant compensation. [Not Applicable]
  \end{enumerate}

\end{enumerate}

\clearpage
\appendix
\thispagestyle{empty}

\onecolumn
\aistatstitle{Supplementary Materials}

\section{Qualitative Comparison}
\label{appendix:comparison}
\begin{table}[htbp]
\centering
\label{tab:comparison}
\resizebox{\textwidth}{!}{
\begin{tabular}{@{}p{3.5cm}ccccccc@{}}
\toprule
\textbf{Work (citation)} & \textbf{Physics-informed} &  \textbf{Continuous-in-time } &   \makecell{\textbf{Modular/} \\ \textbf{Mixture of Experts}} &  \textbf{Interpretable} & \makecell{\textbf{Extrapolation /} \\ \textbf{Generalisation}} \\
\midrule
\citep{Li2021fourier} (FNO) & \XSolidBrush & \XSolidBrush & \XSolidBrush & \XSolidBrush  & \XSolidBrush \\
\citep{Lu2021deeponet} (DeepONet) & \XSolidBrush & \Checkmark & \transparentcheck & \XSolidBrush & \XSolidBrush \\
\citep{Raissi2019PINNs} (PINNs) & \Checkmark & \Checkmark & \XSolidBrush & \Checkmark  & \transparentcheck\\ 
\citep{rackauckas2021universaldifferentialequationsscientific} (UDE) & \Checkmark & \Checkmark & \transparentcheck & \XSolidBrush & \XSolidBrush \\ 
\citep{LiPino2024} (PINO) &  \Checkmark & \XSolidBrush & \XSolidBrush & \XSolidBrush & \transparentcheck \\
\citep{koch2025learningneuraldifferentialalgebraic} & \Checkmark & \Checkmark & \Checkmark & \XSolidBrush & \transparentcheck \\
\citep{Zhou_2025_change_pde} & \XSolidBrush & \Checkmark & \XSolidBrush & \XSolidBrush &\XSolidBrush \\
\citep{donati2025kernelbasedapproachphysicsinformednonlinear} &  \Checkmark & \XSolidBrush & \XSolidBrush & \transparentcheck & \XSolidBrush \\
\citep{Zhang_symreg_pde} &  \XSolidBrush & \Checkmark & \XSolidBrush & \Checkmark & \Checkmark \\
\midrule
\textbf{OpsSplit (Ours)} &  \Checkmark & \Checkmark & \Checkmark & \transparentcheck  &\Checkmark \\
\midrule
\bottomrule
\end{tabular}
}
\vspace{0.3cm}
\raggedright
\caption{Comparison of different approaches for Neural PDE solvers. \Checkmark : Yes, \XSolidBrush $\:$: No, \transparentcheck : Partial. }
\label{tab:qualtiative_comparison}
\end{table}

Comparing across standard neural PDE solvers:  Table \ref{tab:qualtiative_comparison} benchmarks the proposed OpsSplit framework against leading neural PDE solver methods, including FNO, DeepONet, and PINNs. We evaluate these methods across five critical dimensions: physical consistency, temporal continuity, architectural modularity, interpretability, and generalisation capability. As the comparison illustrates, existing approaches typically specialise in specific attributes at the expense of others; for instance, while PINNs provide physics-informed continuity, they lack the modular flexibility of Mixture of Experts (MoE) systems. Similarly, standard operator learners like FNO are efficient but struggle with interpretability and temporal flexibility. OpsSplit distinguishes itself by unifying these properties, offering a uniquely holistic framework that is simultaneously physics-informed, modular, and capable of robust extrapolation.

\newpage 
\section{Theoretical Analysis}
\label{appendix:theory}

Consider a PDE governing field variables $\mathbf{u}\in H^s(\Omega;\mathbb{R}^n)$:
\begin{equation}
\frac{\partial \mathbf{u}}{\partial t} = \mathcal{D}_X(\mathbf{u})
= \sum_{i=1}^{j}\lambda_i^{l}\,D_i^{l}(\mathbf{u})
+ \sum_{i=1}^{k}\lambda_i^{nl}\,D_i^{nl}(\mathbf{u}), 
\qquad t\in[0,T].
\end{equation}

We approximate $\mathcal{D}_X$ using the OpsSplit formulation:
\begin{equation}
\widehat{\mathcal{D}}_X(\mathbf{u})
= \sum_{i=1}^{j}\lambda_i^{l}\,\mathrm{FD}_i(\mathbf{u})
+ \sum_{i=1}^{k}\lambda_i^{nl}\,\mathrm{NO}_{\theta_i}(\mathbf{u}),
\end{equation}
where $\mathrm{FD}_i$ are finite-difference operators and $\mathrm{NO}_{\theta_i}$ are neural operators.

We measure errors in $H^{s'}(\Omega)$.

\vspace{1em}
\hrule
\vspace{1em}

\subsection*{1. Assumptions}

\begin{assumption}[Regularity and Lipschitz Continuity]
Each $D_i^{nl}: H^s \to H^{s'}$ is Lipschitz continuous on a compact set $K \subset H^s$, i.e.
\[
\|D_i^{nl}(\mathbf{u}) - D_i^{nl}(\mathbf{v})\|_{s'}
\leq L_i \|\mathbf{u}-\mathbf{v}\|_s.
\]
\end{assumption}

\begin{assumption}[Approximation Classes]
Let $\mathcal{N}(P)$ denote the class of neural operators with at most $P$ parameters. Define the optimal approximation error:
\[
\epsilon_i^*(P)
:= \inf_{\theta:\,\mathrm{NO}_\theta \in \mathcal{N}(P)}
\sup_{\mathbf{u}\in K}
\|D_i^{nl}(\mathbf{u}) - \mathrm{NO}_\theta(\mathbf{u})\|_{s'}.
\]
\end{assumption}

\begin{assumption}[Additive Model Class]
Define the monolithic class:
\[
\mathcal{N}_{\mathrm{mono}}(P)
:= \left\{ \mathrm{NO}_\theta \in \mathcal{N}(P) \right\},
\]
and the split class:
\[
\mathcal{N}_{\mathrm{split}}(P)
:= \left\{
\sum_{i=1}^k \lambda_i^{nl} \mathrm{NO}_{\theta_i} :
\sum_{i=1}^k P_i \le P,\;
\mathrm{NO}_{\theta_i}\in\mathcal{N}(P_i)
\right\}.
\]
\end{assumption}

\begin{assumption}[Finite Difference Consistency]
Each $\mathrm{FD}_i$ satisfies
\[
\|D_i^{l}(\mathbf{u}) - \mathrm{FD}_i(\mathbf{u})\|_{s'}
\leq C_i h^{p_i} \|\mathbf{u}\|_{s+p_i}.
\]
\end{assumption}

\vspace{1em}
\hrule
\vspace{1em}

\subsection*{2. Capacity Allocation Lower Bound}

\begin{lemma}[Capacity Allocation Lower Bound]
\label{lemma:allocation}
Let $\mathcal{D}^{nl} = \sum_{i=1}^k \lambda_i^{nl} D_i^{nl}$. Then:
\begin{equation}
\inf_{\mathrm{NO}_\theta \in \mathcal{N}(P)}
\sup_{\mathbf{u}\in K}
\|\mathcal{D}^{nl}(\mathbf{u}) - \mathrm{NO}_\theta(\mathbf{u})\|_{s'}
\;\geq\;
\inf_{\{P_i\}:\sum P_i \le P}
\sum_{i=1}^k |\lambda_i^{nl}|\,\epsilon_i^*(P_i).
\end{equation}
\end{lemma}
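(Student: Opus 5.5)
The plan is to turn an arbitrary monolithic approximant into an allocation of parameters across the $k$ components, then compare errors term by term. Fix $\mathrm{NO}_\theta\in\mathcal{N}(P)$; it suffices to show
\[
\sup_{\mathbf{u}\in K}\|\mathcal{D}^{nl}(\mathbf{u})-\mathrm{NO}_\theta(\mathbf{u})\|_{s'} \;\ge\; \sum_{i=1}^k|\lambda_i^{nl}|\,\epsilon_i^*(P_i)
\]
for some admissible allocation $\{P_i\}$. The right-hand side is then at least the infimum over allocations, and taking the infimum over $\theta$ gives the lemma.

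\textbf{Step 1 (decomposition).} Write $\mathrm{NO}_\theta=\sum_i\lambda_i^{nl}N_i$ with $N_i\in\mathcal{N}(P_i)$ and $\sum_iP_i\le P$. This is the additive structure that the definition of $\mathcal{N}_{\mathrm{split}}(P)$ in the Additive Model Class assumption is meant to encode.

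\textbf{Step 2 (componentwise bound).} Set $r_i:=\lambda_i^{nl}(D_i^{nl}-N_i)$. By the definition of $\epsilon_i^*$ in the Approximation Classes assumption,
\[
\sup_{\mathbf{u}\in K}\|r_i(\mathbf{u})\|_{s'}\ge|\lambda_i^{nl}|\,\epsilon_i^*(P_i).
\]
The total error is $\sum_i r_i$.

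\textbf{Step 3 (lower-bounding the sum).} We need
\[
\sup_{\mathbf{u}\in K}\Big\|\sum_i r_i(\mathbf{u})\Big\|_{s'}\ge\sum_i\sup_{\mathbf{u}\in K}\|r_i(\mathbf{u})\|_{s'}.
\]
This is a reverse triangle inequality, and it is the main obstacle. It does not follow from Assumptions 1--4. For example, take $D_2^{nl}=-D_1^{nl}$ with $\lambda_1^{nl}=\lambda_2^{nl}=1$. Then $\mathcal{D}^{nl}\equiv0$ is represented exactly by a trivial network, so the left-hand side of the lemma is $0$, while the right-hand side is positive whenever each $D_i^{nl}$ alone is hard to approximate. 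Step 1 is also not automatic for a generic monolithic architecture.

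I would therefore make these two hypotheses explicit as a separation condition and prove the lemma under it. The first is the decomposition of Step 1. The second is that the component residuals attain their worst cases without cancelling. Concretely, I would assume there exist inputs $\mathbf{u}_i\in K$ and a common input $\mathbf{u}^\star\in K$ such that the $r_i(\mathbf{u}^\star)$ nearly attain their individual suprema and are aligned in $H^{s'}$, i.e.\ they are nonnegative multiples of a common direction. One natural source is components acting on disjoint spectral bands or disjoint field components, where it is enough to have a sup-of-sum $\ge$ sum-of-sups inequality. With such a condition, Step 3 follows by evaluating at $\mathbf{u}^\star$, and an $\varepsilon$-approximation argument closes the gap to the suprema.

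If alignment is judged too strong, the fallback is the weaker statement obtained from orthogonality:
\[
\sup_{\mathbf{u}\in K}\Big\|\sum_i r_i\Big\|_{s'}\ge\max_i|\lambda_i^{nl}|\,\epsilon_i^*(P_i)\ge\frac1k\sum_i|\lambda_i^{nl}|\,\epsilon_i^*(P_i).
\]
This still conveys the capacity-allocation message, at the cost of a factor $1/k$. Steps 1--2 are routine; all the real work is in identifying the minimal separation hypothesis for Step 3.
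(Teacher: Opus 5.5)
\textbf{Verdict: your diagnosis is correct, and the gap you found is in the paper's own proof.} That proof follows your three steps. It asserts that finite capacity ``induces an effective decomposition'' of an arbitrary $\mathrm{NO}_\theta\in\mathcal{N}(P)$ into implicit sub-approximations $\widetilde D_i$ with budgets $\sum_i P_i\le P$; this is your Step~1, stated without derivation. It bounds each $\sup_{\mathbf{u}}\|D_i^{nl}-\widetilde D_i\|_{s'}$ below by $\epsilon_i^*(P_i)$, your Step~2. It then writes $\|\mathcal{D}^{nl}-\mathrm{NO}_\theta\|_{s'}\ge\sum_i|\lambda_i^{nl}|\,\|D_i^{nl}-\widetilde D_i\|_{s'}$ ``by the triangle inequality''. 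That is your Step~3, and it is the reverse of what the triangle inequality gives.

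The paper then takes suprema. Even a valid pointwise bound would only yield $\sup_{\mathbf{u}}\sum_i$, not $\sum_i\sup_{\mathbf{u}}$. Your common near-maximiser $\mathbf{u}^\star$ correctly targets this third slip.

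Your cancellation example is a genuine counterexample to the statement as written. Nothing in Assumptions 1--4 excludes $D_2^{nl}=-D_1^{nl}$. The left side is $0$ once $\mathcal{N}(P)$ contains the zero operator. Because the classes $\mathcal{N}(P)$ are nested, each $\epsilon_i^*$ is nonincreasing. Hence the right side is at least $\epsilon_1^*(P)+\epsilon_2^*(P)>0$ whenever $\epsilon_1^*(P)>0$.

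\textbf{The repair is the weak part of your proposal.} Your residuals $r_i=\lambda_i^{nl}(D_i^{nl}-N_i)$ depend on $\theta$, so the alignment condition must hold for every admissible network. It is therefore a condition on the network class, not on the physical operators, and it amounts to assuming Step~3 in a stronger form rather than proving it.

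Disjoint spectral bands or disjoint output fields do not supply alignment. They make the residuals orthogonal in $H^{s'}$, so $\|\sum_i r_i\|_{s'}=(\sum_i\|r_i\|_{s'}^2)^{1/2}\le\sum_i\|r_i\|_{s'}$. That supports only your $\max$ fallback with its factor $1/k$. With that factor, the bound no longer orders $\epsilon_{\mathrm{mono}}$ against the OpsSplit upper bound of Theorem~\ref{thm:opsplit}.

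\textbf{Step~1 is not merely unjustified: it is false for genuinely monolithic classes, even without cancellation.} Take $D_1^{nl}=D_2^{nl}=D$ with $\lambda_1^{nl}=\lambda_2^{nl}=1$, and write $\epsilon^*$ for the common optimal error. Suppose $\mathcal{N}(P)$ is closed under rescaling its output layer. Doubling a near-optimal approximant of $D$ shows the left side is at most $2\epsilon^*(P)$. But every admissible split has $\min(P_1,P_2)\le P/2$, so the right side is at least $\epsilon^*(P)+\epsilon^*(P/2)$. This is strictly larger whenever $\epsilon^*$ decreases between $P/2$ and $P$.

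A single network shares parameters across components rather than partitioning them, and the lemma's ``implicit allocation'' intuition misses exactly this. Imposing Step~1 as a hypothesis therefore restricts $\mathcal{N}(P)$ to essentially $\mathcal{N}_{\mathrm{split}}(P)$. After that, the lemma, and your fallback (which also relies on Step~1), say nothing about monolithic models, which is the only reason it is invoked in Theorem~\ref{thm:comparison}. A meaningful corrected statement needs a structural restriction on the monolithic class itself, not only a non-cancellation condition on the residuals.
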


\begin{proof}
Let $\mathrm{NO}_\theta \in \mathcal{N}(P)$ be arbitrary. For each $i$, define the residual operator:
\[
R_i(\mathbf{u})
:= D_i^{nl}(\mathbf{u}) - \widetilde{D}_i(\mathbf{u}),
\]
where $\widetilde{D}_i$ denotes the implicit contribution of $\mathrm{NO}_\theta$ toward approximating $D_i^{nl}$.

Since $\mathrm{NO}_\theta$ has finite capacity $P$, its approximation of the sum must implicitly allocate representational capacity across the components. This induces an effective decomposition into sub-approximations with budgets $\{P_i\}$ satisfying $\sum P_i \le P$.

By definition of $\epsilon_i^*$, each component satisfies:
\[
\sup_{\mathbf{u}\in K}
\|D_i^{nl}(\mathbf{u}) - \widetilde{D}_i(\mathbf{u})\|_{s'}
\ge \epsilon_i^*(P_i).
\]

Applying the triangle inequality,
\[
\|\mathcal{D}^{nl} - \mathrm{NO}_\theta\|_{s'}
\ge \sum_{i=1}^k |\lambda_i^{nl}|
\|D_i^{nl} - \widetilde{D}_i\|_{s'}.
\]

Taking the supremum over $\mathbf{u}$ and infimum over $\theta$ yields the result.
\end{proof}

\vspace{1em}
\hrule
\vspace{1em}

\subsection*{3. Approximation Error Bounds}

\begin{theorem}[OpsSplit Error Bound]
\label{thm:opsplit}
Under the stated assumptions,
\begin{equation}
\epsilon_{\mathrm{OpS}}(P)
\leq
\delta_{\mathrm{lin}}
+ \inf_{\{P_i\}:\sum P_i \le P}
\sum_{i=1}^k |\lambda_i^{nl}|\,\epsilon_i^*(P_i),
\end{equation}
where
\[
\delta_{\mathrm{lin}}
:= \sum_{i=1}^{j}|\lambda_i^l|\,C_i h^{p_i}\|\mathbf{u}\|_{s+p_i}.
\]
\end{theorem}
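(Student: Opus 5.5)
The bound is an upper bound for the best split model, so I would prove it by a triangle inequality followed by a near-optimal choice of sub-networks. First I fix what $\epsilon_{\mathrm{OpS}}(P)$ denotes. I take it to be the best achievable worst-case error of the OpsSplit approximation $\widehat{\mathcal{D}}_X$ over $K$. Here the linear part is frozen at the $\mathrm{FD}_i$, and the nonlinear part ranges over $\mathcal{N}_{\mathrm{split}}(P)$ from the Additive Model Class assumption:
\[
\epsilon_{\mathrm{OpS}}(P) := \inf_{\sum_i P_i\le P,\ \mathrm{NO}_{\theta_i}\in\mathcal{N}(P_i)} \sup_{\mathbf{u}\in K} \|\mathcal{D}_X(\mathbf{u}) - \widehat{\mathcal{D}}_X(\mathbf{u})\|_{s'} .
\]
Since $\delta_{\mathrm{lin}}$ carries $\|\mathbf{u}\|_{s+p_i}$, I read it as evaluated at $\mathbf{u}$, or as a sup over $K$ if $K$ is bounded in $H^{s+\max p_i}$. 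I would state this convention explicitly.

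Next I decompose the error termwise. For fixed $\mathbf{u}\in K$ and any admissible choice of $\{P_i,\theta_i\}$, subtracting the two sums and applying the triangle inequality together with homogeneity of the norm gives
\[
\|\mathcal{D}_X(\mathbf{u})-\widehat{\mathcal{D}}_X(\mathbf{u})\|_{s'} \le \sum_{i=1}^j |\lambda_i^l|\,\|D_i^l(\mathbf{u})-\mathrm{FD}_i(\mathbf{u})\|_{s'} + \sum_{i=1}^k |\lambda_i^{nl}|\,\|D_i^{nl}(\mathbf{u})-\mathrm{NO}_{\theta_i}(\mathbf{u})\|_{s'} .
\]
The Finite Difference Consistency assumption bounds the first sum by $\delta_{\mathrm{lin}}$. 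Taking the supremum over $K$ of the second sum, and using $\sup\sum\le\sum\sup$, bounds it by $\sum_i |\lambda_i^{nl}|\sup_K\|D_i^{nl}-\mathrm{NO}_{\theta_i}\|_{s'}$.

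Then I choose the sub-networks. Fix $\eta>0$ and a budget allocation $\{P_i\}$ with $\sum P_i\le P$. By the definition of $\epsilon_i^*(P_i)$ as an infimum in the Approximation Classes assumption, there is $\theta_i$ with $\mathrm{NO}_{\theta_i}\in\mathcal{N}(P_i)$ and $\sup_K\|D_i^{nl}-\mathrm{NO}_{\theta_i}\|_{s'}\le\epsilon_i^*(P_i)+\eta$. The resulting sum $\sum_i\lambda_i^{nl}\mathrm{NO}_{\theta_i}$ lies in $\mathcal{N}_{\mathrm{split}}(P)$, so it is admissible. This yields $\epsilon_{\mathrm{OpS}}(P)\le\delta_{\mathrm{lin}}+\sum_i|\lambda_i^{nl}|\epsilon_i^*(P_i)+\eta\sum_i|\lambda_i^{nl}|$. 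Letting $\eta\to0$ and then taking the infimum over allocations $\{P_i\}$ gives the stated bound. The Lipschitz assumption is not needed here; it would only enter a later trajectory-level error estimate.

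The main obstacle is bookkeeping rather than analysis. The infimum over allocations may not be attained, and the parameter budgets are discrete, but the $\eta$-argument handles both without needing minimisers. The $\mathbf{u}$-dependence of $\delta_{\mathrm{lin}}$ must also be stated consistently: pointwise in $\mathbf{u}$, or uniformly over a set bounded in $H^{s+\max p_i}$. Finally, the combination with Lemma~\ref{lemma:allocation} is what makes the theorem meaningful. That lemma lower-bounds the monolithic nonlinear error by the same allocation infimum, so OpsSplit matches the monolithic lower bound up to the linear discretisation error $\delta_{\mathrm{lin}}$. This comparison is a remark to add after the proof, not part of it.
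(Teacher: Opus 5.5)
Your proof is correct and follows the paper's argument: the termwise triangle inequality, Finite Difference Consistency for the linear part, then choosing the sub-networks under $\sum_i P_i\le P$, with your explicit definition of $\epsilon_{\mathrm{OpS}}(P)$ and the $\eta$-argument making rigorous the paper's one-line ``optimising over $\{\theta_i\}$'' step. Two small cautions: with your inf--sup definition only the uniform reading $\delta_{\mathrm{lin}}=\sup_{\mathbf{u}\in K}\sum_{i}|\lambda_i^l|C_ih^{p_i}\|\mathbf{u}\|_{s+p_i}$ is consistent (the pointwise reading would need a pointwise $\epsilon_{\mathrm{OpS}}$), and your closing remark should not rely on Lemma~\ref{lemma:allocation}, which fails in general (if $D_1^{nl}=-D_2^{nl}$ and $\lambda_1^{nl}=\lambda_2^{nl}$, its left side can vanish while its right side is positive).
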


\begin{proof}
By the triangle inequality,
\[
\|\mathcal{D}_X - \widehat{\mathcal{D}}_X\|_{s'}
\le
\sum_{i=1}^{j}|\lambda_i^l|\|D_i^l-\mathrm{FD}_i\|_{s'}
+ \sum_{i=1}^{k}|\lambda_i^{nl}|\|D_i^{nl}-\mathrm{NO}_{\theta_i}\|_{s'}.
\]

The first term is bounded by $\delta_{\mathrm{lin}}$ via Assumption 4.

Optimizing over $\{\theta_i\}$ subject to $\sum P_i \le P$ yields the second term.
\end{proof}

\begin{theorem}[Comparison with Monolithic Models]
\label{thm:comparison}
Let $\epsilon_{\mathrm{mono}}(P)$ denote the optimal error over $\mathcal{N}_{\mathrm{mono}}(P)$. Then:
\begin{equation}
\epsilon_{\mathrm{mono}}(P)
\;\geq\;
\inf_{\{P_i\}:\sum P_i \le P}
\sum_{i=1}^k |\lambda_i^{nl}|\,\epsilon_i^*(P_i).
\end{equation}
\end{theorem}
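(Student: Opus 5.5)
The comparison theorem follows almost immediately from Lemma~\ref{lemma:allocation}. By definition of the monolithic class, $\epsilon_{\mathrm{mono}}(P)$ is the optimal uniform error over $K$ achieved by a single network in $\mathcal{N}(P)$ approximating the nonlinear part $\mathcal{D}^{nl}=\sum_{i=1}^k\lambda_i^{nl}D_i^{nl}$:
\[
\epsilon_{\mathrm{mono}}(P)=\inf_{\mathrm{NO}_\theta\in\mathcal{N}(P)}\sup_{\mathbf{u}\in K}\|\mathcal{D}^{nl}(\mathbf{u})-\mathrm{NO}_\theta(\mathbf{u})\|_{s'}.
\]
This is exactly the left-hand side of Lemma~\ref{lemma:allocation}, so the lemma gives the claimed bound.

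The plan therefore has three steps, in this order.

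\textbf{Identify the target.} First I make explicit that the monolithic model is compared on the nonlinear component only. The linear parts are treated identically in both approaches, with the same $\mathrm{FD}_i$ and the same error $\delta_{\mathrm{lin}}$. So either $\delta_{\mathrm{lin}}$ is subtracted from both sides, or the comparison is stated for $\mathcal{D}^{nl}$ directly.

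\textbf{Invoke the lemma.} Second, I take an arbitrary $\mathrm{NO}_\theta\in\mathcal{N}(P)$ and apply Lemma~\ref{lemma:allocation}. It bounds the sup-error of every such network below by $\inf_{\sum P_i\le P}\sum_i|\lambda_i^{nl}|\epsilon_i^*(P_i)$. Taking the infimum over $\theta$ preserves the inequality, because the right-hand side does not depend on $\theta$.

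\textbf{Compare with the split class.} Finally, I would observe that Theorem~\ref{thm:opsplit}'s nonlinear term, which is the best achievable error of $\mathcal{N}_{\mathrm{split}}(P)$, is bounded above by the same quantity. Together the two results give $\epsilon_{\mathrm{OpS}}(P)-\delta_{\mathrm{lin}}\le\epsilon_{\mathrm{mono}}(P)$, which is the interpretation of the statement.

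The main obstacle lies not in this deduction but in the lemma it rests on. The step in its proof asserting that a monolithic network ``induces an effective decomposition'' into sub-approximators with budgets $P_i$ is not rigorous. Neither is the reverse triangle inequality $\|\sum_i\lambda_i(D_i-\widetilde D_i)\|\ge\sum_i|\lambda_i|\|D_i-\widetilde D_i\|$, which fails in general because errors can cancel.

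If I had to shore this up, I would first try to restrict the claim to a setting where the components act on orthogonal subspaces of the output, for instance disjoint output fields or disjoint Fourier bands. In that setting the norm splits, $\|\cdot\|^2=\sum_i\|\cdot\|_i^2$, so a lower bound of $\ell^2$ type holds. I would also need a parameter-counting assumption: projecting a network onto component $i$ must yield a network in $\mathcal{N}(P_i)$ with $\sum_i P_i\le P$. Since the statement permits assuming earlier results, the proof proposed here simply relies on Lemma~\ref{lemma:allocation} as stated.
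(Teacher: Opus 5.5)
Your argument is the same as the paper's. The paper's entire proof is the single line that the theorem ``follows directly from Lemma~\ref{lemma:allocation},'' which matches your identification of $\epsilon_{\mathrm{mono}}(P)$ with the lemma's left-hand side followed by an appeal to the lemma; your third step, combining this with Theorem~\ref{thm:opsplit}, is extra interpretation rather than part of the proof.

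Your doubts about the lemma are justified, and the issue is more than rigour: cancellation breaks the statement itself. Take $k=2$, $\lambda_1^{nl}=\lambda_2^{nl}=1$ and $D_2^{nl}=-D_1^{nl}$. Then $\mathcal{D}^{nl}=0$, so the left-hand side is $0$ (a network with zero output layer attains it). The right-hand side is at least $\epsilon_1^*(P)$, because $\epsilon_i^*$ is nonincreasing in the budget, and this is positive for any $D_1^{nl}$ that $P$ parameters cannot approximate arbitrarily well on $K$. So the defect lies in the paper's lemma, not in your reduction to it.
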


\begin{proof}
This follows directly from Lemma~\ref{lemma:allocation}.
\end{proof}

\vspace{1em}
\hrule
\vspace{1em}

\subsection*{4. Out-of-Distribution Generalization}

\begin{theorem}[Coefficient Generalization]
\label{thm:ood}
Let $\boldsymbol{\lambda}, \boldsymbol{\lambda}' \in \mathbb{R}^k$. Then the OpsSplit error satisfies:
\begin{equation}
\|\mathcal{D}_X^{\boldsymbol{\lambda}'}(\mathbf{u})
- \widehat{\mathcal{D}}_X^{\boldsymbol{\lambda}'}(\mathbf{u})\|_{s'}
\leq
\delta_{\mathrm{lin}}(\boldsymbol{\lambda}')
+ \sum_{i=1}^{k}|(\lambda_i^{nl})'|\,\delta_i,
\end{equation}
where $\delta_i$ is the training error of $\mathrm{NO}_{\theta_i}$.
\end{theorem}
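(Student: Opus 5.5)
The bound follows from one application of the triangle inequality to the split residual at the new coefficients, the same way Theorem~\ref{thm:opsplit} was proved. The key structural fact is that in the OpsSplit formulation the coefficients enter only as external scalar multipliers. The trained operators $\mathrm{FD}_i$ and $\mathrm{NO}_{\theta_i}$ do not depend on $\boldsymbol{\lambda}$: $\mathrm{FD}_i$ is a fixed stencil, and $\mathrm{NO}_{\theta_i}$ is trained to approximate the coefficient-free operator $D_i^{nl}$, not $\lambda_i^{nl}D_i^{nl}$. Consequently the model at coefficients $\boldsymbol{\lambda}'$ is obtained simply by replacing $\lambda_i$ with $\lambda_i'$ in $\widehat{\mathcal{D}}_X$, with no retraining.

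Fix $\mathbf{u}\in K$. First I write
\[
\mathcal{D}_X^{\boldsymbol{\lambda}'}(\mathbf{u})-\widehat{\mathcal{D}}_X^{\boldsymbol{\lambda}'}(\mathbf{u})
=\sum_{i=1}^{j}(\lambda_i^{l})'\bigl(D_i^l(\mathbf{u})-\mathrm{FD}_i(\mathbf{u})\bigr)
+\sum_{i=1}^{k}(\lambda_i^{nl})'\bigl(D_i^{nl}(\mathbf{u})-\mathrm{NO}_{\theta_i}(\mathbf{u})\bigr).
\]
Taking $H^{s'}$ norms and using the triangle inequality and homogeneity of the norm splits the error into a linear sum and a nonlinear sum.

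For the linear sum, each term is bounded by Assumption~4, giving exactly $\delta_{\mathrm{lin}}(\boldsymbol{\lambda}')=\sum_i|(\lambda_i^l)'|C_ih^{p_i}\|\mathbf{u}\|_{s+p_i}$. This is identical to the argument for Theorem~\ref{thm:opsplit}, with $\boldsymbol{\lambda}$ replaced by $\boldsymbol{\lambda}'$.

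For the nonlinear sum, I define the training error as
\[
\delta_i:=\sup_{\mathbf{u}\in K}\|D_i^{nl}(\mathbf{u})-\mathrm{NO}_{\theta_i}(\mathbf{u})\|_{s'},
\]
the uniform error of the learned operator on $K$, which by definition is at least $\epsilon_i^*(P_i)$. This immediately gives $\sum_i|(\lambda_i^{nl})'|\delta_i$, and the result holds pointwise for every $\mathbf{u}\in K$.

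The main obstacle is justifying that $\delta_i$ is genuinely independent of $\boldsymbol{\lambda}$. Training is done on data generated at $\boldsymbol{\lambda}$ with a loss on the combined rollout, so what is directly controlled is the error of the sum, not that of each $\mathrm{NO}_{\theta_i}$ separately. I would handle this by stating explicitly that $\delta_i$ is the component-wise error of $\mathrm{NO}_{\theta_i}$ against $D_i^{nl}$ on $K$. The statement then holds provided the states reached under $\boldsymbol{\lambda}'$ remain in $K$; I would note this as a standing condition, since Assumption~1 is only formulated on $K$. Under this interpretation the coefficient change affects only the scalar weights, and the bound follows with no further work.
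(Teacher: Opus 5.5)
Your proof is correct and matches the paper's. Both write $\mathcal{D}_X^{\boldsymbol{\lambda}'}-\widehat{\mathcal{D}}_X^{\boldsymbol{\lambda}'}$ as the sum of the $(\lambda_i^l)'$-weighted finite-difference residuals and the $(\lambda_i^{nl})'$-weighted neural-operator residuals, then apply the triangle inequality, Assumption~4, and the definition of $\delta_i$. Your explicit choice $\delta_i:=\sup_{\mathbf{u}\in K}\|D_i^{nl}(\mathbf{u})-\mathrm{NO}_{\theta_i}(\mathbf{u})\|_{s'}$, together with the standing requirement $\mathbf{u}\in K$, usefully pins down what the paper leaves implicit in the phrase ``training error''.
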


\begin{proof}
Directly,
\[
\mathcal{D}_X^{\boldsymbol{\lambda}'} - \widehat{\mathcal{D}}_X^{\boldsymbol{\lambda}'}
=
\sum_{i=1}^j (\lambda_i^l)' (D_i^l - \mathrm{FD}_i)
+
\sum_{i=1}^k (\lambda_i^{nl})' (D_i^{nl} - \mathrm{NO}_{\theta_i}).
\]

Apply the triangle inequality and definitions of $\delta_{\mathrm{lin}}$ and $\delta_i$.
\end{proof}

\vspace{1em}
\hrule
\vspace{1em}

\subsection*{5. Temporal Stability}

\begin{corollary}[Euler Stability]
Let $L$ be the Lipschitz constant of $\mathcal{D}_X$. Then under explicit Euler:
\begin{equation}
\|\mathbf{u}^N - \hat{\mathbf{u}}^N\|_s
\le
\frac{e^{L N \Delta t}-1}{L}
\left(
\delta_{\mathrm{lin}} + \sum_{i=1}^k |\lambda_i^{nl}|\delta_i
+ \mathcal{O}(\Delta t)
\right).
\end{equation}
\end{corollary}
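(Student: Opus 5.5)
We compare the exact Euler scheme $\mathbf{u}^{n+1}=\mathbf{u}^n+\Delta t\,\mathcal{D}_X(\mathbf{u}^n)$ with the OpsSplit Euler scheme $\hat{\mathbf{u}}^{n+1}=\hat{\mathbf{u}}^n+\Delta t\,\widehat{\mathcal{D}}_X(\hat{\mathbf{u}}^n)$ from \cref{eq:abstract_nox_pde_euler}, both started from the same initial condition. We then run a standard discrete Gronwall argument on $e^n:=\|\mathbf{u}^n-\hat{\mathbf{u}}^n\|_s$. The $\mathcal{O}(\Delta t)$ term in the statement absorbs the local truncation error of Euler, so that $\mathbf{u}^N$ may be read as the true solution at $t_N=N\Delta t$. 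To keep everything measured in one norm, we assume (as the statement implicitly does) that $\mathcal{D}_X$ is $L$-Lipschitz from $H^s$ to $H^s$ on $K$, that the trajectories stay in $K$, and that the per-operator bounds of Theorem~\ref{thm:opsplit} hold in the norm used for the increments (i.e.\ $s'\ge s$, or the bounds are read in $H^s$).

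\textbf{Step 1 (one-step recursion).} Subtracting the two updates and inserting $\pm\Delta t\,\mathcal{D}_X(\hat{\mathbf{u}}^n)$ gives
\[
e^{n+1}\le e^n+\Delta t\,\|\mathcal{D}_X(\mathbf{u}^n)-\mathcal{D}_X(\hat{\mathbf{u}}^n)\|+\Delta t\,\|\mathcal{D}_X(\hat{\mathbf{u}}^n)-\widehat{\mathcal{D}}_X(\hat{\mathbf{u}}^n)\|+\tau_n .
\]
Here $\tau_n=\mathcal{O}(\Delta t^2)$ is the Euler local truncation error, which is controlled by $\|\partial_t^2\mathbf{u}\|$. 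The second term is at most $L\Delta t\,e^n$ by the Lipschitz assumption. The third term is the consistency defect. Exactly as in the proof of Theorem~\ref{thm:opsplit} (or Theorem~\ref{thm:ood} with $\boldsymbol\lambda'=\boldsymbol\lambda$), we bound it by the triangle inequality by $\delta_{\mathrm{lin}}+\sum_i|\lambda_i^{nl}|\delta_i$, using Assumption 4 for the FD part and the training errors $\delta_i$ for the NO part. This yields
\[
e^{n+1}\le(1+L\Delta t)e^n+\Delta t\,\eta,\qquad \eta:=\delta_{\mathrm{lin}}+\textstyle\sum_i|\lambda_i^{nl}|\delta_i+\mathcal{O}(\Delta t).
\]

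\textbf{Step 2 (unrolling).} With $e^0=0$, induction gives $e^N\le\Delta t\,\eta\sum_{m=0}^{N-1}(1+L\Delta t)^m=\eta\,\frac{(1+L\Delta t)^N-1}{L}$. Since $1+x\le e^x$, this is at most $\frac{e^{LN\Delta t}-1}{L}\eta$, which is the claimed bound.

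\textbf{Main obstacle.} The delicate point is not the Gronwall algebra but justifying uniform constants along the trajectory. Three things must hold. First, $\delta_{\mathrm{lin}}$ involves $\|\mathbf{u}\|_{s+p_i}$, so we need a uniform-in-time bound on the higher Sobolev norm of the evaluated states; we take the supremum over the trajectory inside $K$. Second, the $\delta_i$ are sup-over-$K$ errors, so the approximate iterates $\hat{\mathbf{u}}^n$ must remain in $K$. We would handle this with a bootstrap: enlarge $K$ to a neighbourhood, and argue that for small enough $\eta$ the bound from Step 2 keeps $\hat{\mathbf{u}}^n$ inside it. Third, the defect is measured in $H^{s'}$ while $e^n$ lives in $H^s$. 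Rather than attempting a general embedding, we state this as a standing assumption.
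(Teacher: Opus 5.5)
Your proposal is correct and matches the paper's proof, which is only the one-line ``standard discrete Gr\"onwall argument applied to the error recursion.'' Your recursion $e^{n+1}\le(1+L\Delta t)e^n+\Delta t\,\eta$ with $e^0=0$, unrolled through the geometric sum and $1+x\le e^x$, gives exactly the stated factor $(e^{LN\Delta t}-1)/L$. The caveats you flag are genuine and are left implicit in the paper: the $H^s$ versus $H^{s'}$ mismatch, keeping $\hat{\mathbf{u}}^n$ in $K$, and a uniform bound on $\|\hat{\mathbf{u}}^n\|_{s+p_i}$, which an $H^s$ bootstrap cannot supply and so must remain an assumption.
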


\begin{proof}
Standard discrete Grönwall argument applied to the error recursion.
\end{proof}

\vspace{1em}
\hrule
\vspace{1em}

\subsection*{6. Transferability}

\begin{proposition}[Operator Transfer]
Let systems $A$ and $B$ differ only in operators $D_i^{nl,A}$ and $D_i^{nl,B}$. Then:
\begin{equation}
\delta_i^{(B)}
\le
\delta_i^{(A)}
+
\sup_{\mathbf{u}\in K_B}
\|D_i^{nl,B}(\mathbf{u}) - D_i^{nl,A}(\mathbf{u})\|_{s'}.
\end{equation}
\end{proposition}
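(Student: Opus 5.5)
The plan is to read $\delta_i^{(B)}$ and $\delta_i^{(A)}$ as sup-norm errors of the single trained operator $\mathrm{NO}_{\theta_i}$ (trained on system $A$) against the target operator of each system. Concretely,
\[
\delta_i^{(A)} := \sup_{\mathbf{u}\in K_A}\|D_i^{nl,A}(\mathbf{u})-\mathrm{NO}_{\theta_i}(\mathbf{u})\|_{s'},
\]
and $\delta_i^{(B)}$ is the same quantity with $D_i^{nl,B}$ in place of $D_i^{nl,A}$ and the supremum taken over $K_B$. This matches the use of $\delta_i$ as a training error in Theorem~\ref{thm:ood}. The claim is then a transfer bound: reusing the $A$-trained operator on $B$, the simplest admissible choice of parameters for $B$, costs at most the operator discrepancy. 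If $\delta_i^{(B)}$ is instead read as the optimal error on $B$, it is bounded above by the error of this particular choice, so the same bound follows.

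The main step is a pointwise triangle inequality. For each $\mathbf{u}\in K_B$,
\[
\|D_i^{nl,B}(\mathbf{u})-\mathrm{NO}_{\theta_i}(\mathbf{u})\|_{s'}\le\|D_i^{nl,B}(\mathbf{u})-D_i^{nl,A}(\mathbf{u})\|_{s'}+\|D_i^{nl,A}(\mathbf{u})-\mathrm{NO}_{\theta_i}(\mathbf{u})\|_{s'}.
\]
Taking the supremum over $K_B$ of each term separately gives the discrepancy term plus $\sup_{K_B}\|D_i^{nl,A}-\mathrm{NO}_{\theta_i}\|_{s'}$. Because the systems differ only in the $i$-th nonlinear operator, the remaining linear and nonlinear pieces of $\widehat{\mathcal{D}}_X$ are unchanged, so nothing else needs controlling.

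The obstacle is the domain mismatch. The second supremum is over $K_B$, while $\delta_i^{(A)}$ is defined over $K_A$. I would make the natural standing assumption $K_B\subseteq K_A$, or equivalently state both training errors over a common compact set $K$ as in Assumption 2, and then bound $\sup_{K_B}\le\sup_{K_A}=\delta_i^{(A)}$.

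If the sets are not nested, I would fall back on Lipschitz continuity (Assumption 1), together with a Lipschitz bound for $\mathrm{NO}_{\theta_i}$. This yields an extra term $(L_i+L_{\mathrm{NO}})\,d_H(K_B,K_A)$, where $d_H$ is the Hausdorff distance between the two sets. Under nesting this term vanishes and the stated bound holds exactly.
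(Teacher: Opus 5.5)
Your argument is the paper's own: its entire proof is ``immediate from the triangle inequality.'' You insert $D_i^{nl,A}(\mathbf{u})$ pointwise and take suprema over $K_B$. You also make explicit a hypothesis the paper leaves implicit, namely that the sets are nested ($K_B\subseteq K_A$, or one common compact $K$ as in Assumption 2) so that $\sup_{K_B}\|D_i^{nl,A}-\mathrm{NO}_{\theta_i}\|_{s'}\le\delta_i^{(A)}$; your remark that the bound also holds when $\delta_i^{(B)}$ is read as the optimal error on $B$ is likewise correct.
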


\begin{proof}
Immediate from the triangle inequality.
\end{proof}

\newpage
\section{Interpretability}
\label{appendix:interpretability}
As discussed in \cref{sec:exp_incomp}, constructing neural operators to explicitly learn physical operators within the OpsSplit framework enables soft enforcement of PDE constraints during prediction. This approach yields models that demonstrate improved fitting and enhanced capability for temporal extrapolation and generalisation to unseen physical conditions. Moreover, the learned physical operators provide inherent interpretability, as their behaviour can be directly verified against numerical estimations of the corresponding operators. This interpretability facilitates identification of failure modes within the prediction mechanism and enables more parameter-efficient, physics-aware model debugging strategies.

\subsection{Incompressible Navier-Stokes}
\Cref{fig:NS_incomp_ConvOps} visualises the convection operator employed in the numerical solver (\cref{fig:incomp_convops_num}) alongside those learned by various neural operator architectures: FNO (\cref{fig:incomp_convops_fno}), CNO (\cref{fig:incomp_convops_cno}), and UNO (\cref{fig:incomp_convops_uno}). Each neural operator successfully captures the nuances and general behaviour of the convection operator, approximating the underlying gradients to varying degrees of fidelity, as evidenced by the variations across architectures. These visualisations constitute a qualitative interpretability study wherein the learned convection operators, evaluated in latent space, are compared against their numerical counterparts computed in physical space. All operators are normalised to the range $[-1, 1]$ to facilitate visual comparison. The observation that distinct operator architectures learn similar attributes of the convection operator warrants further investigation in future work.

\begin{figure}[!htb]
    \centering
    \begin{subfigure}{0.48\textwidth}
        \centering
        \includegraphics[width=\linewidth]{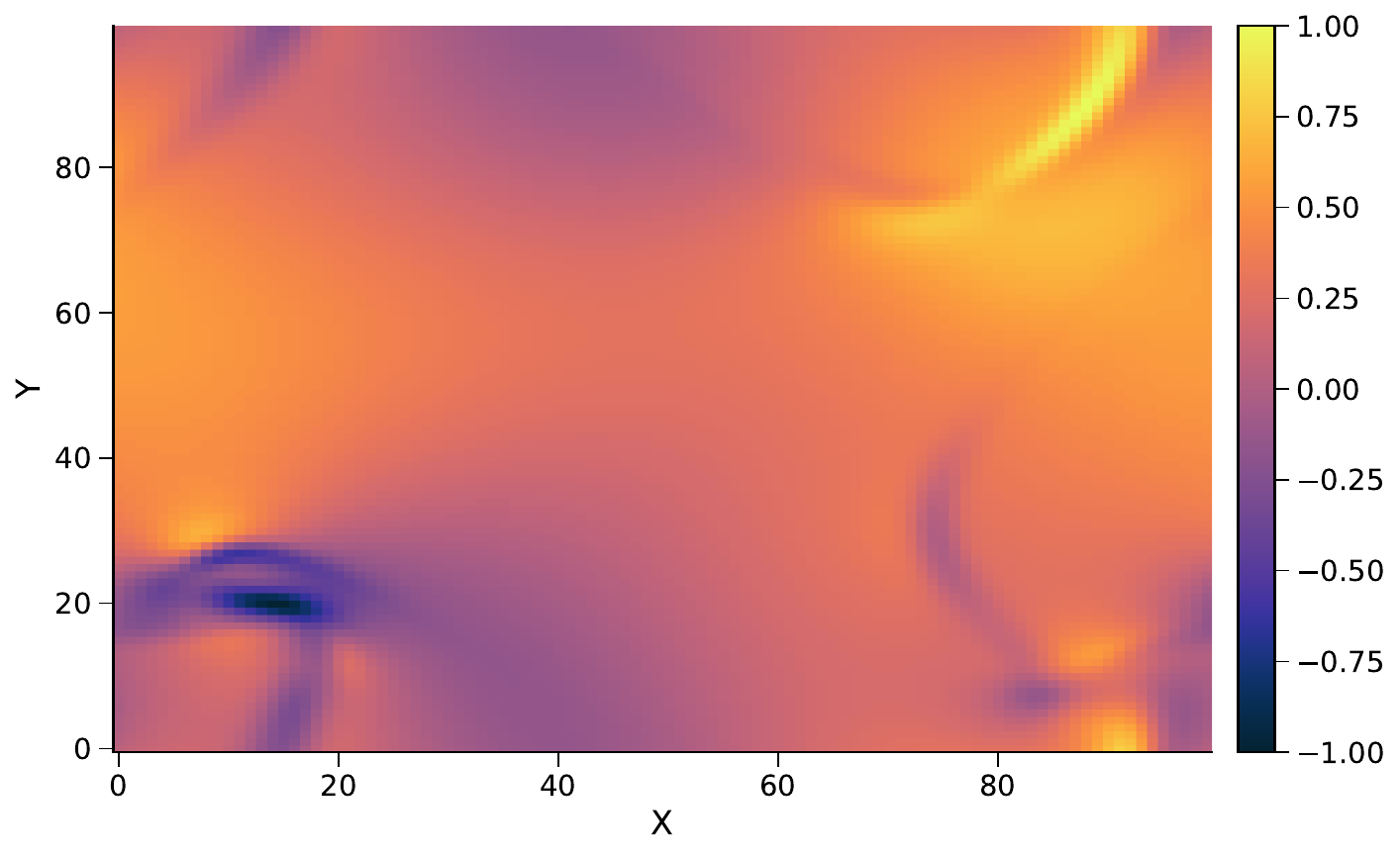}
        \caption{Convection Operator: Numerical}
        \label{fig:incomp_convops_num}
    \end{subfigure}
    \begin{subfigure}{0.48\textwidth}
        \centering
        \includegraphics[width=\linewidth]{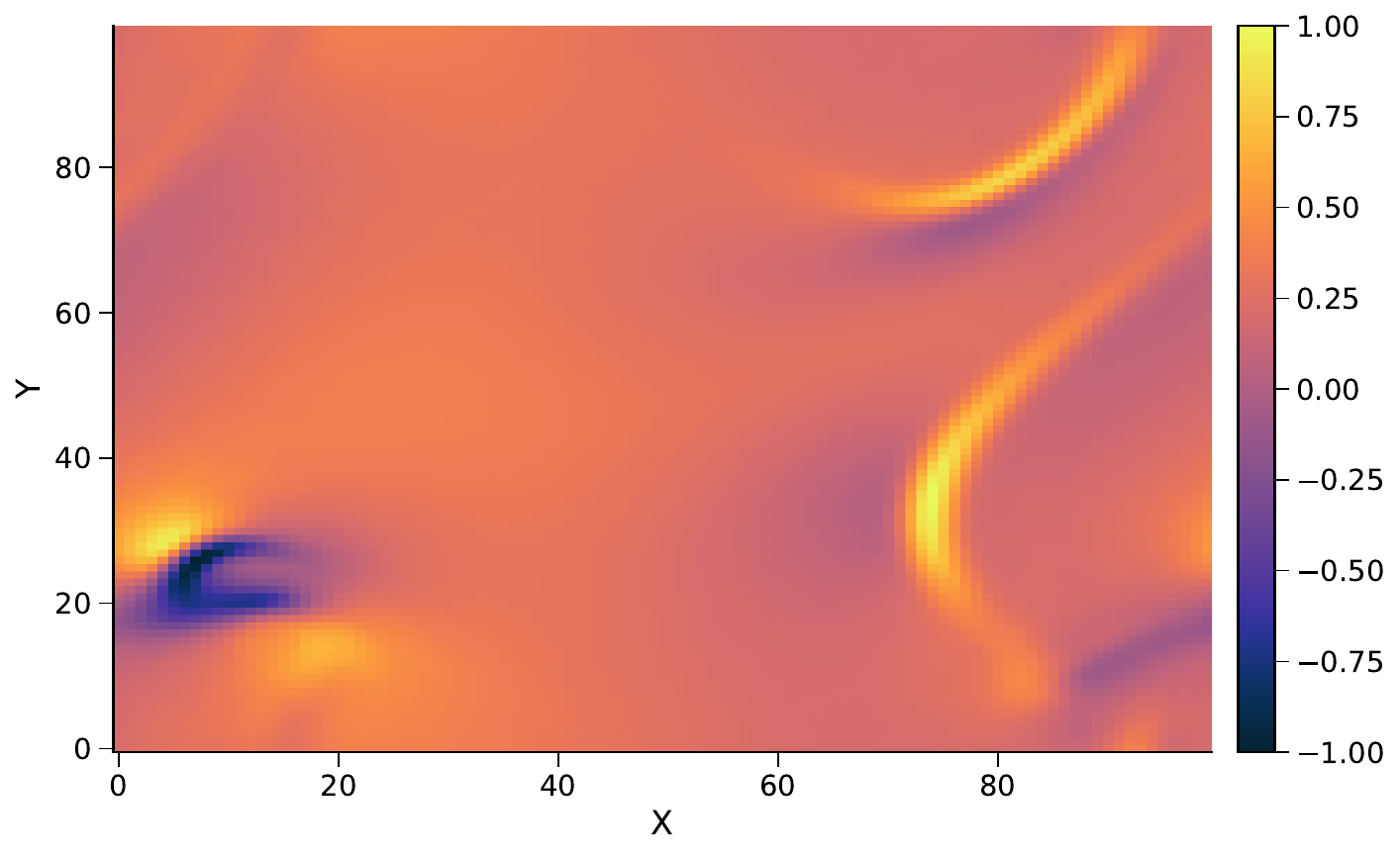}
        \caption{Convection Operator: FNO}
        \label{fig:incomp_convops_fno}
    \end{subfigure}
        \begin{subfigure}{0.48\textwidth}
        \centering
        \includegraphics[width=\linewidth]{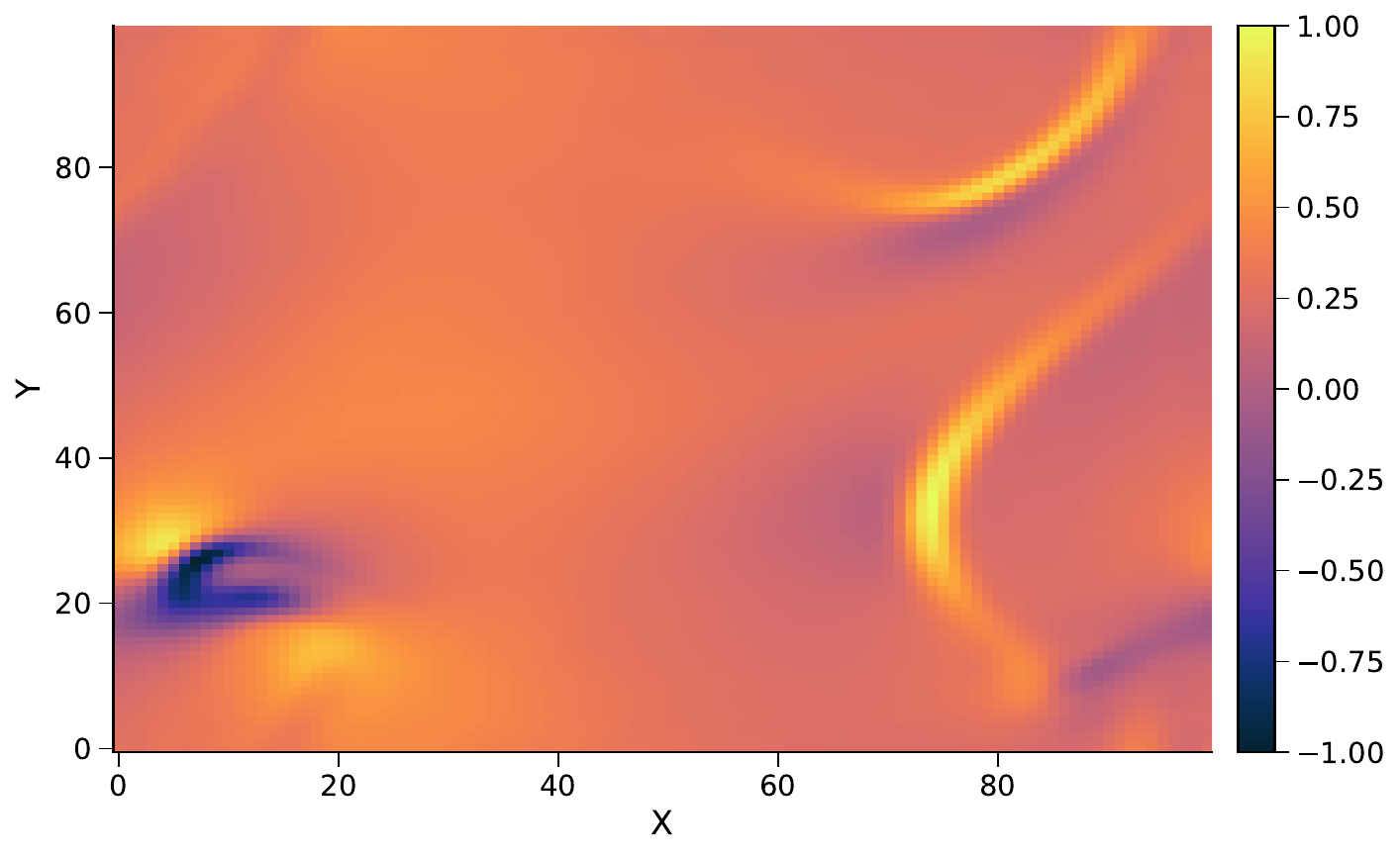}
        \caption{Convection Operator: CNO}
        \label{fig:incomp_convops_cno}
    \end{subfigure}
    \begin{subfigure}{0.48\textwidth}
        \centering
        \includegraphics[width=\linewidth]{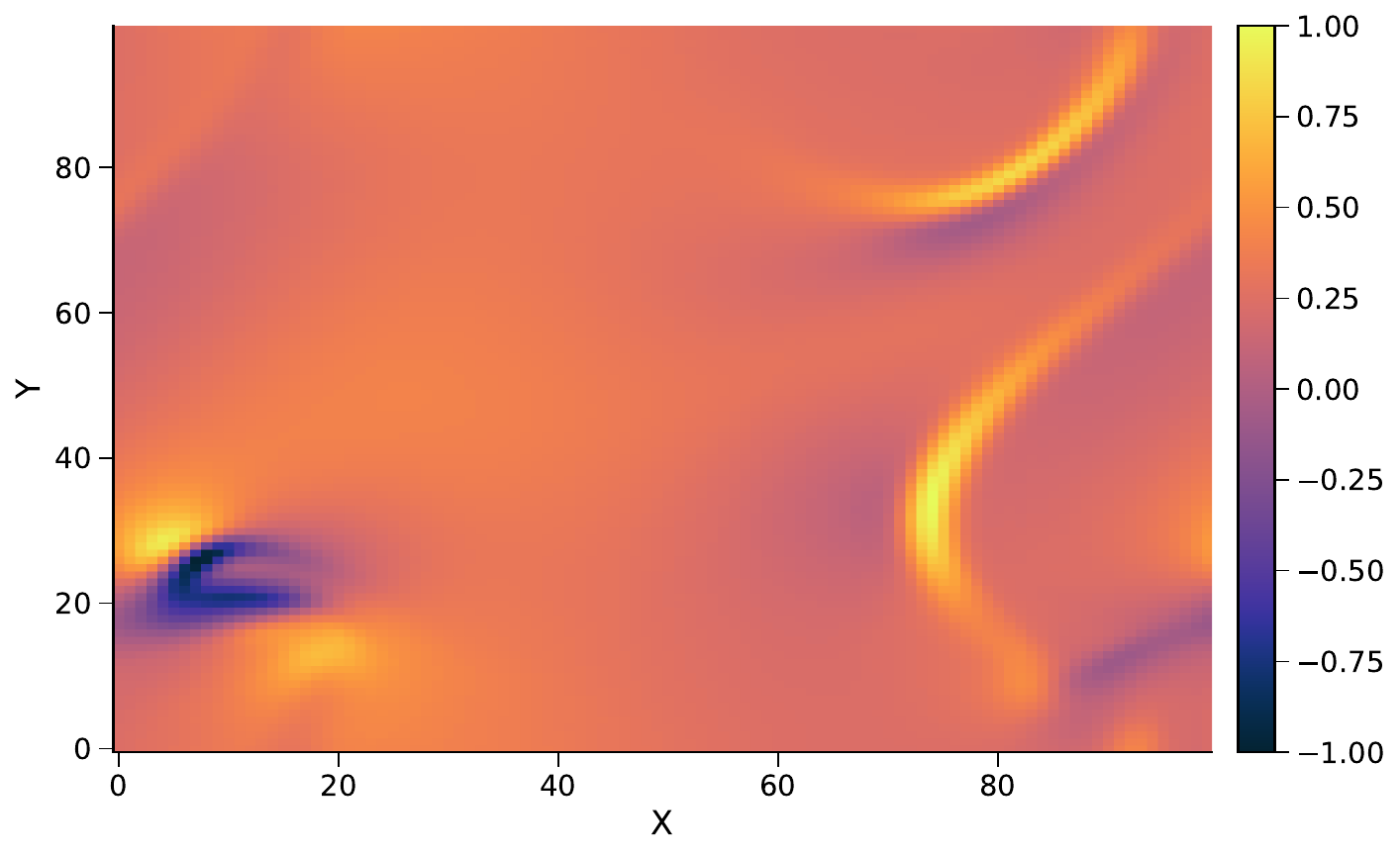}
        \caption{Convection Operator: UNO}
        \label{fig:incomp_convops_uno}
    \end{subfigure}
    \caption{Incompressible Navier--Stokes: comparing across convection operator obtained numerically against those learnt using various neural operators.}
    \label{fig:NS_incomp_ConvOps}
\end{figure}

\subsection{Compressible Navier-Stokes}
\Cref{fig:NS_comp_interp} visualises both the convection and vector divergence operators computed via finite differences (\cref{fig:comp_convops_num,fig:comp_divops_num_rho,fig:comp_divops_num_p}) alongside their learned counterparts from an FNO architecture (\cref{fig:comp_convops_fno,fig:comp_divops_fno_rho,fig:comp_divops_fno_p}, respectively). For the convection operator, the FNO successfully captures the general fluid motion across the domain, though it fails to reproduce certain fine-scale features present in the physical operator. As formulated in \cref{eqn:comp_mass_no,eqn:comp_pressure_no}, the vector divergence operator characterises the advection of scalar fields, and a single neural operator learns its effects on both pressure and density. \Cref{fig:NS_comp_interp} demonstrates that while the FNO captures dominant features, it introduces spurious characteristics into the learned operation. This suggests the need for further investigation into alternative operator splitting strategies. As with the incompressible case, these visualisations provide qualitative interpretability analysis, with learned operators evaluated in latent space and numerical operators displayed in physical space, all normalised to $[-1, 1]$ for effective visual comparison.

\begin{figure}[H]
    \centering
    \begin{subfigure}{0.48\textwidth}
        \centering
        \includegraphics[width=\linewidth]{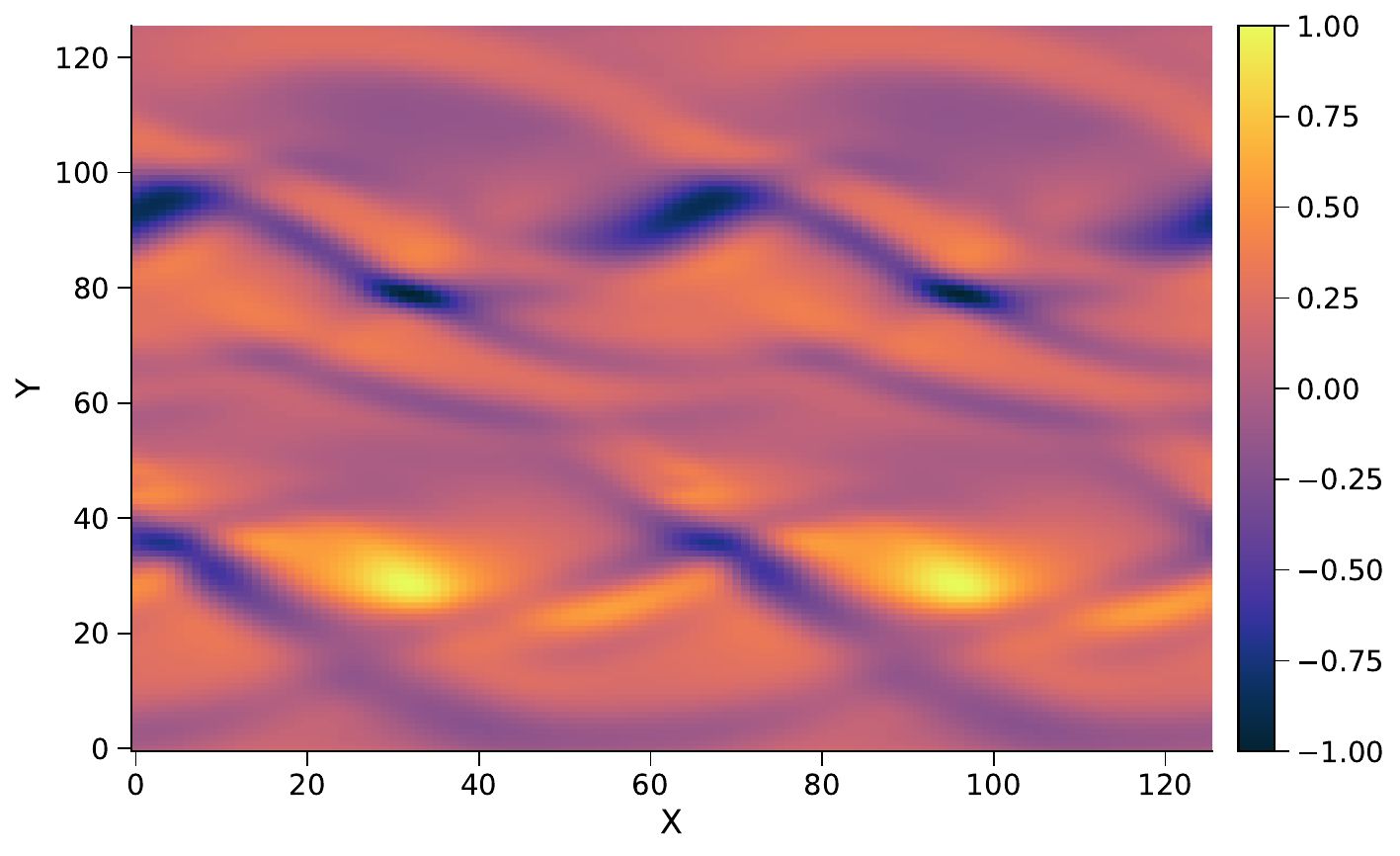}
        \caption{Convection Operator: Numerical}
        \label{fig:comp_convops_num}
    \end{subfigure}
    \begin{subfigure}{0.48\textwidth}
        \centering
        \includegraphics[width=\linewidth]{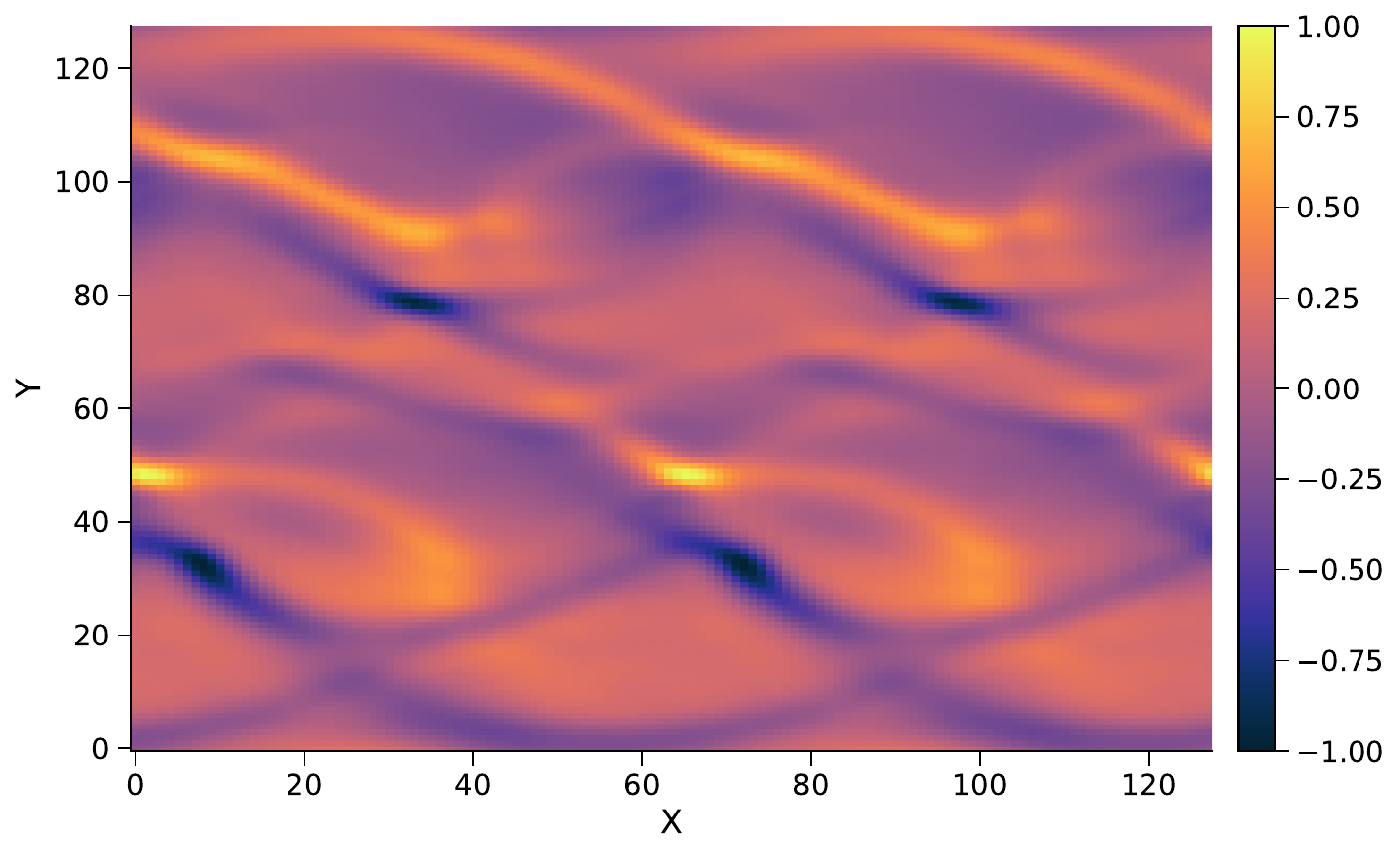}
        \caption{Convection Operator: FNO}
        \label{fig:comp_convops_fno}
    \end{subfigure}
    \begin{subfigure}{0.48\textwidth}
        \centering
        \includegraphics[width=\linewidth]{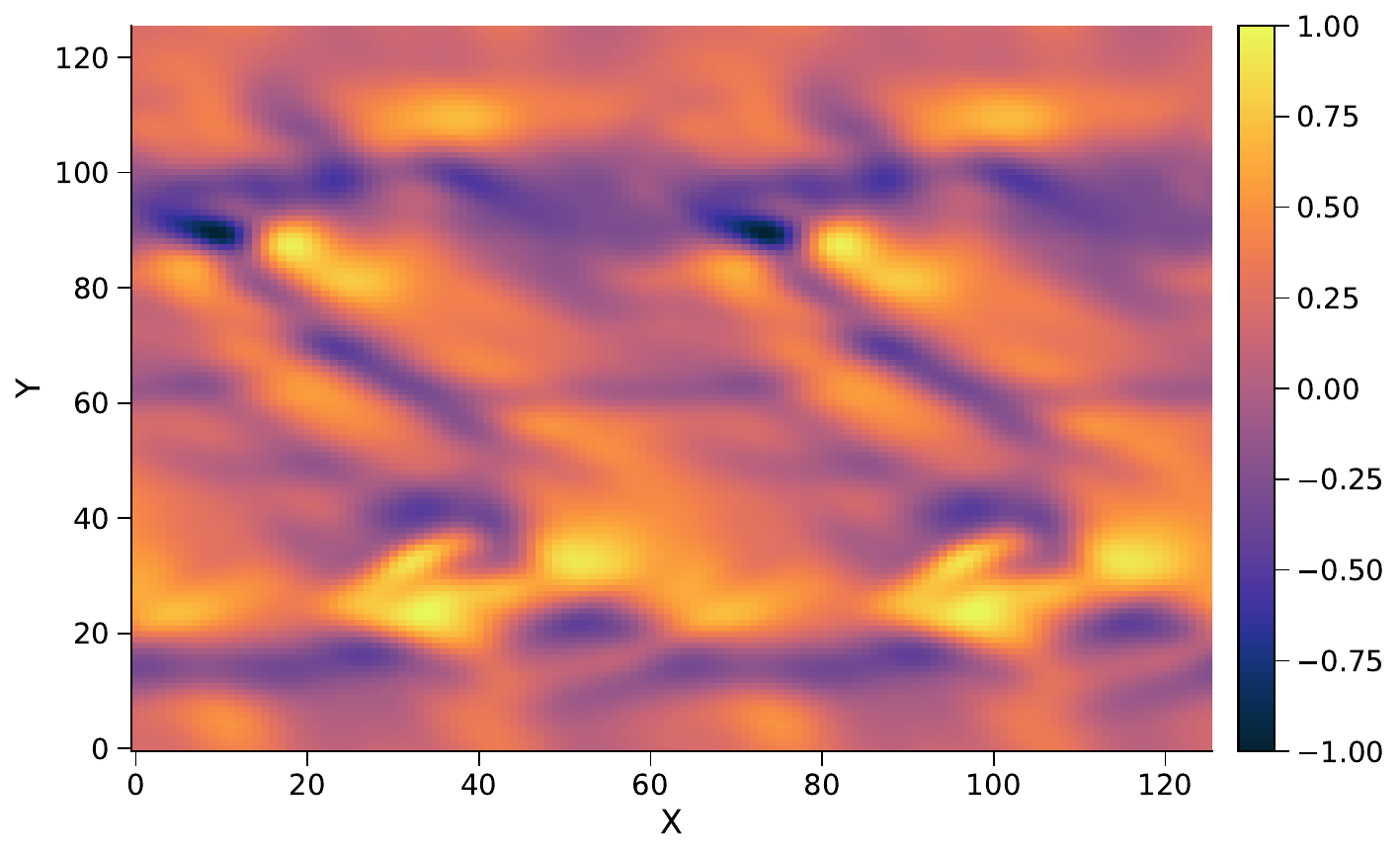}
        \caption{Vector Divergence Operator $(\rho)$: Numerical}
        \label{fig:comp_divops_num_rho}
    \end{subfigure}
    \begin{subfigure}{0.48\textwidth}
        \centering
        \includegraphics[width=\linewidth]{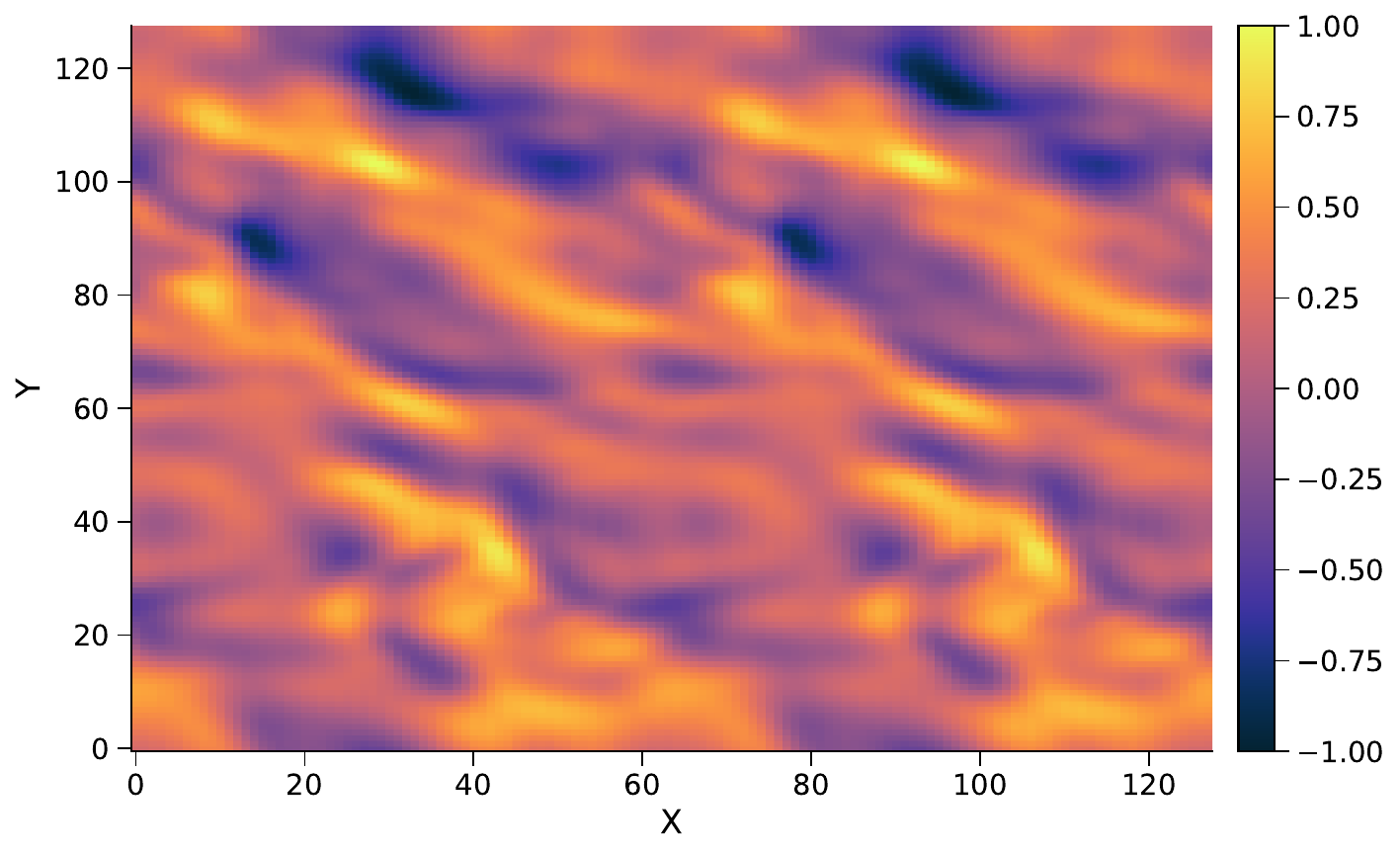}
        \caption{Vector Divergence Operator $(P)$:FNO}
        \label{fig:comp_divops_fno_rho}
    \end{subfigure}
    \begin{subfigure}{0.48\textwidth}
        \centering
        \includegraphics[width=\linewidth]{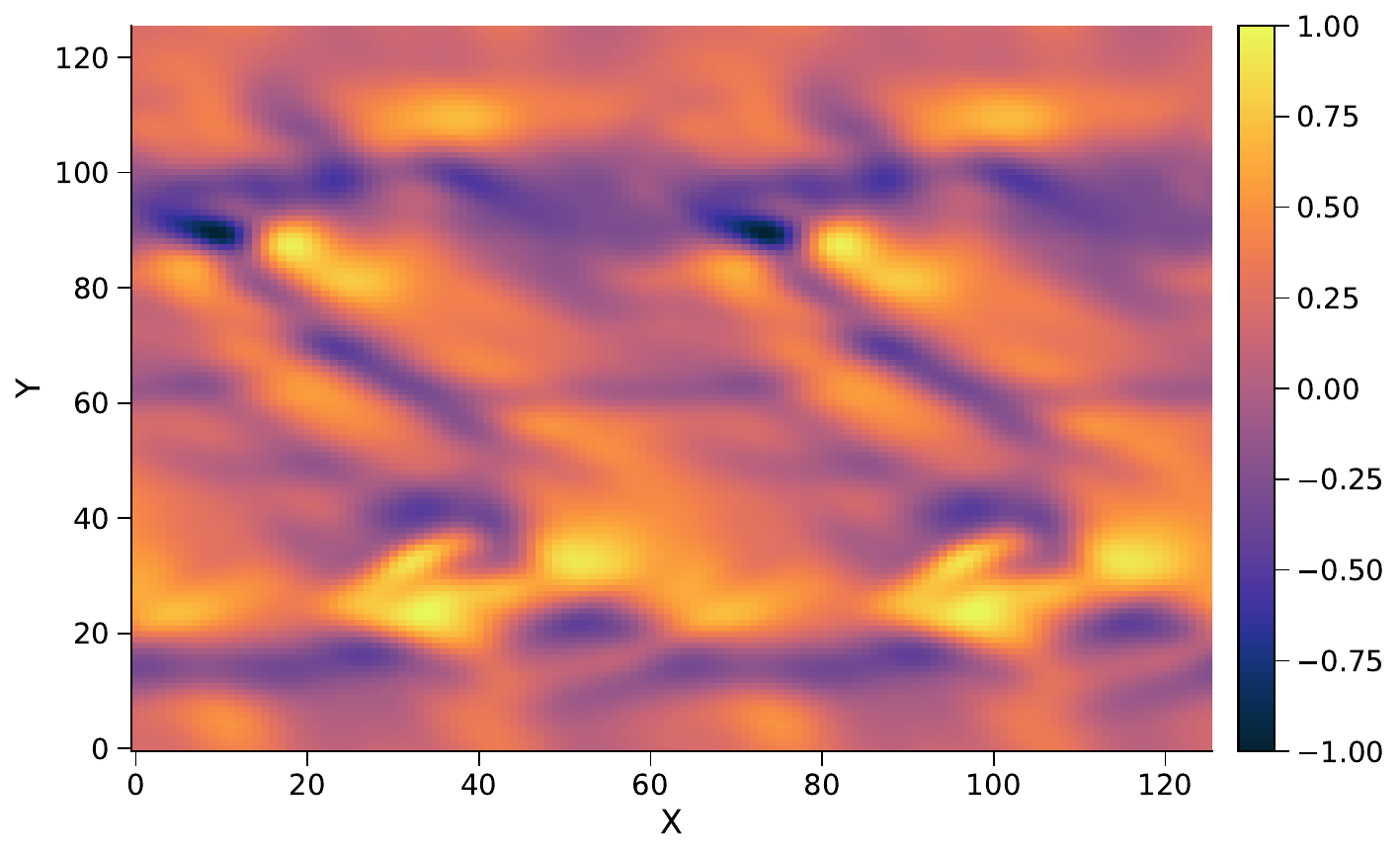}
        \caption{Vector Divergence Operator $(P)$: Numerical}
        \label{fig:comp_divops_num_p}
    \end{subfigure}
    \begin{subfigure}{0.48\textwidth}
        \centering
        \includegraphics[width=\linewidth]{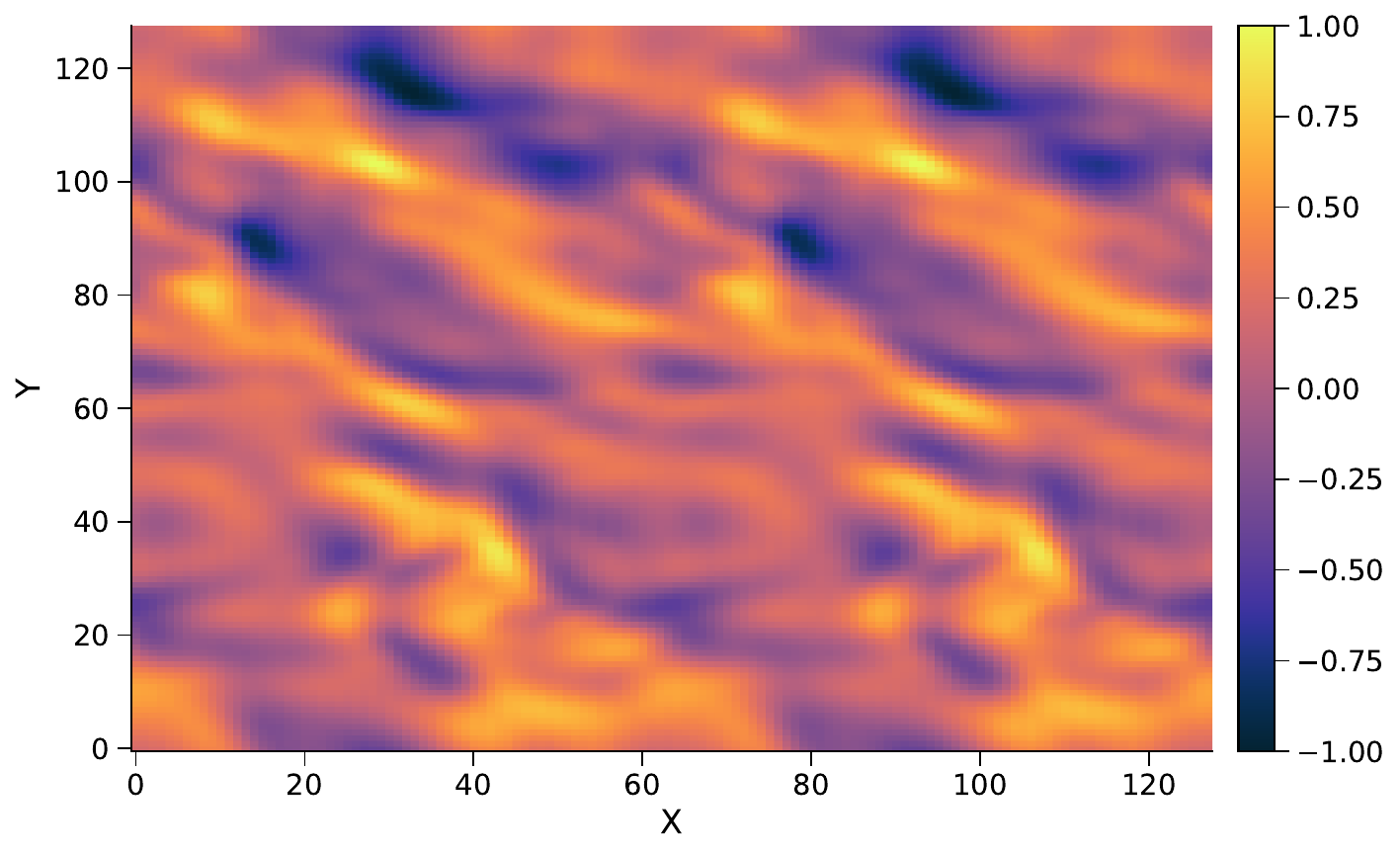}
        \caption{Vector Divergence Operator $(P)$: FNO}
        \label{fig:comp_divops_fno_p}
    \end{subfigure}
    \caption{Compressible Navier--Stokes: comparing across convection and vector divergence operator obtained numerically against those learnt using various neural operators.}
    \label{fig:NS_comp_interp}
\end{figure}

\newpage
\section{Operator Splitting: A Guideline}
\label{appendix:operator_splitting}
 
Operator splitting, in its classical numerical sense, refers to
\textit{temporal} (fractional-step) methods that advance each
sub-operator sequentially over separate sub-intervals—for example,
Lie--Trotter or Strang splitting—introducing splitting errors that
arise from the non-commutativity of the split operators
\citep{holden2010splitting, MacNamara2016}.
The OpsSplit framework described in \cref{sec: method} adopts a
conceptually related but structurally distinct strategy: an
\textit{additive spatial decomposition} following the Method of Lines
(MoL) paradigm \citep{schiesser1991numerical}.
Here the spatial operator $D_X(u)$ is decomposed into a sum of
physically motivated sub-operators, each evaluated \emph{simultaneously}
on the current state $u^n$, and their combined contribution constitutes
the right-hand side (RHS) of a time-continuous ODE:
\begin{equation}
    \dv{u}{t}
    = \sum_{i=1}^{j} \lambda_{l_i}\,\mathbb{FD}_i(u)
    + \sum_{i=1}^{k} \lambda_{nl_i}\,\mathbb{NO}_i(u),
    \label{eq:mol_decomp_app}
\end{equation}
which is then integrated forward using a standard ODE solver
(e.g., explicit Euler or Runge--Kutta).
Crucially, \emph{no sequential fractional time steps are executed}:
all sub-operators see the same state $u^n$ at each integration point,
and their outputs are summed before the time-update is applied.
Consequently, the temporal non-commutativity errors and intermediate
fractional boundary conditions that characterise classical Strang
splitting are \emph{not} present in this framework.
The approximation error instead manifests in the RHS itself—from
replacing continuous spatial operators with finite-difference stencils
($\mathbb{FD}$) and learned neural operators ($\mathbb{NO}$)—and is
fully decoupled from the temporal integration error, which is
controlled independently by the choice of ODE solver.
 
The practical motivation for the decomposition in
\cref{eq:mol_decomp_app} mirrors that of classical splitting:
isolating physical processes (e.g., advection from diffusion) allows
each to be approximated by the most suitable tool—fixed stencils for
well-understood linear physics and neural operators for complex
non-linear phenomena—while preserving a monolithic time integration
pipeline.
This section provides guidelines for designing effective additive
decompositions within this neural Method of Lines framework.

\subsection{Implementation Guidelines}
\label{appendix:operator_splitting_guidelines}
 
Since no universal decomposition rule exists \citep{McLachlanSplitting2002},
empirical validation remains essential.
For the additive MoL framework, a robust decomposition strategy
should adhere to the following principles:
 
\begin{enumerate}
    \item \textbf{Physics-driven process identification:}
    Identify dominant physical processes and separate terms by
    mathematical character—linearity versus nonlinearity, locality
    versus non-locality.
    Assign linear, analytically tractable operators (e.g., diffusion,
    linear advection) to fixed finite-difference stencils, and reserve
    neural operator capacity for operators whose mathematical form is
    complex or whose effective behaviour is difficult to encode
    analytically (e.g., non-linear convection, coupled
    pressure--velocity effects from the pressure Poisson equation).
 
    \item \textbf{Simultaneous evaluation consistency:}
    Because all sub-operators are evaluated on the same state $u^n$
    in the additive framework, each $\mathbb{NO}_i$ or
    $\mathbb{FD}_i$ must be designed to act on the \emph{full}
    current state, not an intermediate or partially-advanced state.
    This distinguishes OpsSplit from fractional-step methods where
    later sub-operators receive states that have already been advanced
    by earlier sub-operators.
 
    \item \textbf{Timescale and stiffness management:}
    Operators with disparate timescales (e.g., fast diffusion versus
    slow advection) may impose restrictive time-step requirements
    via the CFL condition when treated fully explicitly.
    Within the MoL framework, stiff linear operators can be
    incorporated via implicit-explicit (IMEX) integrators
    \citep{Ascher1997ImplicitexplicitRM}, treating the $\mathbb{FD}$
    terms implicitly and the $\mathbb{NO}$ terms explicitly, without
    altering the additive spatial decomposition.
 
    \item \textbf{Conservation and invariant preservation:}
    Where physical conservation laws (mass, momentum, energy) or
    geometric constraints (e.g., divergence-free velocity fields) are
    critical, the decomposition should be designed so that each
    neural operator can learn to enforce the relevant invariants in
    its output on the shared state $u^n$.
    Conservation can be monitored via the physics residual error
    during training and rollout (\cref{fig:PRE_cont_FNO_OOD}),
    providing an interpretable diagnostic tied directly to each
    operator's individual contribution to the RHS.
 
    \item \textbf{Modular reusability:}
    Because sub-operators in the additive formulation share no
    sequential coupling through intermediate states, they can be
    trained and validated independently before being composed into
    the full RHS sum.
    This modularity supports transfer across related PDE families:
    a neural operator trained to approximate the convection operator
    for one set of PDE parameters can be fine-tuned or directly
    re-used for a related system, as demonstrated in
    \cref{appendix:convergence}.
\end{enumerate}

\subsection{Additive Decomposition and Approximation Error}
\label{appendix:operator_splitting_accuracy}
 
In the additive RHS formulation of \cref{eq:mol_decomp_app}, the
forward Euler discrete update takes the form
\begin{equation}
    u^{n+1}
    = u^n
    + \Delta t\!\left(
        \sum_{i=1}^{j} \lambda_{l_i}\,\mathbb{FD}_i(u^n)
      + \sum_{i=1}^{k} \lambda_{nl_i}\,\mathbb{NO}_i(u^n)
      \right),
    \label{eq:mol_euler_app}
\end{equation}
which is identical to \cref{eq:abstract_nox_pde_euler} in the main
text.
Higher-order ODE integrators (e.g., explicit Runge--Kutta methods)
may be substituted without altering the additive spatial structure,
improving temporal accuracy independently of the decomposition choice.
 
The dominant error sources in this framework are therefore:
 
\begin{enumerate}
    \item \textbf{Spatial approximation error from $\mathbb{FD}$:}
    Each finite-difference stencil $\mathbb{FD}_i$ introduces a
    truncation error that depends on the stencil order.
    A second-order stencil yields a spatial error of
    $\mathcal{O}(h^2)$ in the grid spacing $h$; higher-order stencils
    reduce this but may degrade out-of-distribution generalisation,
    as evidenced in \cref{tab:incomp_ns_opsplit}.
 
    \item \textbf{Neural operator approximation error from
    $\mathbb{NO}$:}
    Each neural operator $\mathbb{NO}_i$ introduces a function
    approximation error bounded by its capacity and training
    distribution.
    Because each $\mathbb{NO}_i$ is trained to approximate a specific
    physical operator acting on individual snapshots—rather than on
    intermediate partially-advanced states—the error is localised to
    the RHS evaluation at each time step and does not accumulate
    across fractional sub-steps.
 
    \item \textbf{Temporal integration error:}
    For explicit Euler the global temporal error is
    $\mathcal{O}(\Delta t)$; substituting a fourth-order
    Runge--Kutta scheme reduces this to $\mathcal{O}(\Delta t^4)$.
    This error is orthogonal to the spatial decomposition and is
    controlled independently by the ODE solver.
\end{enumerate}
 
There is no splitting error analogous to Strang or Lie--Trotter
schemes because the sub-operators are \emph{summed}, not sequentially
composed.
The commutator $[\mathcal{A},\mathcal{B}]$ is therefore not a direct
source of error in this framework; non-commutativity of spatial
operators affects the RHS only to the extent that each
$\mathbb{FD}_i$ or $\mathbb{NO}_i$ must individually capture the
correct operator action on the current state $u^n$.
This property simplifies both the error analysis and the training
procedure: operators can be trained and validated independently
without reasoning about their sequential interaction.
 
\subsection{Boundary Conditions}
Boundary conditions (BCs) are defined for the complete operator
$D_X$ and must be respected by each sub-operator in the additive
sum.
In the MoL formulation of \cref{eq:mol_decomp_app}, BCs are
applied once to the full state $u^n$ before the RHS is evaluated;
there is no need to prescribe intermediate fractional boundary
states as required in sequential fractional-step methods.
Each $\mathbb{FD}_i$ and $\mathbb{NO}_i$ is therefore designed to
operate on states that already satisfy the global BCs.
In the experiments reported here, periodic BCs are exploited
through standard circular padding in both stencil convolutions
and neural operator architectures.
Extension to non-periodic BCs (e.g., Dirichlet or Neumann)
can be achieved by augmented padding schemes
\citep{alguacil2021effectsboundaryconditionsfully,
mccabe2025walruscrossdomainfoundationmodel}.
 
\subsection{Neural Extension}
The MoL perspective clarifies how classical splitting intuition
translates to the neural setting.
Practitioners should leverage domain expertise for
physics-driven decomposition, exploiting modular
Mixture of Experts (MoE) structures to experiment with different
operator groupings efficiently.
Individual neural operators should be validated in isolation
before being combined into the full RHS, taking advantage of
the decoupled error structure identified in
\cref{appendix:operator_splitting_accuracy}.
Transfer learning across related physical systems is further
facilitated by the additive independence of sub-operators
(see \cref{appendix:convergence}).
Conservation laws and PDE residuals should be monitored during
rollout to detect failure modes localised to specific
physical operators, supporting interpretable model diagnosis.
 
When devising decompositions for neural architectures, the key
trade-off remains between physical fidelity and computational
resource constraints.
Deploying distinct neural operators for individual physical
processes increases model parameterisation and memory overhead.
To mitigate this, the splitting strategy should prioritise
identifying vector operations that are invariant across the
target family of PDEs, thereby reducing redundant
parameterisation.
As demonstrated in \cref{appendix:convergence}, neural operators
exhibit strong generalisation capabilities across different
regimes; consequently, the design process should emphasise the
development of reusable, modular operator models that can be
transferred across related physical systems, maximising the
return on computational investment.

\begin{figure}[ht]
    \centering
    \begin{subfigure}{0.48\textwidth}
        \centering
        \includegraphics[width=\linewidth]{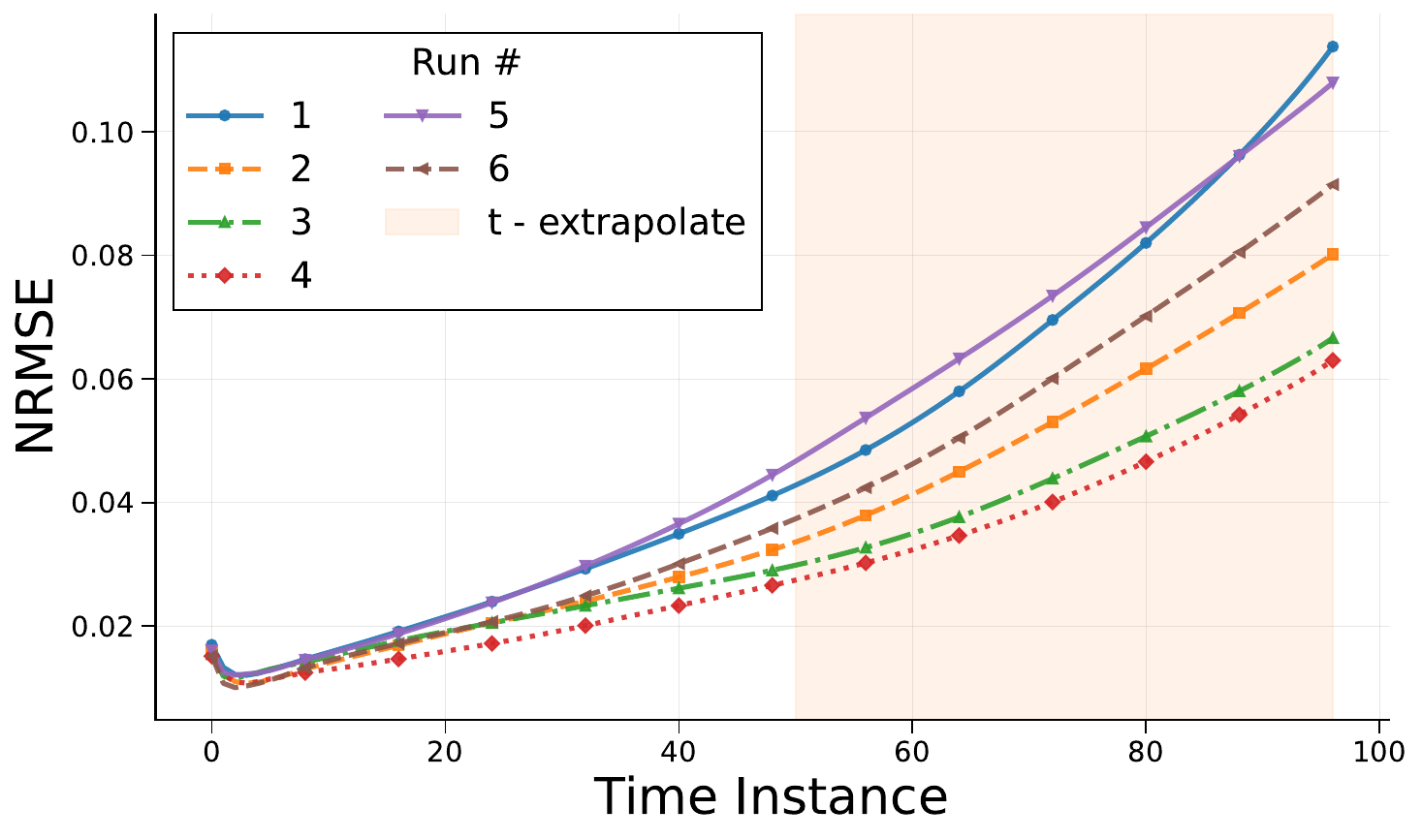}
        \caption{in-distribution}
        \label{fig:NS_incomp_opssplit_ablations_ID}
    \end{subfigure}
    \hfill
    \begin{subfigure}{0.48\textwidth}
        \centering
        \includegraphics[width=\linewidth]{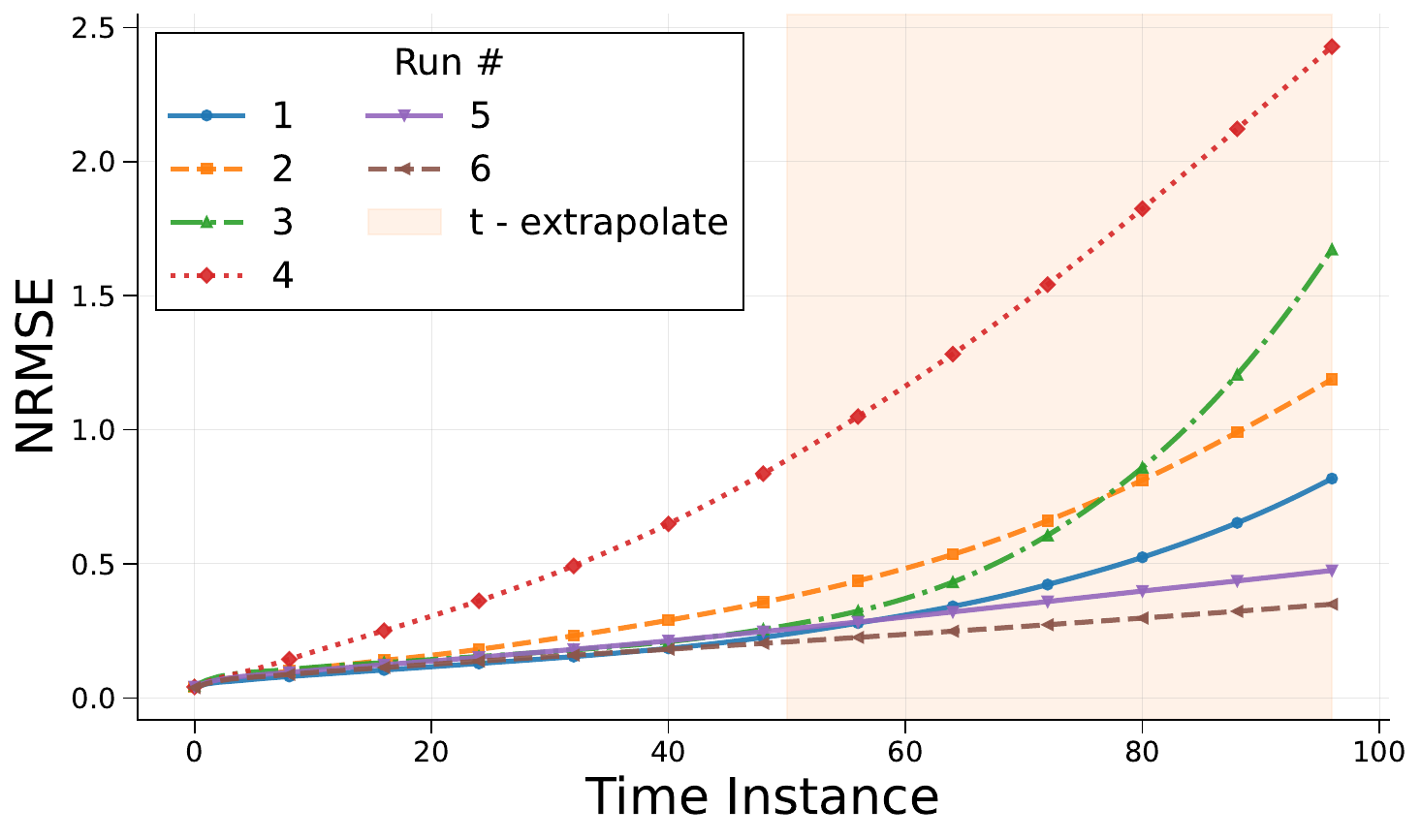}
        \caption{out-of-distribution}
        \label{fig:NS_incomp_opssplit_ablations_OOD}
    \end{subfigure}
    \caption{Incompressible Navier--Stokes: Rollout Error of the the various OpsSplit ablations as described in \cref{tab:incomp_ns_opsplit}. We experiment each ablation across both in and out-of-distribution. While we notice that higher order finite-difference stencils deployed to model the linear operators perform best within distribution, they fail to extend this to outside distribution. Learnt approximations of the linear diffusion operator either using a linear convolutional operator (5) or a neural operator (6) allows for better generalisation.}   
    \label{fig:NS_incomp_opssplit_ablations}
\end{figure}

In \cref{tab:incomp_ns_opsplit}, we experiment with different architectures deployed within our OpSplit method. $FD (n)$ represents a fixed finite difference kernel of order $n$, Linear corresponds to a learnable linear convolution, and $NO$ represents an FNO. We observe that higher-order finite-difference stencils modelling linear operators perform best within the distribution, but fail to generalise effectively to out-of-distribution contexts. Learnt approximations of the linear diffusion operator, whether using a linear convolutional operator (5) or a neural operator (6), enable superior generalisation for the PDE. This is further validated within the temporal rollout plots in  \cref{fig:NS_incomp_opssplit_ablations}.

\begin{table*}[ht]
\centering
\resizebox{\textwidth}{!}{
\begin{tabular}{ccccccccc}
\toprule
\textbf{Ablation} & \textbf{Convection} & \textbf{Diffusion} & & \textbf{Train Time} & \multicolumn{4}{c}{\textbf{NRMSE}} \\
\cmidrule(lr){6-9}
& \textbf{$(\mathbf{v} \cdot \nabla) \mathbf{v}$} & \textbf{$\nabla^2 \mathbf{v}$} & \textbf{Parameters} & \textbf{(hrs:mins)} & \textbf{Test} & \textbf{t-extrapolate} & \textbf{OOD} & \textbf{OOD+t-extrapolate} \\
\midrule
1 & NO & FD (2) & 13437321 & 1:11  & 0.0270  & 0.0796  & 0.1471 & 0.4731 \\
2 & NO & FD (4) & 13437339 & 1:12  & 0.0244 & 0.0600 & 0.1993 & 0.6995 \\
3 & NO & FD (6) & 13437350 & 1:13  & 0.0221 & \textbf{0.0504} & 0.1576 &  0.5303   \\
4 & NO & FD (8) & 13437378 & 1:13 & \textbf{0.0214} & 0.0568 &  0.4195 & 1.5992 \\
5 & NO & Linear & 13438196 & 1:13 & 0.0283 & 0.0823 & 0.1581 & 0.3693\\
6 & NO & NO & 26874628 & 2:18 & 0.0241 & 0.0670 & \textbf{0.1385} & \textbf{0.2815}\\
\bottomrule
\end{tabular}
}
\caption{Incompressible Navier--Stokes: Performance comparison across different methods of operator splitting and subsequent operator choices. $FD (n)$ represents a fixed finite difference kernel of order $n$, Linear corresponds to a learnable linear convolution, and $NO$ represents an FNO.}
\label{tab:incomp_ns_opsplit}
\end{table*}

\begin{figure}[ht]
    \centering
    \begin{subfigure}{0.45\textwidth}
        \centering
        \includegraphics[width=\linewidth]{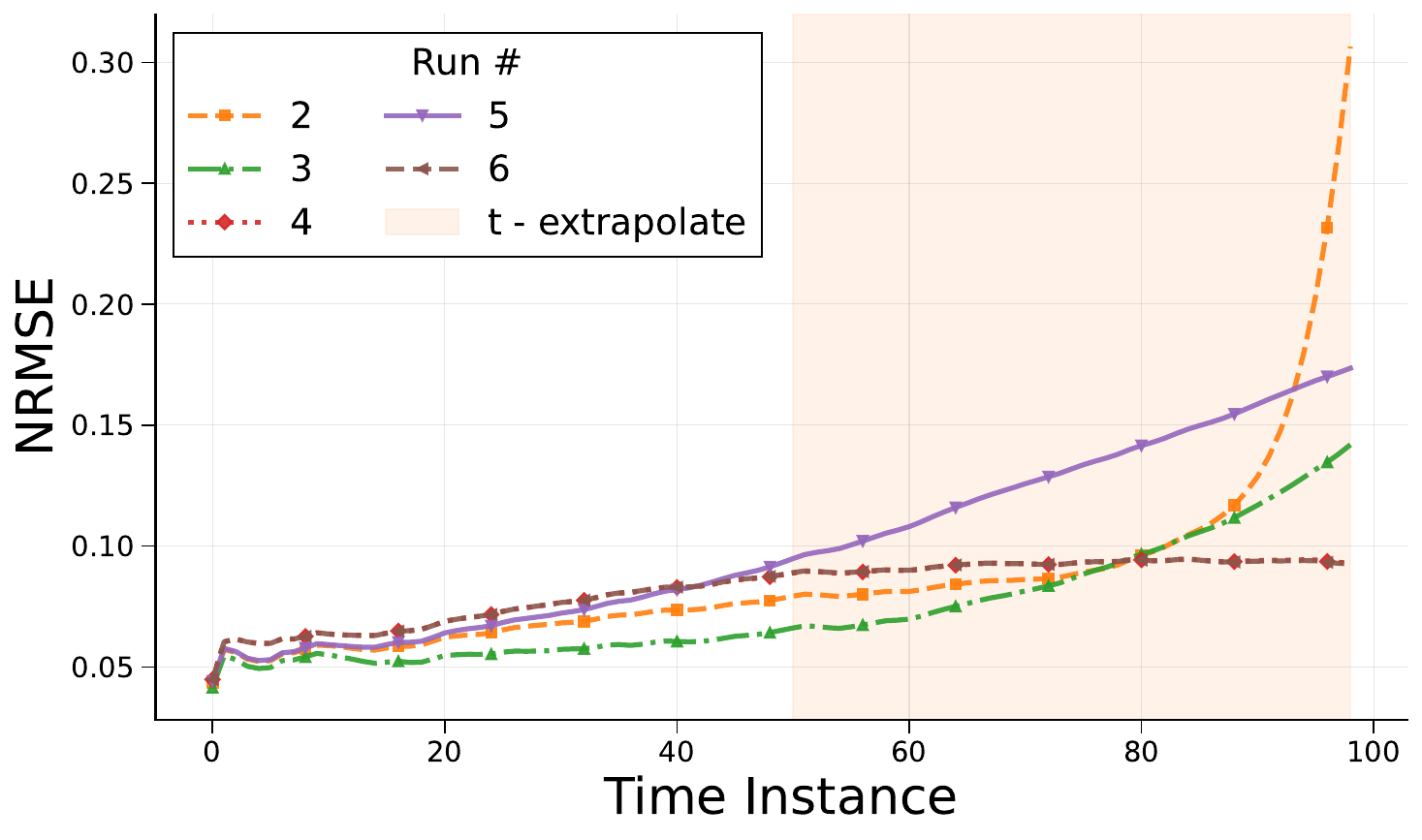}
        \caption{in-distribution}
        \label{fig:NS_comp_opssplit_ablations_ID}
    \end{subfigure}
    \hfill
    \begin{subfigure}{0.48\textwidth}
        \centering
        \includegraphics[width=\linewidth]{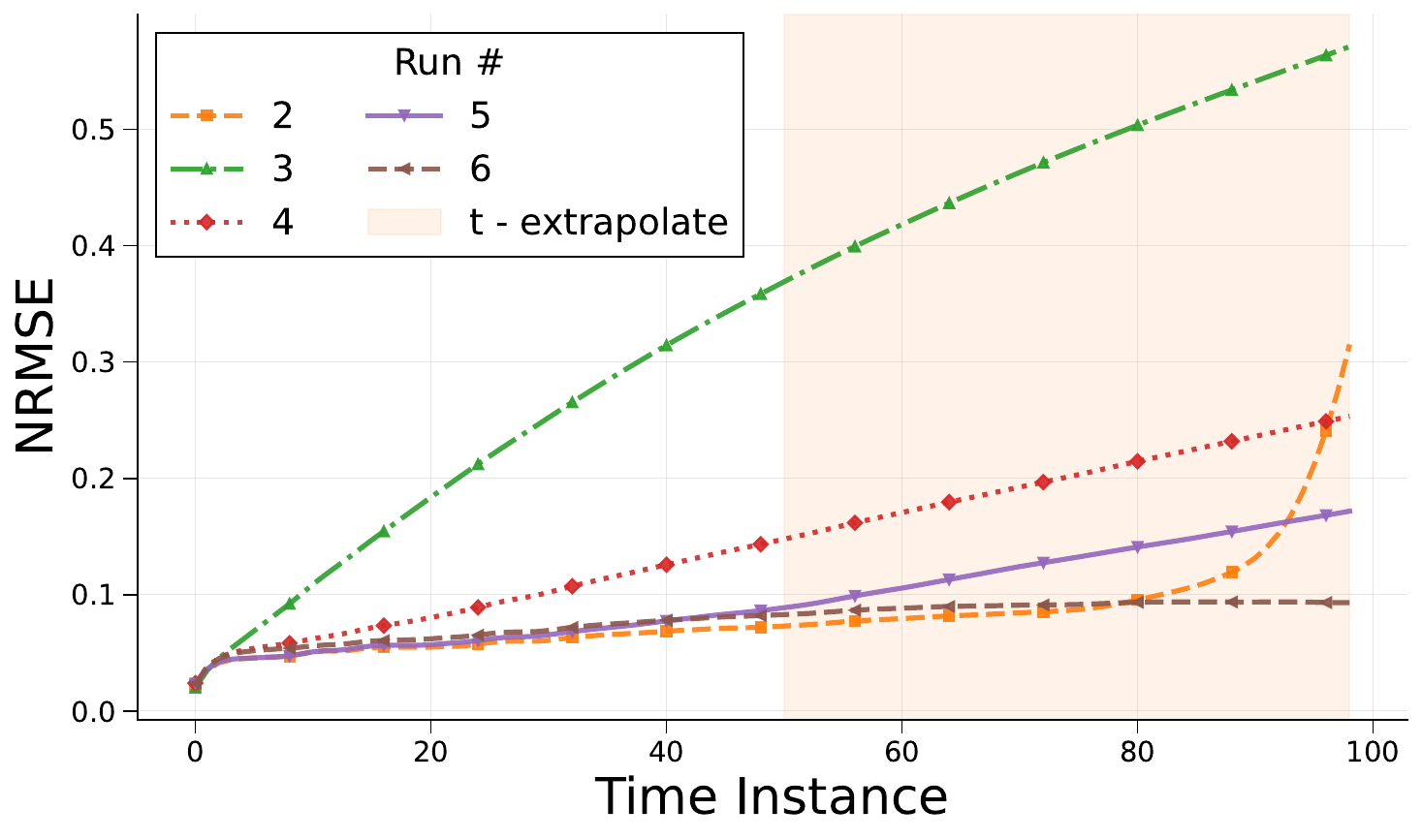}
        \caption{out-of-distribution}
        \label{fig:NS_comp_opssplit_ablations_OOD}
    \end{subfigure}
    \begin{subfigure}{0.48\textwidth}
        \centering
        \includegraphics[width=\linewidth]{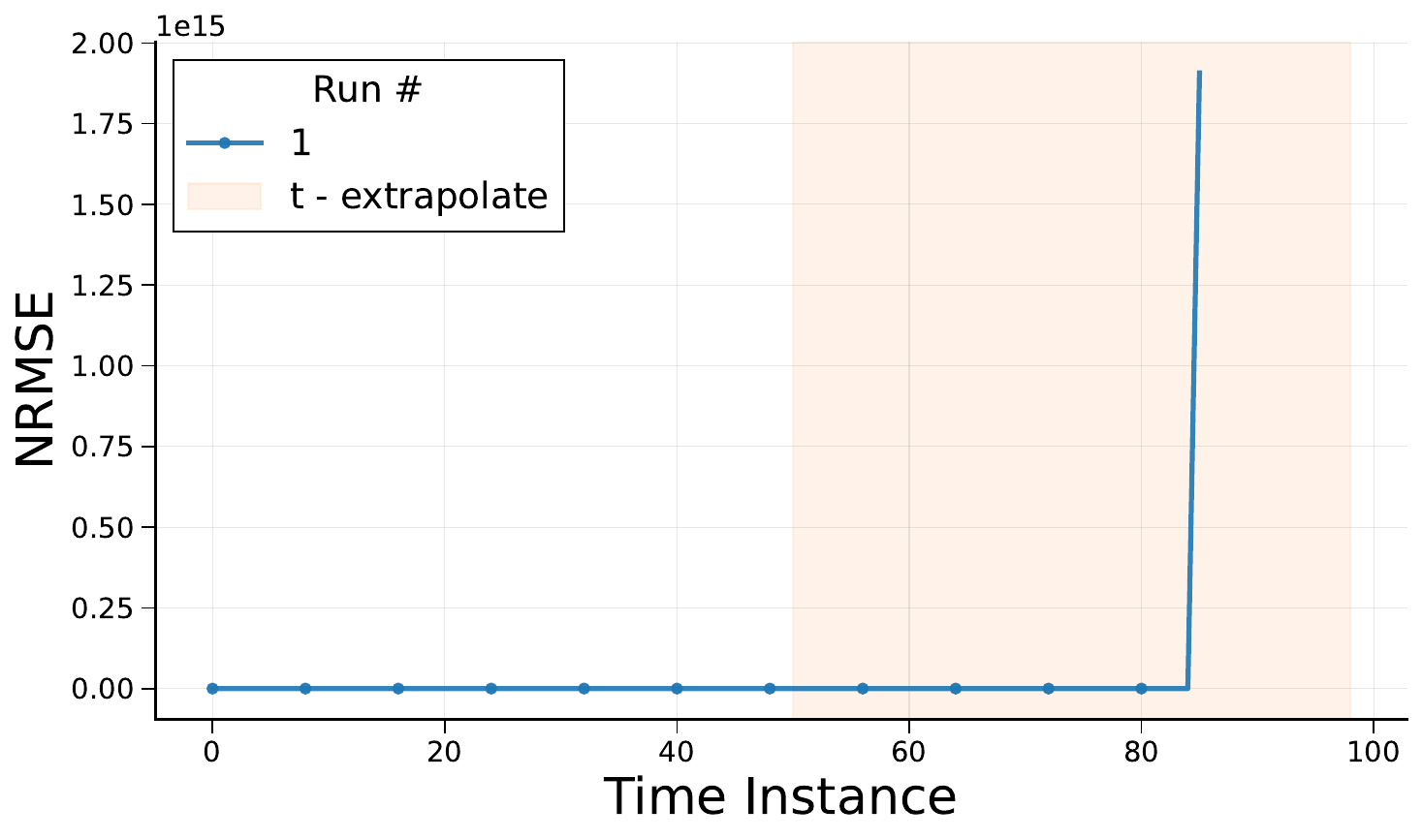}
        \caption{Training Times}
        \label{fig:NS_comp_opssplit_ablations_ID_explode}
    \end{subfigure}
    \hfill
    \begin{subfigure}{0.48\textwidth}
        \centering
        \includegraphics[width=\linewidth]{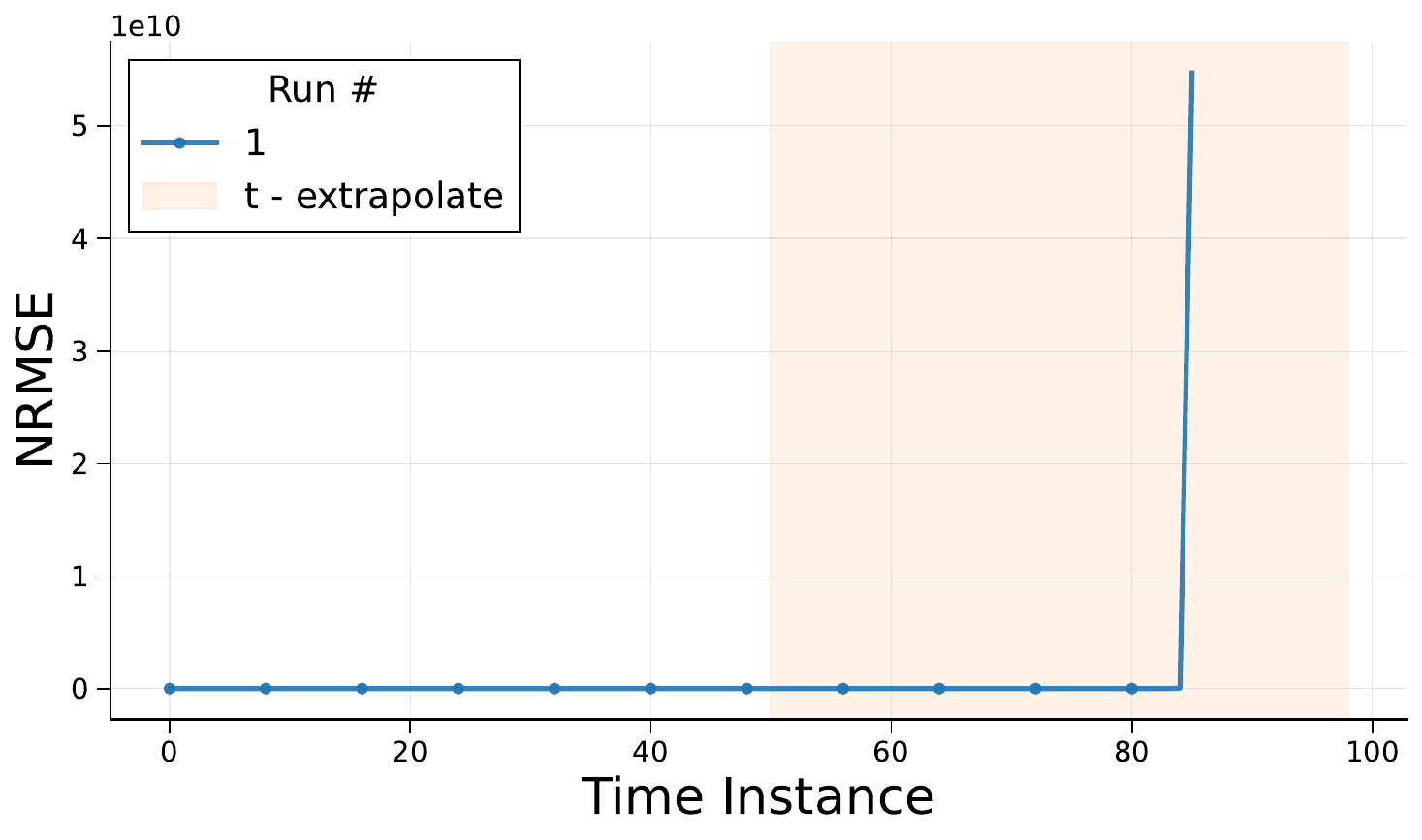}
        \caption{Training Times}
        \label{fig:NS_comp_opssplit_ablations_OOD_explode}
    \end{subfigure}
    \caption{Compressible Navier--Stokes: Rollout Error of the the various OpsSplit ablations as described in \cref{tab:comp_ns_opsplit}. We experiment each ablation across both in and out-of-distribution. We notice that when we approximate nonlinear operators using FD stencils the prediction blows up when extrapolated in time. The general trend we notice is that approximating using neural operators offer better stability both in and out-of-distribution.}   
    \label{fig:NS_comp_opssplit_ablations}
\end{figure}
We performed a similar ablation of OpsSplit strategies across the physical operators within Compressible Navier-Stokes, as outlined in Section 5.2, where $NO_i$ represents the $i^{th}$ neural operator within the system. We observe that using a fixed linear operator over nonlinear operators tends to result in instability, leading to divergent predictions when extrapolating further in time. The general trend suggests that employing neural operators within the splitting strategies offers better stability both in and out of distribution. As indicated in the table, these performance improvements come at the cost of increased parameterisation and/or training time. We also note that using explicitly defined fixed kernels offers better performance than using learnable linear operators. \Cref{fig:NS_comp_opssplit_ablations} demonstrate the temporal rollout of error for each OpsSplit strategy. 

\begin{table*}[ht]
\centering
\resizebox{\textwidth}{!}{
\begin{tabular}{cccccccccccc}
\toprule
\textbf{Ablation} & \textbf{Continuity} & \textbf{Convection} & \textbf{Pressure} & \textbf{Pressure} & \textbf{Pressure} & & \textbf{Train Time} & \multicolumn{4}{c}{\textbf{NRMSE}} \\
& & & \textbf{Gradient} & \textbf{Advection} & \textbf{Expansion} & & & & & & \\
\cmidrule(lr){9-12}
& \textbf{$\nabla \cdot (\rho\mathbf{v})$} & \textbf{$(\mathbf{v} \cdot \nabla) \mathbf{v}$} & \textbf{$\nabla \mathbf{P}$} & \textbf{$\mathbf{v} \cdot \nabla P$} & \textbf{$P(\nabla \cdot \mathbf{v})$} & \textbf{Parameters} & \textbf{(hrs:mins)} & \textbf{Test} & \textbf{t-extrapolate} & \textbf{OOD} & \textbf{OOD+t-extrapolate} \\
\midrule
1 & NO$_1$ & NO$_2$ & FD (4) & FD (4) & FD (4) & 26874627  & 3:56 & 0.0824 & - & 0.0851 & - \\
2 & NO$_1$ & NO$_2$ & FD (4) & FD (4) & NO$_3$ & 40311940 & 5:43 & 0.0762 & 0.1441 & \textbf{0.0785}  & 0.1564 \\
3 & NO$_1$ & NO$_2$ & Linear & Linear & Linear & 26874666 &  3:52  & \textbf{0.0639} & \textbf{ 0.1115} & 0.2061 & 0.4201 \\
4 & NO$_1$ & NO$_2$ & Linear & Linear & NO$_3$ &  40311979
&  5:41 & 0.0866  & 0.1177  & 0.1329  & 0.2839 \\
5 & NO$_1$ & NO$_2$ & FD (4) & NO$_1$ & NO$_1$ & 26874627 & 5:39 & 0.0822  & 0.1691 & 0.0852 & 0.1812\\
6 & NO$_1$ & NO$_1$ & FD (4) & NO$_1$ & NO$_1$ & 13437313 & 7:29 & 0.0865 & 0.1173 &  0.0887 & \textbf{0.1252} \\
\bottomrule 
\end{tabular}
}
\caption{Compressible Navier--Stokes: Performance comparison across different methods of operator splitting and subsequent operator choices. $FD (n)$ represents a fixed finite difference kernel of order $n$, Linear corresponds to a learnable linear convolution, and $NO$ represents an FNO. $NO_i$ represents the $i^{th}$ neural operator within the system.}
\label{tab:comp_ns_opsplit}
\end{table*}

\newpage
\section{Rollout Length}
\label{appendix:rollout_length}

\begin{figure}[H]
    \centering
    \begin{subfigure}{0.5\textwidth}
        \centering
        \includegraphics[width=0.9\linewidth]{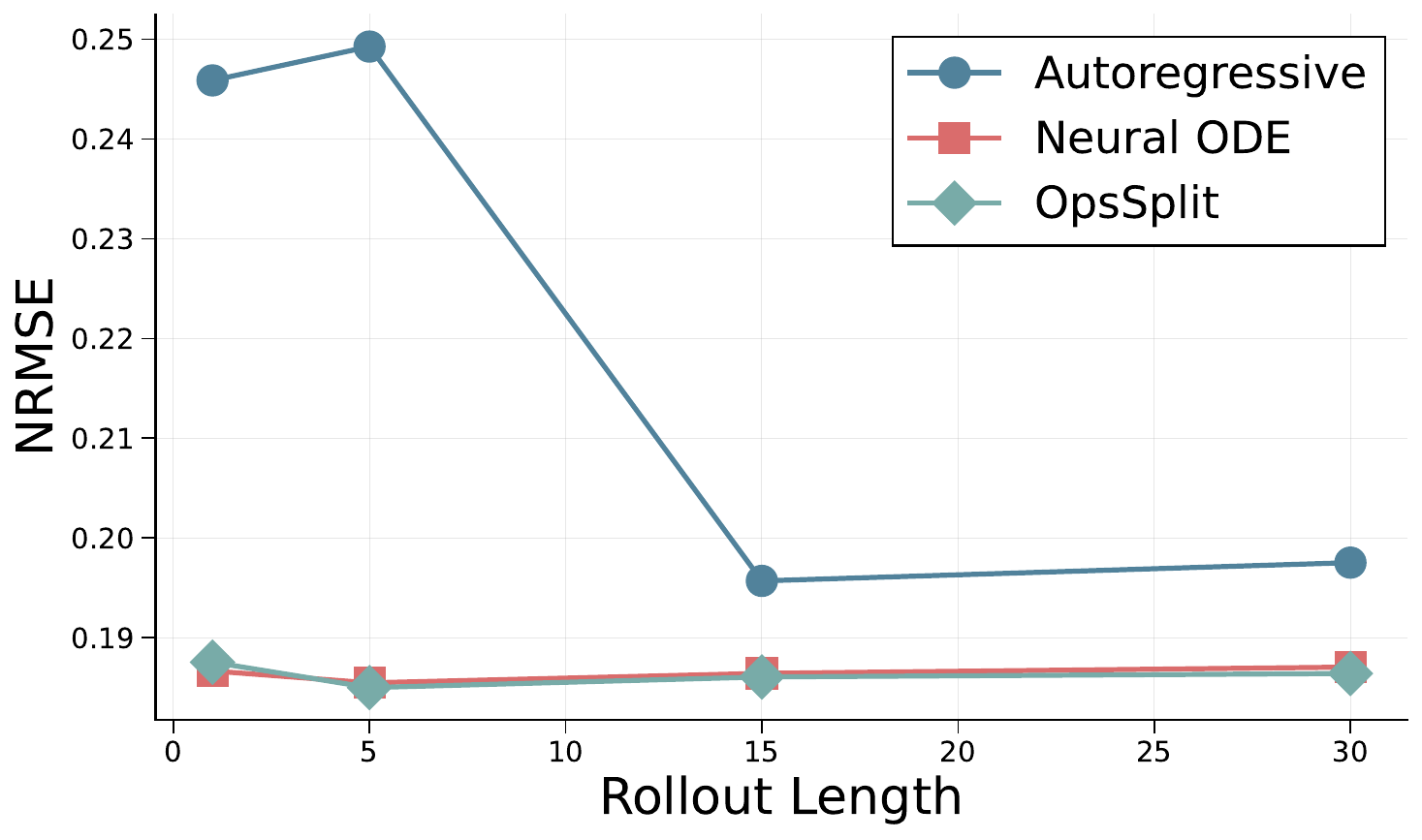}
        \caption{in-distribution}
        \label{fig:NS_incomp_rollout_in}
    \end{subfigure}
    \begin{subfigure}{0.5\textwidth}
        \centering
        \includegraphics[width=0.9\linewidth]{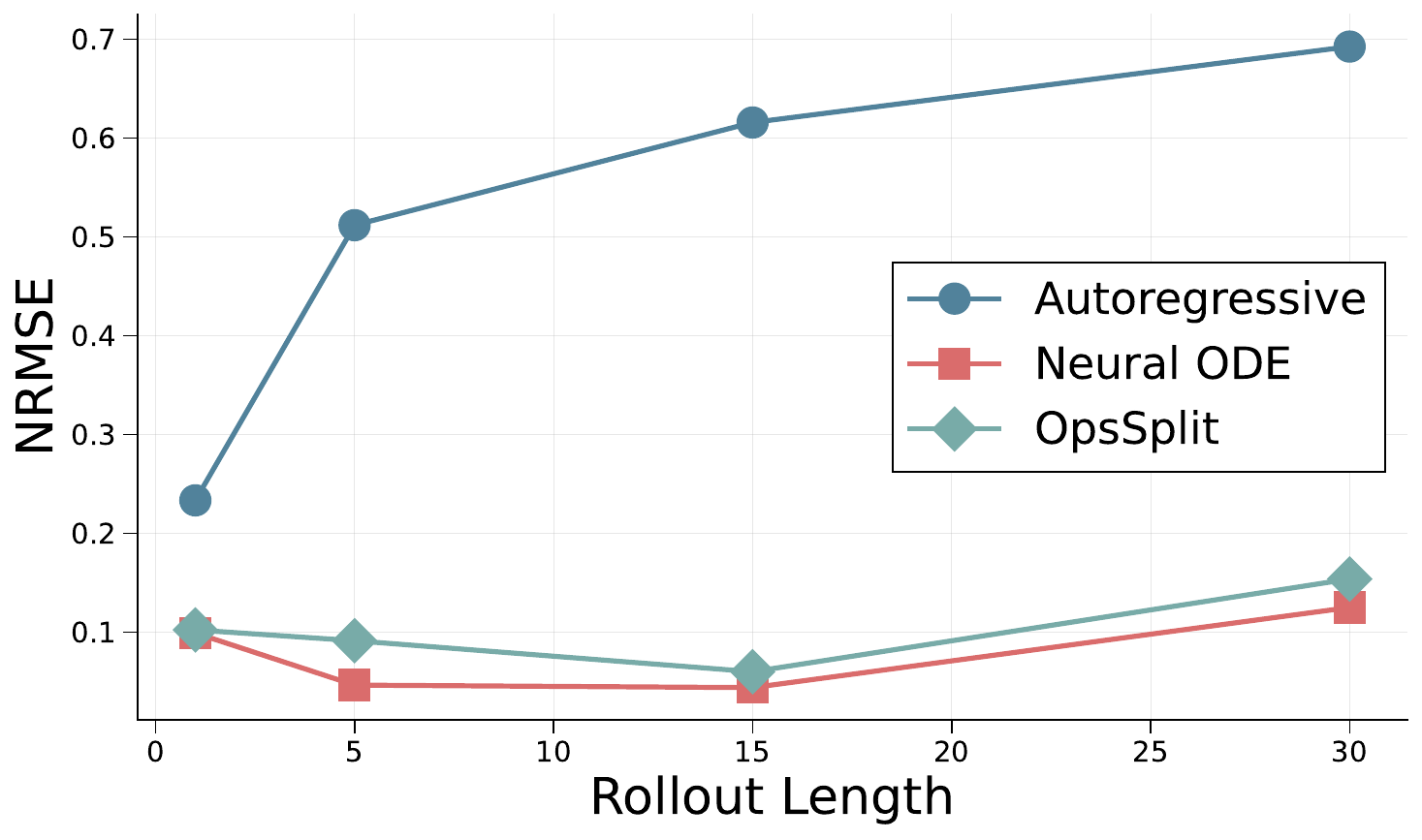}
        \caption{out-of-distribution}
        \label{fig:NS_incomp_rollout_out}
    \end{subfigure}
    \begin{subfigure}{0.5\textwidth}
        \centering
        \includegraphics[width=0.9\linewidth]{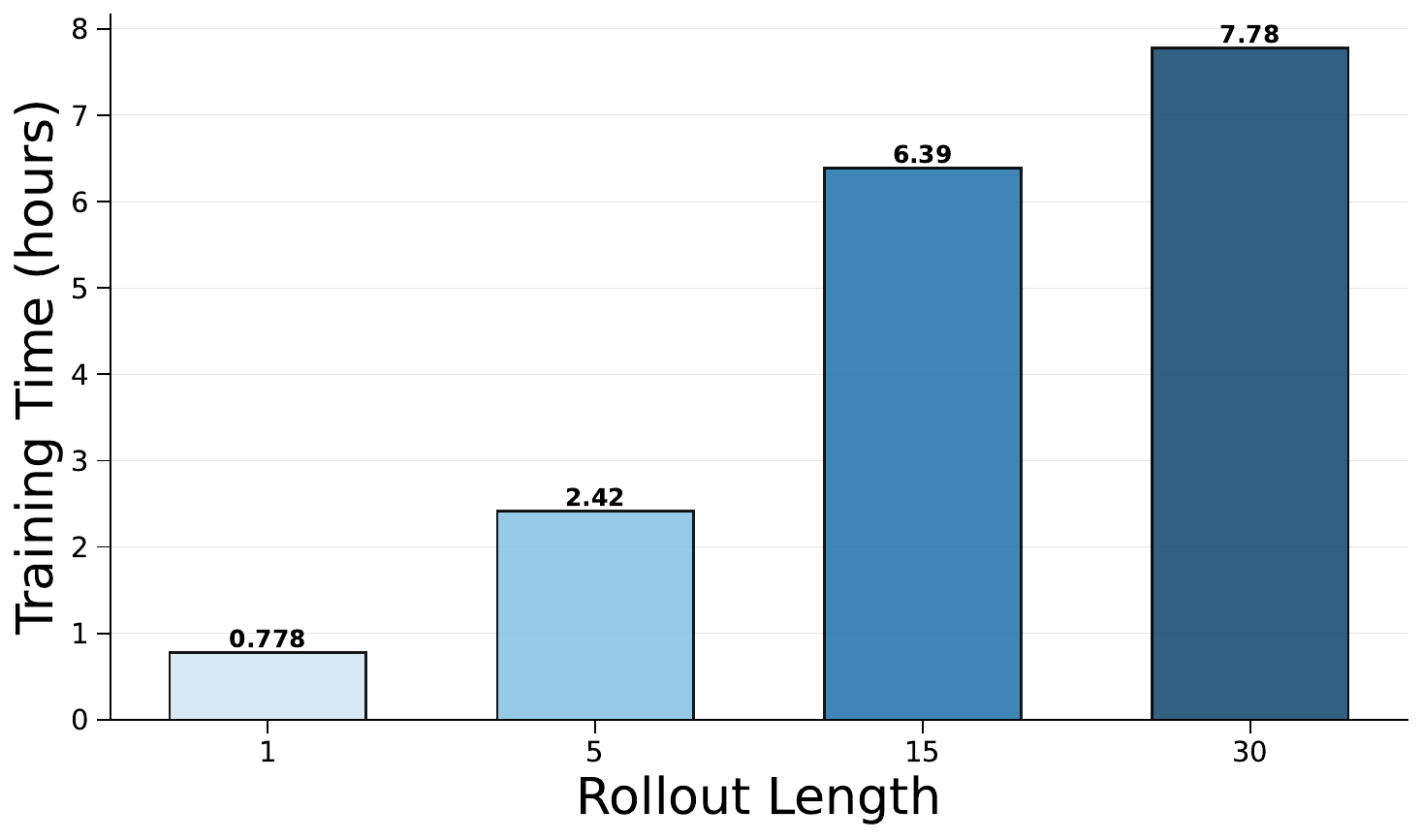}
        \caption{Training Times}
        \label{fig:NS_incomp_rollout_time}
    \end{subfigure}
    \caption{Incompressible Navier--Stokes: Rollout length of the FNO deployed across each method for both in and out-of-distribution.}   \label{fig:NS_incomp_rollout}
\end{figure}

As demonstrated by \citet{koehler2024apebench}, training autoregressive neural PDE models with longer rollout lengths yields improved temporal stability. Our ablation studies corroborate this finding, with \cref{fig:NS_incomp_rollout_in} showing that extended rollout lengths enhance the performance of autoregressive models. In contrast, temporally continuous methods, specifically neural ODEs and our OpsSplit approach, exhibit marginal sensitivity to rollout length for both in-distribution and out-of-distribution scenarios. This insensitivity enables more memory and computationally efficient training paradigms. Notably, for out-of-distribution evaluation (\cref{fig:NS_incomp_rollout_out}), autoregressive models demonstrate degraded performance with longer rollouts, likely attributable to the increasing divergence between novel physics and the training distribution as temporal evolution progresses.

\section{Model Efficiency}
\label{appendix:model_efficiency}
\begin{figure}[H]
    \centering
    \begin{subfigure}{0.5\textwidth}
        \centering
        \includegraphics[width=0.9\linewidth]{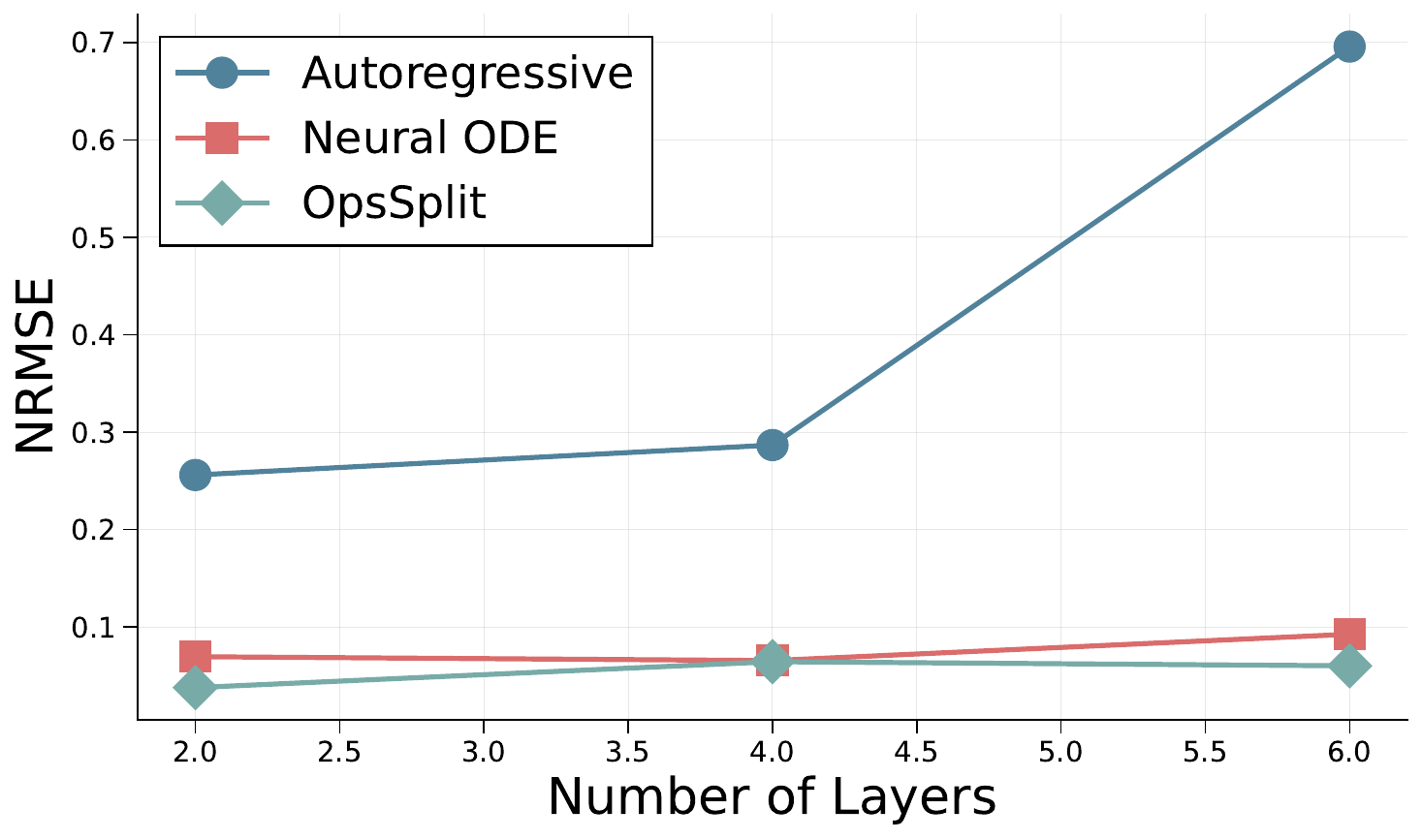}
        \caption{in-distribution}
        \label{fig:NS_incomp_model_efficiency_in}
    \end{subfigure}
    \begin{subfigure}{0.5\textwidth}
        \centering
        \includegraphics[width=0.9\linewidth]{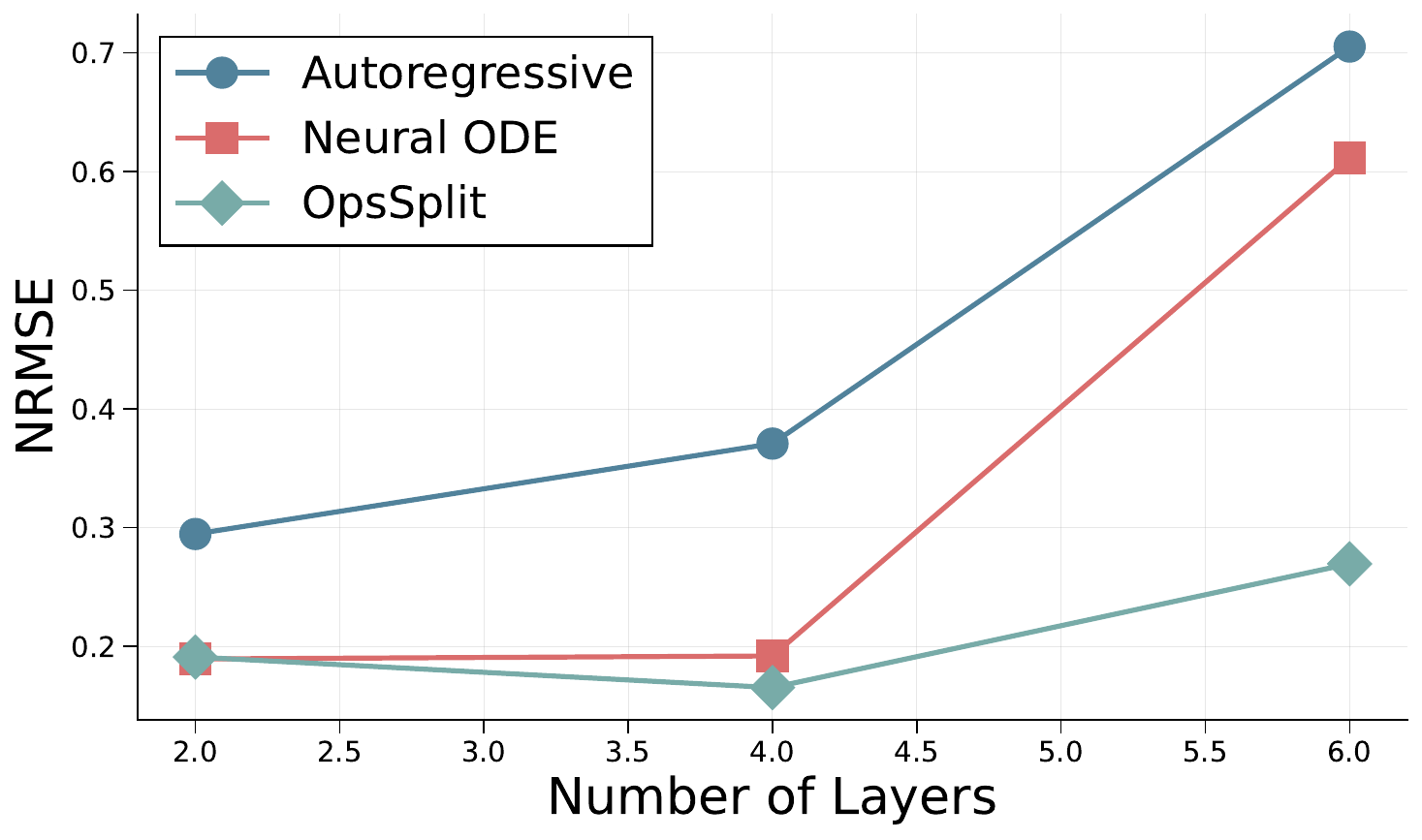}
        \caption{out-of-distribution}
        \label{fig:NS_incomp_model_efficiency_out}
    \end{subfigure}
    \begin{subfigure}{0.5\textwidth}
        \centering
        \includegraphics[width=0.9\linewidth]{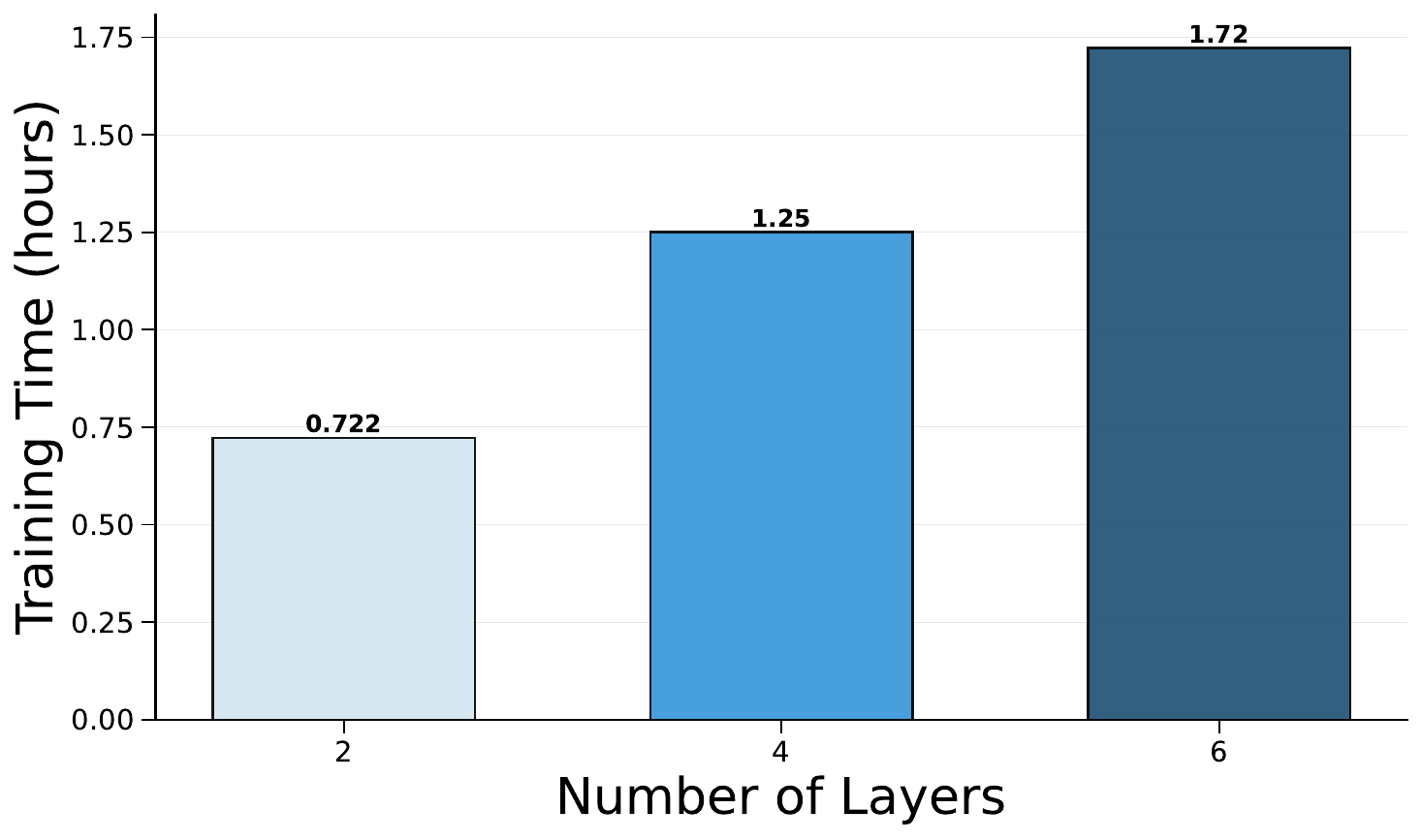}
        \caption{Training Times}
        \label{fig:NS_incomp_model_efficiency_time}
    \end{subfigure}
    \caption{Incompressible Navier--Stokes: Data Efficiency of the FNO deployed across each method for both in and out-of-distribution.}   \label{fig:NS_incomp_model_efficiency}
\end{figure}

\Cref{fig:NS_incomp_model_efficiency} demonstrates that the OpsSplit method exhibits exceptional parameter efficiency, achieving comparable performance across varying FNO depths. Conversely, the autoregressive approach shows deteriorating performance with increased model capacity, likely due to optimisation challenges in high-dimensional parameter spaces that demand greater computational resources to locate satisfactory minima. Across all model sizes, OpsSplit consistently outperforms competing methods for both in-distribution and out-of-distribution evaluation scenarios.

\section{Data Efficiency}
\label{appendix:data_efficiency}

\begin{figure}[H]
    \centering
    \begin{subfigure}{0.5\textwidth}
        \centering
        \includegraphics[width=0.9\linewidth]{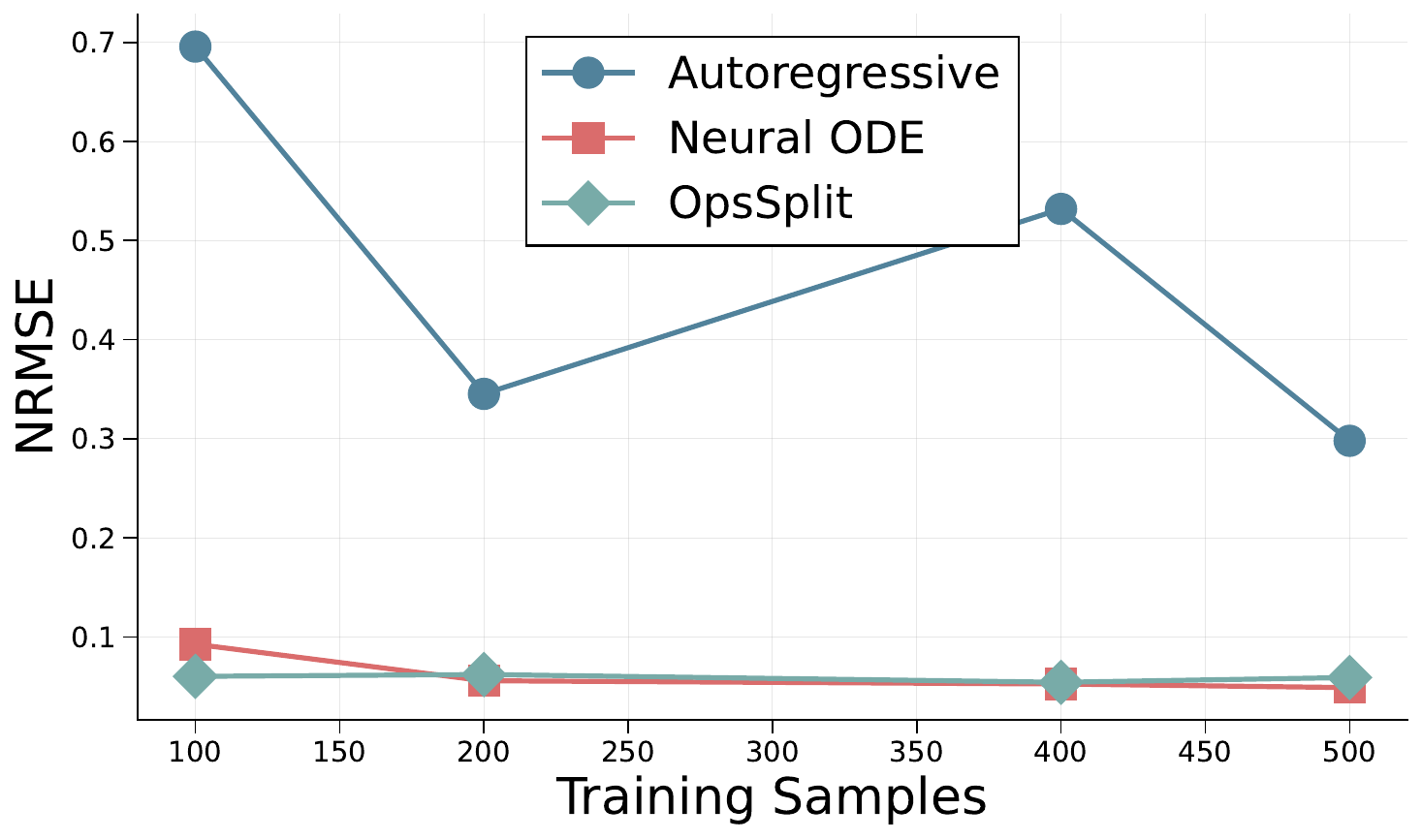}
        \caption{in-distribution}
        \label{fig:NS_incomp_data_efficiency_in}
    \end{subfigure}
    \begin{subfigure}{0.5\textwidth}
        \centering
        \includegraphics[width=0.9\linewidth]{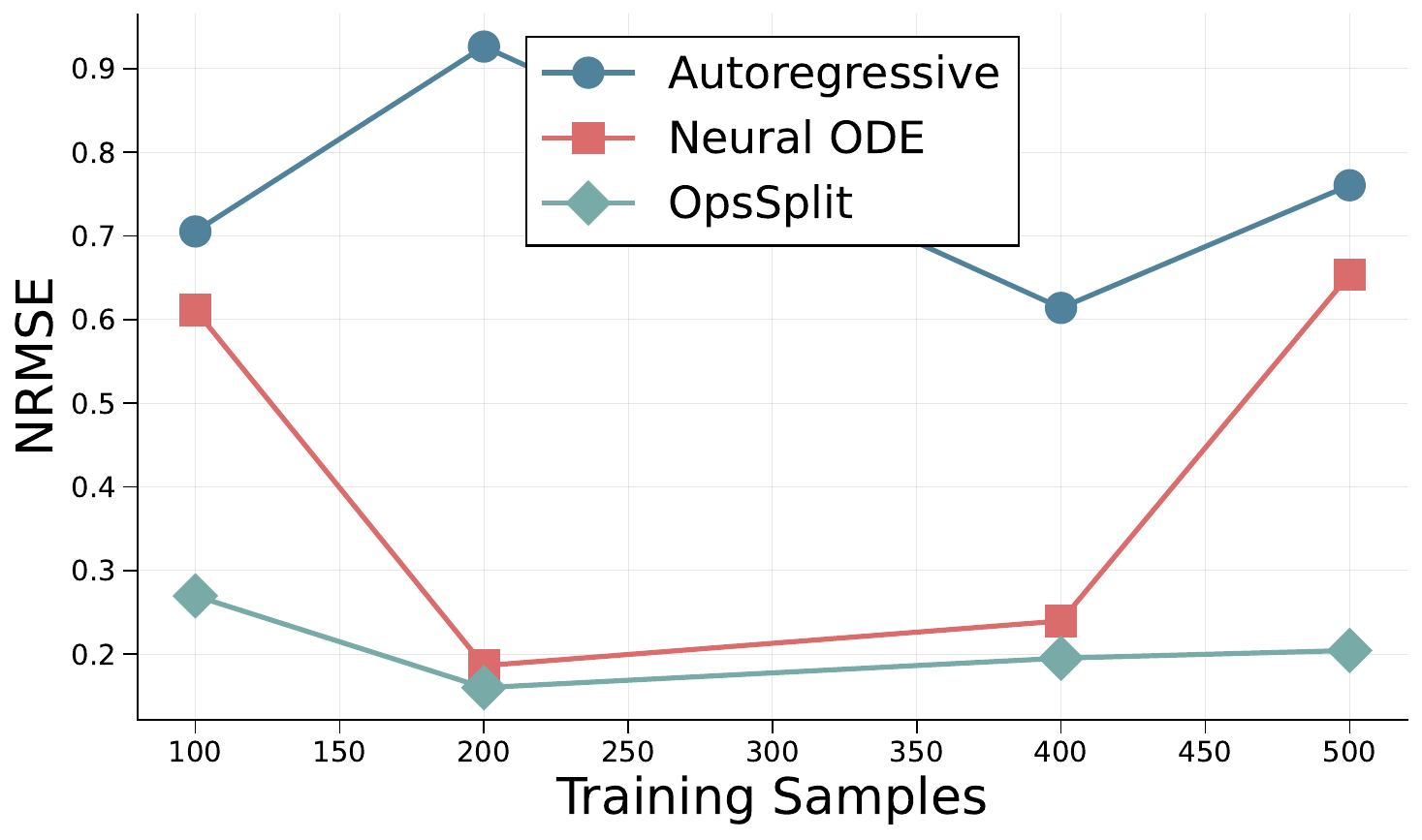}
        \caption{out-of-distribution}
        \label{fig:NS_incomp_data_efficiency_out}
    \end{subfigure}
    \begin{subfigure}{0.5\textwidth}
        \centering
        \includegraphics[width=0.9\linewidth]{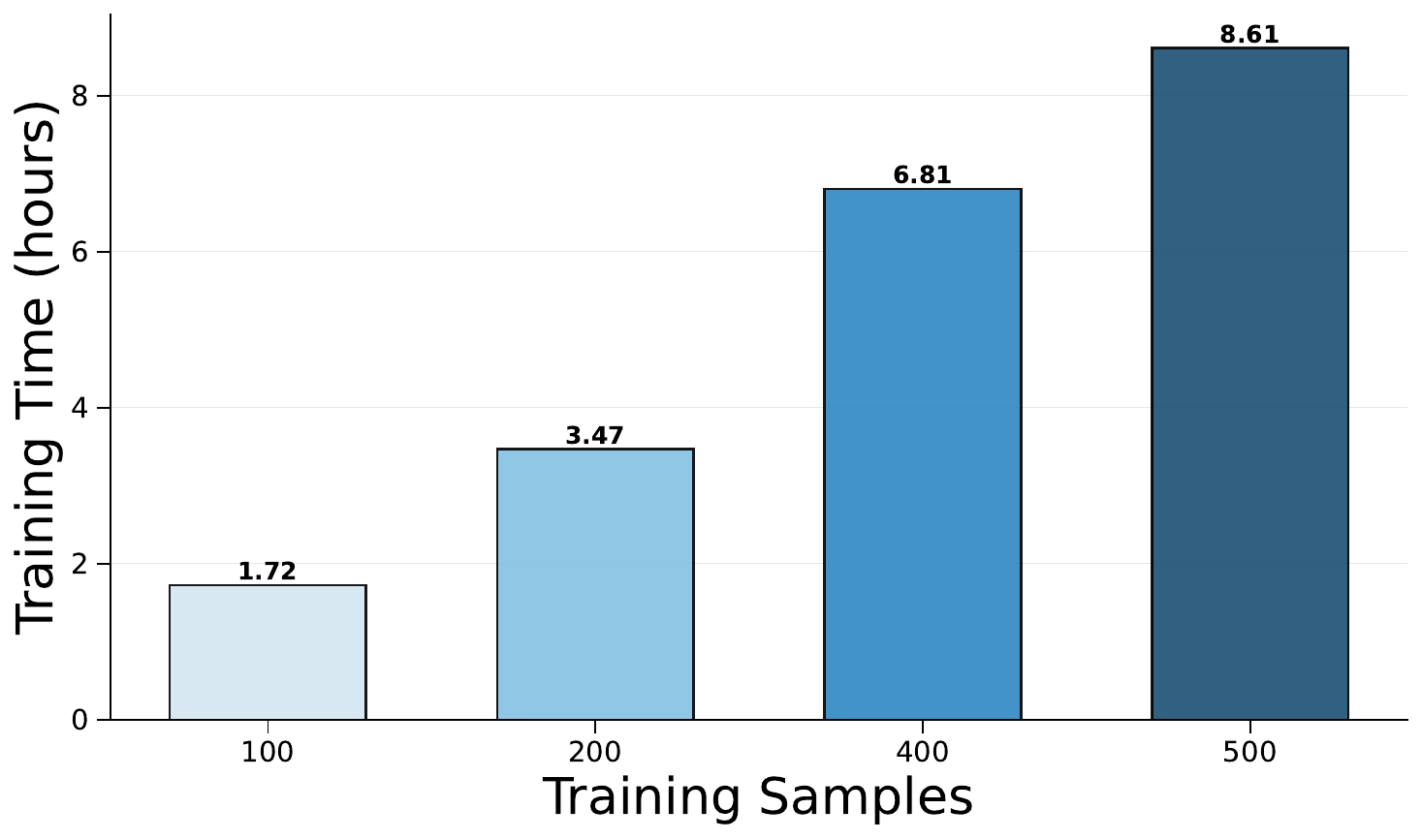}
        \caption{Training Times}
        \label{fig:NS_incomp_data_efficiency_time}
    \end{subfigure}
    \caption{Incompressible Navier--Stokes: Data Efficiency of the FNO deployed across each method for both in and out-of-distribution.}   \label{fig:NS_incomp_data_efficiency}
\end{figure}

\Cref{fig:NS_incomp_data_efficiency} demonstrates that the OpsSplit method provides a data-efficient training paradigm, exhibiting minimal sensitivity to dataset size variations. While the neural ODE approach achieves comparable performance on in-distribution data, it lacks explicit PDE structure encoding and consequently struggles to generalise to novel physical regimes. The autoregressive method benefits from additional training data, showing monotonic improvement with dataset size. However, OpsSplit consistently achieves superior performance with substantially reduced data requirements compared to baseline approaches.


\newpage
\section{Temporal Integration Method}
\label{appendix:odesolve}

\begin{figure}[H]
    \centering
    \begin{subfigure}{0.5\textwidth}
        \centering
        \includegraphics[width=0.9\linewidth]{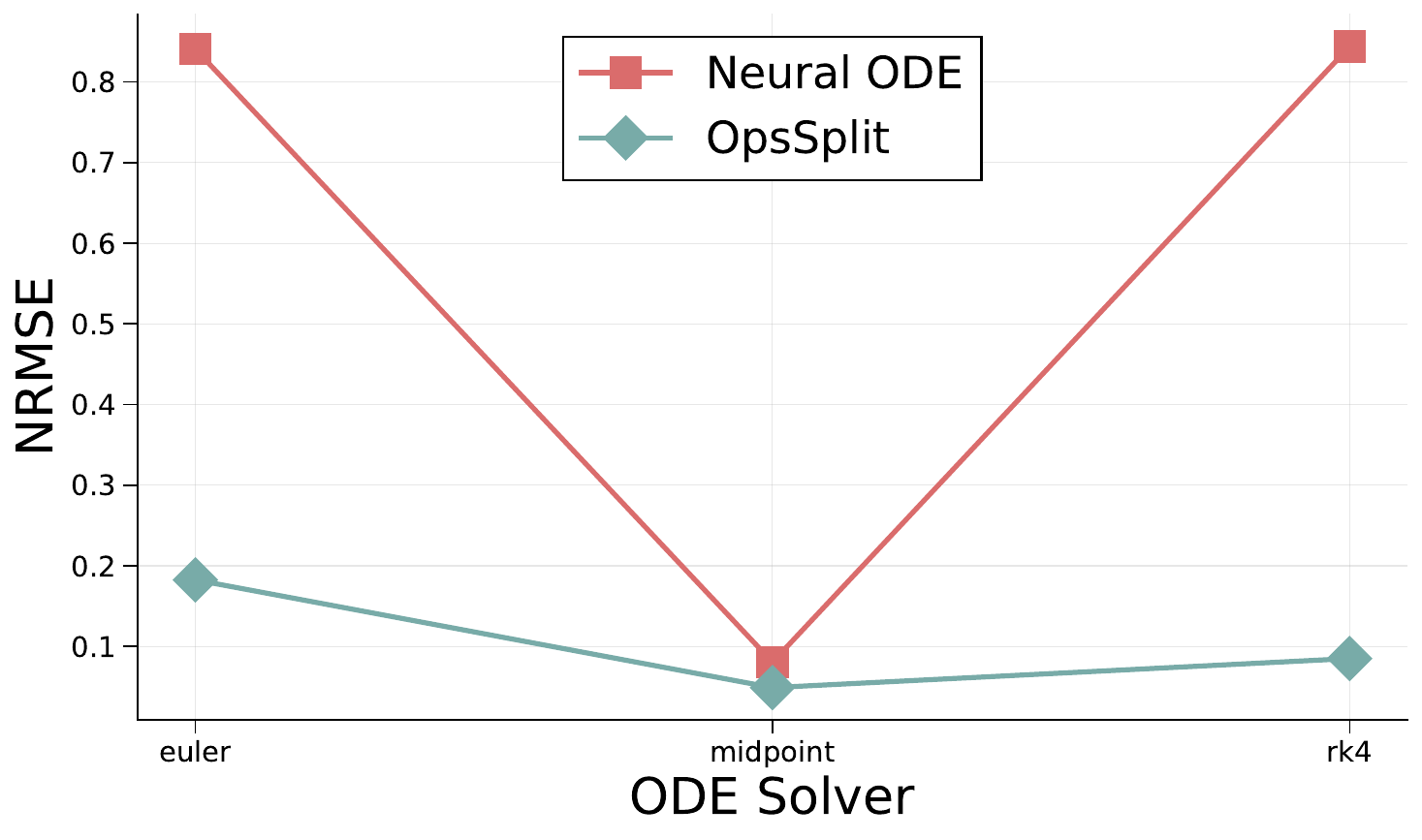}
        \caption{in-distribution}
        \label{fig:NS_incomp_odesolve_in}
    \end{subfigure}
    \begin{subfigure}{0.5\textwidth}
        \centering
        \includegraphics[width=0.9\linewidth]{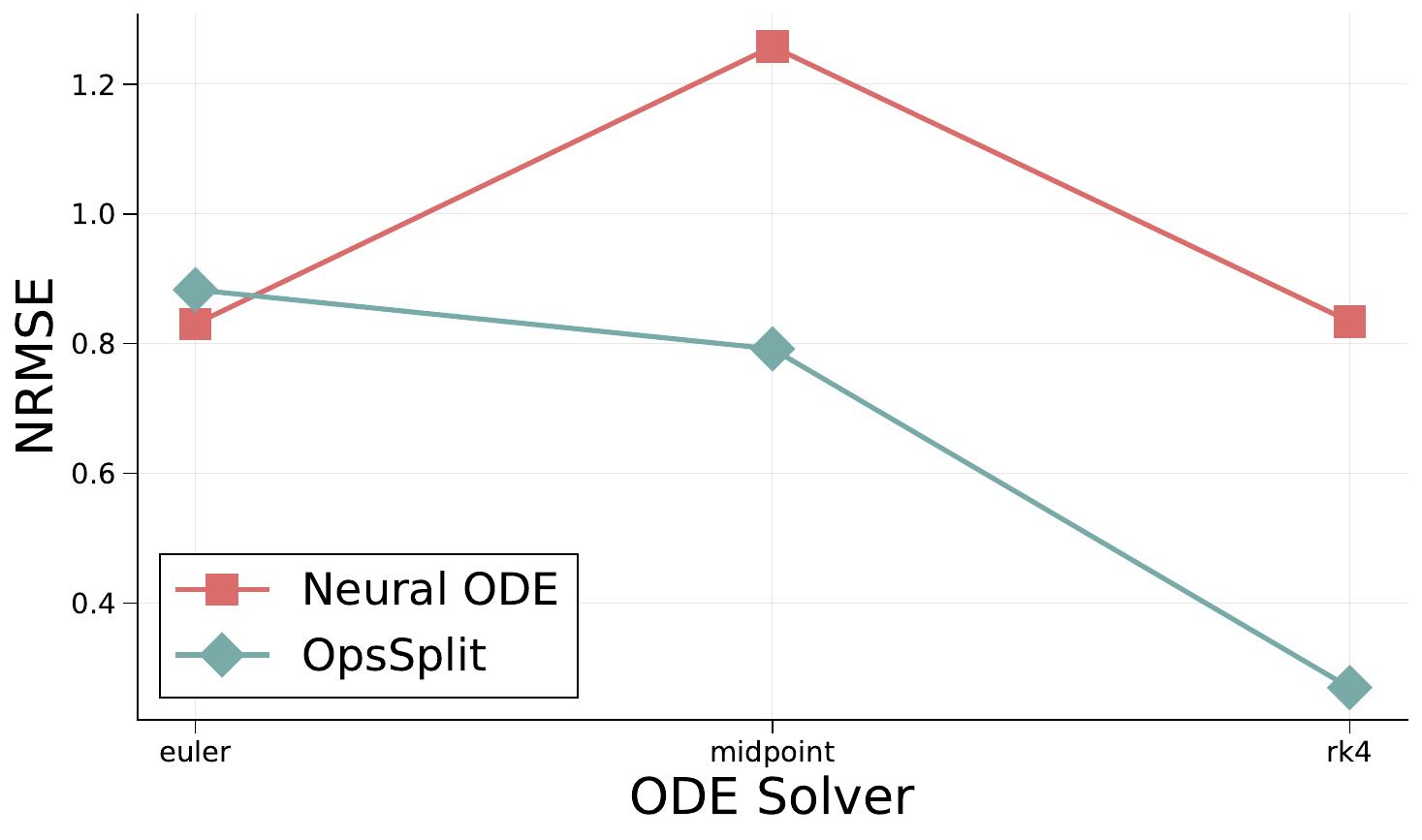}
        \caption{out-of-distribution}
        \label{fig:NS_incomp_odesolve_out}
    \end{subfigure}
    \begin{subfigure}{0.5\textwidth}
        \centering
        \includegraphics[width=0.9\linewidth]{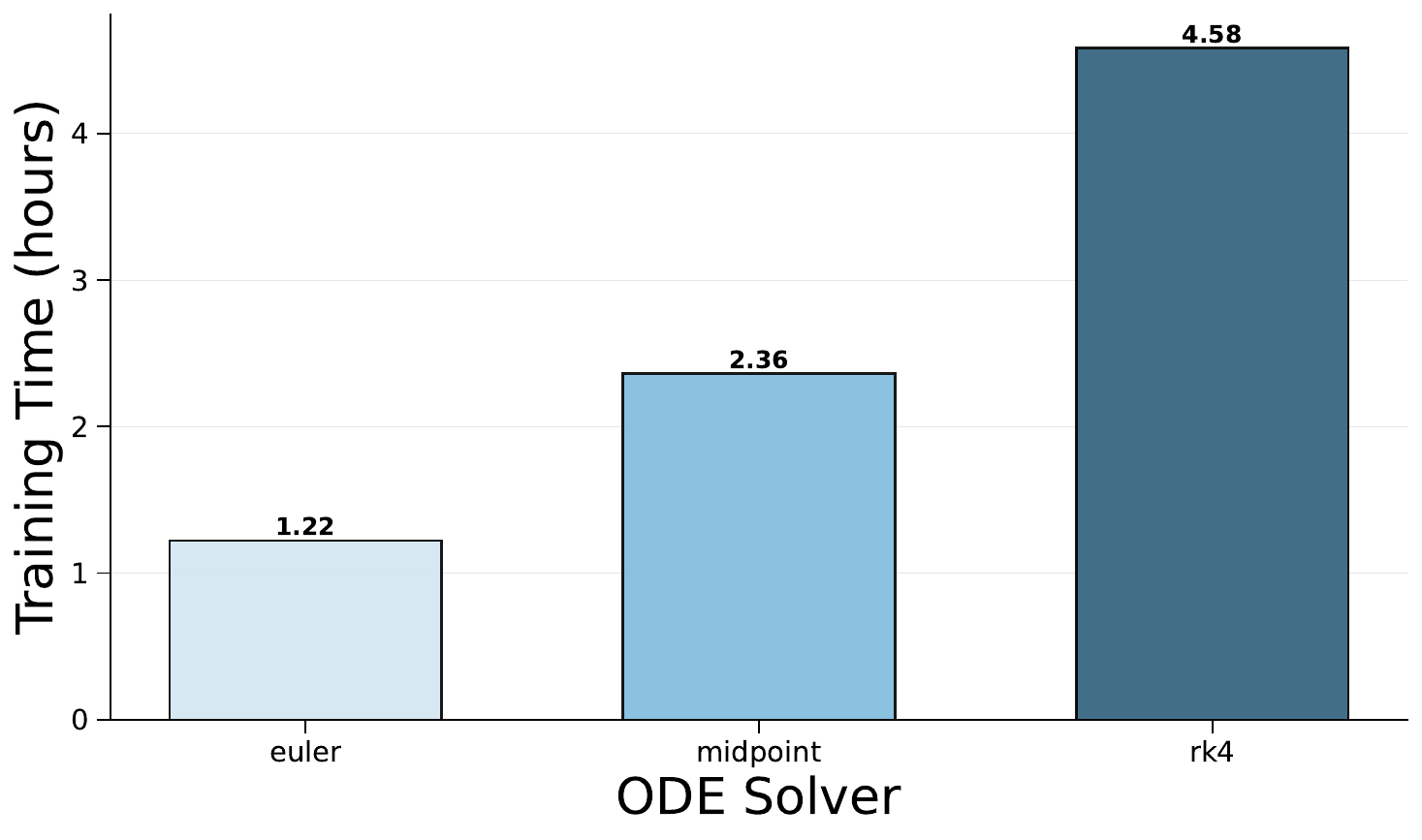}
        \caption{Training Times}
        \label{fig:NS_incomp_odesolve_time}
    \end{subfigure}
    \caption{Incompressible Navier--Stokes: Impact of finesse of ODE solver used for temporal integration of the FNO deployed across each method for both in and out-of-distribution.}   \label{fig:NS_incomp_odesolve}
\end{figure}


\newpage
\section{Convergence}
\label{appendix:convergence}

\subsection{Pre-training: Training from Scratch}

\begin{figure}[H]
    \centering
    \begin{subfigure}{0.48\textwidth}
        \centering
        \includegraphics[width=0.9\linewidth]{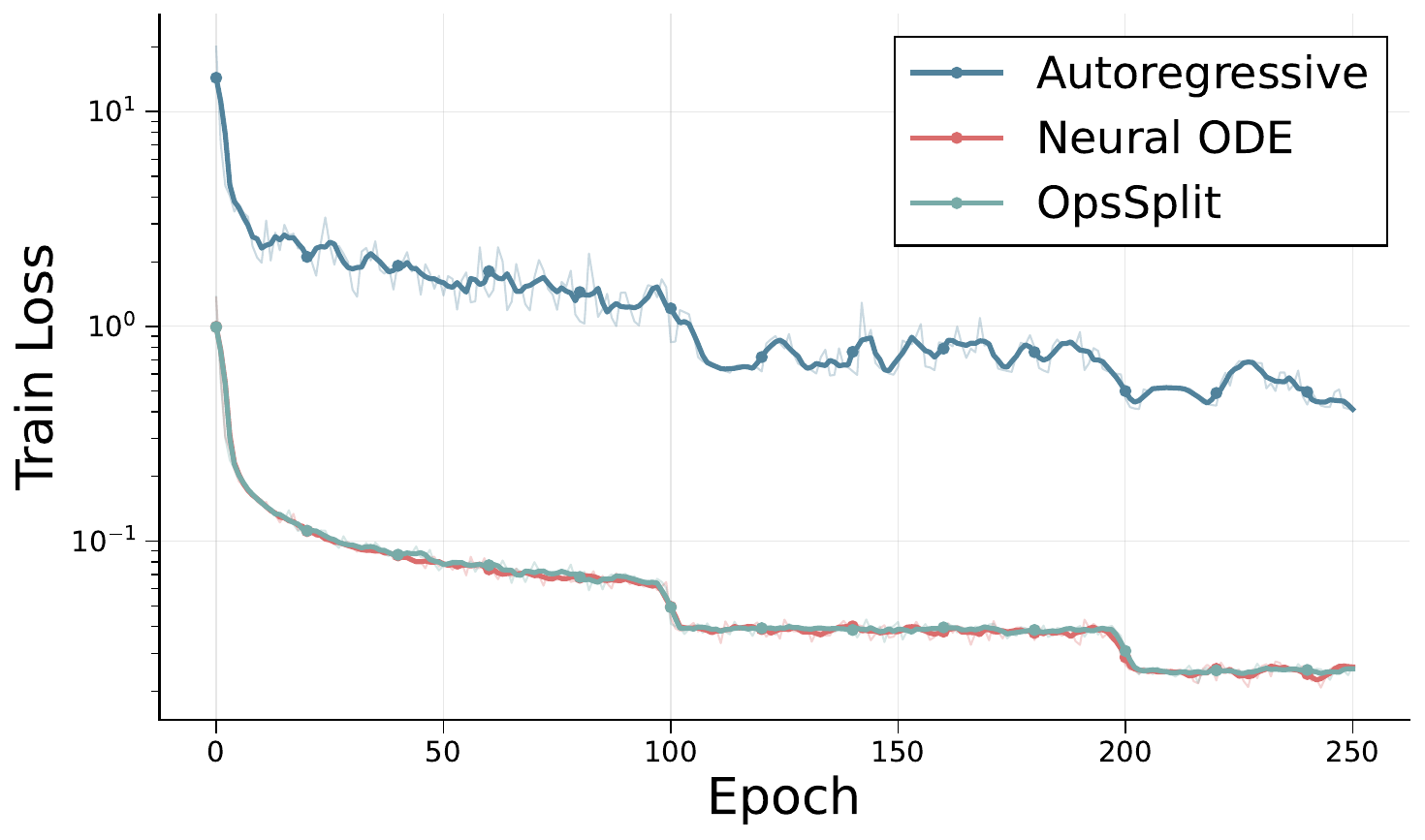}
        \caption{Train loss across methods: FNO}
        \label{fig:NS_incomp_train}
    \end{subfigure}
    \begin{subfigure}{0.48\textwidth}
        \centering
        \includegraphics[width=0.9\linewidth]{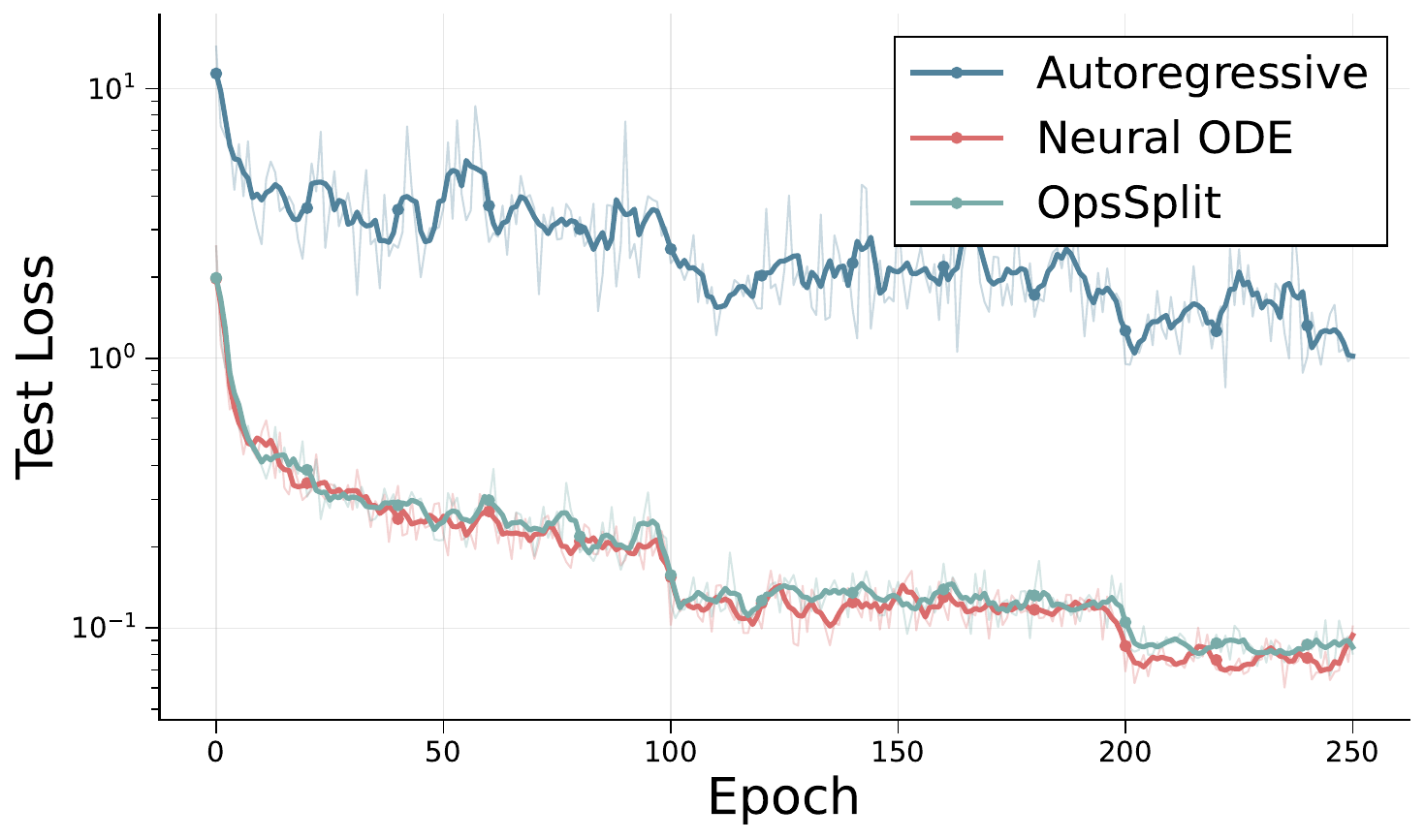}
        \caption{Test loss across methods: FNO}
        \label{fig:NS_incomp_test}
    \end{subfigure}
    \caption{Incompressible Navier--Stokes: Train and test loss convergences across training methods for the FNO}   \label{fig:NS_incomp_train_test}
\end{figure}

\begin{figure}[H]
    \centering
    \begin{subfigure}{0.48\textwidth}
        \centering
            \includegraphics[width=0.9\linewidth]{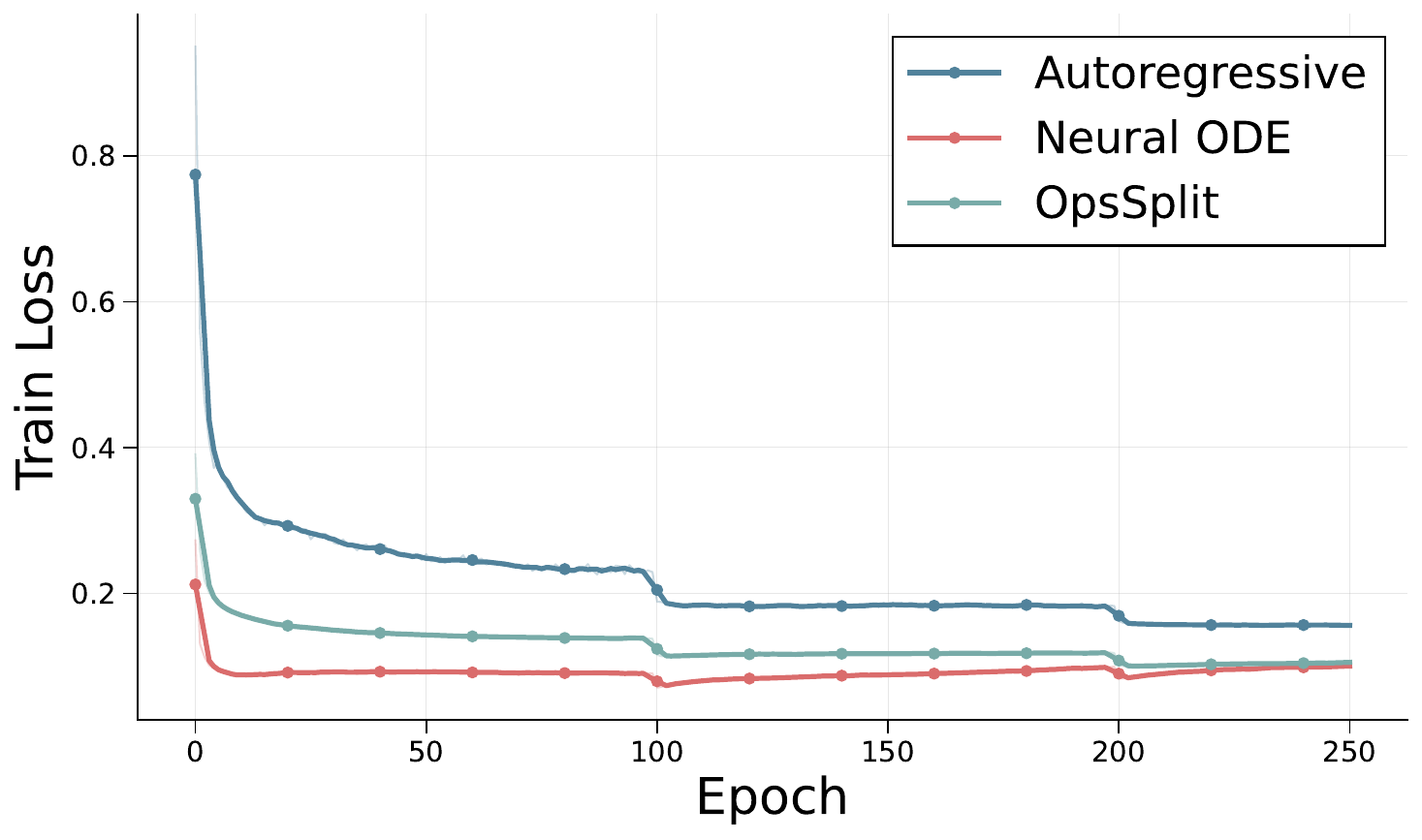}
        \caption{Train loss across methods: FNO}
        \label{fig:NS_comp_train}
    \end{subfigure}
    \begin{subfigure}{0.48\textwidth}
        \centering
        \includegraphics[width=0.9\linewidth]{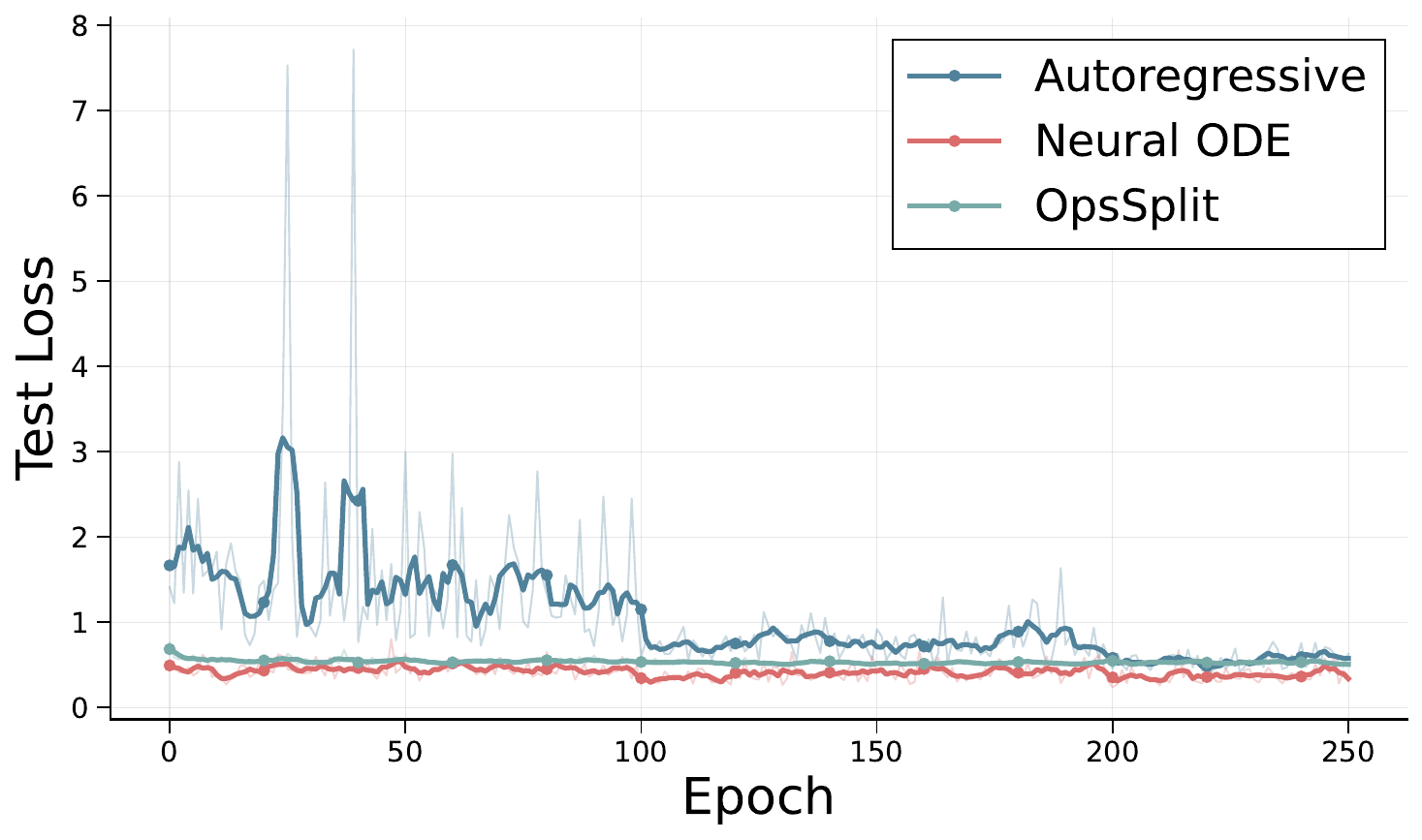}
        \caption{Test loss across methods: FNO}
        \label{fig:NS_comp_test}
    \end{subfigure}
    \caption{Compressible Navier--Stokes: Train and test loss convergences across training methods for the FNO}   
    \label{fig:NS_comp_train_test}
\end{figure}

\Cref{fig:NS_incomp_train_test,fig:NS_comp_train_test} illustrate the training and test loss convergence characteristics for three distinct neural operator deployment strategies applied to fluid dynamics problems. All models are trained from scratch using random initialisation. The comparison reveals fundamental differences in how each approach learns to approximate partial differential equations. \Cref{fig:NS_incomp_train_test} (Incompressible Navier--Stokes) exhibits dramatic differences in convergence behaviour across methods. The autoregressive approach (blue) maintains relatively elevated loss values throughout training, as evidenced by the logarithmic scale, indicating difficulty in learning the solution operator mapping directly. In contrast, the neural ODE method (red) achieves superior convergence by learning the dynamics operator, while OpsSplit (green) demonstrates comparable convergence characteristics to the neural ODE approach.

\Cref{fig:NS_comp_train_test} (Compressible Navier--Stokes) reveals analogous convergence patterns with quantitatively different magnitudes. The compressible formulation presents greater complexity due to coupled density-velocity-pressure dynamics, resulting in a more challenging learning problem than the incompressible case. While the neural ODE exhibits earlier convergence during training, the OpsSplit method achieves comparable final performance by the end of the optimisation process. The central insight from these convergence analyses is that both neural ODE and OpsSplit methods attain faster and more stable convergence by learning spatial dynamics explicitly and integrating predictions temporally, in contrast to the direct solution operator mapping employed by autoregressive approaches.

\newpage
\subsection{Fine-tuning: Transfer learning operators across PDEs}
The modular structure of the OpsSplit framework enables a unique advantage: physical operators learned for one PDE system can be transferred and fine-tuned for related systems. This section investigates whether pre-trained convection operators from one fluid dynamics regime can accelerate convergence when applied to a different regime.
We conduct bidirectional transfer learning experiments between the incompressible and compressible Navier--Stokes equations. For the incompressible case, we initialise the convection operator $\mathbb{NO}_{conv}$ in \cref{eq:ns_mom_NO} using weights pre-trained on the compressible system (\cref{eqn:comp_momentum_no}). Conversely, for the compressible case, we initialise the convection operator in \cref{eqn:comp_momentum_no} with weights from the incompressible formulation (\cref{eq:ns_mom_NO}). These fine-tuned models are compared against pre-trained models where all operators are learned from scratch using random initialisation.

\begin{figure}[H]
\centering
\begin{subfigure}{0.48\textwidth}
\centering
\includegraphics[width=0.9\linewidth]{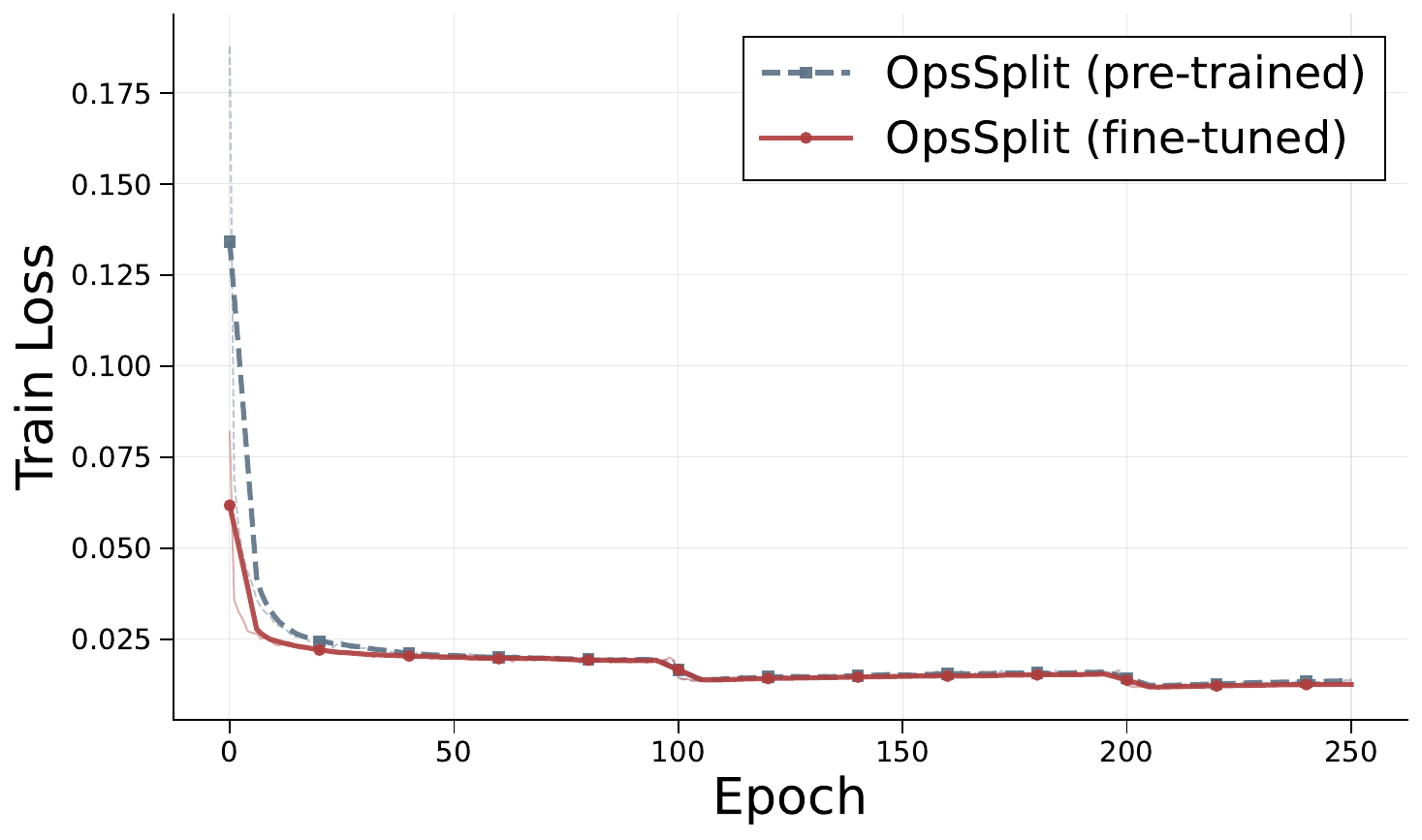}
\caption{Training loss convergence}
\label{fig:NS_incomp_ft_train}
\end{subfigure}
\begin{subfigure}{0.48\textwidth}
\centering
\includegraphics[width=0.9\linewidth]{Images/Ablations/test_losses_transfer_learn_incompressible.pdf}
\caption{Test loss convergence}
\label{fig:NS_incomp_ft_test}
\end{subfigure}
\caption{Incompressible Navier--Stokes: Comparison of training and test loss convergence for FNO-based OpsSplit models. The \textbf{pre-trained} baseline learns all operators from scratch with random initialisation. The \textbf{fine-tuned} model initialises the convection operator using weights pre-trained on the compressible Navier--Stokes equations (\cref{eqn:comp_momentum_no}), demonstrating accelerated convergence through cross-PDE transfer learning.}
\label{fig:NS_incomp_ft_train_test}
\end{figure}

\Cref{fig:NS_incomp_ft_train_test} presents convergence results for the incompressible Navier--Stokes equations. The training loss curves (\cref{fig:NS_incomp_ft_train}) show minimal difference between fine-tuned and pre-trained approaches, suggesting both methods achieve similar optimisation trajectories on the training data. However, the test loss (\cref{fig:NS_incomp_ft_test}) reveals a substantial advantage for transfer learning: the fine-tuned model achieves significantly lower test error and demonstrates faster convergence with identical computational resources. This indicates that the convection operator learned from compressible flow contains generalizable features that transfer effectively to incompressible flow physics.

\begin{figure}[H]
\centering
\begin{subfigure}{0.48\textwidth}
\centering
\includegraphics[width=0.9\linewidth]{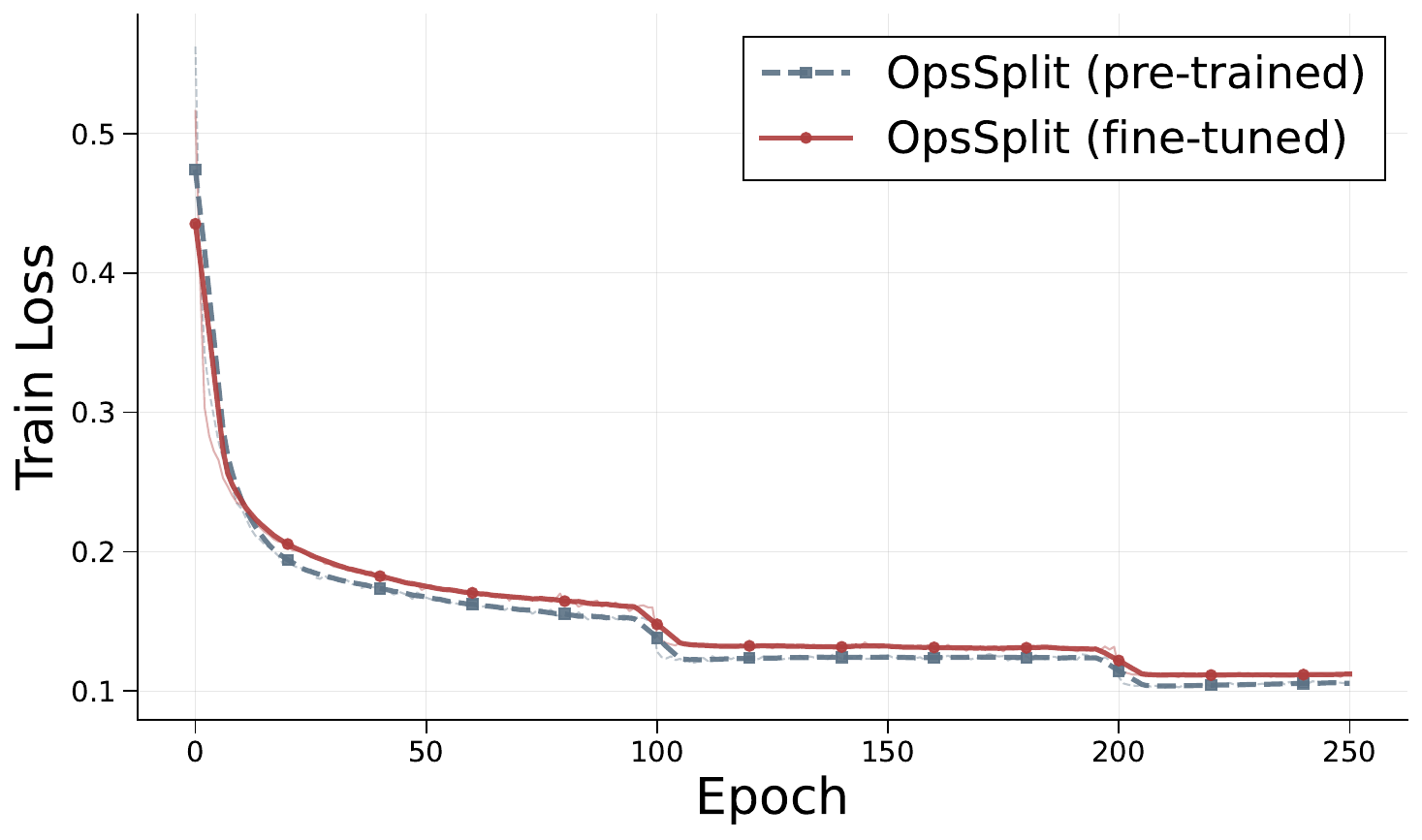}
\caption{Training loss convergence}
\label{fig:NS_comp_ft_train}
\end{subfigure}
\begin{subfigure}{0.48\textwidth}
\centering
\includegraphics[width=0.9\linewidth]{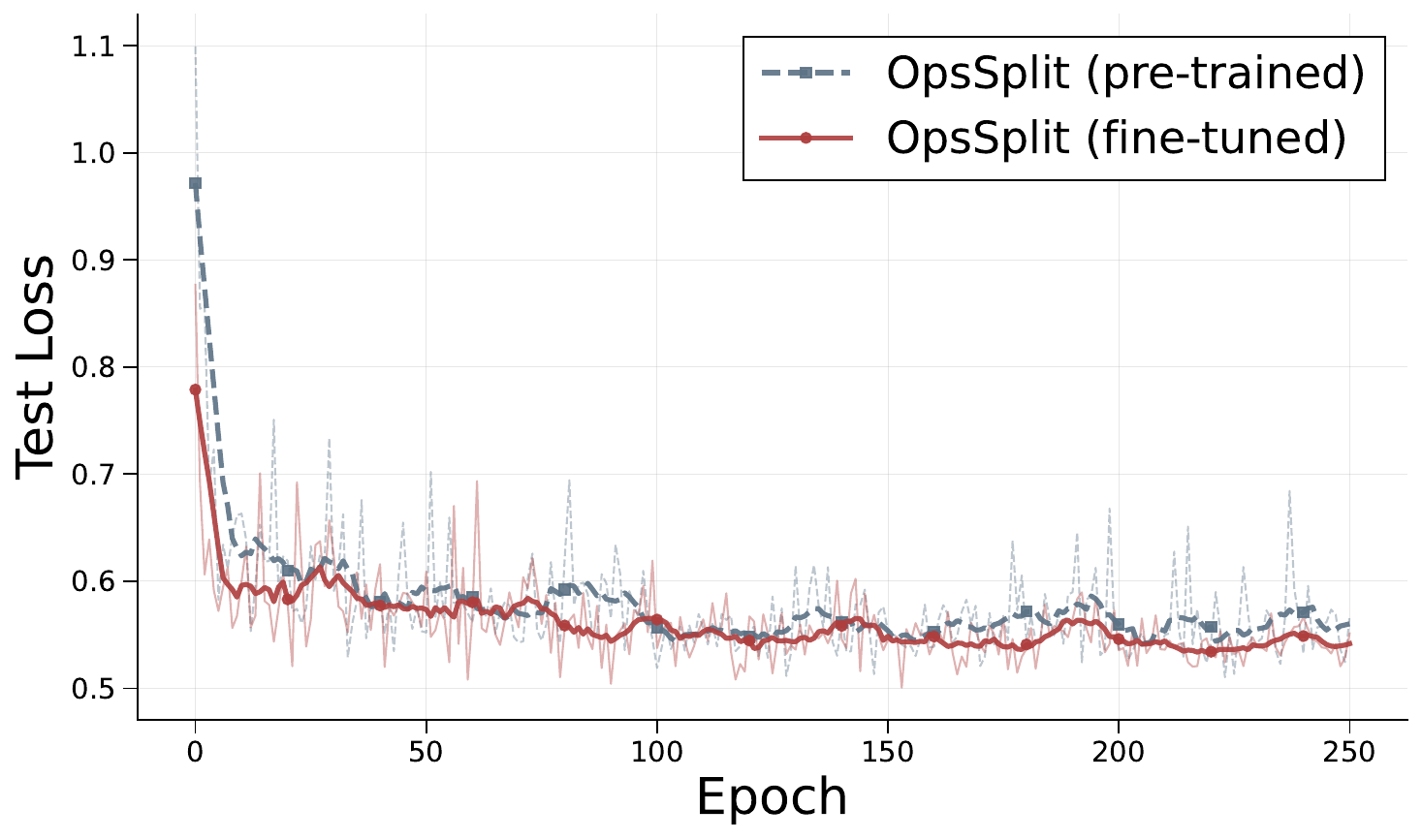}
\caption{Test loss convergence}
\label{fig:NS_comp_ft_test}
\end{subfigure}
\caption{Compressible Navier--Stokes: Comparison of training and test loss convergence for FNO-based OpsSplit models. The \textbf{pre-trained} baseline learns all operators from scratch with random initialisation. The \textbf{fine-tuned} model initialises the convection operator using weights pre-trained on the incompressible Navier--Stokes equations (\cref{eq:ns_mom_NO}), demonstrating that transfer learning benefits are bidirectional across fluid dynamics regimes.}
\label{fig:NS_comp_ft_train_test}
\end{figure}

\Cref{fig:NS_comp_ft_train_test} demonstrates similar behavior for the compressible case. The fine-tuned model, initialised with the incompressible convection operator, exhibits markedly improved test loss convergence (\cref{fig:NS_comp_ft_test}) compared to training from scratch, despite comparable training loss trajectories (\cref{fig:NS_comp_ft_train}). This bidirectional transferability confirms that the learned representations capture fundamental fluid dynamics principles that transcend specific formulations.

These results establish that neural operators trained as physical operators offer benefits beyond improved generalisation within a single PDE family. The learned operator representations can be leveraged for transfer learning across related PDE systems, enabling faster convergence and better generalisation when adapting to new physical regimes. This modularity and transferability distinguish the OpsSplit approach from monolithic neural PDE solvers, opening pathways for building reusable libraries of learned physical operators.

\newpage
\section{Incompressible Navier--Stokes Equations}
\label{appendix:incomp_ns}


\subsection{Physics}
Consider the two-dimensional incompressible Navier--Stokes equations with a fixed density:
\begin{align*}
    \label{eq: appendix_ns_equations}
    \nabla \cdot \mathbf{v} &= 0 ,  \\
    \pdv{\mathbf{v}}{t} + (\mathbf{v} \cdot \nabla) \mathbf{v}  &= \nu \nabla^2 \mathbf{v} - \nabla P, 
\end{align*}

where under constant density the pressure is given via the pressure Poisson formulation:

\begin{align}
\nabla^2 P = -\nabla \cdot [(\mathbf{v} \cdot \nabla) \mathbf{v}] \\
\label{eq:pressure_poisson}
\end{align}

with initial conditions: 

\begin{align}
        u(x,y,t=0) &=  -\sin(2 \pi \alpha y) \quad y \in [-1,1],  \\
        v(x,y,t=0) &=  -\sin(4\pi \beta  x) \quad x \in [-1,1], 
\end{align}

where $u$ defines the x-component of velocity, $v$ defines the y-component of velocity. The Navier--Stokes equations solve the flow of an incompressible fluid with a kinematic viscosity $\nu$. The system is bounded with periodic boundary conditions within the domain. The dataset is built by performing a Latin hypercube scan across the defined domain for the parameters $\alpha, \beta$,  which parametrise the initial velocity fields for each simulation. We generate 500 simulation points, each with its initial condition and use them for training. The solver is built using a spectral method outlined in \href{https://github.com/pmocz/navier-stokes-spectral-python}{Philip Mocz's code}.

Each data point, as in each simulation, is generated with a different initial condition as described above. The parameters of the initial conditions are sampled from within the domain as given in \cref{table: data_generation_ns_combined}. Each simulation is run up until wallclock time reaches $0.5$ $\Delta t = 0.001$. The spatial domain is uniformly discretised into 400 spatial units in the x and y axes. The temporal domain is subsampled to factor in every $10^{th}$ time instance, and the spatial domain is downsampled to factor every $4^{th}$ time instance, leading to a $100\times 100$ grid for the neural PDE. Parameterisation of the initial conditions and the kinematic viscosity used for both training and testing can be found in \cref{table: data_generation_ns_combined}. 

\begin{table}[h!]
\caption{Parameterisations of the 2D incompressible Navier--Stokes equations utilised for training and OOD testing}
\label{table: data_generation_ns_combined}
\vspace{0.5cm}
  \centering
  \begin{tabular}{llll}
  \hline 
  Parameter & Training & OOD Testing & Type \\ 
  \hline\\
    Velocity x-axis  $(u_0)$ & $[0.5, 1.0]$ & $[0.1, 0.5]$ & Continuous  \\
    Velocity y-axis $(v_0)$ & $[0.5, 1.0]$ & $[0.1, 0.5]$ & Continuous \\
    viscosity $\nu$ & 0.001 & 0.01 & Discrete \\ 
  \hline
  \end{tabular}
\end{table}

\subsection{Model Details}

We evaluated five neural operator architectures within each deployment method: Fourier Neural Operator (FNO) \citep{Li2021fourier}, U-Net \citep{ronneberger2015unet, gupta2023towards}, Vision Transformer (ViT) \citep{dosovitskiy2021imageworth16x16words, herde2024poseidon}, U-shaped Neural Operator (UNO) \citep{rahman2023unoushapedneuraloperators}, and Convolutional Neural Operator (CNO) \citep{bartolucci2023representation}. All models processed 2-channel velocity field inputs and outputs with configurations detailed in Table below. For each model, the hyperparameters were chosen inspired from the literature and constructed to maximise the GPU utilisation within a single H100 GPU. 

\begin{table}[H]
\centering
\begin{tabular}{|l|l|}
\hline
\textbf{Model} & \textbf{Configuration Details - AR, NODE, OpsSplit} \\
\hline
FNO & \begin{tabular}[t]{@{}l@{}}
in\_channels: 2 \\
out\_channels: 2 \\
modes: 32 \\
width: 64 \\
n\_layers: 6 \\
activation\_func: GeLU
\end{tabular} \\
\hline
U-Net & \begin{tabular}[t]{@{}l@{}}
in\_channels: 2 \\
out\_channels: 2 \\
initial\_width: 64 \\
activation\_func: Tanh
\end{tabular} \\
\hline
ViT & \begin{tabular}[t]{@{}l@{}}
patch\_size: 4 \\
embed\_dim: 512 \\
in\_channels: 2 \\
out\_channels: 2 \\
time\_channels: 1 \\
depth: 12 \\
num\_heads: 10 \\
activation\_func: GeLU
\end{tabular} \\
\hline
CNO & \begin{tabular}[t]{@{}l@{}}
Nx: 100 \\
N\_layers: 4 \\
N\_res: 4 \\
N\_res\_neck: 2 \\
in\_channels: 2 \\
out\_channels: 2 \\
channel\_multiplier: 16 \\
activation\_func: LReLU
\end{tabular} \\
\hline
UNO & \begin{tabular}[t]{@{}l@{}}
arch: UNO \\
in\_channels: 2 \\
out\_channels: 2 \\
width: 64 \\
projection\_channels: 256 \\
n\_layers: 5 \\
uno\_out\_channels: [32,64,64,64,32] \\
uno\_n\_modes: [[16,16],[8,8],[8,8],[8,8],[16,16]] \\
domain\_padding: 0.2 \\
norm: group\_norm \\
activation\_func: GeLU
\end{tabular} \\
\hline
\end{tabular}
\label{tab:model_config_incomp_NS}
\caption{Model configuration details of neural operators used for modelling the incompressible Navier--Stokes equations. All three methods - AR, NODE, and OpsSplit utilised NO models with the same configuration as mentioned above.}
\end{table}

\subsection{Performance Plots}
In this section, we showcase figures to provide a qualitative comparison of model predictions across different deployment methods and neural operator architectures. Each visualisation displays the ground truth (top row) and model predictions (bottom row) at three representative time instances: t=1 (early-stage dynamics within training temporal resolution), t=25 (mid-simulation behaviour testing model stability), and t=50 (long-term temporal extrapolation). Velocity field visualisations are represented in the physical space within the Cartesian domain. These visualisations enable assessment of spatial accuracy (how well the model captures field patterns and structures), temporal stability (whether predictions maintain physical consistency over time), error accumulation (how prediction errors grow during autoregressive rollout or temporal extrapolation), and method comparison (relative performance of Autoregressive, Neural ODE, and OpsSplit approaches across different architectures). For quantitative metrics corresponding to these visualisations, refer to \cref{tab:incomp_ns} in the main text.

\subsubsection{FNO}
\begin{figure}[H]
    \centering
    \begin{subfigure}{0.48\textwidth}
        \centering
        \includegraphics[width=0.9\linewidth]{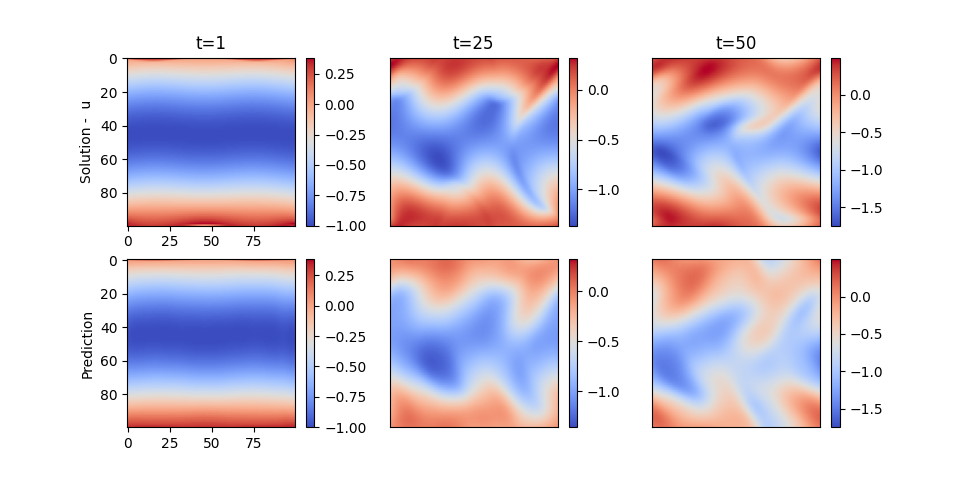}
        \caption{Horizontal velocity - Autoregressive FNO}
        \label{fig:NS_incomp_u_FNO_AR}
    \end{subfigure}
    \begin{subfigure}{0.48\textwidth}
        \centering
        \includegraphics[width=0.9\linewidth]{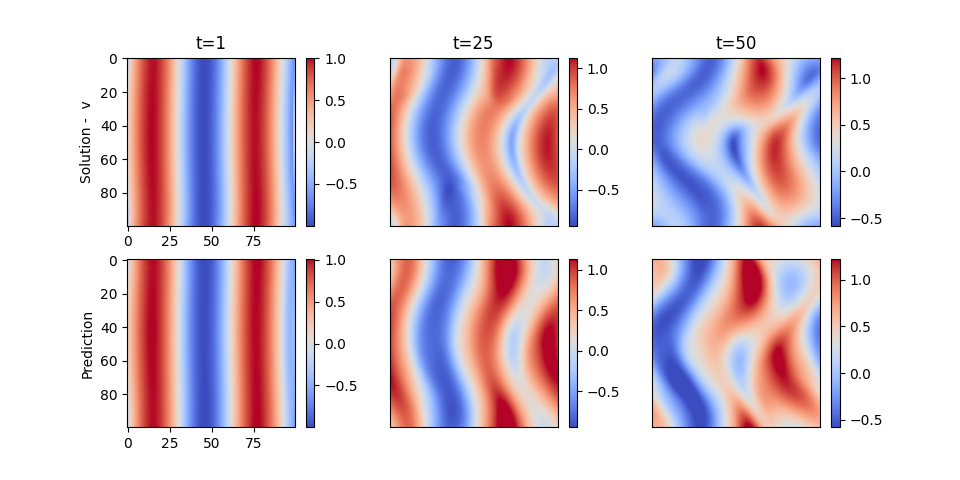}
        \caption{Vertical velocity - Autoregressive FNO}
        \label{fig:NS_incomp_v_FNO_AR}
    \end{subfigure}
    \caption{Incompressible Navier--Stokes: Model prediction within test distribution for Autoregressive FNO}
    \label{fig:NS_incomp_FNO_AR}
\end{figure}

\begin{figure}[H]
    \centering
    \begin{subfigure}{0.48\textwidth}
        \centering
        \includegraphics[width=0.9\linewidth]{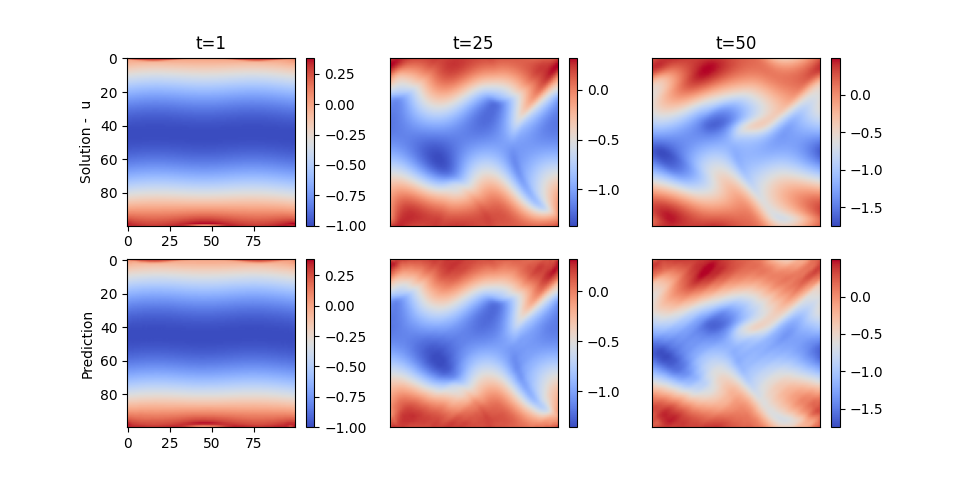}
        \caption{Horizontal velocity - NODE FNO}
        \label{fig:NS_incomp_u_FNO_NODE}
    \end{subfigure}
    \begin{subfigure}{0.48\textwidth}
        \centering
        \includegraphics[width=0.9\linewidth]{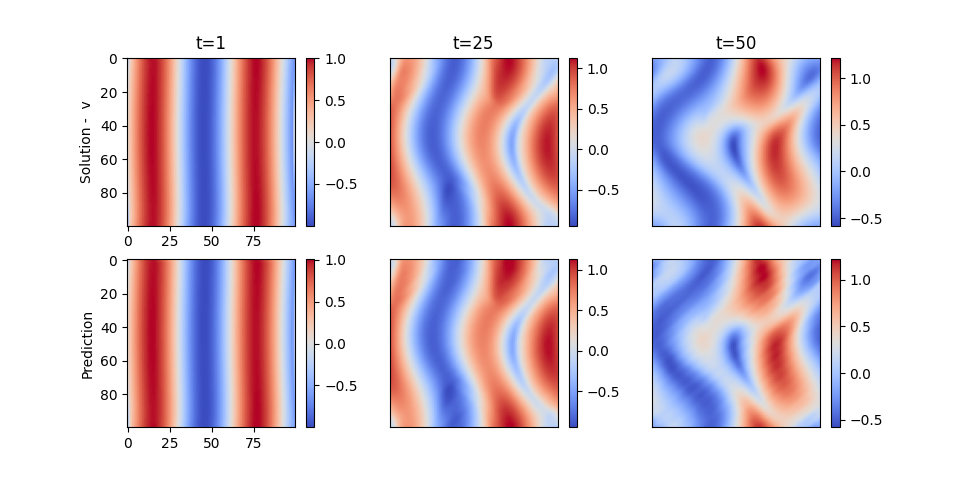}
        \caption{Vertical velocity - NODE FNO}
        \label{fig:NS_incomp_v_FNO_NODE}
    \end{subfigure}
    \caption{Incompressible Navier--Stokes: Model prediction within test distribution for Neural-ODE FNO}
    \label{fig:NS_incomp_FNO_node}
\end{figure}

\begin{figure}[H]
    \centering
    \begin{subfigure}{0.48\textwidth}
        \centering
        \includegraphics[width=0.9\linewidth]{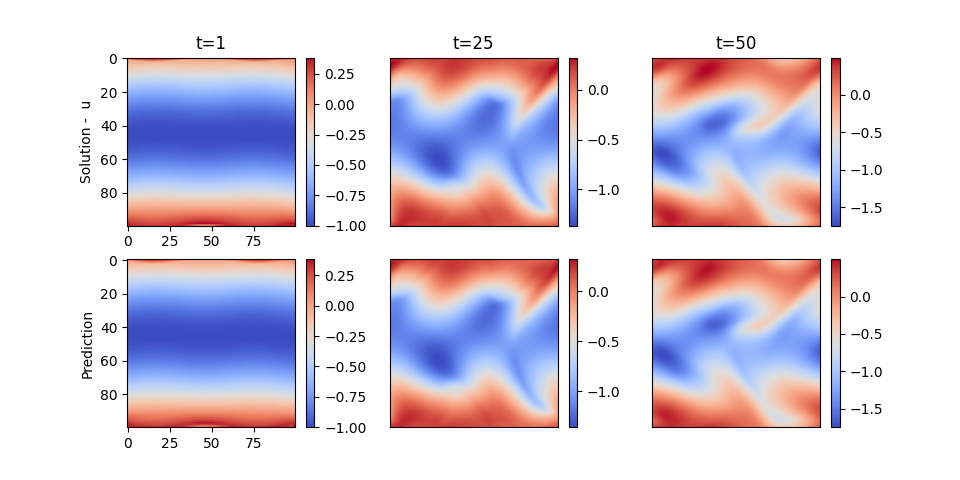}
        \caption{Horizontal velocity - OpsSplit FNO}
        \label{fig:NS_incomp_u_FNO_opsplit}
    \end{subfigure}
    \begin{subfigure}{0.48\textwidth}
        \centering
        \includegraphics[width=0.9\linewidth]{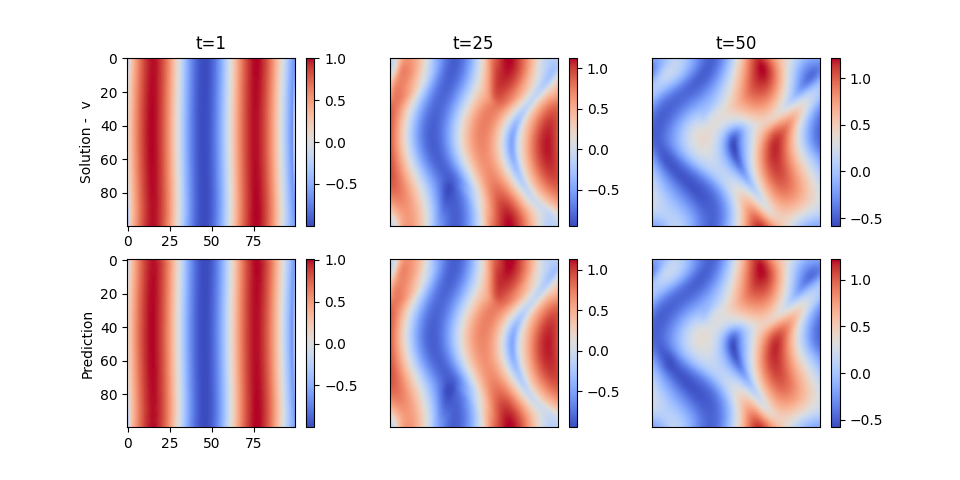}
        \caption{Vertical velocity - OpsSplit FNO}
        \label{fig:NS_incomp_v_FNO_opssplit}
    \end{subfigure}
    \caption{Incompressible Navier--Stokes: Model prediction within test distribution for OpsSplit FNO}
    \label{fig:NS_incomp_FNO_opssplit}
\end{figure}

\subsubsection{U-Net}

\begin{figure}[H]
    \centering
    \begin{subfigure}{0.48\textwidth}
        \centering
        \includegraphics[width=0.9\linewidth]{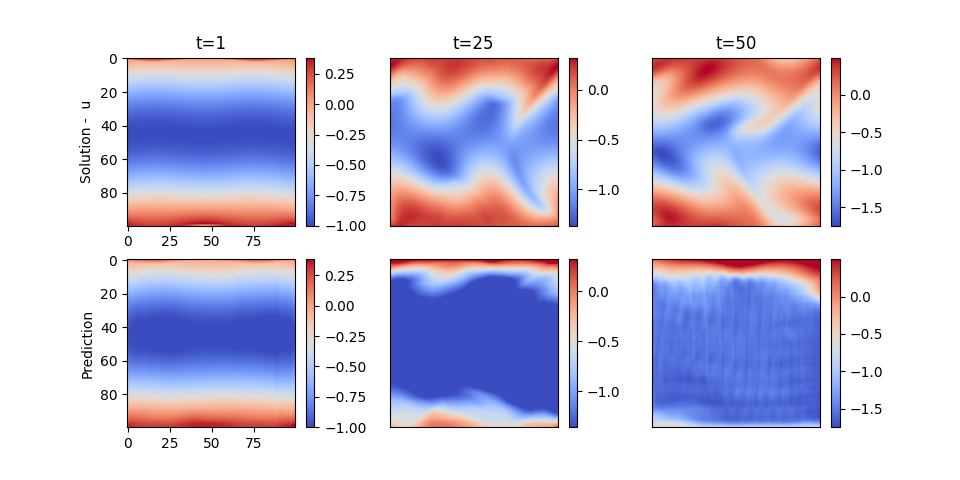}
        \caption{Horizontal velocity - Autoregressive U-Net}
        \label{fig:NS_incomp_u_UNet_AR}
    \end{subfigure}
    \begin{subfigure}{0.48\textwidth}
        \centering
        \includegraphics[width=0.9\linewidth]{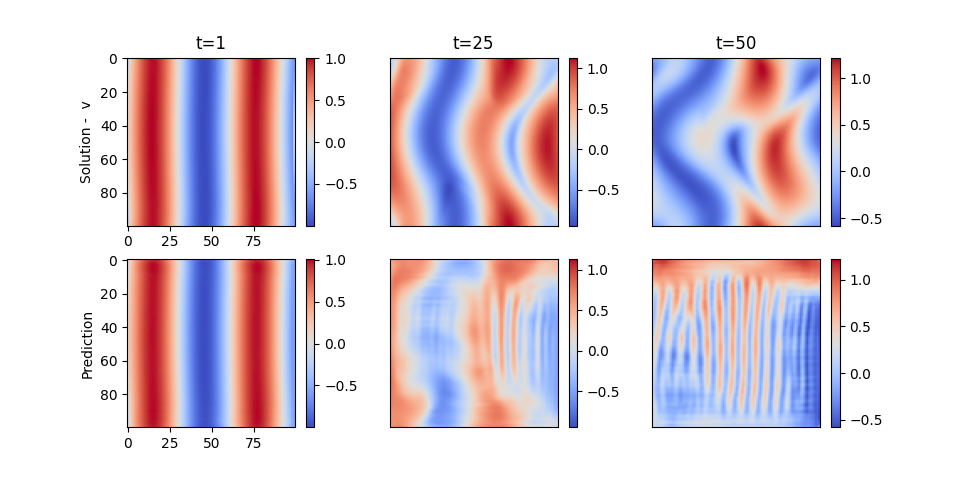}
        \caption{Vertical velocity - Autoregressive U-Net}
        \label{fig:NS_incomp_v_UNet_AR}
    \end{subfigure}
    \caption{Incompressible Navier--Stokes: Model prediction within test distribution for Autoregressive U-Net}
    \label{fig:NS_incomp_UNet_AR}
\end{figure}

\begin{figure}[H]
    \centering
    \begin{subfigure}{0.48\textwidth}
        \centering
        \includegraphics[width=0.9\linewidth]{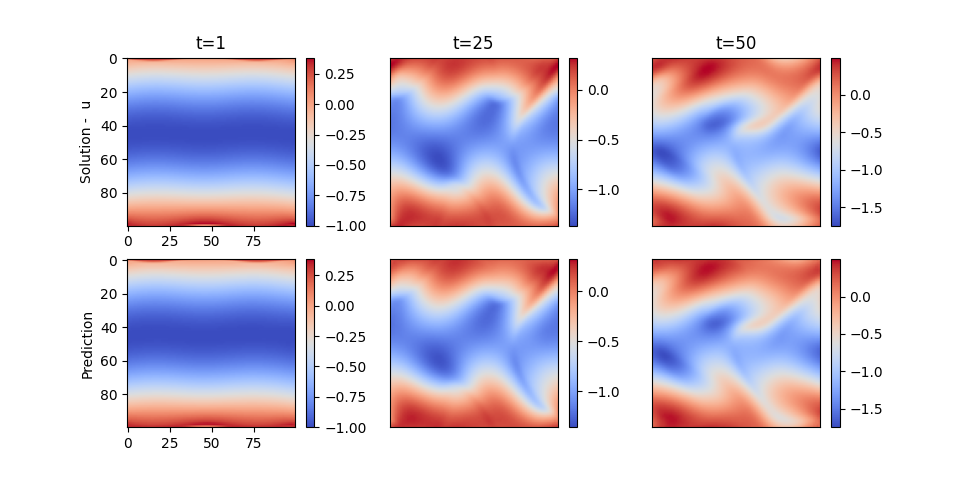}
        \caption{Horizontal velocity - NODE U-Net}
        \label{fig:NS_incomp_u_UNet_NODE}
    \end{subfigure}
    \begin{subfigure}{0.48\textwidth}
        \centering
        \includegraphics[width=0.9\linewidth]{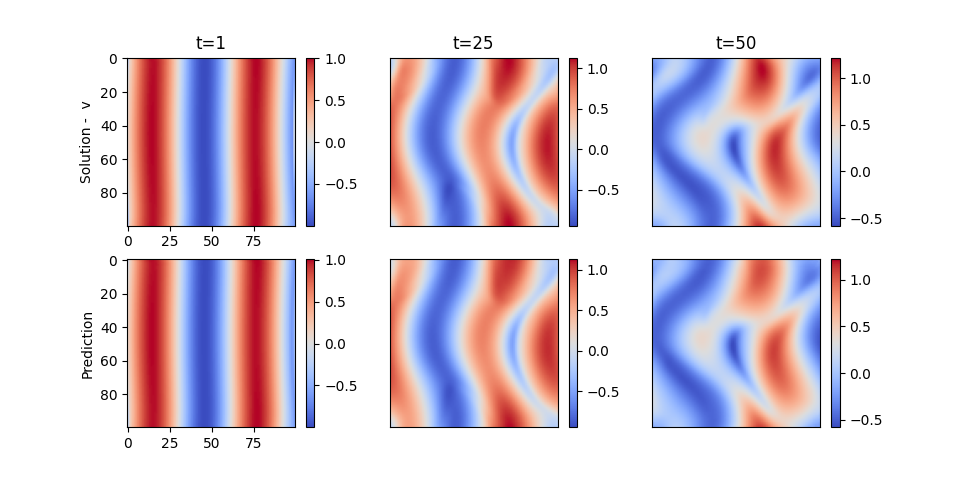}
        \caption{Vertical velocity - NODE U-Net}
        \label{fig:NS_incomp_v_UNet_NODE}
    \end{subfigure}
    \caption{Incompressible Navier--Stokes: Model prediction within test distribution for Neural-ODE U-Net}
    \label{fig:NS_incomp_Unet_node}
\end{figure}

\begin{figure}[H]
    \centering
    \begin{subfigure}{0.48\textwidth}
        \centering
        \includegraphics[width=0.9\linewidth]{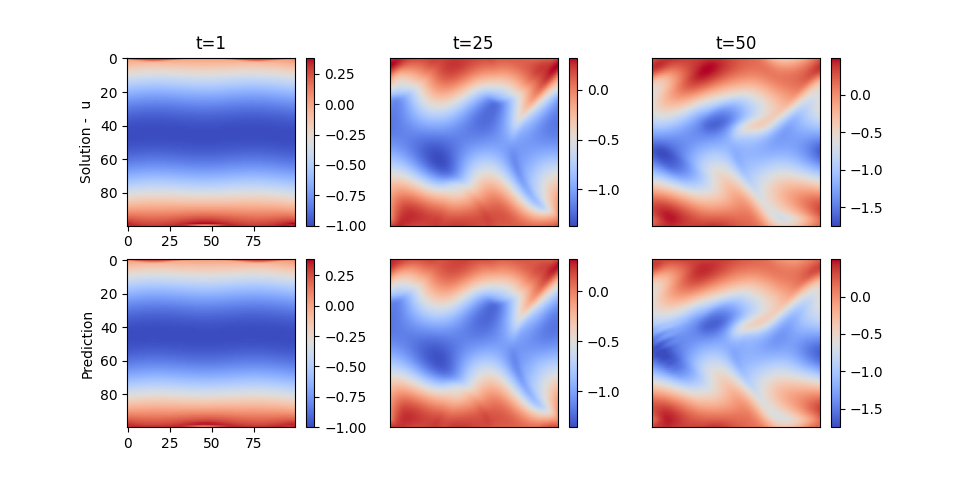}
        \caption{Horizontal velocity - OpsSplit U-Net}
        \label{fig:NS_incomp_u_UNet_opsplit}
    \end{subfigure}
    \begin{subfigure}{0.48\textwidth}
        \centering
        \includegraphics[width=0.9\linewidth]{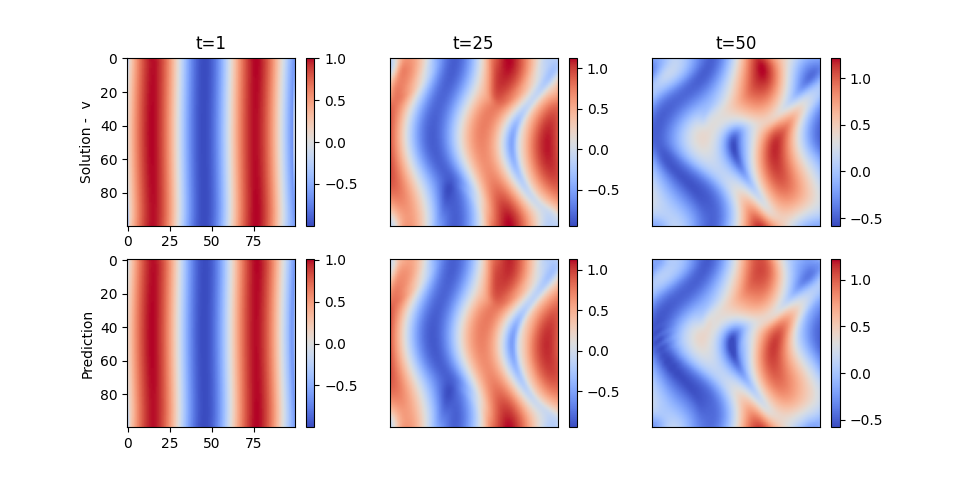}
        \caption{Vertical velocity - OpsSplit U-Net}
        \label{fig:NS_incomp_v_UNet_opssplit}
    \end{subfigure}
    \caption{Incompressible Navier--Stokes: Model prediction within test distribution for OpsSplit U-Net}
    \label{fig:NS_incomp_UNet_opssplit}
\end{figure}

\subsubsection{ViT}

\begin{figure}[H]
    \centering
    \begin{subfigure}{0.48\textwidth}
        \centering
        \includegraphics[width=0.9\linewidth]{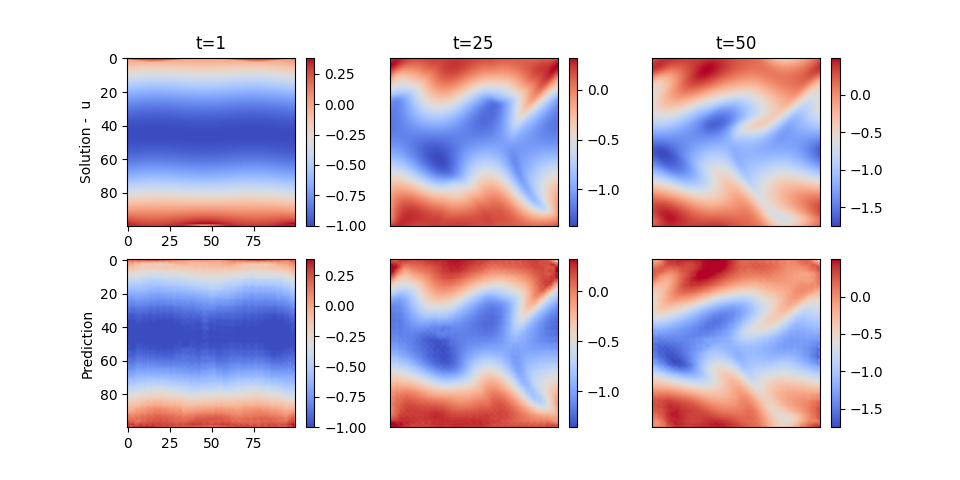}
        \caption{Horizontal velocity - Autoregressive ViT}
        \label{fig:NS_incomp_u_ViT_AR}
    \end{subfigure}
    \begin{subfigure}{0.48\textwidth}
        \centering
        \includegraphics[width=0.9\linewidth]{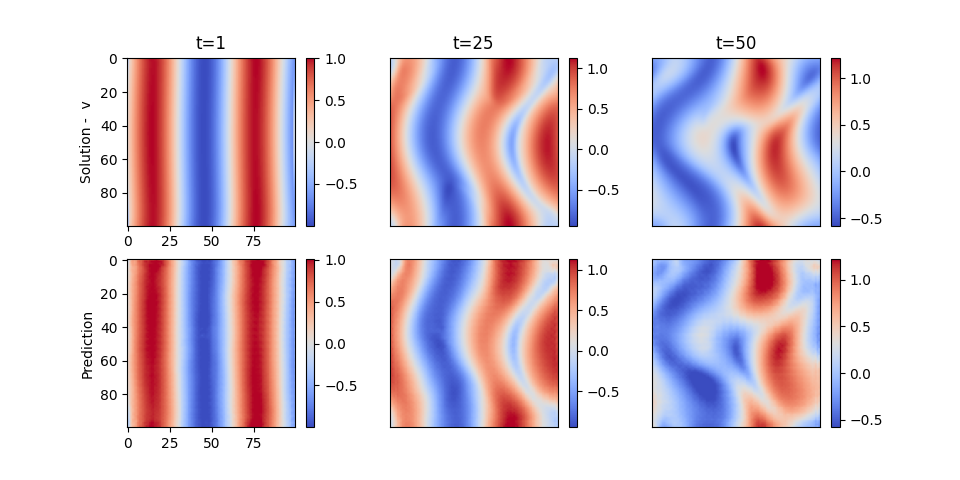}
        \caption{Vertical velocity - Autoregressive ViT}
        \label{fig:NS_incomp_v_ViT_AR}
    \end{subfigure}
    \caption{Incompressible Navier--Stokes: Model prediction within test distribution for Autoregressive ViT}
    \label{fig:NS_incomp_ViT_AR}
\end{figure}

\begin{figure}[H]
    \centering
    \begin{subfigure}{0.48\textwidth}
        \centering
        \includegraphics[width=0.9\linewidth]{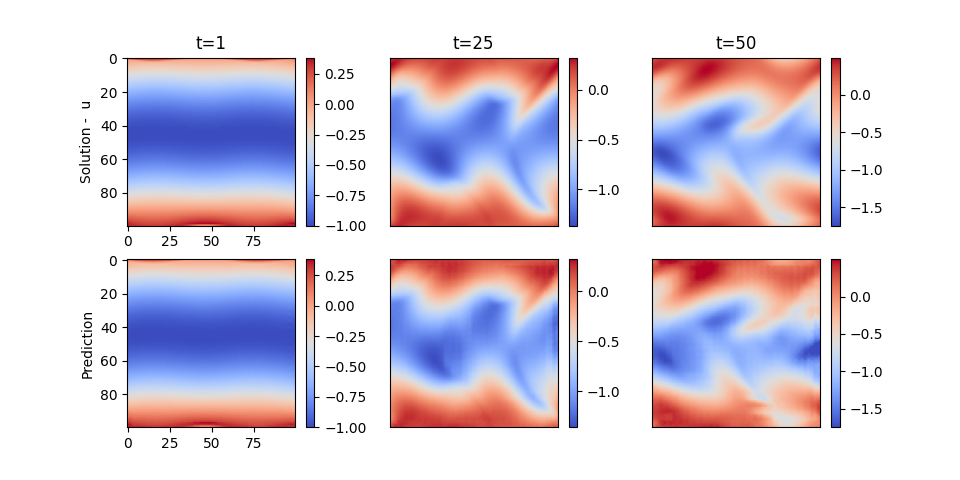}
        \caption{Horizontal velocity - NODE ViT}
        \label{fig:NS_incomp_u_ViT_NODE}
    \end{subfigure}
    \begin{subfigure}{0.48\textwidth}
        \centering
        \includegraphics[width=0.9\linewidth]{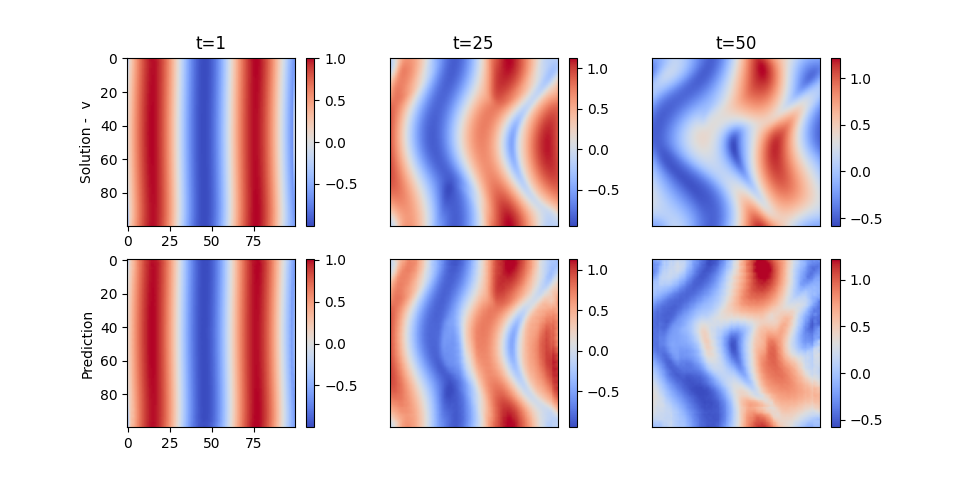}
        \caption{Vertical velocity - NODE ViT}
        \label{fig:NS_incomp_v_ViT_NODE}
    \end{subfigure}
    \caption{Incompressible Navier--Stokes: Model prediction within test distribution for Neural-ODE ViT}
    \label{fig:NS_incomp_ViT_node}
\end{figure}

\begin{figure}[H]
    \centering
    \begin{subfigure}{0.48\textwidth}
        \centering
        \includegraphics[width=0.9\linewidth]{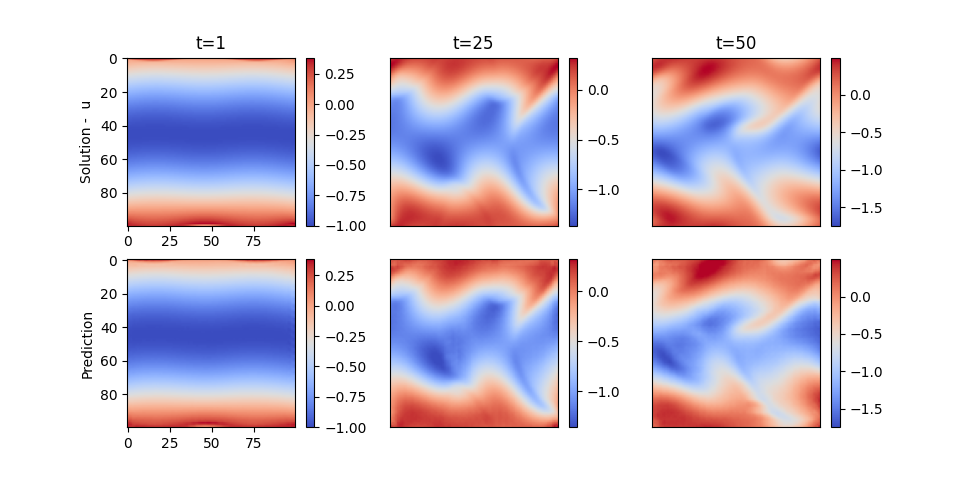}
        \caption{Horizontal velocity - OpsSplit ViT}
        \label{fig:NS_incomp_u_ViT_opsplit}
    \end{subfigure}
    \begin{subfigure}{0.48\textwidth}
        \centering
        \includegraphics[width=0.9\linewidth]{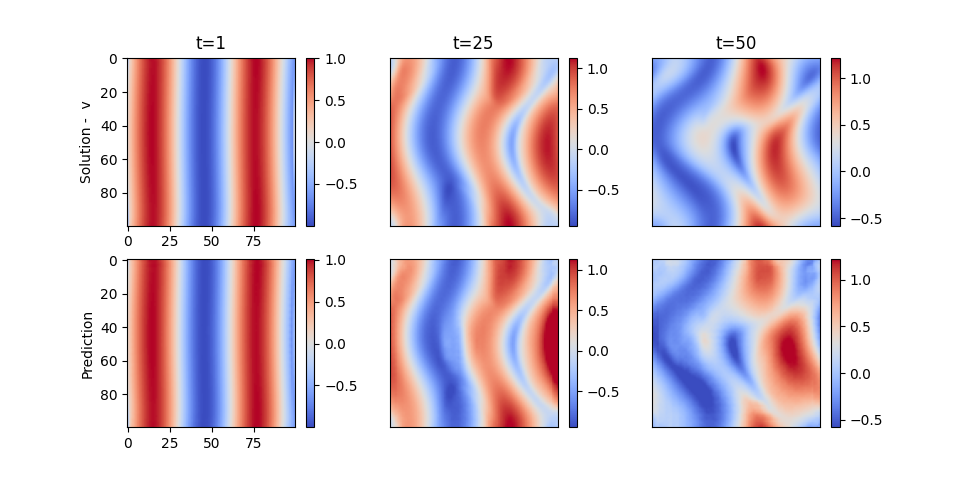}
        \caption{Vertical velocity - OpsSplit ViT}
        \label{fig:NS_incomp_v_ViT_opssplit}
    \end{subfigure}
    \caption{Incompressible Navier--Stokes: Model prediction within test distribution for OpsSplit ViT}
    \label{fig:NS_incomp_ViT_opssplit}
\end{figure}

\subsubsection{CNO}

\begin{figure}[H]
    \centering
    \begin{subfigure}{0.48\textwidth}
        \centering
        \includegraphics[width=0.9\linewidth]{Images/u_icy-methodology.png}
        \caption{Horizontal velocity - Autoregressive CNO}
        \label{fig:NS_incomp_u_CNO_AR}
    \end{subfigure}
    \begin{subfigure}{0.48\textwidth}
        \centering
        \includegraphics[width=0.9\linewidth]{Images/v_icy-methodology.png}
        \caption{Vertical velocity - Autoregressive CNO}
        \label{fig:NS_incomp_v_CNO_AR}
    \end{subfigure}
    \caption{Incompressible Navier--Stokes: Model prediction within test distribution for Autoregressive CNO}
    \label{fig:NS_incomp_CNO_AR}
\end{figure}

\begin{figure}[H]
    \centering
    \begin{subfigure}{0.48\textwidth}
        \centering
        \includegraphics[width=0.9\linewidth]{Images/u_bright-novella.png}
        \caption{Horizontal velocity - NODE CNO}
        \label{fig:NS_incomp_u_CNO_NODE}
    \end{subfigure}
    \begin{subfigure}{0.48\textwidth}
        \centering
        \includegraphics[width=0.9\linewidth]{Images/v_bright-novella.png}
        \caption{Vertical velocity - NODE CNO}
        \label{fig:NS_incomp_v_CNO_NODE}
    \end{subfigure}
    \caption{Incompressible Navier--Stokes: Model prediction within test distribution for Neural-ODE CNO}
    \label{fig:NS_incomp_CNO_node}
\end{figure}

\begin{figure}[H]
    \centering
    \begin{subfigure}{0.48\textwidth}
        \centering
        \includegraphics[width=0.9\linewidth]{Images/u_sticky-chimpanzee.png}
        \caption{Horizontal velocity - OpsSplit CNO}
        \label{fig:NS_incomp_u_CNO_opsplit}
    \end{subfigure}
    \begin{subfigure}{0.48\textwidth}
        \centering
        \includegraphics[width=0.9\linewidth]{Images/v_sticky-chimpanzee.png}
        \caption{Vertical velocity - OpsSplit CNO}
        \label{fig:NS_incomp_v_CNO_opssplit}
    \end{subfigure}
    \caption{Incompressible Navier--Stokes: Model prediction within test distribution for OpsSplit CNO}
    \label{fig:NS_incomp_CNO_opssplit}
\end{figure}

\subsubsection{UNO}

\begin{figure}[H]
    \centering
    \begin{subfigure}{0.48\textwidth}
        \centering
        \includegraphics[width=0.9\linewidth]{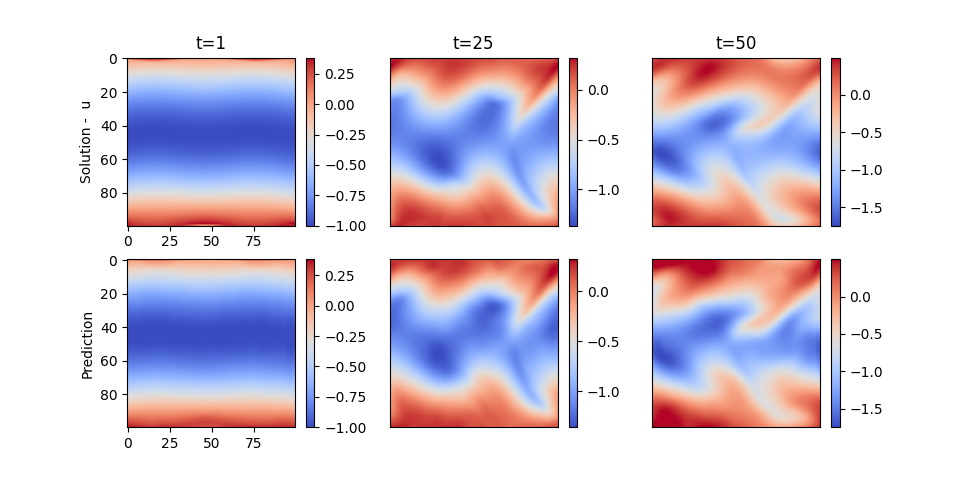}
        \caption{Horizontal velocity - Autoregressive UNO}
        \label{fig:NS_incomp_u_UNO_AR}
    \end{subfigure}
    \begin{subfigure}{0.48\textwidth}
        \centering
        \includegraphics[width=0.9\linewidth]{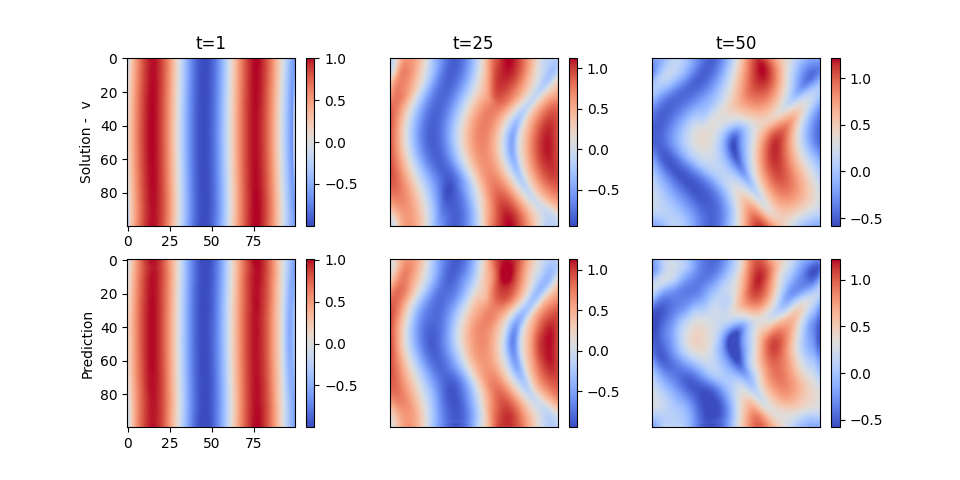}
        \caption{Vertical velocity - Autoregressive UNO}
        \label{fig:NS_incomp_v_UNO_AR}
    \end{subfigure}
    \caption{Incompressible Navier--Stokes: Model prediction within test distribution for Autoregressive UNO}
    \label{fig:NS_incomp_UNO_AR}
\end{figure}

\begin{figure}[H]
    \centering
    \begin{subfigure}{0.48\textwidth}
        \centering
        \includegraphics[width=0.9\linewidth]{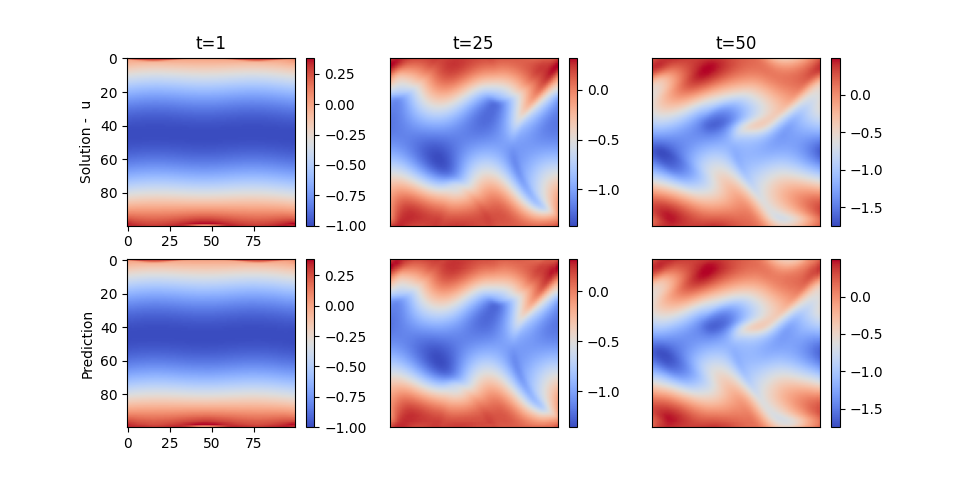}
        \caption{Horizontal velocity - NODE UNO}
        \label{fig:NS_incomp_u_UNO_NODE}
    \end{subfigure}
    \begin{subfigure}{0.48\textwidth}
        \centering
        \includegraphics[width=0.9\linewidth]{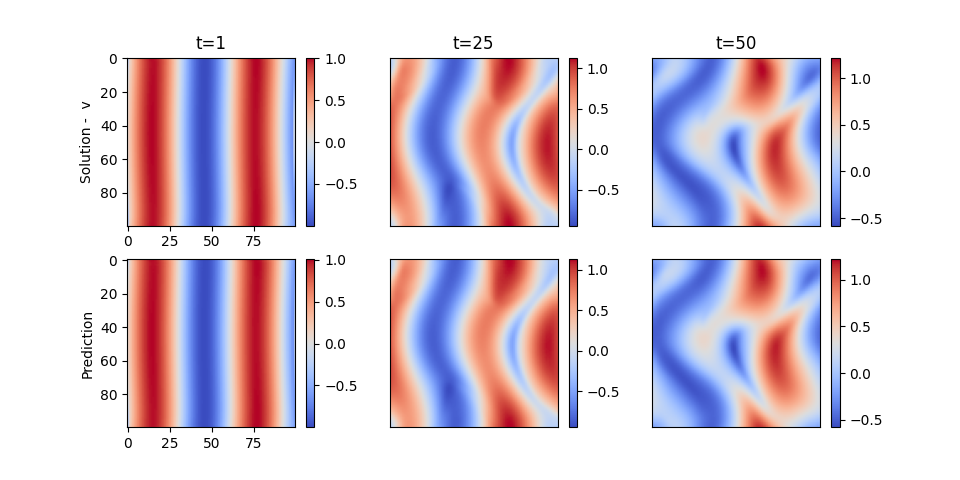}
        \caption{Vertical velocity - NODE UNO}
        \label{fig:NS_incomp_v_UNO_NODE}
    \end{subfigure}
    \caption{Incompressible Navier--Stokes: Model prediction within test distribution for Neural-ODE UNO}
    \label{fig:NS_incomp_UNO_node}
\end{figure}

\begin{figure}[H]
    \centering
    \begin{subfigure}{0.48\textwidth}
        \centering
        \includegraphics[width=0.9\linewidth]{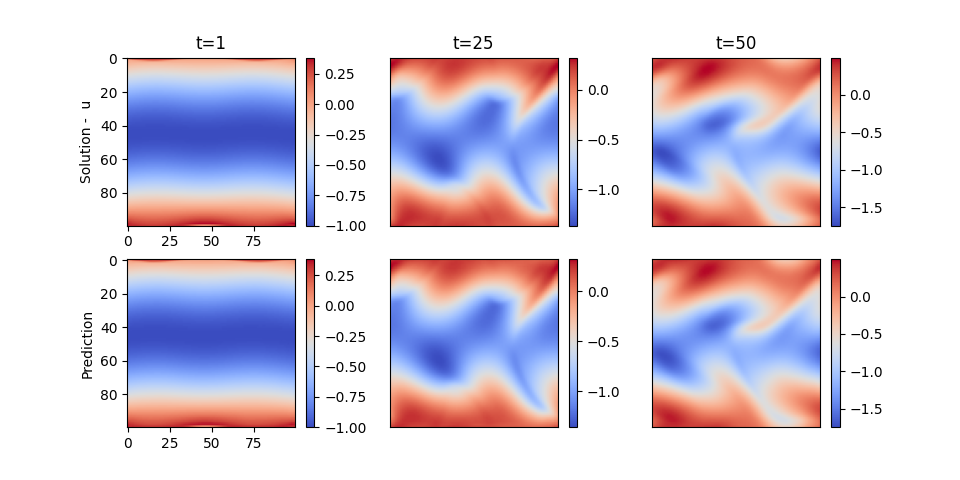}
        \caption{Horizontal velocity - OpsSplit UNO}
        \label{fig:NS_incomp_u_UNO_opsplit}
    \end{subfigure}
    \begin{subfigure}{0.48\textwidth}
        \centering
        \includegraphics[width=0.9\linewidth]{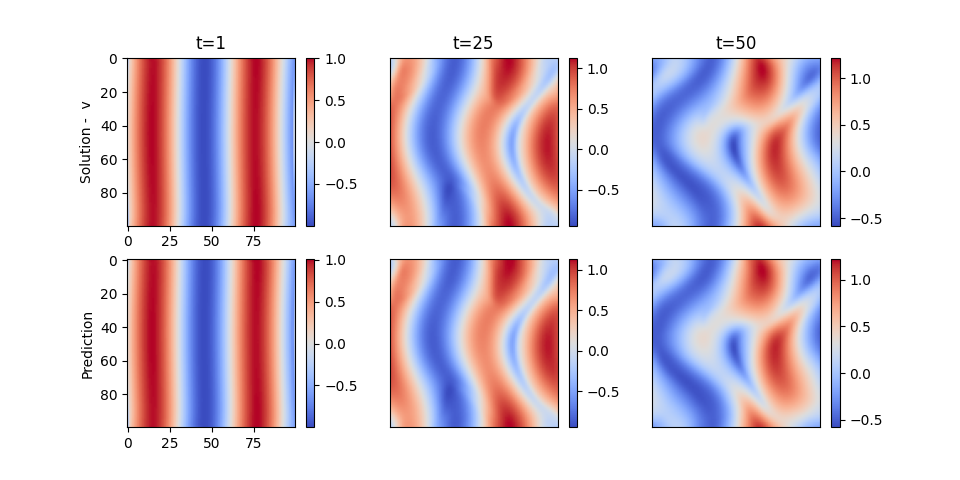}
        \caption{Vertical velocity - OpsSplit UNO}
        \label{fig:NS_incomp_v_UNO_opssplit}
    \end{subfigure}
    \caption{Incompressible Navier--Stokes: Model prediction within test distribution for OpsSplit UNO}
    \label{fig:NS_incomp_UNO_opssplit}
\end{figure}

\clearpage
\section{Compressible Navier--Stokes Equations}
\label{appendix:comp_ns}

\subsection{Physics}
Consider the two-dimensional compressible Navier--Stokes equations under adiabatic and inviscid flow:

\begin{align*}
    \frac{\partial \rho}{\partial t} &= -\nabla \cdot (\rho\mathbf{v}),  \\
    & \hfill  \nonumber \\[1ex]
    \frac{\partial \mathbf{v}}{\partial t} &= -(\mathbf{v} \cdot \nabla)\mathbf{v} - \frac{1}{\rho}\nabla P, \\
    & \hfill \nonumber \\[1ex]
    \frac{\partial P}{\partial t} &= -\mathbf{v} \cdot \nabla P - \gamma P(\nabla \cdot \mathbf{v}), \\
    & \hfill \nonumber
\end{align*}

where $\rho$ determines the density, $u$ defines the x-component of velocity, $v$ defines the y-component of velocity and $P$ determines the pressure of the fluid. The Navier--Stokes equations solve the flow of a compressible fluid given by its specific heat ratio of $\gamma$. The system defines the flow without viscosity, bounded with periodic boundary conditions within the domain, modelling two opposite moving streams under perturbation. The dataset is built by performing a Latin hypercube scan across the defined domain for the parameters $\alpha, \beta$,  which parametrise the initial velocity and pressure fields for each simulation as given in \cref{comp_ns_params_rho,comp_ns_params_vx,comp_ns_params_vy,comp_ns_params_P}. We generate 500 simulation points, each with its initial condition and use them for training. The solver is built using a finite-volume method to solve for the Kelvin-Helmholtz instability as outlined in \href{https://github.com/pmocz/finitevolume-python}{Philip Mocz's code}.

\begin{align}
\rho &= \begin{cases} 
2 & \text{if } |Y - 0.5| < 0.25 \\
1 & \text{otherwise}
\end{cases} \label{comp_ns_params_rho} \\
v_x &= \begin{cases} 
0.5 & \text{if } |Y - 0.5| < 0.25 \\
-0.5 & \text{otherwise} \label{comp_ns_params_vx}
\end{cases}\\
v_y &= \alpha \sin(4\pi X) \left[ \exp\left(-\frac{(Y-0.25)^2}{2\sigma^2}\right) + \exp\left(-\frac{(Y-0.75)^2}{2\sigma^2}\right) \right] \; ; \sigma = \frac{0.05}{\sqrt{2}}  \label{comp_ns_params_vy}\\
P &= \beta \label{comp_ns_params_P}
\end{align}

Each data point, as in each simulation, is generated with a different initial condition as described above. The parameters of the initial conditions are sampled from within the domain as given in \cref{table: data_generation_comp_ns_combined}. Each simulation is run up until wall-clock time reaches $2.0$ $\Delta t = 0.0005$. The spatial domain is uniformly discretised into 128 spatial units in the x and y axes. The temporal domain is subsampled to factor in every $20^{th}$ time instance, and the spatial domain is kept as is with a $128\times 128$ grid for the neural PDE. Parameterisation of the initial conditions and the gas ratio used for both training and testing can be found in \cref{table: data_generation_comp_ns_combined}. 

\begin{table}[h!]
\caption{Parameterisation of the 2D Euler Fluid equations utilised for training and OOD testing}
\label{table: data_generation_comp_ns_combined}
\vspace{0.5cm}
  \centering
  \begin{tabular}{llll}
  \hline 
  Parameter & Training & OOD Testing & Type \\ 
  \hline\\
    $\alpha$ & $[0.1, 0.5]$ & $[0.5, 1.0]$ & Continuous  \\
    $\beta$  & $[1.0, 5.0]$ & $[5.0, 10.0]$ & Continuous \\
    $\gamma$ & $\frac{5}{3}$ & $\frac{2}{3}$  & Discrete \\ 
  \hline
  \end{tabular}
\end{table}

\subsection{Model Details}
We evaluated four neural operator architectures within each deployment method: Fourier Neural Operator (FNO) \citep{Li2021fourier}, U-Net \citep{ronneberger2015unet, gupta2023towards}, Vision Transformer (ViT) \citep{dosovitskiy2021imageworth16x16words, herde2024poseidon} and  U-shaped Neural Operator (UNO) \citep{rahman2023unoushapedneuraloperators}. All models processed 4 variables, the velocity vector field and the scalar fields associated with density and pressure. For each model, the hyperparameters were chosen inspired from the literature and constructed to maximise the GPU utilisation within a single H100 GPU as given below:

\begin{table}[H]
\centering
\begin{tabular}{|l|l|l|}
\hline
\textbf{Model} & \textbf{Configuration Details - AR, NODE} & \textbf{OpsSplit} \\
\hline
FNO & \begin{tabular}[t]{@{}l@{}}
in\_channels: 4 \\
out\_channels: 4 \\
modes: 32 \\
width: 64 \\
n\_layers: 6 \\
activation\_func: GeLU
\end{tabular} & \begin{tabular}[t]{@{}l@{}}
in\_channels: 2 (conv), 3 (div) \\
out\_channels: 2 (conv), 1 (div)\\
modes: 32 \\
width: 64 \\
n\_layers: 3 \\
activation\_func: GeLU
\end{tabular} \\
\hline
U-Net & \begin{tabular}[t]{@{}l@{}}
in\_channels: 4 \\
out\_channels: 4 \\
initial\_width: 64 \\
activation\_func: Tanh
\end{tabular} & \begin{tabular}[t]{@{}l@{}}
in\_channels: 2 (conv), 3 (div) \\
out\_channels: 2 (conv), 1 (div)\\
initial\_width: 32 \\
activation\_func: Tanh
\end{tabular} \\
\hline
ViT & \begin{tabular}[t]{@{}l@{}}
patch\_size: 4 \\
embed\_dim: 512 \\
in\_channels: 4 \\
out\_channels: 4 \\
time\_channels: 1 \\
depth: 12 \\
num\_heads: 10 \\
activation\_func: GeLU
\end{tabular} & \begin{tabular}[t]{@{}l@{}}
patch\_size: 4 \\
embed\_dim: 256 \\
in\_channels: 2 (conv), 3 (div) \\
out\_channels: 2 (conv), 1 (div)\\
time\_channels: 1 \\
depth: 8 \\
num\_heads: 10 \\
activation\_func: GeLU
\end{tabular} \\
\hline
UNO & \begin{tabular}[t]{@{}l@{}}
in\_channels: 4 \\
out\_channels: 4 \\
width: 64 \\
projection\_channels: 256 \\
n\_layers: 5 \\
uno\_out\_channels: [32,64,64,64,32] \\
uno\_n\_modes: [[16,16],[8,8],[8,8],[8,8],[16,16]] \\
domain\_padding: 0.2 \\
norm: group\_norm \\
activation\_func: GeLU
\end{tabular} & \begin{tabular}[t]{@{}l@{}}
in\_channels: 2 (conv), 3 (div) \\
out\_channels: 2 (conv), 1 (div)\\
width: 32 \\
projection\_channels: 256 \\
n\_layers: 5 \\
uno\_out\_channels: [32,64,64,64,32] \\
uno\_n\_modes: [[16,16],[8,8],[8,8],[8,8],[16,16]] \\
domain\_padding: 0.2 \\
norm: group\_norm \\
activation\_func: GeLU
\end{tabular} \\
\hline
\end{tabular}
\label{tab:model_config_comp_NS}
\caption{Model configuration details of neural operators used for modelling the compressible Navier--Stokes equations. Both AR and   NODE-based models have the same configuration, whereas the two models within OpsSplit have slightly different setups to ensure the same parameterisation levels across all methods. }
\end{table}

\subsection{Performance Plots}
In this section, we showcase figures to provide a qualitative comparison of model predictions across different deployment methods and neural operator architectures. Each visualisation displays the ground truth (top row) and model predictions (bottom row) at three representative time instances: t=1 (early-stage dynamics within training temporal resolution), t=25 (mid-simulation behaviour testing model stability), and t=50 (long-term temporal extrapolation). Velocity and scalar field visualisations are represented in the physical space within the Cartesian domain. These visualisations enable assessment of spatial accuracy (how well the model captures field patterns and structures), temporal stability (whether predictions maintain physical consistency over time), error accumulation (how prediction errors grow during autoregressive rollout or temporal extrapolation), and method comparison (relative performance of Autoregressive, Neural ODE, and OpsSplit approaches across different architectures). For quantitative metrics corresponding to these visualisations, refer to \cref{tab:comp_ns} in the main text.

\subsubsection{FNO}

\begin{figure}[H]
    \centering
    \begin{subfigure}{0.48\textwidth}
        \centering
        \includegraphics[width=0.9\linewidth]{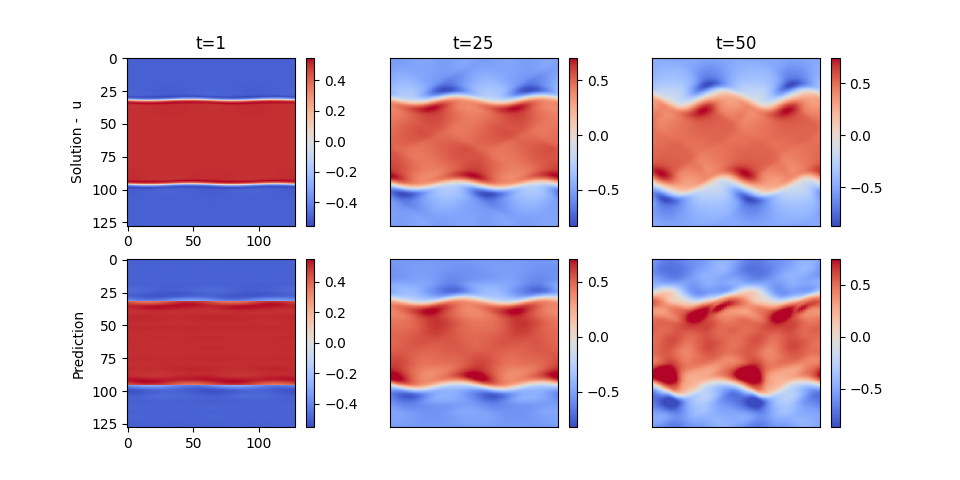}
        \caption{Horizontal velocity - Autoregressive FNO}
        \label{fig:NS_comp_u_FNO_AR}
    \end{subfigure}
    \begin{subfigure}{0.48\textwidth}
        \centering
        \includegraphics[width=0.9\linewidth]{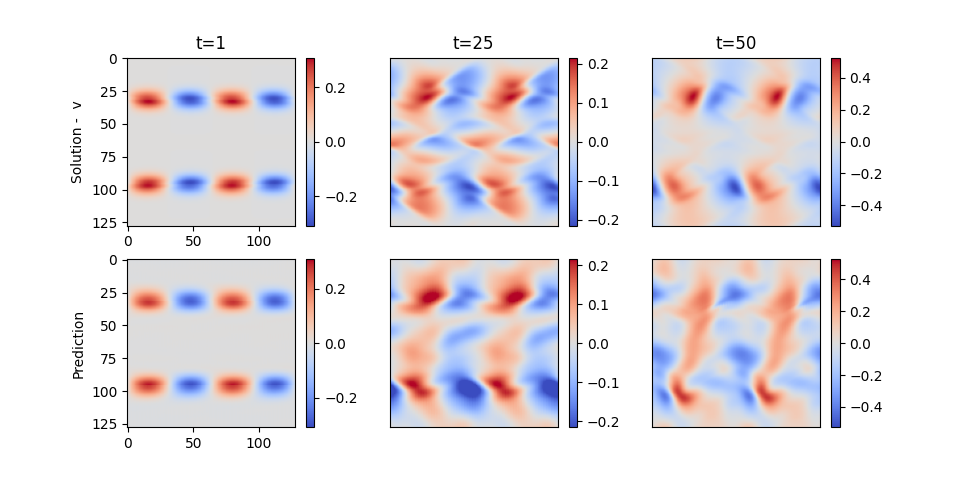}
        \caption{Vertical velocity - Autoregressive FNO}
        \label{fig:NS_comp_v_FNO_AR}
    \end{subfigure}
    \begin{subfigure}{0.48\textwidth}
        \centering
        \includegraphics[width=0.9\linewidth]{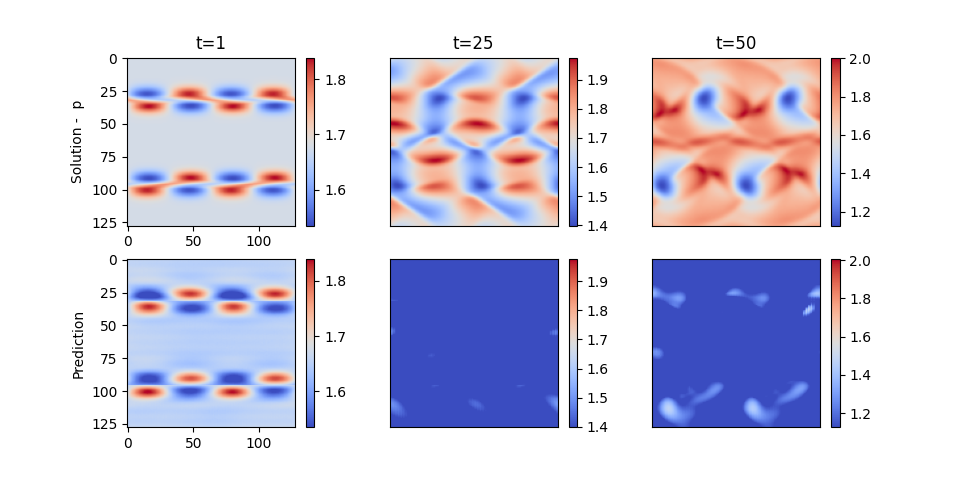}
        \caption{Pressure - Autoregressive FNO}
        \label{fig:NS_comp_p_FNO_AR}
    \end{subfigure}
    \begin{subfigure}{0.48\textwidth}
        \centering
        \includegraphics[width=0.9\linewidth]{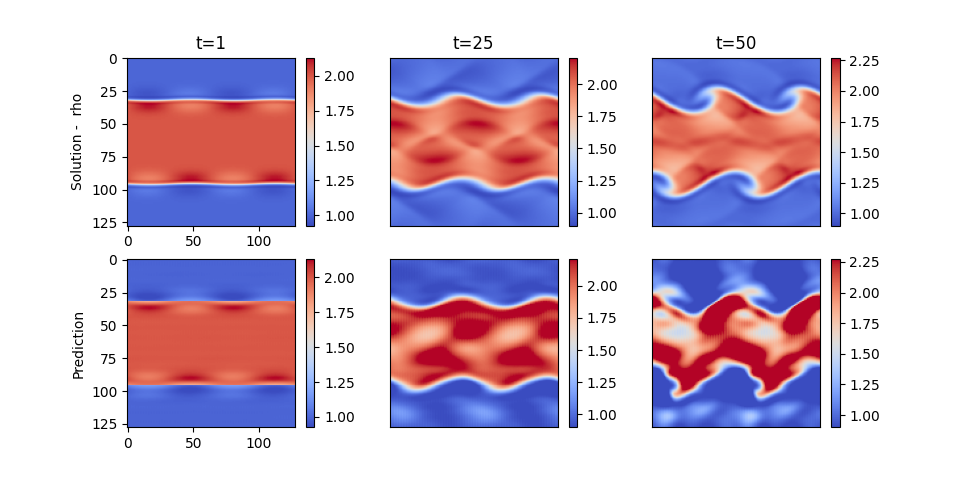}
        \caption{Density - Autoregressive FNO}
        \label{fig:NS_comp_rho_FNO_AR}
    \end{subfigure}
    \caption{Compressible Navier--Stokes: Model prediction within test distribution for Autoregressive FNO}
    \label{fig:NS_comp_FNO_AR}
\end{figure}

\begin{figure}[H]
    \centering
    \begin{subfigure}{0.48\textwidth}
        \centering
        \includegraphics[width=0.9\linewidth]{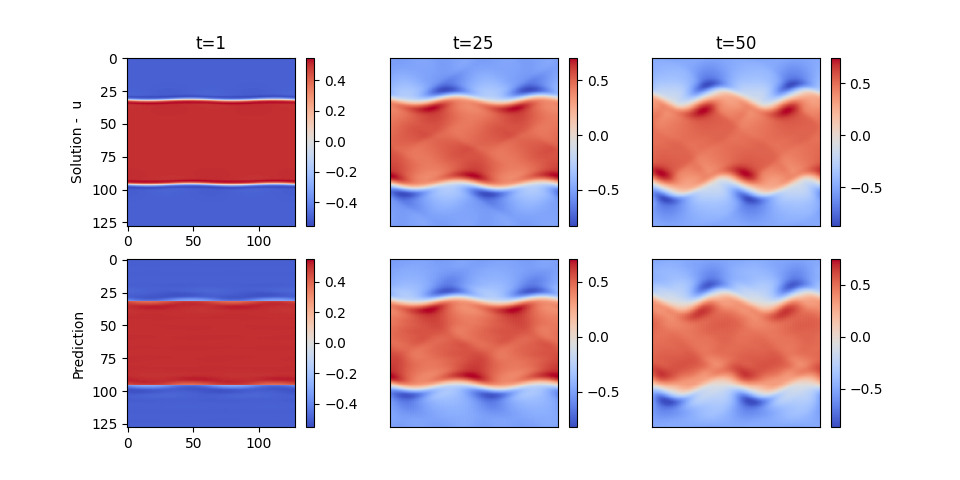}
        \caption{Horizontal velocity - NODE FNO}
        \label{fig:NS_comp_u_FNO_NODE}
    \end{subfigure}
    \begin{subfigure}{0.48\textwidth}
        \centering
        \includegraphics[width=0.9\linewidth]{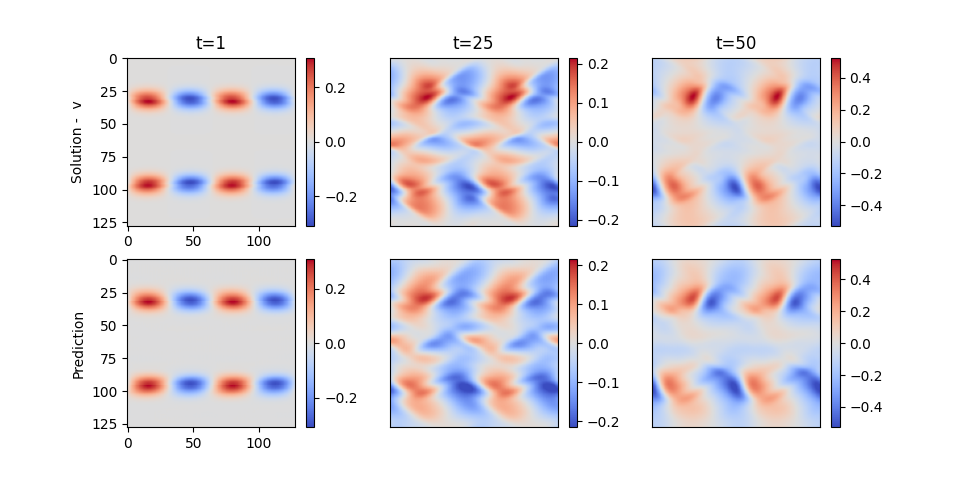}
        \caption{Vertical velocity - NODE FNO}
        \label{fig:NS_comp_v_FNO_NODE}
    \end{subfigure}
    \begin{subfigure}{0.48\textwidth}
        \centering
        \includegraphics[width=0.9\linewidth]{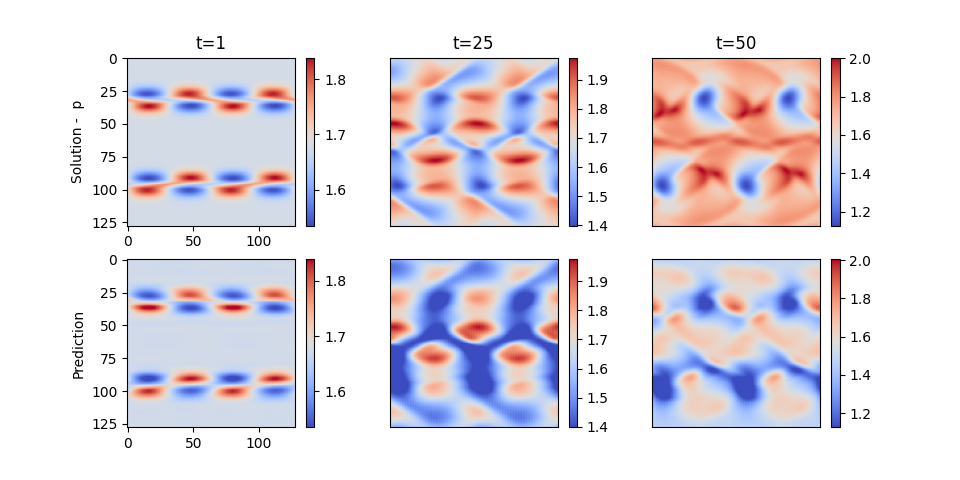}
        \caption{Pressure - NODE FNO}
        \label{fig:NS_comp_p_FNO_NODE}
    \end{subfigure}
    \begin{subfigure}{0.48\textwidth}
        \centering
        \includegraphics[width=0.9\linewidth]{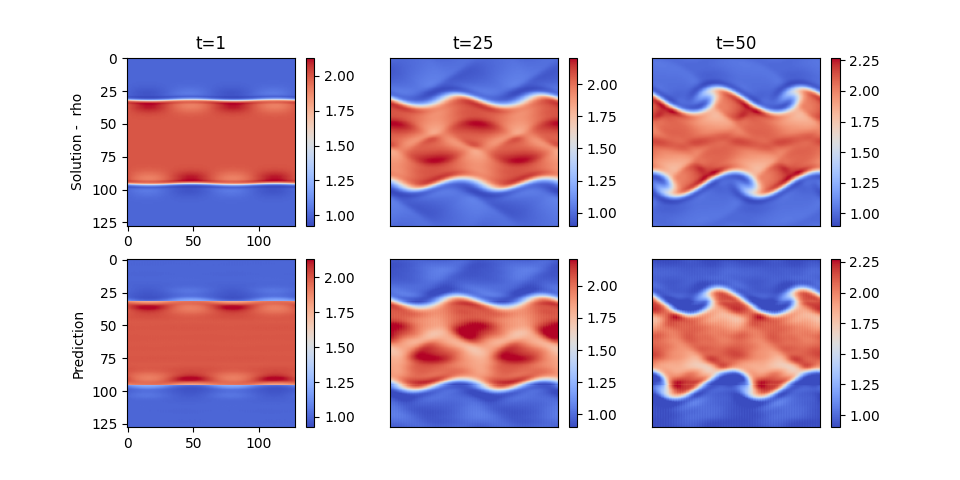}
        \caption{Density - NODE FNO}
        \label{fig:NS_comp_rho_FNO_NODE}
    \end{subfigure}
    \caption{Compressible Navier--Stokes: Model prediction within test distribution for Neural-ODE FNO}
    \label{fig:NS_comp_FNO_node}
\end{figure}

\begin{figure}[H]
    \centering
    \begin{subfigure}{0.48\textwidth}
        \centering
        \includegraphics[width=0.9\linewidth]{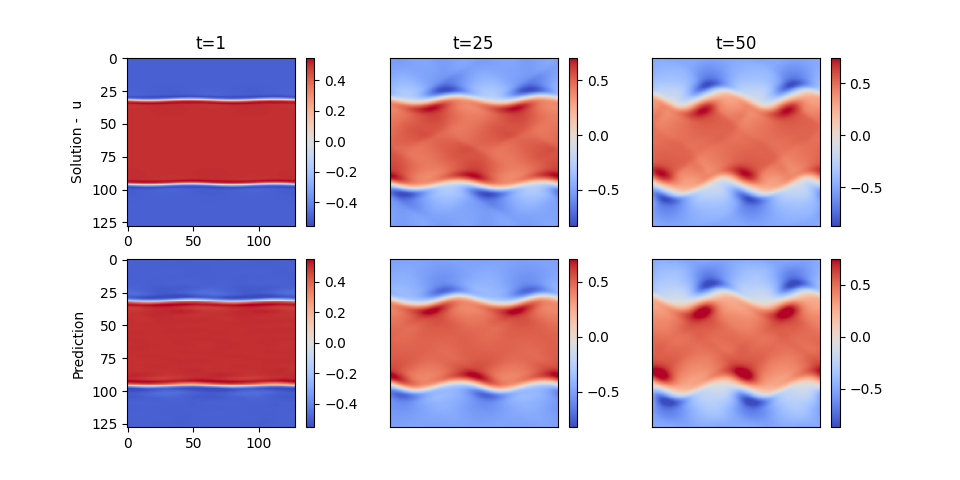}
        \caption{Horizontal velocity - OpsSplit FNO}
        \label{fig:NS_comp_u_FNO_opsplit}
    \end{subfigure}
    \begin{subfigure}{0.48\textwidth}
        \centering
        \includegraphics[width=0.9\linewidth]{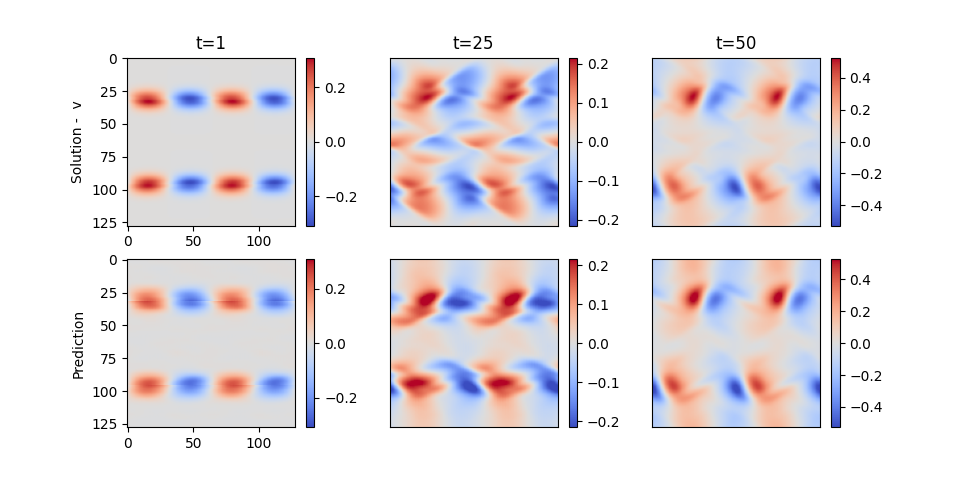}
        \caption{Vertical velocity - OpsSplit FNO}
        \label{fig:NS_comp_v_FNO_opssplit}
    \end{subfigure}
    \begin{subfigure}{0.48\textwidth}
        \centering
        \includegraphics[width=0.9\linewidth]{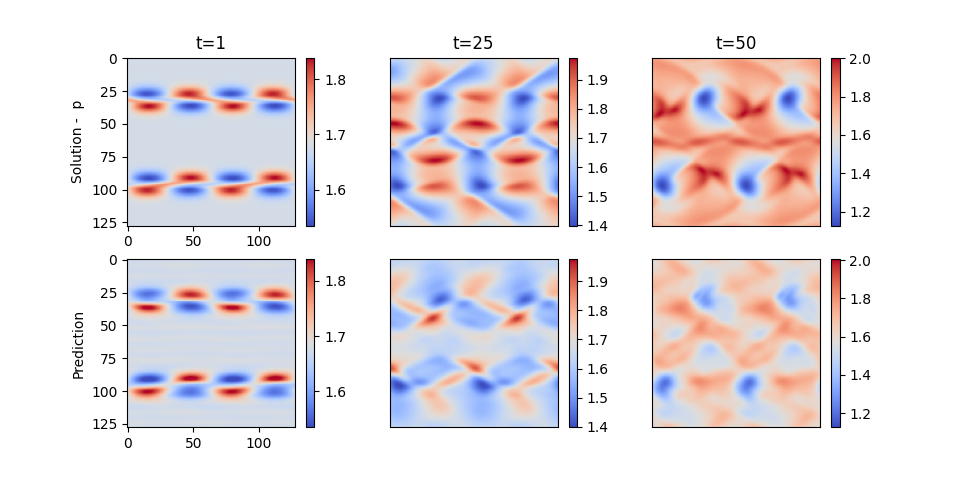}
        \caption{Pressure - OpsSplit FNO}
        \label{fig:NS_comp_p_FNO_opsplit}
    \end{subfigure}
    \begin{subfigure}{0.48\textwidth}
        \centering
        \includegraphics[width=0.9\linewidth]{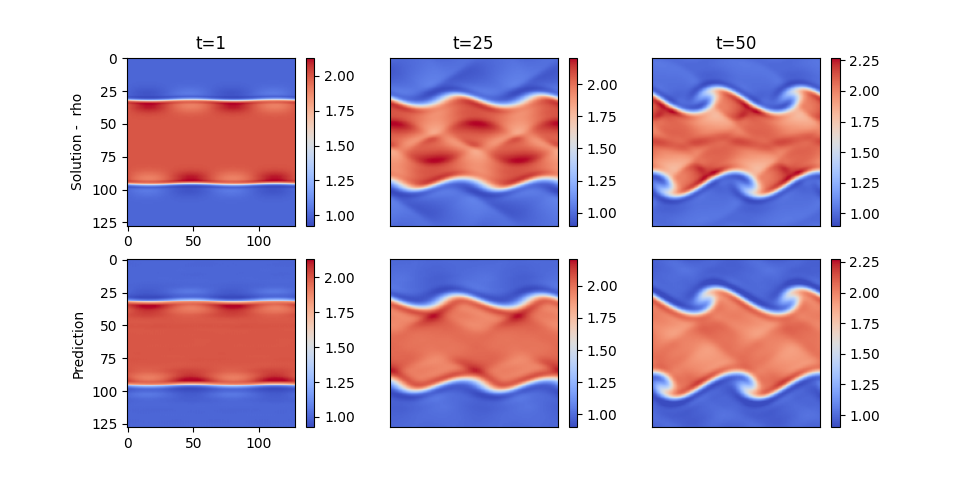}
        \caption{Density - OpsSplit FNO}
        \label{fig:NS_comp_rho_FNO_opssplit}
    \end{subfigure}
    \caption{Compressible Navier--Stokes: Model prediction within test distribution for OpsSplit FNO}
    \label{fig:NS_comp_FNO_opssplit}
\end{figure}

\subsubsection{U-Net}

\begin{figure}[H]
    \centering
    \begin{subfigure}{0.48\textwidth}
        \centering
        \includegraphics[width=0.9\linewidth]{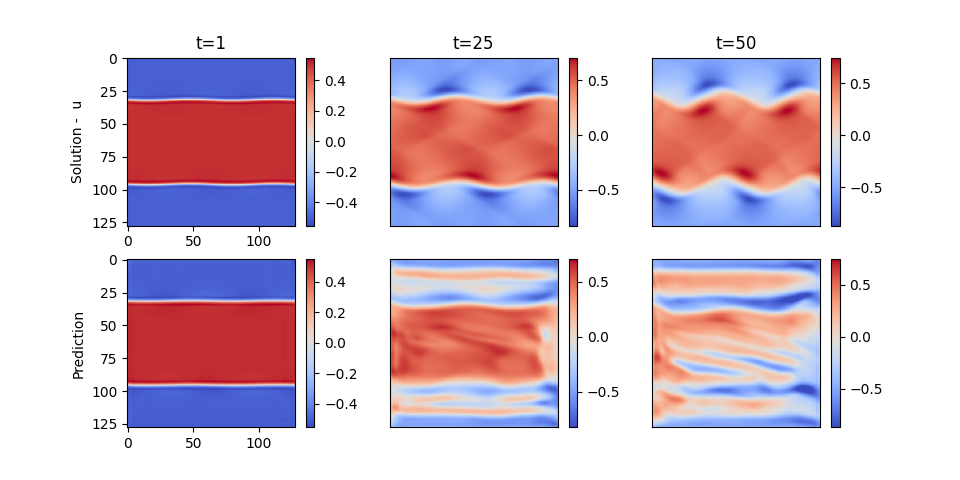}
        \caption{Horizontal velocity - Autoregressive U-Net}
        \label{fig:NS_comp_u_UNet_AR}
    \end{subfigure}
    \begin{subfigure}{0.48\textwidth}
        \centering
        \includegraphics[width=0.9\linewidth]{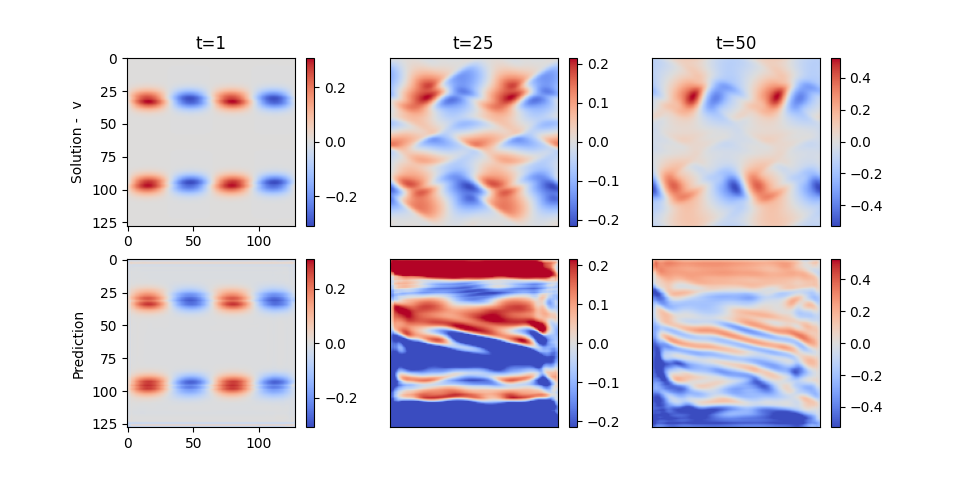}
        \caption{Vertical velocity - Autoregressive U-Net}
        \label{fig:NS_comp_v_UNet_AR}
    \end{subfigure}
    \begin{subfigure}{0.48\textwidth}
        \centering
        \includegraphics[width=0.9\linewidth]{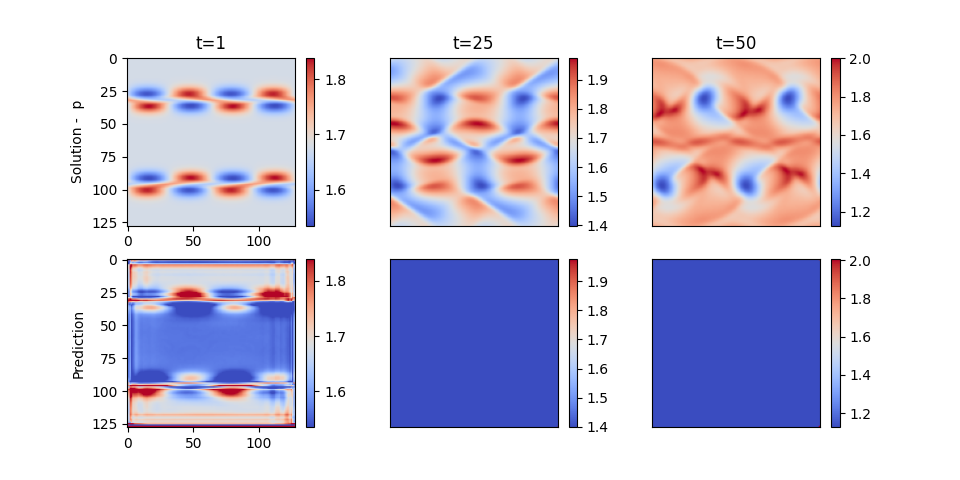}
        \caption{Pressure - Autoregressive U-Net}
        \label{fig:NS_comp_p_UNet_AR}
    \end{subfigure}
    \begin{subfigure}{0.48\textwidth}
        \centering
        \includegraphics[width=0.9\linewidth]{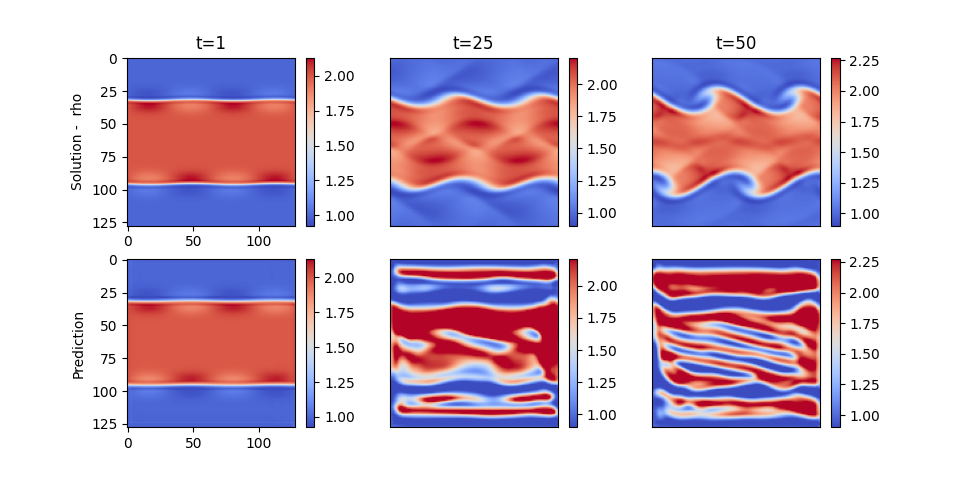}
        \caption{Density - Autoregressive U-Net}
        \label{fig:NS_comp_rho_UNet_AR}
    \end{subfigure}
    \caption{Compressible Navier--Stokes: Model prediction within test distribution for Autoregressive U-Net}
    \label{fig:NS_comp_UNet_AR}
\end{figure}

\begin{figure}[H]
    \centering
    \begin{subfigure}{0.48\textwidth}
        \centering
        \includegraphics[width=0.9\linewidth]{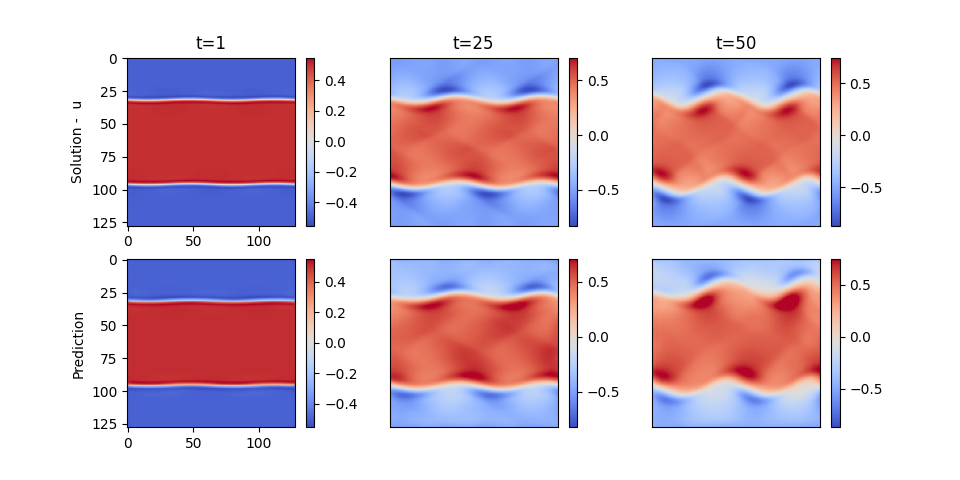}
        \caption{Horizontal velocity - NODE U-Net}
        \label{fig:NS_comp_u_UNet_NODE}
    \end{subfigure}
    \begin{subfigure}{0.48\textwidth}
        \centering
        \includegraphics[width=0.9\linewidth]{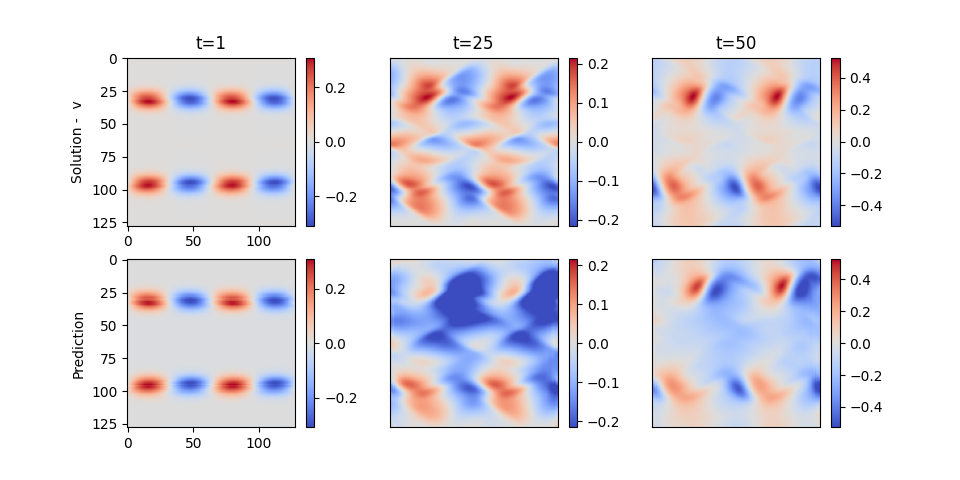}
        \caption{Vertical velocity - NODE U-Net}
        \label{fig:NS_comp_v_UNet_NODE}
    \end{subfigure}
    \begin{subfigure}{0.48\textwidth}
        \centering
        \includegraphics[width=0.9\linewidth]{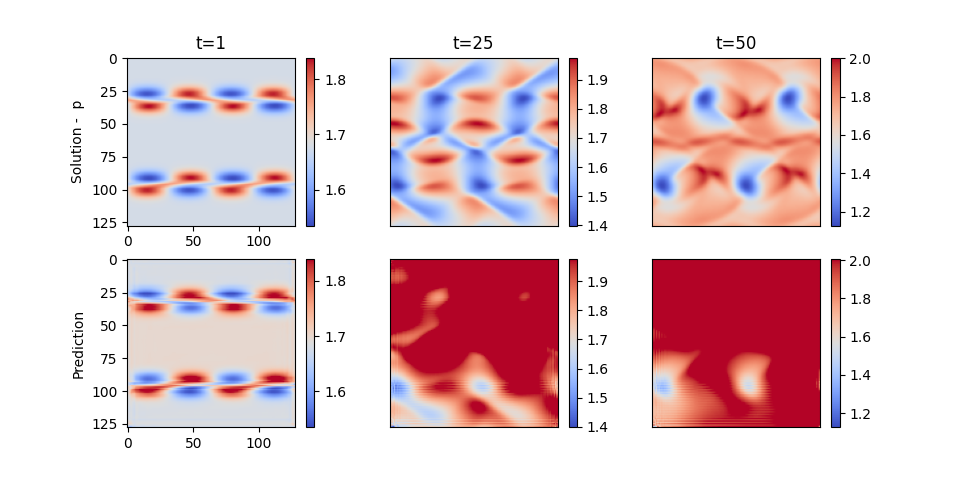}
        \caption{Pressure - NODE U-Net}
        \label{fig:NS_comp_p_UNet_NODE}
    \end{subfigure}
    \begin{subfigure}{0.48\textwidth}
        \centering
        \includegraphics[width=0.9\linewidth]{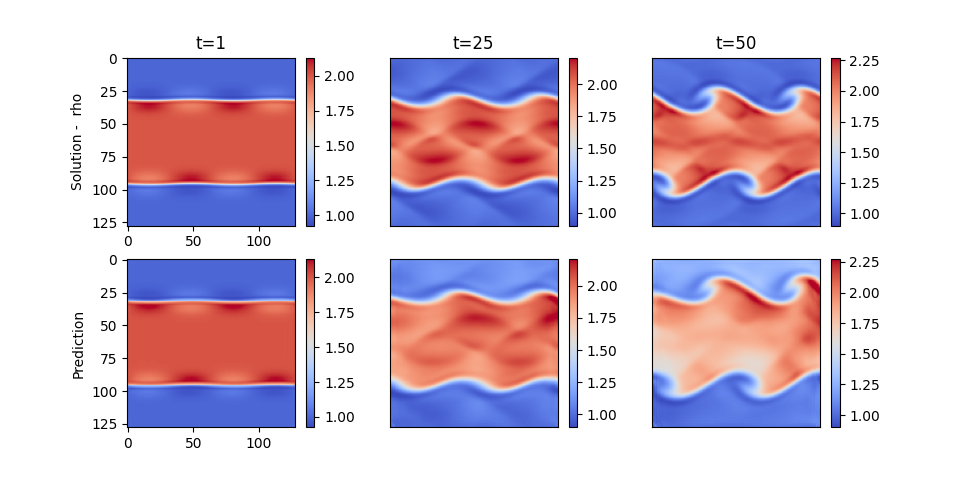}
        \caption{Density - NODE U-Net}
        \label{fig:NS_comp_rho_UNet_NODE}
    \end{subfigure}
    \caption{Compressible Navier--Stokes: Model prediction within test distribution for Neural-ODE U-Net}
    \label{fig:NS_comp_UNet_node}
\end{figure}

\begin{figure}[H]
    \centering
    \begin{subfigure}{0.48\textwidth}
        \centering
        \includegraphics[width=0.9\linewidth]{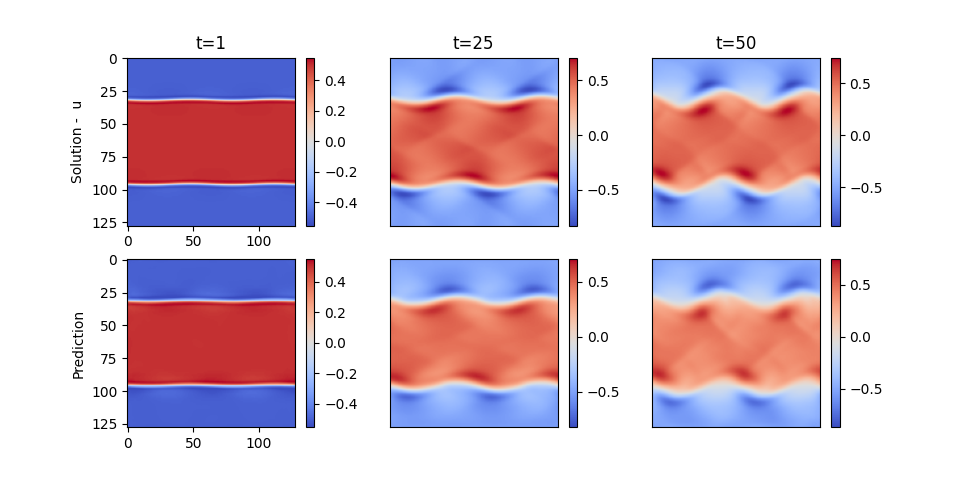}
        \caption{Horizontal velocity - OpsSplit U-Net}
        \label{fig:NS_comp_u_UNet_opsplit}
    \end{subfigure}
    \begin{subfigure}{0.48\textwidth}
        \centering
        \includegraphics[width=0.9\linewidth]{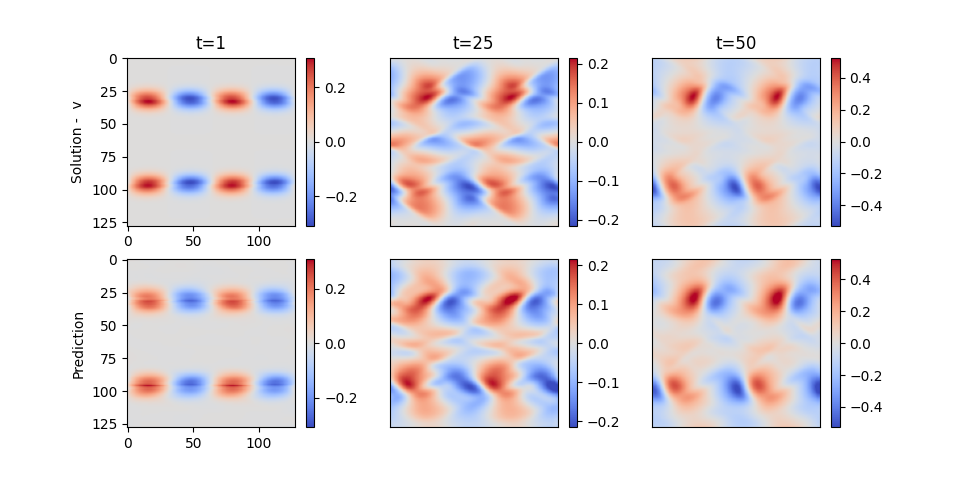}
        \caption{Vertical velocity - OpsSplit U-Net}
        \label{fig:NS_comp_v_UNet_opssplit}
    \end{subfigure}
    \begin{subfigure}{0.48\textwidth}
        \centering
        \includegraphics[width=0.9\linewidth]{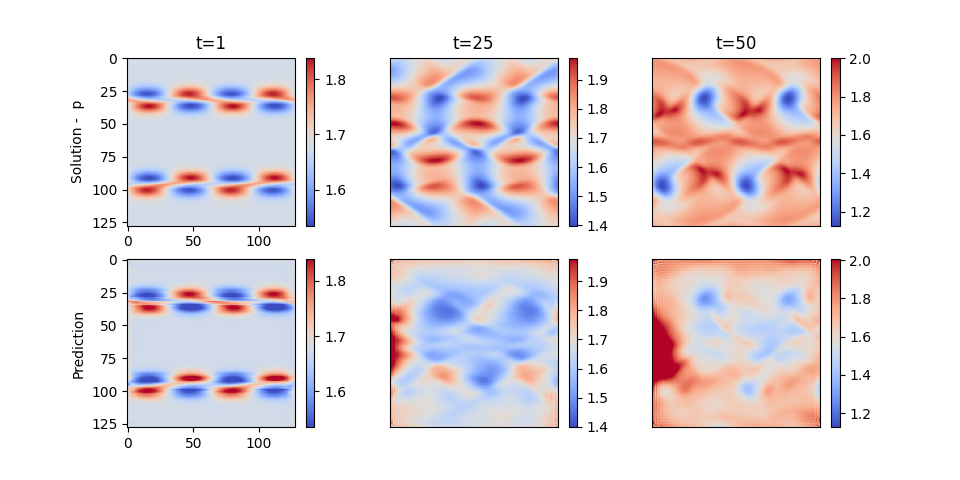}
        \caption{Pressure - OpsSplit U-Net}
        \label{fig:NS_comp_p_UNet_opsplit}
    \end{subfigure}
    \begin{subfigure}{0.48\textwidth}
        \centering
        \includegraphics[width=0.9\linewidth]{Images/rho_terminal-rehab.png}
        \caption{Density - OpsSplit U-Net}
        \label{fig:NS_comp_rho_UNet_opssplit}
    \end{subfigure}
    \caption{Compressible Navier--Stokes: Model prediction within test distribution for OpsSplit U-Net}
    \label{fig:NS_comp_UNet_opssplit}
\end{figure}

\subsubsection{ViT}

\begin{figure}[H]
    \centering
    \begin{subfigure}{0.48\textwidth}
        \centering
        \includegraphics[width=0.9\linewidth]{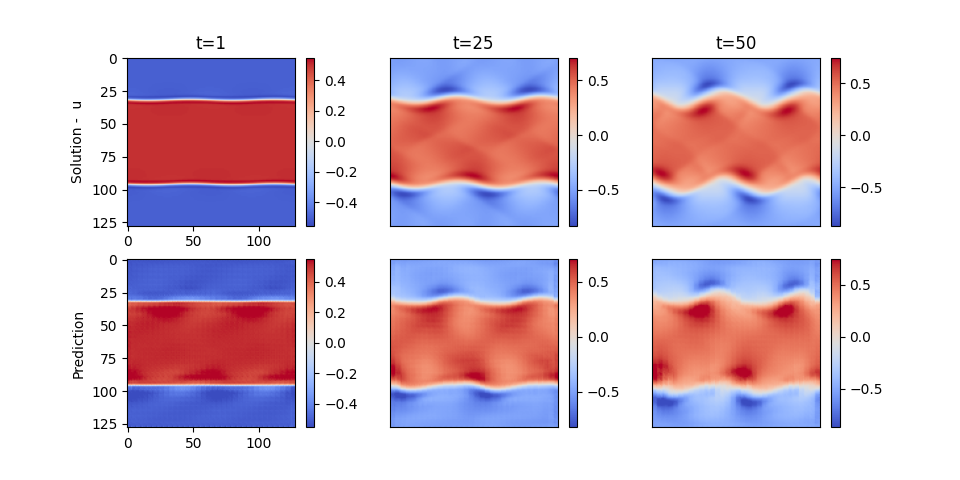}
        \caption{Horizontal velocity - Autoregressive ViT}
        \label{fig:NS_comp_u_ViT_AR}
    \end{subfigure}
    \begin{subfigure}{0.48\textwidth}
        \centering
        \includegraphics[width=0.9\linewidth]{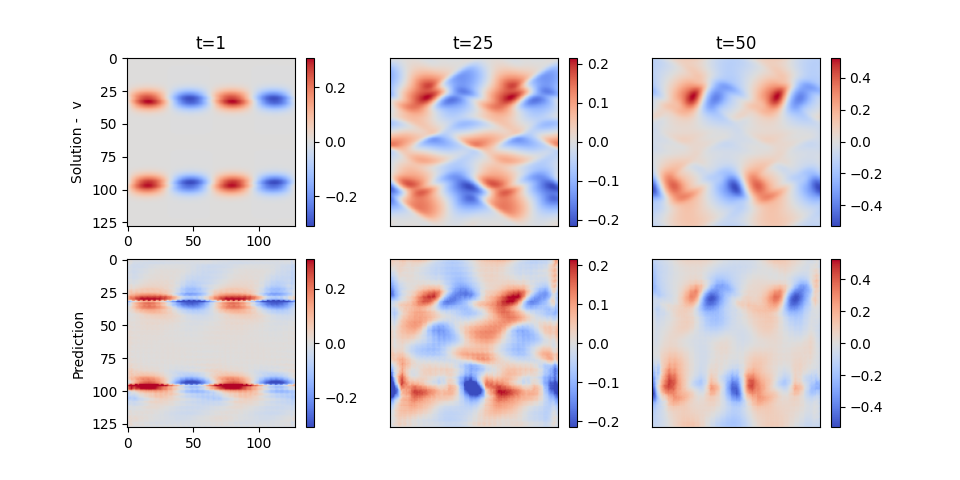}
        \caption{Vertical velocity - Autoregressive ViT}
        \label{fig:NS_comp_v_ViT_AR}
    \end{subfigure}
    \begin{subfigure}{0.48\textwidth}
        \centering
        \includegraphics[width=0.9\linewidth]{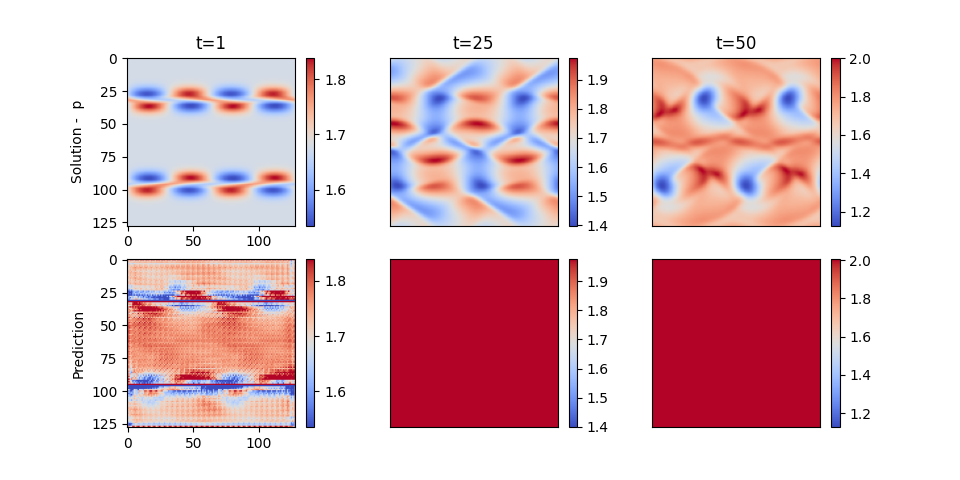}
        \caption{Pressure - Autoregressive ViT}
        \label{fig:NS_comp_p_ViT_AR}
    \end{subfigure}
    \begin{subfigure}{0.48\textwidth}
        \centering
        \includegraphics[width=0.9\linewidth]{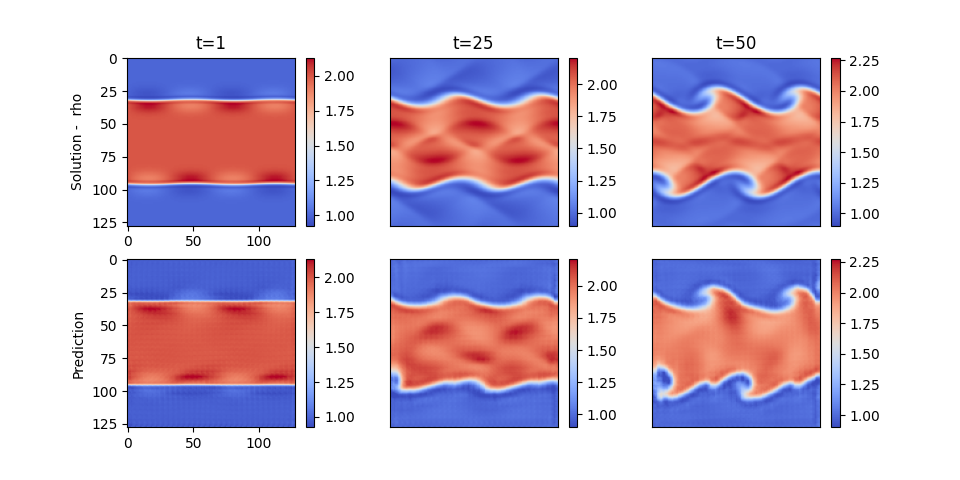}
        \caption{Density - Autoregressive ViT}
        \label{fig:NS_comp_rho_ViT_AR}
    \end{subfigure}
    \caption{Compressible Navier--Stokes: Model prediction within test distribution for Autoregressive ViT}
    \label{fig:NS_comp_ViT_AR}
\end{figure}

\begin{figure}[H]
    \centering
    \begin{subfigure}{0.48\textwidth}
        \centering
        \includegraphics[width=0.9\linewidth]{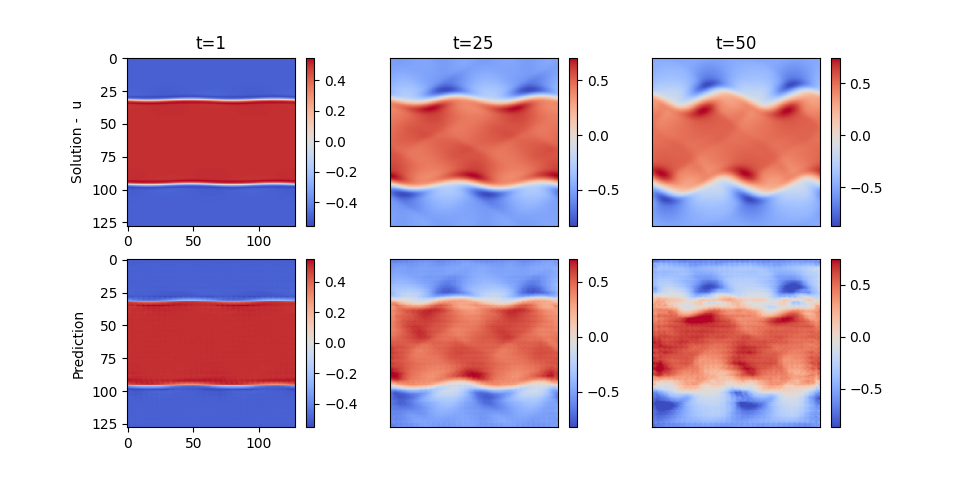}
        \caption{Horizontal velocity - NODE ViT}
        \label{fig:NS_comp_u_ViT_NODE}
    \end{subfigure}
    \begin{subfigure}{0.48\textwidth}
        \centering
        \includegraphics[width=0.9\linewidth]{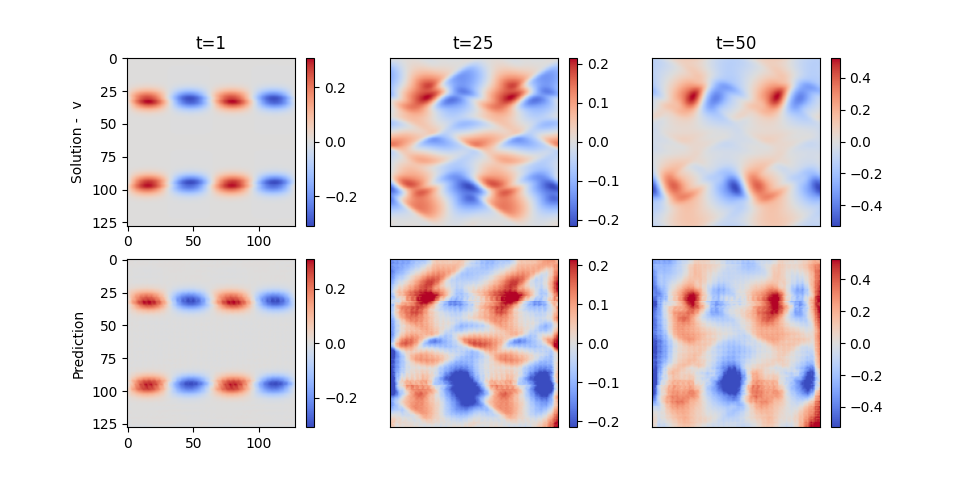}
        \caption{Vertical velocity - NODE ViT}
        \label{fig:NS_comp_v_ViT_NODE}
    \end{subfigure}
    \begin{subfigure}{0.48\textwidth}
        \centering
        \includegraphics[width=0.9\linewidth]{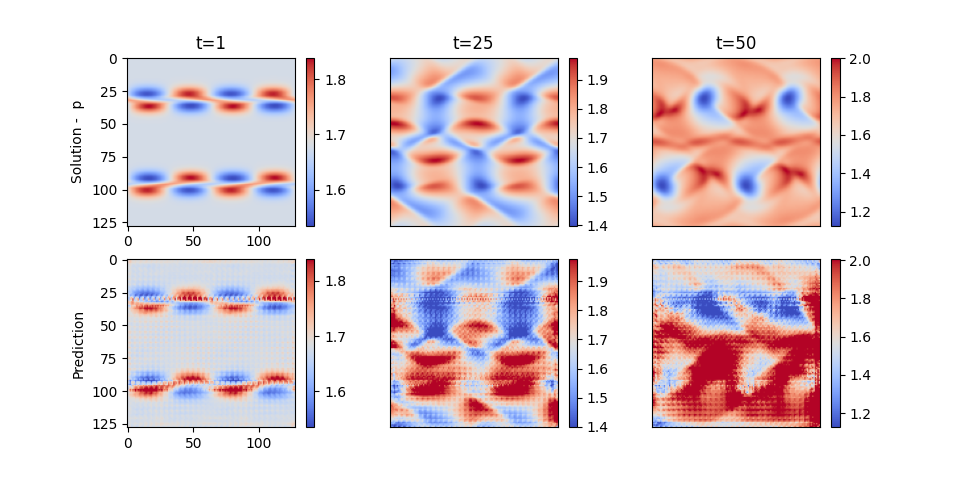}
        \caption{Pressure - NODE ViT}
        \label{fig:NS_comp_p_ViT_NODE}
    \end{subfigure}
    \begin{subfigure}{0.48\textwidth}
        \centering
        \includegraphics[width=0.9\linewidth]{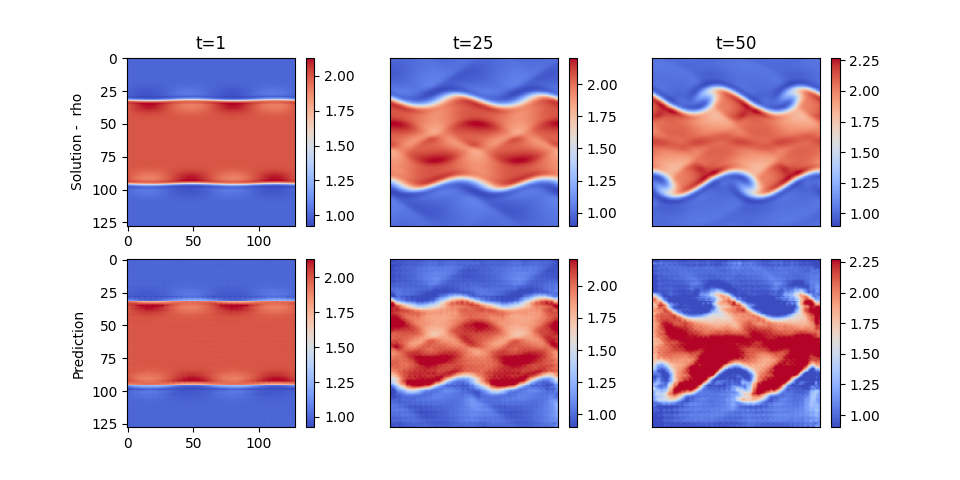}
        \caption{Density - NODE ViT}
        \label{fig:NS_comp_rho_ViT_NODE}
    \end{subfigure}
    \caption{Compressible Navier--Stokes: Model prediction within test distribution for Neural-ODE ViT}
    \label{fig:NS_comp_ViT_node}
\end{figure}

\begin{figure}[H]
    \centering
    \begin{subfigure}{0.48\textwidth}
        \centering
        \includegraphics[width=0.9\linewidth]{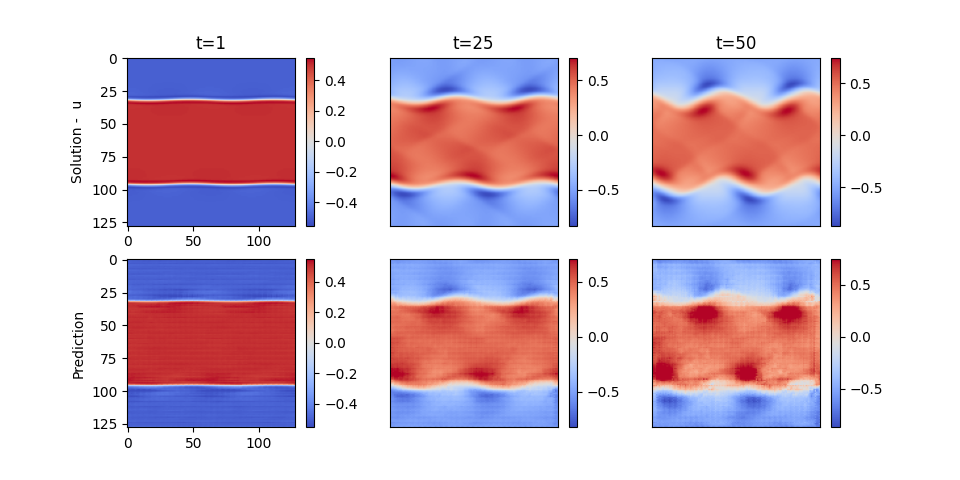}
        \caption{Horizontal velocity - OpsSplit ViT}
        \label{fig:NS_comp_u_ViT_opsplit}
    \end{subfigure}
    \begin{subfigure}{0.48\textwidth}
        \centering
        \includegraphics[width=0.9\linewidth]{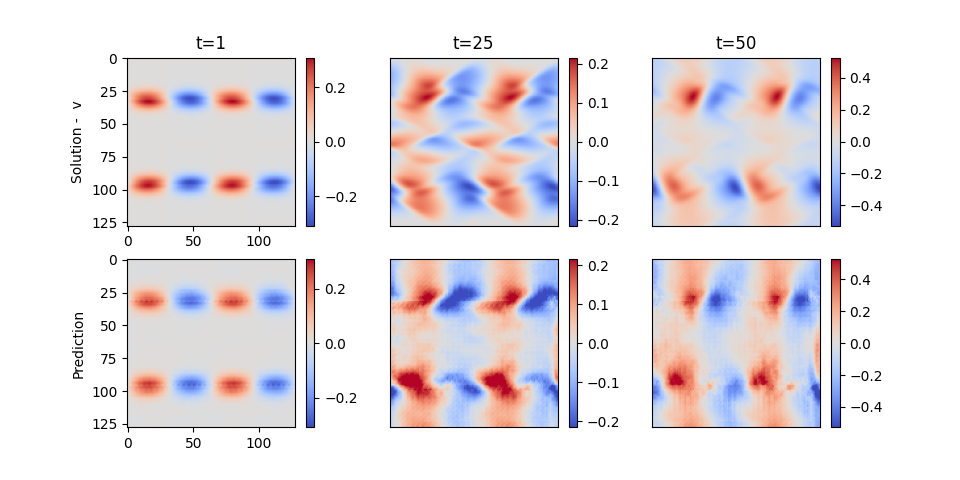}
        \caption{Vertical velocity - OpsSplit ViT}
        \label{fig:NS_comp_v_ViT_opssplit}
    \end{subfigure}
    \begin{subfigure}{0.48\textwidth}
        \centering
        \includegraphics[width=0.9\linewidth]{Images/p_terminal-rehab.png}
        \caption{Pressure - OpsSplit ViT}
        \label{fig:NS_comp_p_ViT_opsplit}
    \end{subfigure}
    \begin{subfigure}{0.48\textwidth}
        \centering
        \includegraphics[width=0.9\linewidth]{Images/rho_terminal-rehab.png}
        \caption{Density - OpsSplit ViT}
        \label{fig:NS_comp_rho_ViT_opssplit}
    \end{subfigure}
    \caption{Compressible Navier--Stokes: Model prediction within test distribution for OpsSplit ViT}
    \label{fig:NS_comp_ViT_opssplit}
\end{figure}

\subsubsection{UNO}

\begin{figure}[H]
    \centering
    \begin{subfigure}{0.48\textwidth}
        \centering
        \includegraphics[width=0.9\linewidth]{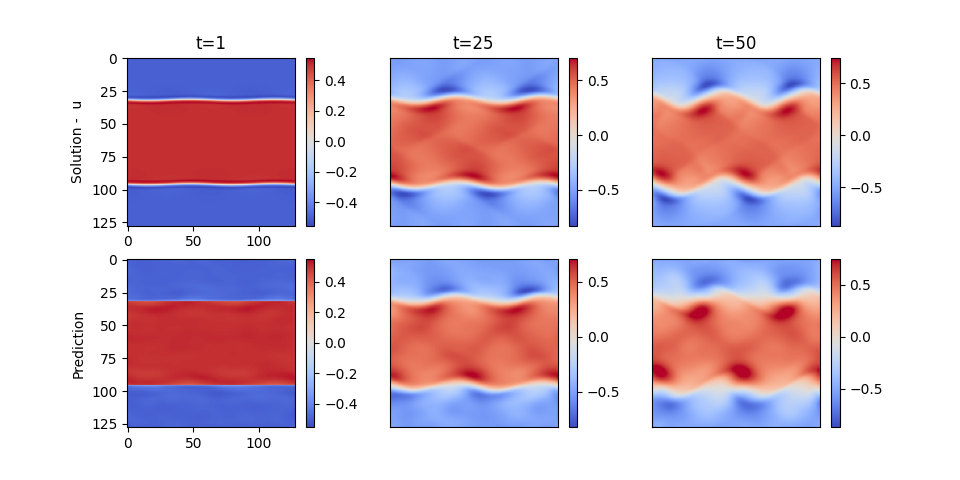}
        \caption{Horizontal velocity - Autoregressive UNO}
        \label{fig:NS_comp_u_UNO_AR}
    \end{subfigure}
    \begin{subfigure}{0.48\textwidth}
        \centering
        \includegraphics[width=0.9\linewidth]{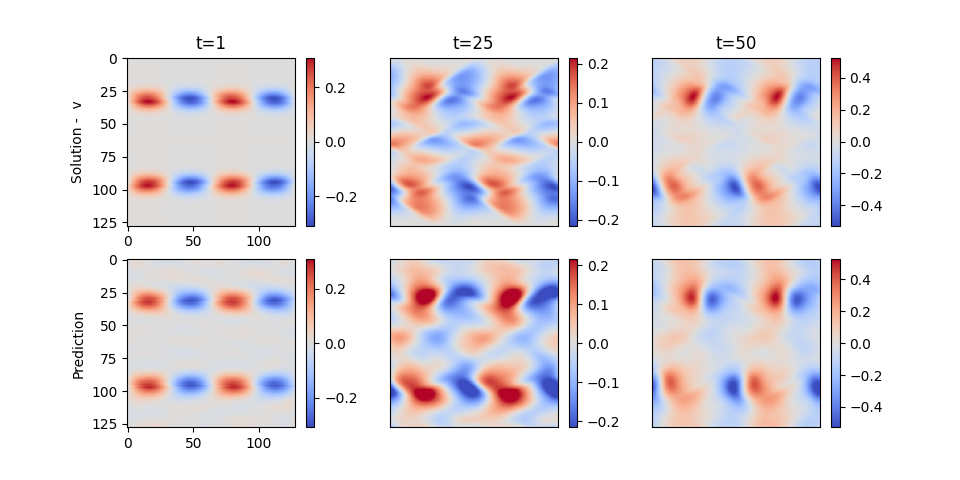}
        \caption{Vertical velocity - Autoregressive UNO}
        \label{fig:NS_comp_v_UNO_AR}
    \end{subfigure}
    \begin{subfigure}{0.48\textwidth}
        \centering
        \includegraphics[width=0.9\linewidth]{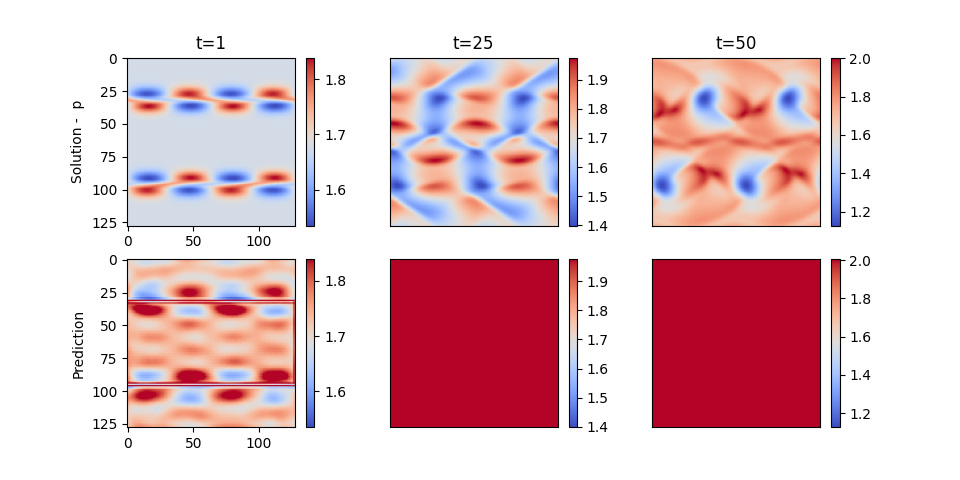}
        \caption{Pressure - Autoregressive UNO}
        \label{fig:NS_comp_p_UNO_AR}
    \end{subfigure}
    \begin{subfigure}{0.48\textwidth}
        \centering
        \includegraphics[width=0.9\linewidth]{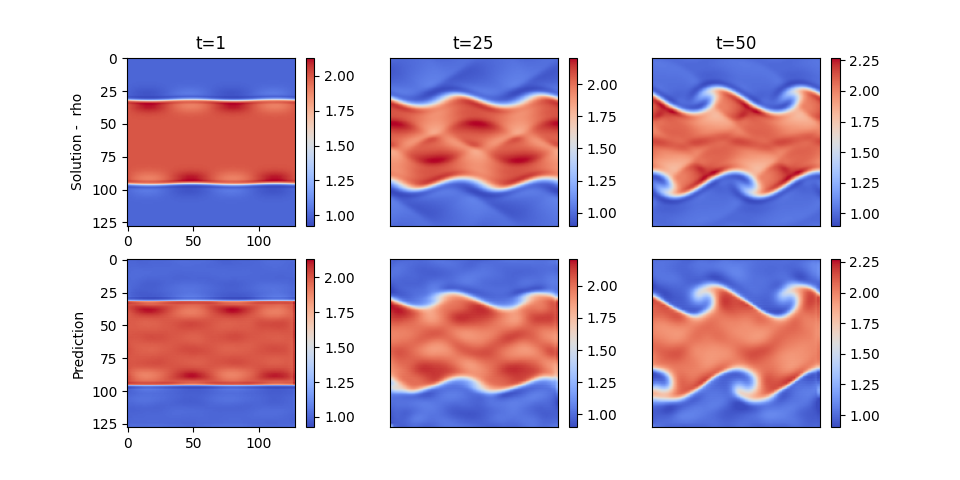}
        \caption{Density - Autoregressive UNO}
        \label{fig:NS_comp_rho_UNO_AR}
    \end{subfigure}
    \caption{Compressible Navier--Stokes: Model prediction within test distribution for Autoregressive UNO}
    \label{fig:NS_comp_UNO_AR}
\end{figure}

\begin{figure}[H]
    \centering
    \begin{subfigure}{0.48\textwidth}
        \centering
        \includegraphics[width=0.9\linewidth]{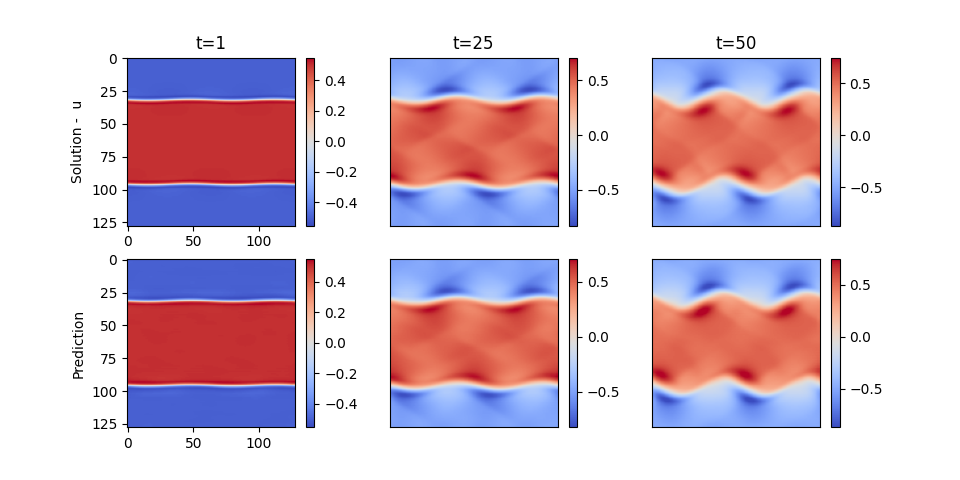}
        \caption{Horizontal velocity - NODE UNO}
        \label{fig:NS_comp_u_UNO_NODE}
    \end{subfigure}
    \begin{subfigure}{0.48\textwidth}
        \centering
        \includegraphics[width=0.9\linewidth]{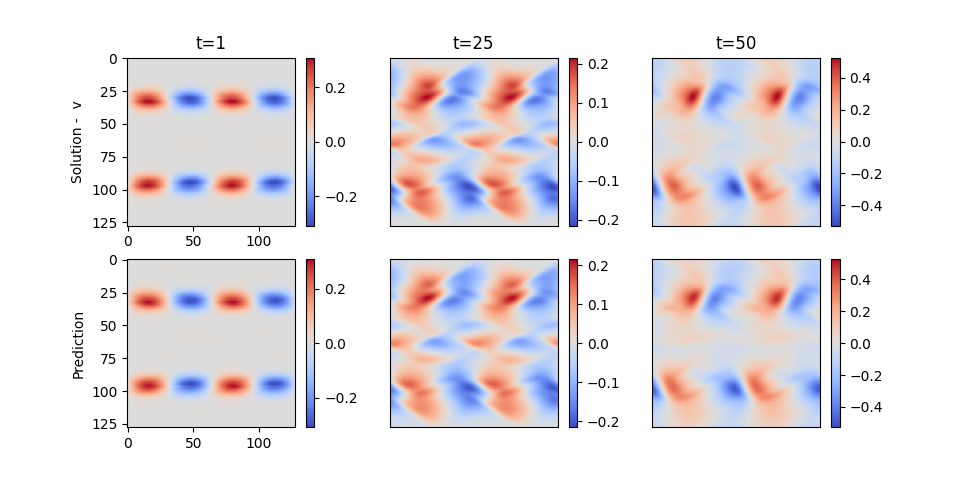}
        \caption{Vertical velocity - NODE UNO}
        \label{fig:NS_comp_v_UNO_NODE}
    \end{subfigure}
    \begin{subfigure}{0.48\textwidth}
        \centering
        \includegraphics[width=0.9\linewidth]{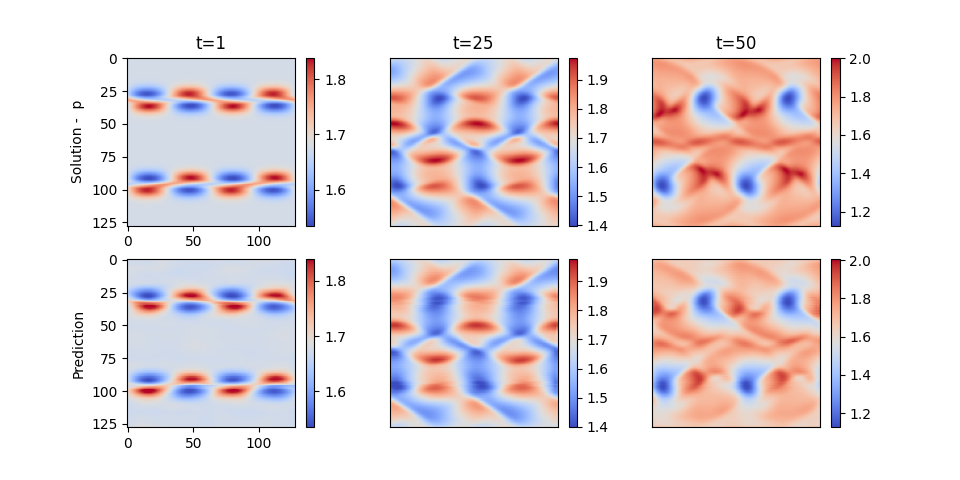}
        \caption{Pressure - NODE UNO}
        \label{fig:NS_comp_p_UNO_NODE}
    \end{subfigure}
    \begin{subfigure}{0.48\textwidth}
        \centering
        \includegraphics[width=0.9\linewidth]{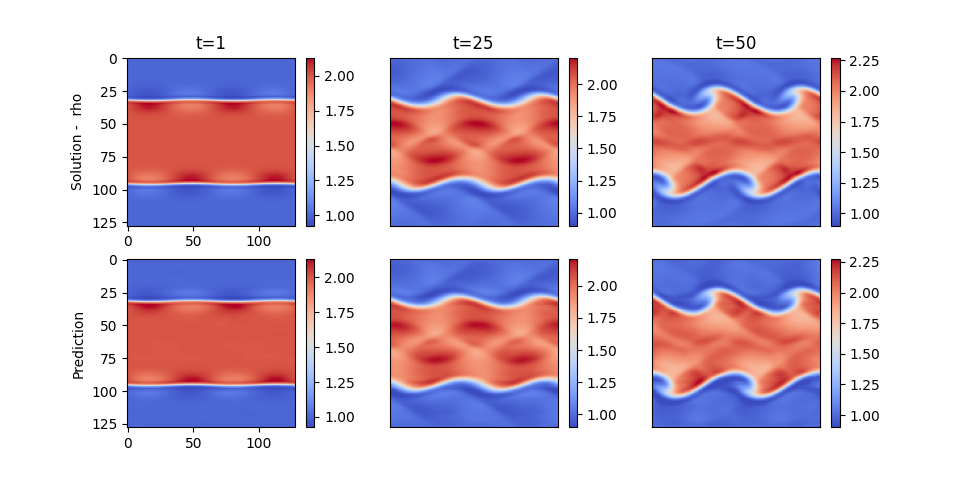}
        \caption{Density - NODE UNO}
        \label{fig:NS_comp_rho_UNO_NODE}
    \end{subfigure}
    \caption{Compressible Navier--Stokes: Model prediction within test distribution for Neural-ODE UNO}
    \label{fig:NS_comp_UNO_node}
\end{figure}

\begin{figure}[H]
    \centering
    \begin{subfigure}{0.48\textwidth}
        \centering
        \includegraphics[width=0.9\linewidth]{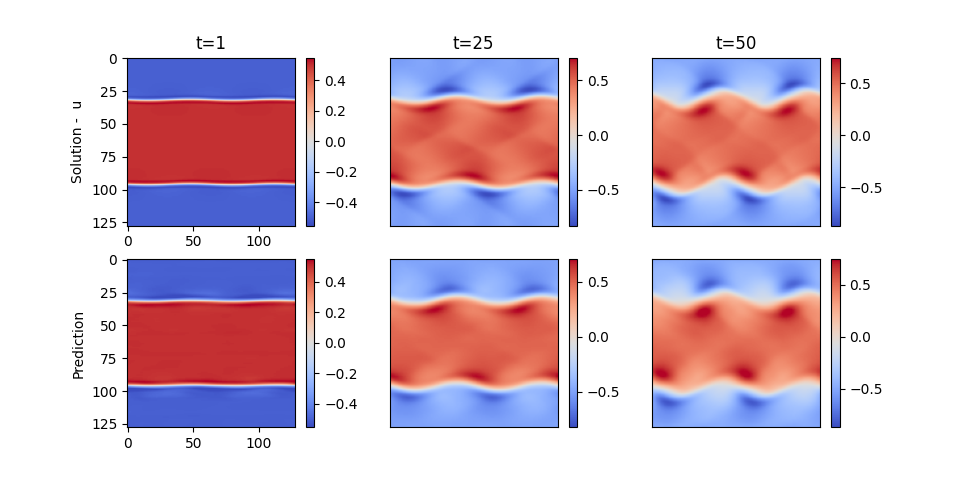}
        \caption{Horizontal velocity - OpsSplit UNO}
        \label{fig:NS_comp_u_UNO_opsplit}
    \end{subfigure}
    \begin{subfigure}{0.48\textwidth}
        \centering
        \includegraphics[width=0.9\linewidth]{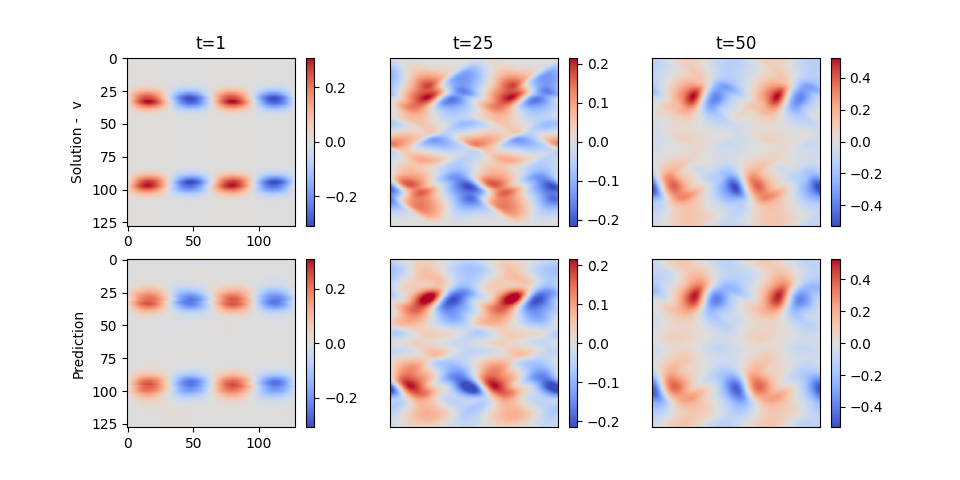}
        \caption{Vertical velocity - OpsSplit UNO}
        \label{fig:NS_comp_v_UNO_opssplit}
    \end{subfigure}
    \begin{subfigure}{0.48\textwidth}
        \centering
        \includegraphics[width=0.9\linewidth]{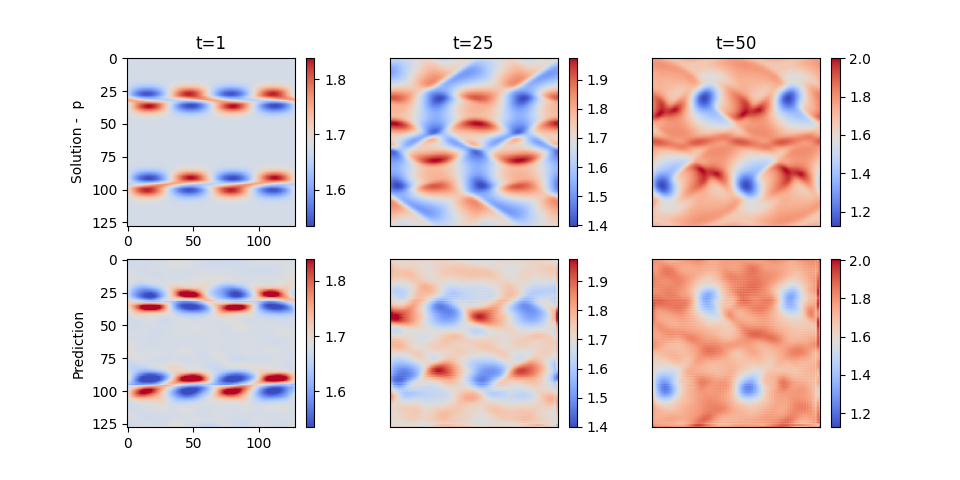}
        \caption{Pressure - OpsSplit UNO}
        \label{fig:NS_comp_p_UNO_opsplit}
    \end{subfigure}
    \begin{subfigure}{0.48\textwidth}
        \centering
        \includegraphics[width=0.9\linewidth]{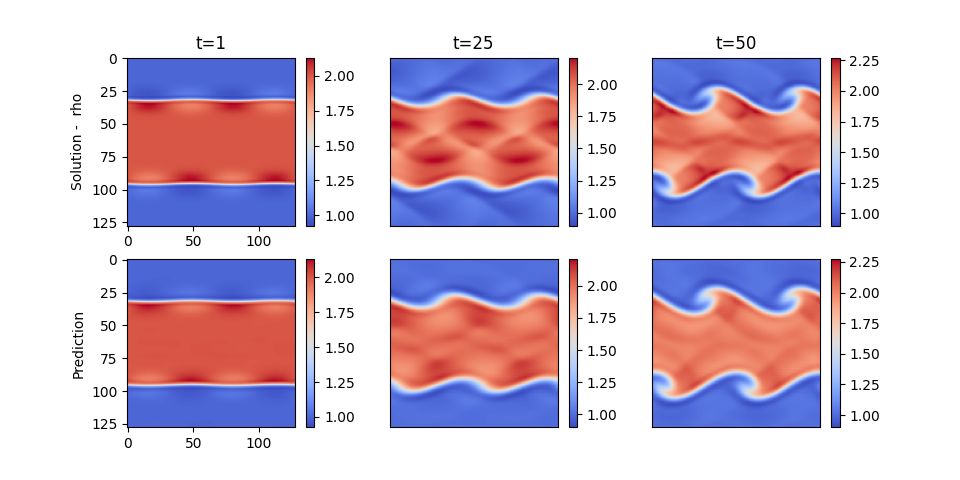}
        \caption{Density - OpsSplit UNO}
        \label{fig:NS_comp_rho_UNO_opssplit}
    \end{subfigure}
    \caption{Compressible Navier--Stokes: Model prediction within test distribution for OpsSplit UNO}
    \label{fig:NS_comp_UNO_opssplit}
\end{figure}

\end{document}